\documentclass[twoside]{article}

%
\usepackage[accepted]{aistats2024}
%


\usepackage[round]{natbib}

\bibliographystyle{apalike}

\usepackage{graphicx}
\usepackage{threeparttable}
\usepackage{pifont}
\usepackage{enumitem}

\usepackage{algorithm, algorithmic}
\usepackage{amsfonts, bm, amssymb}


\setcounter{tocdepth}{2}

\usepackage{color}
\usepackage[dvipsnames]{xcolor}

\newtheorem{theorem}{Theorem}[section]
\newtheorem{assume}{Assumption}[section]
\newtheorem{lemma}{Lemma}[section]
\newtheorem{definition}{Definition}[section]
\newtheorem{remark}{Remark}[section]
\newtheorem{prop}{Proposition}[section]
\newtheorem{corollary}{Corollary}[section]
\newcommand{\qed}{\hfill$\blacksquare$}
\newenvironment{proof}{{\noindent\bf Proof}\ \ }{\hfill\qed\par\vspace{1em}}


\newcommand{\argmin}{\mathop{\rm argmin}}
\newcommand{\argmax}{\mathop{\rm argmax}}

\newcommand{\Argmax}{\mathop{\rm Argmax}}

\usepackage{subfiles}
\usepackage{xcolor}

\usepackage{verbatim}
\usepackage{footnotebackref}
\usepackage{booktabs}
\usepackage{multirow} 
\usepackage{xspace}

\def \alg{\texttt{FESS-GDA}\xspace}

\begin{document}

%

%

\twocolumn[

\aistatstitle{Stochastic Smoothed Gradient Descent Ascent for Federated Minimax Optimization}

\aistatsauthor{ Wei Shen \And Minhui Huang \And   Jiawei Zhang \And Cong Shen }

\aistatsaddress{ University of Virginia \And  Meta AI \And MIT \And University of Virginia } ]

\begin{abstract}
In recent years, federated minimax optimization has attracted growing interest due to its extensive applications in various machine learning tasks. While Smoothed Alternative Gradient Descent Ascent (Smoothed-AGDA) has proved successful in centralized nonconvex minimax optimization, how and whether smoothing techniques could be helpful in a federated setting remains unexplored. In this paper, we propose a new algorithm termed Federated Stochastic Smoothed Gradient Descent Ascent (\alg), which utilizes the smoothing technique for federated minimax optimization. We prove that \alg can be uniformly applied to solve several classes of federated minimax problems and prove new or better analytical convergence results for these settings. We showcase the practical efficiency of \alg in practical federated learning tasks of training generative adversarial networks (GANs) and fair classification. 
\end{abstract}

\section{INTRODUCTION}

\begin{table*}[h]
\centering
\begin{threeparttable}
\renewcommand*{\arraystretch}{1.2}
\caption{\small Comparison of per-client sample complexity and communication complexity for different classes of nonconvex minimax problems. For comparison, we only give the convergence results for finding an $\epsilon$-stationary point of $\Phi$ (Definition \ref{def:phi}) for NC-PL and of $\Phi_{1/2l}$ (Definition \ref{def:phi 1/2l}) for NC-1PC under full client participation ($m=M$). We also provide convergence results of finding an $\epsilon$-stationary point of $f$ (Definition \ref{def:f}), and consider partial client participation ($m<M$) in our paper. $\kappa:=l/\mu$ is the conditional number.}
\label{table}
\begin{tabular}{|l|c|c|c|}
\hline
\multirow{3}{*}{Algorithms}  &\multirow{3}{*}{\begin{tabular}[c]{@{}c@{}} Partial Client \\ Participation \end{tabular} }&\multicolumn{2}{|c|}{Full Client Participation (FCP)}\\ \cline{3-4}
& & \begin{tabular}[c]{@{}c@{}}Per-client Sample\\ complexity \end{tabular} & \begin{tabular}[c]{@{}c@{}}Communication \\ complexity \end{tabular} \\ \hline
\multicolumn{4}{|c|}{Nonconvex-Strongly-Concave (NC-SC)/ Nonconvex-PL (NC-PL)}
\\ \hline
Local SGDA \citep{sharma2022federated} &\ding{55} & \hspace{0.3cm}$O(\kappa^4 m^{-1}\epsilon^{-4})$\hspace{0.3cm} & $O(\kappa^3 \epsilon^{-3})$ \\ \hline
SAGDA \citep{yang2022sagda} & \checkmark & $O(\kappa^4 m^{-1}\epsilon^{-4})$ & $O(\kappa^2 \epsilon^{-2})$ \\ \hline
Fed-Norm-SGDA \citep{sharma2023federated} & \checkmark & $O(\kappa^4 m^{-1}\epsilon^{-4})$ & $O(\kappa^2 \epsilon^{-2})$ \\ \hline
FedSGDA-M\tnote{a} \hspace{0.1cm} \citep{wu2023solving} &\ding{55} & $O(\kappa^3 m^{-1}\epsilon^{-3})$ & $O(\kappa^2 \epsilon^{-2})$ \\ \hline
\textcolor{blue}{\alg} Corollary \ref{coro:sc full participation} & \checkmark & \textcolor{blue}{$O(\kappa^2 m^{-1}\epsilon^{-4})$} & \textcolor{blue}{$O(\kappa \epsilon^{-2})$}   \\ \hline 
\multicolumn{4}{|c|}{Nonconvex-One-Point-Concave (NC-1PC)} 
\\ \hline
Local SGDA+\tnote{b} \hspace{0.1cm} \citep{sharma2022federated} &\ding{55} & $O(\epsilon^{-8})$ & $O(\epsilon^{-7})$ \\ \hline
Fed-Norm-SGDA+\tnote{b}\hspace{0.1cm}  \tnote{c}  \hspace{0.1cm} \citep{sharma2023federated} \hspace{0.6cm} &\checkmark & $O(m^{-1}\epsilon^{-8})$ & $O(\epsilon^{-4})$ \\ \hline
\textcolor{blue}{\alg} Theorem \ref{thm:1pc} &\checkmark & \textcolor{blue}{$O(m^{-1}\epsilon^{-8})$} & \textcolor{blue}{$O(\epsilon^{-4})$} \\ \hline

\multicolumn{4}{|c|}{Nonconvex-Concave (NC-C)} 
\\ \hline
Local SGDA+\tnote{b} \hspace{0.1cm} \citep{sharma2022federated}    & \ding{55} &$O(m^{-1}\epsilon^{-8})$ & $O(\epsilon^{-7})$ \\ \hline
Fed-Norm-SGDA+\tnote{b} \hspace{0.1cm} \citep{sharma2023federated} & \checkmark &$O(m^{-1}\epsilon^{-8})$ & $O(\epsilon^{-4})$ \\ \hline
FedSGDA+ \citep{wu2023solving} &\ding{55} & $O(m^{-1}\epsilon^{-8})$ & $O(\epsilon^{-6})$ \\ \hline
\textcolor{blue}{\alg}\tnote{d} \hspace{0.1cm} Theorem \ref{thm:1pc} &\checkmark & \textcolor{blue}{$O(m^{-1}\epsilon^{-8})$} & \textcolor{blue}{$O(\epsilon^{-4})$} \\ \hline
\multicolumn{4}{|c|}{Objective function has a form of (\ref{special case}) (A special case of NC-C problems)} 
\\ \hline
\textcolor{blue}{\alg} Theorem \ref{thm:special case} & \checkmark & \textcolor{blue}{$O(m^{-1}\epsilon^{-4})$} & \textcolor{blue}{$O(\epsilon^{-2})$} \\ \hline

\end{tabular}
\scriptsize
\begin{tablenotes}
\item[a] Their better performance comes from using additional variance reduction, while we do not.
\item[b] Their proofs need additional assumptions that each local loss function $f_i$ also satisfies the NC-C (NC-1PC) condition, while ours only needs the global loss function $f$ to be NC-C (NC-1PC). They also assume $\|y_t\|^2\leq D$, but do not mention how to guarantee this. We use a projection operator in our algorithm to guarantee this.
\item[c] Their proof requires additional assumption that each local loss function $f_i$ satisfies the one-point-concave condition with a common global minimizer $y^*(x)$.
\item[d] We have better convergence results for the NC-C setting of finding an $\epsilon$-stationary point of $f$; see Theorem \ref{thm:c f} for details.
\end{tablenotes}
\end{threeparttable}
\end{table*}

\begin{table*}[h]
\centering
\begin{threeparttable}
\centering
\renewcommand*{\arraystretch}{1.2}
\caption{\small Comparison of per-client sample complexity and communication complexity needed to find $(x_T,y_T)$ that satisfy $\mathbb{E}\|x_T-x^*\|^2+\mathbb{E}\|y_T-y^*\|^2\leq \epsilon^2$. We use $\Tilde{O}$ to hide logarithmic terms.}
\label{table plpl}
\begin{tabular}{|c|c|c|c|c|c|}
\hline
Algorithms                         & Type                   & \begin{tabular}[c]{@{}c@{}} Partial Client \\ Participation \end{tabular} & \begin{tabular}[c]{@{}c@{}}Data \\ Heterogeneity \end{tabular} & \begin{tabular}[c]{@{}c@{}} Per-client Sample \\ complexity\end{tabular} & \begin{tabular}[c]{@{}c@{}} Communication \\ complexity \end{tabular}\\ \hline
\multirow{2}{*}{ \begin{tabular}[c]{@{}c@{}}Local SGDA\tnote{a}\\ \cite{deng2021local}\end{tabular} }        & \multirow{2}{*}{SC-SC} & \ding{55}    & \ding{55}             &$\Tilde{O}(M^{-1}\epsilon^{-2})$   &  $\Tilde{O}(M)$ \\ \cline{3-6} 
                                   &                        & \ding{55}       & \checkmark              &  $O(M^{-1}\epsilon^{-2})$ & $O(\epsilon^{-1})$  \\ \hline
\multirow{3}{*}{\begin{tabular}[c]{@{}c@{}} \alg\\ Theorem \ref{thm: plpl} \end{tabular}} & \multirow{3}{*}{PL-PL} & \ding{55}       &  \checkmark             & $\Tilde{O}(M^{-1}\epsilon^{-2})$  & $\Tilde{O}(1)$  \\ \cline{3-6} 
                                   &                        & \checkmark        & \ding{55}            & $\Tilde{O}(m^{-1}\epsilon^{-2})$  & $\Tilde{O}(1)$  \\ \cline{3-6} 
                                   &                        & \checkmark       & \checkmark             &  $\Tilde{O}(m^{-1}\epsilon^{-2})$ & $\Tilde{O}(m^{-1}\epsilon^{-2})$  \\ \hline
\end{tabular}
\scriptsize
\begin{tablenotes}
\item[a] Their proofs need an assumption that each local loss function $f_i$ satisfies the SC-SC condition, while ours only needs the global loss function $f$ to satisfy the PL-PL condition (Assumption \ref{assum: plpl}).
\end{tablenotes}
\end{threeparttable}
\end{table*}

Minimax optimization is widely encountered in modern machine learning tasks such as generative adversarial networks (GANs) \citep{goodfellow2014generative}, AUC maximization \citep{liu2019stochastic}, reinforcement learning \citep{zhang2021multi}, adversarial training \citep{goodfellow2014explaining}, and fair machine learning \citep{nouiehed2019solving}. In recent years, many progresses on minimax optimization problems have been reported, with the majority focusing on solutions at a single client level. However, modern machine learning tasks usually demand a huge amount of data. A significant portion of this data may be sensitive, rendering it unsuitable for sharing with servers due to privacy concerns \citep{leaute2013protecting}. Furthermore, data sourced from edge devices can be hindered by the limited communication capabilities with the server. To preserve data privacy and to address communication issues, federated learning (FL) was proposed \citep{mcmahan2017communication}. In FL, clients do not send their data directly to the server. Instead, each client trains its model locally using its own data. Periodically, clients communicate with the server, sending their models for aggregation. The server then returns the updated model to the clients. 

Solutions and analyses for federated minimax problems have been developed in recent years. Some focus on convex-concave problems \citep{deng2020distributionally, hou2021efficient, sun2022communication}, and others are devoted to more general nonconvex minimax problems \citep{deng2021local, sharma2022federated, sharma2023federated}. Because the objective functions are usually nonconvex in the min variables for many practical applications, we mainly focus on federated nonconvex minimax problems in this paper.  

Gradient descent ascent (GDA) and its stochastic version stochastic gradient descent ascent (SGDA) are the simplest single-loop algorithms for centralized minimax problems. Most existing federated minimax algorithms are extensions of GDA (SGDA) to the federated setting, i.e. Local SGDA \citep{deng2021local}, Fed-Norm-SGDA \citep{sharma2022federated}. \citet{zhang2020single} propose Smoothed-AGDA, a single-loop algorithm utilizing the smoothing technique, and prove that it has a faster convergence rate for centralized nonconvex-concave problems compared with GDA. \cite{yang2022faster} then prove that Smoothed-AGDA and its stochastic version Stochastic Smoothed-AGDA also have faster convergence rates for centralized nonconvex-PL (Polyak-Lojasiewicz) problems compared with GDA (SGDA). A natural question arises: \textbf{Can we utilize the smoothing techniques to design a faster algorithm for federated nonconvex minimax optimization?} 

Furthermore, in the current literature, usually two different algorithms (such as Local SGDA and Local SGDA+ \citep{deng2021local, sharma2022federated}) are needed for different nonconvex minimax settings, which limits their practical applicability. Another question thus arises: \textbf{Can we design a single, uniformly applicable algorithm for federated nonconvex minimax optimization?}

\subsection{Problem Setting}
In this paper, we study the federated minimax optimization problems in the following form:
\begin{equation}\label{problem}
    \min_{x\in X}\max_{y\in Y} \bigg\{f(x,y)=\frac{1}{M}\sum_{i=1}^M f_i(x,y)\bigg\},
\end{equation}
where $X=\mathbb{R}^{d_1}, Y\subseteq \mathbb{R}^{d_2}$, $M$ is the number of clients, $f_i(x,y)=\mathbb{E}_{\xi_i\sim \mathcal{D}_i}[f(x,y;\xi_i)]$ is the local loss function at client $i$, and $f(x,y;\xi_i)$ denotes the loss for the data point $\xi_i$, sampled from the local data distribution $\mathcal{D}_i$ at client $i$. 

For the nonconvex-concave setting, we also consider a special case:
\begin{align}
\label{special case}
    \min_{x\in X}\max_{y\in Y} f(x,y)=\min_{x\in X}\max_{y\in Y} F(x)^Ty,
\end{align}
where $Y=\{(y_1, ..., y_n)^T | \sum_{i=1}^n, y_i=1, y_i\geq 0\}$ and $F(x)=(f_1(x), ..., f_n(x))^T$
is a mapping from $X=\mathbb{R}^{d_1}$ to $\mathbb{R}^n$. Note that \eqref{special case} is equivalent to the problem of minimizing the point-wise maximum of a finite collection of functions:
\begin{align}
\label{point-wise}
\min_x\max_{1\le i\le n}f_i(x).
\end{align}
Problems in the form of \eqref{special case} and \eqref{point-wise} commonly appear in practical applications such as adversarial training \cite{nouiehed2019solving, madry2017towards} and fairness training \cite{nouiehed2019solving}.

\subsection{Contributions}
We propose a new algorithm termed \underline{FE}derated \underline{S}tochastic \underline{S}moothed \underline{G}radient \underline{D}escent \underline{A}scent (\alg). We prove that \alg can be uniformly used to solve several classes of federated nonconvex minimax problems, and prove new or better convergence results for these settings. We summarize our main theoretical results in Tables \ref{table}, \ref{table plpl} with the following abbreviations:

\emph{SC-SC}: Strongly-Convex in $x$, Strongly-Concave in $y$,

\emph{PL-PL}: PL condition in $x$, PL condition in $y$ (Assumption \ref{assum: plpl}),

\emph{NC-SC}: Nonconvex in $x$, Strongly-Concave in $y$,

\emph{NC-PL}: Nonconvex in $x$, PL condition in $y$ (Assumption \ref{assum:pl}),

\emph{NC-C}: Nonconvex in $x$, Concave in $y$ (Assumption \ref{assum:concave}),

\emph{NC-1PC}: Nonconvex in $x$, One-Point-Concave in $y$ (Assumption \ref{assum:1pc_y}).

More specifically, our contributions are the following. 

\begin{itemize}[leftmargin=*]\itemsep=0pt
    \item For NC-PL and NC-SC problems, we prove that \alg achieves a per-client sample complexity of $O(\kappa^2 m^{-1}\epsilon^{-4})$ and a communication complexity of $O(\kappa\epsilon^{-2})$ in terms of the stationarity of both $f$ and $\Phi$. The previously best-known results without variance reduction in the federated setting are $O(\kappa^4 m^{-1}\epsilon^{-4})$ per-client sample complexity and  $O(\kappa^2\epsilon^{-2})$ communication complexity. We improve these results by a factor of $O(\kappa^2)$ in the sample complexity and a factor of $O(\kappa)$ in the communication complexity.

    \item To the best of our knowledge, we are the first to prove convergence results of solving \eqref{special case} under a federated setting. We prove that \alg has a sample complexity of $O(m^{-1}\epsilon^{-4})$ and a communication complexity of $O(\epsilon^{-2})$ in terms of the stationarity of both $f$ and $\Phi$, which is much better than the complexity we can achieve for general NC-C problems.
    
    \item For general NC-C and NC-1PC problems, we prove that \alg achieves comparable performances as the current state-of-the-art algorithm, but with weaker assumptions. Moreover, we provide additional convergence results for these two settings in terms of the stationarity of $f$. For the NC-C problems, we prove a best-known per-client sample complexity of $O(m^{-1}\epsilon^{-6})$ and a communication complexity of $O(\epsilon^{-3})$ in terms of stationarity of $f$.
    
    \item To the best of our knowledge, we are the first to provide convergence results of general federated minimax problems with the PL-PL condition. We prove a better communication complexity of \alg in the PL-PL setting, compared with Local SGDA under the SC-SC setting \citep{deng2021local}, despite that PL-PL is much weaker than SC-SC.

\end{itemize}

\subsection{Related Works}

\textbf{Nonconvex-Strongly-Concave.} For stochastic NC-SC problems, \citet{lin2020gradient} proved that SGDA achieves $O(\kappa^3\epsilon^{-4})$ sample complexity with a batch size of $O(\epsilon^{-2})$. \cite{qiu20single_timescale_ncsc, luo20SREDA_ncsc_neurips} improved the sample complexity to {$O (\kappa^3\epsilon^{-3})$} with a variance-reduction technique. \cite{yang2022faster} proved that Stochastic Smoothed-AGDA can achieve $O(\kappa^2\epsilon^{-4})$ sample complexity. 

\noindent\textbf{Nonconvex-Concave.} \cite{lin2020gradient} analyzed GDA and SGDA for NC-C problems and proved that GDA can achieve $O(\epsilon^{-6})$ sample complexity for the deterministic setting and SGDA can achieve $O(\epsilon^{-8})$ sample complexity for the stochastic setting. \cite{zhang2020single} proposed Smoothed-AGDA and proved that it can achieve $O(\epsilon^{-4})$ sample complexity for the deterministic setting. For the stochastic setting, \cite{rafique18WCC_oms, zhang2022sapd_NC_C_arxiv} improved the complexity to $O (\epsilon^{-6})$ with a nested structure.

\noindent\textbf{Federated minimax.}  There is a growing interest in solving federated minimax problems. Some focused on the convex-concave setting  \citep{deng2020distributionally, hou2021efficient,liao21local_AdaGrad_CC_arxiv, sun2022communication}. There is also progress in the nonconvex setting. 
\cite{mahdavi20dist_robustfl_neurips} analyzed a nonconvex-linear setting. \cite{reisizadeh20robustfl_neurips} formulated robust federated learning problems as special cases of federated PL-PL and NC-PL minimax problems and analyzed the convergence results of their proposed methods for these settings. \cite{deng2021local} proposed Local SGDA and Local SGDA+ and analyzed their convergence results under several nonconvex settings. \cite{sharma2022federated} improved the convergence results in \cite{deng2021local}. \cite{yang2022sagda} proposed SAGDA and improved the communication complexity for the NC-PL setting. \cite{sharma2023federated} proposed Fed-Norm-SGDA and Fed-Norm-SGDA+ and further improved the convergence results under several nonconvex settings. \cite{tarzanagh2022fednest} proposed FEDNEST with a nested structure and showed $O(\kappa^3\epsilon^{-4})$ sample complexity for the NC-SC setting. \cite{huang2022adaptive} designed AdaFGDA allowing for adaptive learning rates and improved the sample complexity to $\Tilde{O}(\epsilon^{-3})$ for NC-PL setting with variance-reduction techniques.  Recently, \citet{wu2023solving} proposed FedSGDA-M and improved the sample complexity to $O(\kappa^3\epsilon^{-3})$ for the NC-PL setting with variance-reduction techniques.

\section{PRELIMINARIES}
\label{section: pre}

\textbf{Notations.} We denote the $l_2$ norm as $\|\cdot\|_2$. For a  differentiable function $g(x,y)$, we denote its gradient as $\nabla g(x,y)=(\nabla_x g(x,y)^T, \nabla_y g(x,y)^T)^T$. We define $\Phi(x)=\max_{y\in Y}f(x,y)$, $ P_Y(y)=\argmin_{y'\in Y}\frac{1}{2}\|y-y'\|^2$.

We state some common assumptions that will be used throughout the paper. They are commonly used in (federated) minimax optimization; e.g., \citep{yang2022faster, zhang2020single, deng2021local, sharma2022federated}.

\begin{assume} [Lipschitz smooth]
\label{assum:smooth}
Each local function $f_i$ is differentiable and there exists a positive constant $l$ such that for all $i \in [M]$, and for all $x_1, x_2\in X, y_1$, $y_2 \in Y$, we have
\begin{align*}
     &\| \nabla f_i(x_{1} , y_{1}) - \nabla f_i(x_{2} , y_{2} )\|
     \leq l [\| x_{1} - x_{2} \| + \| y_{1} - y_{2} \|].
\end{align*}
\end{assume}
\begin{assume}[Bounded variance]
	\label{assum:bdd_var}
	The gradient of each local function $f_i(x,y,\xi_i)$, with a random data sample $\xi_i\sim\mathcal{D}_i$, is unbiased and has bounded variance, i.e., there exists a constant $\sigma > 0$ such that for all $i \in [M]$, and for all $x\in X, y\in Y$, $\mathbb{E}[ \nabla f_i(x, y; \xi_i) ] = \nabla f_i(x, y)$, and $\mathbb{E} \| \nabla f_i(x, y; \xi_i) - \nabla f_i(x, y) \|^2  \leq \sigma^2.$
	
\end{assume}

\begin{assume}[Bounded heterogeneity]
	\label{assum:bdd_hetero}
	To bound the heterogeneity of the local functions $\{ f_i(x, y) \}$ across the clients, we assume there exits a constant $\sigma_G>0$ such that
	\begin{align*}
    &\sup_{x\in X, y \in Y, i\in[M]} \|\nabla f_i(x,y)-\nabla f(x,y)\|^2\leq \sigma_G^2.
	\end{align*}
\end{assume}

\begin{assume}
	\label{assum:phi}
	$\Phi(x)=\max_{y \in Y} f(x,y)$ is lower bounded by a finite $\Phi^* > -\infty$. 
\end{assume}

The following notions of stationarity measures are also commonly used in the study of minimax optimization.

\begin{definition} [Stationarity measures of $f$]\quad
\label{def:f} We say $(\hat{x}, \hat{y})$ is an $(\epsilon_1, \epsilon_2)$-stationary point of a differentiable function $f(\cdot, \cdot)$ if $\|\nabla_x f(\hat{x}, \hat{y}) \| \leq \epsilon_1$ and  $l\|P_Y(\hat{y}+1/l\nabla_y f(\hat{x}, \hat{y}))-\hat{y} \| \leq \epsilon_2$. If  $(\hat{x}, \hat{y})$ is an $(\epsilon, \epsilon)$-stationary point of $f$, we say it is an $\epsilon$-stationary point of $f$.
\end{definition}

\begin{definition} [Stationarity measures of $\Phi$]\quad
\label{def:phi}
We say $\hat{x}$ is an $\epsilon$-stationary point of a differentiable function $\Phi(\cdot)$ if $\|\nabla\Phi(\hat{x})\|\leq\epsilon$.
\end{definition}
When $f$ satisfies the PL condition in $y$, $\Phi(x)=\max_{y\in Y} f(x,y)$ is $2\kappa l$-Lipschitz smooth \citep{nouiehed2019solving}. Thus, the stationarity measure of $\Phi$ is widely used in NC-PL and NC-SC settings. 
However, for other settings like NC-C, NC-1PC, $\Phi(x)$ is not guaranteed to be smooth, and the stationarity measure of the Moreau Envelope of the $\Phi(x)$ (Definition \ref{def:phi 1/2l}) is commonly used.

\begin{definition}[Moreau envelope]
A function $\Phi_{\lambda}(z)$ is the Moreau envelope of $\Phi(x)$ with $\lambda > 0$, if for all $z \in X$, $\Phi_\lambda(z) = \min_{x} \Phi (x) + (1/2\lambda) \|z-x\|^2.$
\end{definition}

\begin{definition} [Stationarity measures of $\Phi_{1/2l}$]
\label{def:phi 1/2l}
We say $\hat{x}$ is an $\epsilon$-stationary point of $\Phi_{1/2l}(\cdot)$ if $\|\nabla\Phi_{1/2l}(\hat{x})\|\leq\epsilon$.
\end{definition}

\section{FESS-GDA}
\label{section: alg}

\subsection{Algorithm}
Inspired by the success of Smoothed-AGDA in the centralized setting \citep{zhang2020single, yang2022faster}, we propose \alg, which is compactly presented in Algorithm \ref{alg1}, for the federated minimax optimization problem. We consider a system with $M$ clients and one central server. In each communication round, the server first randomly samples $m$ clients and then sends them the current global model $(x_t,y_t)$. For all participating clients, they synchronize their local models with the global model and perform $K$ local updates with their local data and local learning rate $\eta_{x,l}, \eta_{y,l}$. After the completion of local updates, each client sends back their local models to the server. Then, instead of a standard aggregation for local models like Local SGDA \citep{deng2021local}, the key difference of \alg here is that we introduce an auxiliary parameter $z_t$ to smooth the update of $x_t$.

Note that with  a small local learning rate that $x^k_{t,i}\approx x_t, y^k_{t,i}\approx y_t$ and Assumption \ref{assum:bdd_var}, the local updates can be approximated as 
\begin{align*}
x^{k+1}_{t,i} &\approx x^k_{t,i}-\eta_{x,l}\nabla_x f_i(x_t,y_t),\\
y^{k+1}_{t,i} &\approx y^k_{t,i}+\eta_{y,l}\nabla_y f_i(x_t,y_t),
\end{align*}
and with Assumption \ref{assum:bdd_hetero}, the update of $x_t,y_t$ can be approximated as 
\begin{align*}
    x_{t+1}&\approx x_t-\eta_{x,l}\eta_{x,g}K[\nabla_x f(x_t,y_t)+p(x_t-z_t)],\\
    y_{t+1}&\approx y_t+\eta_{y,l}\eta_{y,g}K\nabla_y f(x_t,y_t),\\
    z_{t+1}&=z_t+\beta(x_{t+1}-z_t),
\end{align*}
which has a similar form as the Smoothed-AGDA in the centralized setting.

Define $\hat{f}(x,y,z)=f(x,y)+\frac{p}{2}\|x-z\|^2$. Thus, in each communication round, we approximately perform gradient descent ascent of the following problem
\begin{align*}
    \min_x\max_y \hat{f}(x,y,z_t)=f(x,y)+\frac{p}{2}\|x-z_t\|^2.
\end{align*}

We set $\beta\in (0,1)$ to guarantee that $z_t$ is not too far from $x_t$. We choose $p=2l$ for the NC-PL, NC-1PC, NC-C settings so that $\hat{f}(x,y,z)$ is $l$-strongly convex in $x$. For the PL-PL setting, since $f$ satisfies the PL condition in $x$, we set $p=0$. Note that when $p=0, Y=\mathbb{R}^{d_2}$, \alg is equivalent to FSGDA \citep{yang2022sagda}, and when $p=0, Y=\mathbb{R}^{d_2}, \eta_{x,g}=\eta_{y,g}=1$, \alg is equivalent to Local SGDA \citep{deng2021local}.
\begin{algorithm}
  \caption{\alg}
  \label{alg1}
  \begin{algorithmic}[1]
  \STATE Input: $x_0,y_0,z_0, \eta_{x,l}, \eta_{y,l}, \eta_{x,g},\eta_{y,g},  \beta, p, T, K$
  \FOR{$t=0,1,\cdots,T-1$}
  \STATE Server randomly samples a subset $S_t$ of clients with $|S_t|=m$, and send them $(x_t,y_t)$.
  \FOR{each client $i \in  S_t$}
    \STATE $x^1_{t,i}=x_t, y^1_{t,i}=y_t$.
    \FOR{$k=1,2,\cdots,K$}
    \STATE $x^{k+1}_{t,i}=x^k_{t,i}-\eta_{x,l}\nabla_x f_i(x^k_{t,i},y^k_{t,i},\xi^k_{t,i})$
    \STATE $y^{k+1}_{t,i}=P_Y(y^k_{t,i}+\eta_{y,l}\nabla_y f_i(x^k_{t,i},y^k_{t,i},\xi^k_{t,i}))$
    \ENDFOR
    \STATE Each client send their local models $(x^{K+1}_{t,i}, y^{K+1}_{t,i})$ to the server.
  \ENDFOR
    \STATE Server aggregate local models and compute $(x_{t+1}, y_{t+1})$.
    \STATE $ x_{t+1}=x_{t}+\eta_{x,g}(\frac{1}{m}\sum_{i\in S_{t}}x^{K+1}_{t,i}-x_{t})-\eta_{x,l}\eta_{x,g}K p(x_t-z_t)$
    \STATE $ y_{t+1}=P_Y(y_{t}+\eta_{y,g}(\frac{1}{m}\sum_{i\in S_{t}}y^{K+1}_{t,i}-y_{t}))$
    \STATE $ z_{t+1}=z_t+\beta(x_{t+1}-z_t)$
  \ENDFOR
  \end{algorithmic}
\end{algorithm}

\subsection{Convergence}
We analyze the convergence behaviors of \alg under the following settings. All proofs are deferred to the appendix. 
\subsubsection{Nonconvex-PL}
Nonconvex-PL is a well-known weaker setting compared with Nonconvex-Strongly-Concave (NC-SC). Thus, the results in this section also hold for NC-SC.
\begin{assume} [PL condition in $y$]
\label{assum:pl}
Assume $X=\mathbb{R}^{d_1}, Y=\mathbb{R}^{d_2}$. 
For any fixed $x \in X$, $\max_{y\in Y} f(x,y)$ has a nonempty solution set and a finite optimal value. 
There exists $\mu > 0$ such that: $
   \Vert \nabla_yf(x,y) \Vert^2 \geq 2\mu [\max_{y}f(x, y)-f(x,y)], \forall x \in X,y\in Y.
$
\end{assume}
We denote $\kappa:=l/\mu$ in this section.
\begin{theorem}
\label{thm:sc}
Under Assumptions \ref{assum:smooth}, \ref{assum:bdd_var}, \ref{assum:bdd_hetero}, \ref{assum:phi} and \ref{assum:pl}, if we apply Algorithm \ref{alg1} with appropriately chosen parameters (see Appendix \ref{app: nc-pl}) and full client participation: $m=M$ or with homogeneous data: $\sigma_G=0$, we can find an $(\epsilon, \epsilon/\sqrt{\kappa})$-stationary point of $f$ with a per-client sample complexity of $O(\kappa^2m^{-1}\epsilon^{-4})$ and a communication complexity of $O(\kappa\epsilon^{-2})$. For partial client participation: $m<M$ and heterogeneous data: $\sigma_G>0$, we can find an $(\epsilon, \epsilon/\sqrt{\kappa})$-stationary point of $f$ with a per-client sample complexity of $O(\kappa^2m^{-1}\epsilon^{-4})$ and a communication complexity of $O(\kappa^2m^{-1}\epsilon^{-4})$.
\end{theorem}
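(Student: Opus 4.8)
The plan is to run a single-potential (Lyapunov) analysis that couples three scalar quantities along the iterates of Algorithm \ref{alg1}: the primal value $\Phi(x_t)$, the dual suboptimality $\delta_t := \Phi(x_t) - f(x_t,y_t) \ge 0$, and the smoothing discrepancy $\|x_t - z_t\|^2$. Since $f$ is PL in $y$, $\Phi$ is $2\kappa l$-smooth, and the PL inequality of Assumption \ref{assum:pl} supplies both gradient domination $\delta_t \le \frac{1}{2\mu}\|\nabla_y f(x_t,y_t)\|^2$ and quadratic growth, so that (by Danskin and $l$-smoothness) $\|\nabla_x f(x_t,y_t) - \nabla\Phi(x_t)\| \le l\,\mathrm{dist}(y_t,\mathcal{Y}^*(x_t)) \lesssim \sqrt{\kappa}\,\sqrt{\mu\,\delta_t}$. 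Because $Y=\mathbb{R}^{d_2}$ here, the projection in Definition \ref{def:f} is the identity, so these two facts are exactly what will convert control of $(\|\nabla\Phi(x_t)\|,\,\delta_t)$ into the claimed $(\epsilon,\epsilon/\sqrt{\kappa})$-stationarity $\|\nabla_x f\|\le\epsilon$, $\|\nabla_y f\|\le\epsilon/\sqrt{\kappa}$ at the end.

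The per-round estimates form the core. I would first write the exact updates of $(x_{t+1},y_{t+1},z_{t+1})$ and compare the aggregated local directions against the ideal ones $\nabla_x f(x_t,y_t)+p(x_t-z_t)$ and $\nabla_y f(x_t,y_t)$. A standard local-SGD drift lemma bounds $\frac{1}{mK}\sum_{i,k}\mathbb{E}\|x^k_{t,i}-x_t\|^2$ (and the $y$ analogue) by $O(\eta_{x,l}^2K^2(\|\nabla_x\hat{f}\|^2+\sigma_G^2)+\eta_{x,l}^2K\sigma^2)$, while client sampling contributes aggregation variance $\lesssim \tfrac{1}{m}(\sigma^2+\mathbf{1}[m<M]\,\sigma_G^2)$. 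Feeding these into the descent inequality from $L_\Phi$-smoothness of $\Phi$ and the $l$-strong convexity of $\hat{f}(\cdot,y_t,z_t)$ in $x$ (via $p=2l$) gives $\mathbb{E}\Phi(x_{t+1}) \le \mathbb{E}\Phi(x_t) - c\,\eta_{\mathrm{eff}}\,\mathbb{E}\|\nabla_x\hat{f}(x_t,y_t,z_t)\|^2 + (\text{drift}+\text{variance})$ with $\eta_{\mathrm{eff}}=\eta_{x,l}\eta_{x,g}K$; in parallel, the $y$-ascent combined with the PL inequality contracts the gap, $\mathbb{E}\delta_{t+1} \le (1-c'\mu\,\eta_{y,\mathrm{eff}})\mathbb{E}\delta_t + (\text{$x$-movement}+\text{drift}+\text{variance})$, and the recursion $z_{t+1}=z_t+\beta(x_{t+1}-z_t)$ contracts $\|x_t-z_t\|^2$ at rate $\beta$.

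I would then form $\Psi_t = \Phi(x_t) + \frac{p}{2}\|x_t-z_t\|^2 + \gamma\,\delta_t$ and tune the weight $\gamma$ together with the scales of $\eta_{x,\mathrm{eff}},\eta_{y,\mathrm{eff}},\beta,p=2l$ so that the $l$-strong convexity of $\hat{f}$ in $x$ produces a negative $-\Theta(\|x_t-x^*(z_t)\|^2)$ term dominating the cross-terms coupling the primal, dual, and smoothing pieces, and so that $\gamma$ cancels the $\Theta(\kappa)$ inflation that would otherwise enter the dual recursion (this cancellation is precisely the source of the improvement from $\kappa^4$ to $\kappa^2$). The resulting one-round bound, $\mathbb{E}\Psi_{t+1}\le\mathbb{E}\Psi_t - a\,\eta_{\mathrm{eff}}\,\mathbb{E}(\|\nabla\Phi(x_t)\|^2+\mu\,\delta_t) + b\,\eta_{\mathrm{eff}}\,(\text{variance}+\text{residual drift})$, telescopes using the lower bound $\Phi^*$ (Assumption \ref{assum:phi}) to give $\min_t\mathbb{E}(\|\nabla\Phi(x_t)\|^2+\mu\,\delta_t)\lesssim \frac{\Psi_0-\Phi^*}{T\eta_{\mathrm{eff}}}+(\text{residual})$. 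Two regimes then split off: with full participation or $\sigma_G=0$ the heterogeneity variance vanishes, so a large $K$ and aggressive $\eta_{x,g}$ are admissible, giving $T=O(\kappa\epsilon^{-2})$ rounds and $KT=O(\kappa^2m^{-1}\epsilon^{-4})$ samples; with $m<M$ and $\sigma_G>0$ the term $\sigma_G^2/m$ caps the effective step and forces $K=O(1)$, so $T=O(\kappa^2m^{-1}\epsilon^{-4})$. Converting the final bound via the gradient-domination and Danskin relations of the first paragraph yields the stated $(\epsilon,\epsilon/\sqrt{\kappa})$-stationarity of $f$.

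The main obstacle I anticipate is the joint choice of the potential weight $\gamma$ and the step-size ratios that turns the coupled primal--dual--smoothing inequality into a clean contraction with the sharpened $\kappa$-dependence, \emph{while} simultaneously absorbing the local-drift terms (which themselves scale with $\sigma_G^2$ and $\|\nabla\hat{f}\|^2$) and the partial-participation variance. Keeping the negative $-\Theta(\|\nabla\Phi\|^2+\mu\,\delta)$ term from being swamped is the delicate bookkeeping that fixes both the constants and the dichotomy between the $O(\kappa\epsilon^{-2})$ and $O(\kappa^2m^{-1}\epsilon^{-4})$ communication regimes.
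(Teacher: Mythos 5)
Your overall architecture (a Lyapunov function coupling primal progress, a dual-gap term, and the proximal-center drift $\|x_t-z_t\|^2$; local-drift and client-sampling variance lemmas; telescoping; the full/partial-participation dichotomy with $K=\Theta(\kappa m^{-1}\epsilon^{-2})$ vs.\ $K=O(1)$) matches the paper's, and your final conversion to $(\epsilon,\epsilon/\sqrt{\kappa})$-stationarity is fine. But there is a genuine gap at the heart of the argument: the potential you propose, $\Phi(x_t)+\tfrac{p}{2}\|x_t-z_t\|^2+\gamma\,\delta_t$ with $\delta_t=\Phi(x_t)-f(x_t,y_t)$, is the potential of the \emph{non-smoothed} two-timescale SGDA analysis. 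Its primal block is governed by $L_\Phi=2\kappa l$, and the coupling $\|\nabla_x f(x_t,y_t)-\nabla\Phi(x_t)\|^2\le 2\kappa l\,\mu\,\delta_t$ feeds a $\Theta(\kappa)$-inflated error into the $x$-descent each round; absorbing it into the contraction $(1-c'\mu\eta_{y,\mathrm{eff}})\delta_t$ is exactly what forces $\eta_x\lesssim\eta_y/\kappa^2$ in the known analyses and produces the $O(\kappa^4 m^{-1}\epsilon^{-4})$ sample complexity you are trying to beat. Your claim that "tuning $\gamma$ cancels the $\Theta(\kappa)$ inflation" is the one step that is asserted rather than derived, and it is precisely the step that fails for this choice of potential.

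The paper avoids this by never invoking the smoothness of $\Phi$ at all. Its potential is $V_t=\hat{f}(x_t,y_t,z_t)-2\Psi(y_t,z_t)+2P(z_t)$, where $\Psi(y,z)=\min_x\hat{f}(x,y,z)$ and $P(z)=\min_x\max_y\hat{f}(x,y,z)$ are built on the \emph{smoothed} problem: since $p=2l$ makes $\hat{f}(\cdot,y,z)$ $l$-strongly convex, $\Psi(\cdot,z)$ is $L_\Psi=l+l\gamma_2=O(l)$-smooth with no $\kappa$ dependence (Lemma \ref{lemma: Psi}), which is what licenses the single-timescale ratio $\eta_y=\eta_x/256$. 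The PL condition then enters only twice: through Lemma \ref{lemma: sc Y=R}, which bounds the residual $\|x^*(y_t,z_t)-x^*(z_t)\|^2\le\frac{2}{l\mu}\bigl(\|\nabla_y f(w_t)\|^2+\|\nabla_x\hat{f}(w_t,z_t)\|^2\bigr)$ so that choosing $\beta=\Theta(\mu\eta_y K)$ absorbs the $-24p\beta\|x^*(z_t)-x^*(y_t,z_t)\|^2$ error in Lemma \ref{lemma: v}, and through the final weighting $\|\nabla_x f\|^2+\kappa\|\nabla_y f\|^2$. To repair your proof you would need to replace $\Phi(x_t)$ and $\delta_t$ by $\hat{f}(x_t,y_t,z_t)-\Psi(y_t,z_t)$ and $P(z_t)-\Psi(y_t,z_t)$ (or supply an actual mechanism by which your $\gamma$ removes the $\kappa$ factor, which I do not believe exists).
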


The formal statement and proof of Theorem~\ref{thm:sc} can be found in Appendix \ref{app: nc-pl}.
When $m=M$ or $\sigma_G=0$, we set the number of local updates $K=\Theta(\kappa m^{-1}\epsilon^{-2})$ and can have a communication complexity of $O(\kappa\epsilon^{-2})$. 
However, when $m<M$ and $\sigma_G>0$, our result does not show any convergence benefits from multiple local updates and can set $K=O(1)$ with a communication complexity of $O(\kappa^2m^{-1}\epsilon^{-4})$.
Similar behaviors have also been observed in other federated minimization and minimax works \citep{yang2021achieving, jhunjhunwala2022fedvarp, yang2022sagda,sharma2023federated}. As for the complexity, our per-client sample complexity exhibits a linear speedup w.r.t the number of participated clients.
 
When $M=1$, our results recover the convergence results of Smoothed-AGDA in the centralized setting \citep{yang2022faster}. Similar to \cite{yang2022faster}, we can also translate an $(\epsilon, \epsilon/\sqrt{\kappa})$-stationary point of $f$ to an $\epsilon$-stationary point of $\Phi$ under the federated setting, as stated below.

\begin{prop} [Translation] \quad
\label{Translation} 
 Under Assumptions \ref{assum:smooth}, \ref{assum:bdd_var}, \ref{assum:bdd_hetero}, \ref{assum:phi} and \ref{assum:pl}, if $(\tilde{x}, \tilde{y})$ is an $(\epsilon, \epsilon/\sqrt{\kappa})$-stationary point of $f$, then we can find an $O(\epsilon)$-stationary point of $\Phi$ by solving $\min_x\max_y \{f(x, y) + l\|x-\tilde{x}\|^2\}$ from the initial point $(\tilde{x}, \tilde{y})$ using \alg. When $m=M$ or $\sigma_G=0$,  we need additional $O(\kappa^5 m^{-1} \epsilon^{-2}\log(\kappa))$ per-client sample complexity and $O(\kappa\log(\kappa))$ communication complexity. When $m<M$ and $\sigma_G>0$, we need additional $O(\kappa^5 m^{-1} \epsilon^{-2}\log(\kappa))$ per-client sample complexity and $O(\kappa^5 m^{-1} \epsilon^{-2}\log(\kappa))$ communication complexity.
\end{prop}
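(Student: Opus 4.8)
The plan is to reduce the problem to optimizing a strongly-convex--PL surrogate and then convert near-optimality of that surrogate into near-stationarity of $\Phi$. Set $g(x,y)=f(x,y)+l\|x-\tilde x\|^2$, which is exactly $\hat f(x,y,\tilde x)$ with $p=2l$. First I would check that $g$ inherits every structural assumption: the added term is deterministic and identical across clients, so Assumptions \ref{assum:smooth}--\ref{assum:phi} hold for $g$ with the gradient-Lipschitz constant in $x$ enlarged only to $3l$ and with $\sigma,\sigma_G$ unchanged; since the term is independent of $y$ we have $\nabla_y g=\nabla_y f$ and $\max_y g(x,\cdot)-g(x,y)=\Phi(x)-f(x,y)$, so $g$ satisfies the same PL condition (Assumption \ref{assum:pl}) with the same $\mu$. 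Because each $f(\cdot,y)$ is $l$-smooth, $\Phi=\max_y f(\cdot,y)$ is $l$-weakly convex, hence $\Phi_g(x):=\max_y g(x,y)=\Phi(x)+l\|x-\tilde x\|^2$ is $l$-strongly convex with a unique minimizer $\hat x$. First-order optimality $\nabla\Phi(\hat x)+2l(\hat x-\tilde x)=0$ then gives the translation identity $\|\nabla\Phi(\hat x)\|=2l\|\hat x-\tilde x\|$, so the whole task reduces to showing $\|\hat x-\tilde x\|=O(\epsilon/l)$ and controlling the optimization error of \alg on $g$.

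The core estimate bounds $\|\hat x-\tilde x\|$ directly from the $(\epsilon,\epsilon/\sqrt{\kappa})$-stationarity of $f$, and this is where the $O(\epsilon)$ guarantee (rather than the naive $O(\sqrt{\kappa}\,\epsilon)$ obtained from $\|\nabla\Phi(\tilde x)\|$) is gained. Since $Y=\mathbb{R}^{d_2}$ the $y$-measure is just $\|\nabla_y f(\tilde x,\tilde y)\|\le\epsilon/\sqrt{\kappa}$, so the PL inequality yields the small primal gap $\Phi(\tilde x)-f(\tilde x,\tilde y)\le\|\nabla_y f(\tilde x,\tilde y)\|^2/(2\mu)\le\epsilon^2/(2l)$. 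Optimality of $\hat x$ for $\Phi_g$ gives $l\|\hat x-\tilde x\|^2\le\Phi(\tilde x)-\Phi(\hat x)\le\Phi(\tilde x)-f(\hat x,\tilde y)$, and I would split the right-hand side as $[\Phi(\tilde x)-f(\tilde x,\tilde y)]+[f(\tilde x,\tilde y)-f(\hat x,\tilde y)]$, bounding the first term by the gap above and the second, via $l$-smoothness together with $\|\nabla_x f(\tilde x,\tilde y)\|\le\epsilon$, by $\epsilon\|\hat x-\tilde x\|+\frac{l}{2}\|\hat x-\tilde x\|^2$. Writing $u=\|\hat x-\tilde x\|$ this collapses to the scalar inequality $\frac{l}{2}u^2-\epsilon u-\frac{\epsilon^2}{2l}\le 0$, hence $u=O(\epsilon/l)$ and $\|\nabla\Phi(\hat x)\|=2lu=O(\epsilon)$.

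It remains to pass from the exact minimizer $\hat x$ to the output $\hat x_T$ of \alg and to count the cost. Since $\Phi$ is $2\kappa l$-smooth, $\|\nabla\Phi(\hat x_T)\|\le\|\nabla\Phi(\hat x)\|+2\kappa l\|\hat x_T-\hat x\|$, so it suffices to drive the optimization error to $\mathbb{E}\|\hat x_T-\hat x\|^2=O(\epsilon^2/(\kappa l)^2)$, which keeps the second term $O(\epsilon)$. Here I would exploit that $g$ is strongly convex in $x$ and PL in $y$ with condition number $O(\kappa)$: \alg then contracts linearly to a stochastic noise floor, exactly as in the PL-PL analysis (Theorem \ref{thm: plpl}) specialized to this instance. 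The linear rate produces $O(\kappa\log(\cdot))$ communication rounds, and fixing the target accuracy $\epsilon^2/(\kappa l)^2$ determines the per-client sampling so that the logarithmic factor collapses to $\log\kappa$ and the noise-floor term yields the stated $O(\kappa^5 m^{-1}\epsilon^{-2}\log\kappa)$ per-client sample complexity with $O(\kappa\log\kappa)$ communication when $m=M$ or $\sigma_G=0$; for $m<M$ and $\sigma_G>0$, multiple local steps give no benefit (as in Theorem \ref{thm:sc}), so the communication complexity matches the sample complexity.

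The deterministic translation (the scalar quadratic inequality) is short and is the conceptual heart of the argument. The main obstacle is the complexity bookkeeping in the last step: establishing the linear convergence of \alg on the SC-PL surrogate and tracking how the $\kappa$-dependent target accuracy $\epsilon^2/(\kappa l)^2$ propagates through the noise floor and the choice of local steps to land exactly on the $\kappa^5$ and $\log\kappa$ factors. One must also verify that initializing \alg at $(\tilde x,\tilde y)$ gives an initial Lyapunov value that is already $O(\kappa)$-controlled, using the small primal gap $\Phi(\tilde x)-f(\tilde x,\tilde y)\le\epsilon^2/(2l)$ established above; otherwise the logarithmic factor would acquire additional $\epsilon$-dependence rather than reducing to $\log\kappa$.
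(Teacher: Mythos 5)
Your proposal is correct and follows essentially the same route as the paper: reduce to minimizing the $l$-strongly-convex/$\mu$-PL surrogate $f(x,y)+l\|x-\tilde x\|^2$ via the PL--PL analysis, use $\|\nabla\Phi(\hat x)\|=2l\|\hat x-\tilde x\|$ together with the $2\kappa l$-smoothness of $\Phi$ to set the target accuracy $\mathbb{E}\|\hat x_T-\hat x\|\le \epsilon/(\kappa l)$, and exploit the $O(\epsilon^2/l)$ bound on the initial Lyapunov value to collapse the logarithmic factor to $\log\kappa$. The only (welcome) deviation is that you derive the key estimate $\|\hat x-\tilde x\|=O(\epsilon/l)$ self-containedly from the scalar quadratic inequality, whereas the paper imports $\|\nabla\Phi_{1/2l}(\tilde x)\|\le 2\sqrt{2}\epsilon$ from Proposition~2.1 of \citet{yang2022faster}.
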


With Proposition \ref{Translation}, we have the following corollary.

\begin{corollary} \label{coro:sc full participation}
Under Assumptions \ref{assum:smooth}, \ref{assum:bdd_var}, \ref{assum:bdd_hetero}, \ref{assum:phi} and \ref{assum:pl}, when $m=M$ or $\sigma_G=0$, we can use \alg to find  an $\epsilon$-stationary point of $\Phi$  with a per-client sample complexity of $O(\kappa^2m^{-1}\epsilon^{-4}+\kappa^5 m^{-1} \epsilon^{-2}\log(\kappa))$ and a communication complexity of $O(\kappa\epsilon^{-2}+\kappa\log(\kappa))$.  When $m<M$ and $\sigma_G>0$, we can use \alg to find  an $\epsilon$-stationary point of $\Phi$  with a per-client sample complexity of $O(\kappa^2m^{-1}\epsilon^{-4}+\kappa^5 m^{-1} \epsilon^{-2}\log(\kappa))$ and a communication complexity of $O(\kappa^2m^{-1}\epsilon^{-4}+\kappa^5 m^{-1} \epsilon^{-2}\log(\kappa))$.
\end{corollary}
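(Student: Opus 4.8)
The plan is to derive Corollary \ref{coro:sc full participation} as a direct composition of Theorem \ref{thm:sc} and Proposition \ref{Translation}, since the corollary is simply the two-stage procedure of first finding an approximate stationary point of $f$ and then translating it into a stationary point of $\Phi$. No new analytical machinery is needed; the work is entirely in bookkeeping the complexities from the two stages.

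First I would invoke Theorem \ref{thm:sc} to obtain an $(\epsilon, \epsilon/\sqrt{\kappa})$-stationary point $(\tilde{x}, \tilde{y})$ of $f$. In the full-participation or homogeneous regime ($m=M$ or $\sigma_G=0$), this costs a per-client sample complexity of $O(\kappa^2 m^{-1}\epsilon^{-4})$ and a communication complexity of $O(\kappa\epsilon^{-2})$; in the partial-participation and heterogeneous regime ($m<M$ and $\sigma_G>0$), the sample complexity is unchanged but the communication complexity becomes $O(\kappa^2 m^{-1}\epsilon^{-4})$. Next I would apply Proposition \ref{Translation}, which takes $(\tilde{x}, \tilde{y})$ as the initial point and runs \alg on the regularized problem $\min_x\max_y\{f(x,y)+l\|x-\tilde{x}\|^2\}$ to produce an $O(\epsilon)$-stationary point of $\Phi$. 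This contributes an additional $O(\kappa^5 m^{-1}\epsilon^{-2}\log(\kappa))$ per-client samples in both regimes, along with $O(\kappa\log(\kappa))$ communications in the full/homogeneous case and $O(\kappa^5 m^{-1}\epsilon^{-2}\log(\kappa))$ communications in the partial/heterogeneous case.

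Summing the two stages then yields the stated bounds. The sample complexities add to $O(\kappa^2 m^{-1}\epsilon^{-4}+\kappa^5 m^{-1}\epsilon^{-2}\log(\kappa))$ in both regimes; the communication complexities add to $O(\kappa\epsilon^{-2}+\kappa\log(\kappa))$ in the full/homogeneous case and to $O(\kappa^2 m^{-1}\epsilon^{-4}+\kappa^5 m^{-1}\epsilon^{-2}\log(\kappa))$ in the partial/heterogeneous case, exactly as claimed. Because Proposition \ref{Translation} only guarantees an $O(\epsilon)$-stationary point of $\Phi$ rather than an exact $\epsilon$-stationary point, the one subtlety is to rescale the target accuracy by the appropriate absolute constant $c$ (running both stages with $c\epsilon$ in place of $\epsilon$) so that the output is a genuine $\epsilon$-stationary point; since $c$ is a constant independent of $\kappa, m, \epsilon$, this leaves every complexity order unchanged.

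Honestly, there is no substantive obstacle here: the entire technical difficulty lives inside Theorem \ref{thm:sc} and Proposition \ref{Translation}, and the corollary is a mechanical concatenation of their guarantees. The only point requiring mild care is the constant-factor rescaling of $\epsilon$ just described, together with confirming that both regimes of Theorem \ref{thm:sc} and Proposition \ref{Translation} are matched to one another correctly when the complexities are summed.
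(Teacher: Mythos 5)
Your proposal is correct and matches the paper's own argument exactly: the paper derives this corollary as an immediate composition of Theorem \ref{thm:sc} (to get an $(\epsilon,\epsilon/\sqrt{\kappa})$-stationary point of $f$) with Proposition \ref{Translation} (to convert it to an $O(\epsilon)$-stationary point of $\Phi$), summing the complexities of the two stages. The constant-factor rescaling of $\epsilon$ you mention is the only bookkeeping subtlety, and it is handled implicitly in the paper as well.
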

When $\epsilon$ is small such that $\epsilon\leq \Tilde{O}(\kappa^{-3/2})$, the  sample and communication complexity needed to find an $\epsilon$-stationary point of $\Phi$ have the same order as the  complexity in finding $(\epsilon, \epsilon/\sqrt{\kappa})$-stationary point of $f$. 
{Therefore, in terms of finding an $\epsilon$-stationary point of $\Phi$, our result presents the best-known communication complexity under similar settings. Compared with previous algorithms without variance reduction, we improve the sample complexity by a factor of $O(\kappa^2)$. We also establish additional convergence results in terms of stationarity of $f$.}

\subsubsection{Nonconvex-One-Point-Concave}
Nonconvex-One-Point-Concave (Assumption \ref{assum:1pc_y}) is a weaker setting than Nonconvex-Concave, and is studied in many federated minimax works \citep{deng2021local, sharma2022federated, sharma2023federated}. {We use the following assumptions for this setting.}

\begin{assume}[Compactness in $y$]
	\label{assum:x, bounded y}
	$X = \mathbb{R}^{d_1}$. $Y$ is a convex, compact set of $\mathbb{R}^{d_2}$, and $D(Y)$ denotes the diameter of $Y$.
\end{assume}

\begin{assume}[Lipschitz continuity in $y$]
\label{assum:bounded G_y}
    For any $x\in X, y,y'\in Y$, we have a finite number $G_y$, such that $\|f(x,y)-f(x,y')\|\leq G_y\|y-y'\|.$

\end{assume}
{A similar assumption (Lipschitz continuity in $x$) is also used in \cite{deng2021local, sharma2022federated, sharma2023federated}.}

\begin{assume}[One-Point-Concave in $y$]
   	\label{assum:1pc_y}
    For all $x \in X$, for all $y \in Y$, we have $\langle \nabla_y f(x, y), y - y^*(x) \rangle \leq f(x, y) - f(x, y^*(x)),$  where $y^*(x) \in \argmax_{y\in Y} f(x, y)$. 
\end{assume}

\begin{theorem} 
\label{thm:1pc}
Under Assumptions \ref{assum:smooth}, \ref{assum:bdd_var}, \ref{assum:bdd_hetero}, \ref{assum:phi}, \ref{assum:x, bounded y}, \ref{assum:bounded G_y}, \ref{assum:1pc_y} and $\epsilon^2 \leq l D(Y)$, if we apply Algorithm \ref{alg1} with appropriately chosen parameters (see Appendix \ref{app: nc-1pc}), with full client participation: $m=M$ or with homogeneous data: $\sigma_G=0$, we can find an $(\epsilon, \epsilon^2)$-stationary point of $f$ and an $\epsilon$-stationary point of $\Phi_{1/2l}$ with a per-client sample complexity of $O(m^{-1}\epsilon^{-8})$ and a communication complexity of $O(\epsilon^{-4})$.
\end{theorem}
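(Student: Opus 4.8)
The plan is to treat \alg as a stochastic, federated, and inexact realization of centralized Smoothed-AGDA, and to control the extra error introduced by local updates, heterogeneity, partial participation, and stochastic noise. The starting observation is that with $p=2l$ the function $\hat{f}(x,y,z)=f(x,y)+l\|x-z\|^2$ is $l$-strongly convex in $x$, and the associated smoothed value function satisfies
\begin{equation*}
\Psi(z):=\min_{x}\max_{y\in Y}\hat{f}(x,y,z)=\min_x\Big\{\Phi(x)+l\|x-z\|^2\Big\}=\Phi_{1/2l}(z).
\end{equation*}
Thus the quantity we must drive to stationarity is exactly the Moreau envelope of $\Phi$, and it suffices to show that $z_t$ approaches a point where $\|z_t-x^\ast(z_t)\|$ is small, with $x^\ast(z)=\argmin_x\{\Phi(x)+l\|x-z\|^2\}$ the unique inner minimizer; since $\nabla\Phi_{1/2l}(z)=2l(z-x^\ast(z))$, this directly yields $\epsilon$-stationarity of $\Phi_{1/2l}$.

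First I would build a Lyapunov function of the form
\begin{equation*}
V_t=\Phi_{1/2l}(z_t)+c_1\Big[\Phi(x_t)+l\|x_t-z_t\|^2-\Phi_{1/2l}(z_t)\Big]+c_2\,a_t,
\end{equation*}
where $a_t:=\Phi(x_t)-f(x_t,y_t)\ge 0$ is the dual optimality gap and $c_1,c_2>0$ are tuned later. The bracketed term measures how far $x_t$ is from $x^\ast(z_t)$ and is controlled by the $l$-strong convexity of $\hat{f}$ in $x$; the $z$-update $z_{t+1}=z_t+\beta(x_{t+1}-z_t)$ produces descent on $\Phi_{1/2l}$ proportional to $-\beta\|x_t-z_t\|^2$. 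I would derive one-step inequalities for each piece, with the leading terms governed by the approximate updates $x_{t+1}\approx x_t-\eta_{x,l}\eta_{x,g}K[\nabla_x f(x_t,y_t)+2l(x_t-z_t)]$ and $y_{t+1}\approx P_Y(y_t+\eta_{y,l}\eta_{y,g}K\nabla_y f(x_t,y_t))$, treating the federated deviations as controllable remainders.

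The hard part will be the dual gap $a_t$. Unlike the NC-PL/NC-SC case, one-point-concavity (Assumption~\ref{assum:1pc_y}) gives no geometric contraction of $a_t$; it yields only the one-sided bound $\langle\nabla_y f(x_t,y_t),\,y_t-y^\ast(x_t)\rangle\le f(x_t,y_t)-\Phi(x_t)=-a_t$. I would combine this with the projected-ascent dynamics of $y_t$ and the compactness of $Y$ (Assumption~\ref{assum:x, bounded y}) to bound the \emph{cumulative} gap $\sum_t a_t$ rather than each $a_t$ individually, in the spirit of the centralized NC-C analysis. The additional federated obstacle is that each client runs $K$ local ascent steps on its own $f_i$, which need not be one-point-concave (only the global $f$ is), so the local $y$-trajectories drift apart and their average need not track $y^\ast(x_t)$; I would control this by bounding $\frac1m\sum_{i\in S_t}\|y^k_{t,i}-y_t\|^2$ and $\frac1m\sum_{i\in S_t}\|x^k_{t,i}-x_t\|^2$ via Assumptions~\ref{assum:smooth}, \ref{assum:bdd_var}, \ref{assum:bdd_hetero} and small local step sizes, so the aggregated step is a sufficiently accurate surrogate for a global smoothed-GDA step. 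The variance and heterogeneity terms enter additively, with the averaging over $m$ participating clients supplying the $1/m$ variance reduction that produces the linear speedup.

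Finally I would telescope the one-step descent of $V_t$, use that $\Phi_{1/2l}$ is bounded below (Assumption~\ref{assum:phi}), and pick the parameters of Appendix~\ref{app: nc-1pc} to balance descent against accumulated error, obtaining an averaged bound on $\frac1T\sum_t\big(\|x_t-z_t\|^2+a_t\big)$. A small $\|x_t-z_t\|$ gives $\|\nabla\Phi_{1/2l}(z_t)\|\le\epsilon$ and, through $\nabla_x f(x_t,y_t)=-2l(x_t-z_t)$ at an approximate inner minimizer, the $x$-part $\|\nabla_x f(x_t,y_t)\|\le\epsilon$ of $f$-stationarity; a small dual gap together with the concavity-to-stationarity conversion and the hypothesis $\epsilon^2\le lD(Y)$ yields the sharper $y$-part $l\|P_Y(y_t+\tfrac1l\nabla_y f(x_t,y_t))-y_t\|\le\epsilon^2$. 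Balancing the resulting expressions forces $T=O(\epsilon^{-4})$ communication rounds, and combined with the per-round sample count this gives a per-client sample complexity of $O(m^{-1}\epsilon^{-8})$. I expect the dual analysis under only one-point-concavity, compounded with the local-update drift of non-one-point-concave $f_i$, to be the crux of the argument.
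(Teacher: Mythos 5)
Your high-level architecture is right — identify $P(z)=\min_x\max_{y\in Y}\hat f(x,y,z)$ with $\Phi_{1/2l}(z)$, build a primal--dual-gap potential around it, telescope, and convert to the two stationarity measures — but the proof diverges from what actually works at the one step you yourself flag as the crux, and the gap there is real. Your potential carries $\Phi(x_t)$ (equivalently $\max_{y}\hat f(x_t,y,z_t)$) in both the primal-gap and dual-gap terms. Under NC-1PC, $\Phi$ is nonsmooth (only weakly convex), so there is no descent-lemma-type upper bound on $\Phi(x_{t+1})-\Phi(x_t)$ along the algorithm's step: weak convexity gives a subgradient inequality in the wrong direction, and the usable subgradient lives at $y^*(x_{t+1})$, which is exactly the object 1PC gives you no control over. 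The paper's potential $V_t=\hat f(x_t,y_t,z_t)-2\Psi(y_t,z_t)+2P(z_t)$ deliberately expresses both gaps through $\Psi(y,z)=\min_x\hat f(x,y,z)$ and $P(z)$, which \emph{are} smooth because $p=2l$ makes $\hat f(\cdot,y,z)$ $l$-strongly convex; that is what makes the one-step recursion (Lemma \ref{lemma: v}) provable. Relatedly, your plan to bound the cumulative dual gap $\sum_t a_t$ ``in the spirit of the centralized NC-C analysis'' presupposes either an inner loop with frozen $x$ or a dual variable moving on a faster timescale; \alg is single-loop with $\eta_y=\eta_x/256$ (the dual variable is the \emph{slower} one), so that argument does not transfer.

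What replaces it in the paper is an error-bound lemma you do not have: under one-point-concavity and compactness of $Y$ (Lemma \ref{lemma: 1pc}), $\|x^*(z)-x^*(y^+(z),z)\|^2\lesssim \frac{D(Y)}{\eta_yKl}\,\|y-y^+(z)\|$, i.e., the distance between the smoothed minimizers is controlled by the \emph{first power} of the dual gradient-mapping step, times $D(Y)$. Plugging this into the descent inequality via Young's inequality (Lemma \ref{lemma: c, epsilon}) converts the problematic cross term $-O(p\beta)\|x^*(z_t)-x^*(y_t^+(z_t),z_t)\|^2$ into an absorbable multiple of $\|y_t^+(z_t)-y_t\|^2$ plus an irreducible floor of order $\beta D(Y)\epsilon^2$; choosing $\beta=\Theta(\eta_yK\epsilon^2/D(Y))$ is what simultaneously kills that floor and degrades the $\|x_t-z_t\|^2$ coefficient to order $\epsilon^2$, and it is precisely this $\epsilon^2$-small $\beta$ that forces $T=O(\epsilon^{-4})$ communication rounds. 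Your sketch never identifies this mechanism or where the $\epsilon^{-4}$ comes from, and the route you propose instead does not close for this algorithm. (Your final conversion steps are fine: $\|\nabla_xf\|^2\le 2\|\nabla_x\hat f\|^2+2p^2\|x_t-z_t\|^2$ gives the $x$-part, and one projected-ascent descent step converts a small dual quantity into a small gradient mapping for the $y$-part — but the paper bounds $\frac{1}{\eta_y^2K^2}\|y_t^+(z_t)-y_t\|^2$ directly from the potential rather than passing through the function-value gap.)
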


We achieve comparable sample and communication complexity as the state-of-the-art algorithm Fed-Norm-SGDA+ \citep{sharma2023federated}. However, their proof requires an additional assumption that each local loss function $f_i$ satisfies the NC-1PC condition with a common global minimizer $y^*(x)$, while ours only requires the global loss functions to be NC-1PC. Moreover, several federated minimax works \citep{deng2021local, sharma2022federated, sharma2023federated} assume $\|y_t\|^2\leq D$, but did not specify how to guarantee it. We not only use this assumption but also use the projection operator in our algorithm to achieve this guarantee.

If we set $M=1$, $K=1$, and assume $\sigma=0$, then Problem \eqref{problem} reduces to the centralized deterministic minimax optimization problem and \alg reduces to Smoothed-GDA (Algorithm \ref{alg2}) \citep{zhang2020single}. Additionally, we have the following corollary for Smoothed-GDA under a centralized deterministic NC-1PC setting.

\begin{algorithm}
  \caption{Smoothed-GDA}
  \label{alg2}
  \begin{algorithmic}[1]
  \STATE Input: $x_0,y_0,z_0, \eta_{x}, \eta_{y}, \beta, p, T$
  \FOR{$t=0,1,...,T-1$}
    \STATE $x_{t+1}=x_{t}-\eta_{x}[\nabla_x f(x_{t},y_{t})+p(x_t-z_t)]$
    \STATE $y_{t+1}=P_Y(y_{t}+\eta_{y}\nabla_y f(x_{t},y_{t}))$
    \STATE $ z_{t+1}=z_t+\beta(x_{t+1}-z_t)$
  \ENDFOR
  \end{algorithmic}
\end{algorithm}

\begin{corollary}
\label{thm:1pc, dc}
Under Assumptions \ref{assum:smooth}, \ref{assum:phi}, \ref{assum:x, bounded y}, \ref{assum:bounded G_y}, \ref{assum:1pc_y}, and when $M=1$, $\epsilon^2 \leq l D(Y)$, if we apply Algorithm \ref{alg2} with appropriately chosen parameters (see Appendix \ref{app: nc-1pc}), we can find an $(\epsilon, \epsilon^2)$-stationary point of $f$ and an $\epsilon$-stationary point of $\Phi_{1/2l}$ with a sample complexity of $O(\epsilon^{-4})$. 
\end{corollary}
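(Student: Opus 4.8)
The plan is to obtain Corollary~\ref{thm:1pc, dc} as a specialization of Theorem~\ref{thm:1pc} to the centralized deterministic regime, exploiting the reduction noted just above it: when $M=1$, $K=1$, and $\sigma=0$, Algorithm~\ref{alg1} (\alg) collapses exactly to Algorithm~\ref{alg2} (Smoothed-GDA). First I would check that the two hypotheses of Theorem~\ref{thm:1pc} that are \emph{absent} from the corollary hold automatically here. Bounded variance (Assumption~\ref{assum:bdd_var}) holds trivially with $\sigma=0$, since full-gradient updates are exact. Bounded heterogeneity (Assumption~\ref{assum:bdd_hetero}) holds with $\sigma_G=0$, because with a single client $f=f_1$ and hence $\nabla f_i-\nabla f\equiv 0$. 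Thus we land in the ``full participation / homogeneous data'' regime ($m=M=1$, $\sigma_G=0$) of Theorem~\ref{thm:1pc}, while the remaining assumptions (Assumptions~\ref{assum:smooth}, \ref{assum:phi}, \ref{assum:x, bounded y}, \ref{assum:bounded G_y}, \ref{assum:1pc_y} and the condition $\epsilon^2\le lD(Y)$) are inherited verbatim.

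Next I would invoke the guarantee of Theorem~\ref{thm:1pc}: with the prescribed parameters (in particular $p=2l$, so that $\hat f(\cdot,y,z)$ is $l$-strongly convex in $x$) it produces an $(\epsilon,\epsilon^2)$-stationary point of $f$ and an $\epsilon$-stationary point of $\Phi_{1/2l}$ within $T=O(\epsilon^{-4})$ communication rounds. Since in Algorithm~\ref{alg2} each round performs a single exact gradient evaluation of $f$ (one local step, $K=1$, no sampling), the total number of gradient evaluations equals $T$, yielding the claimed sample complexity $O(\epsilon^{-4})$. Both stationarity conclusions carry over unchanged, as they are stated identically in Theorem~\ref{thm:1pc}.

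The subtle point, and the step I expect to require the most care, is that the sample complexity is \emph{not} recovered by naively substituting $m=1$ into the per-client bound $O(m^{-1}\epsilon^{-8})$ of Theorem~\ref{thm:1pc}, which would give only $O(\epsilon^{-8})$. The extra factor $\epsilon^{-4}$ there reflects the per-round sampling needed to push the stochastic-gradient variance below $O(\epsilon^2)$, and it enters the analysis only through terms proportional to $\sigma^2$ and $\sigma_G^2$. I would therefore re-trace the proof of Theorem~\ref{thm:1pc} and verify that every term forcing the batch size or the number of local steps $K$ above $1$ carries a factor of $\sigma^2$ or $\sigma_G^2$. Once $\sigma=\sigma_G=0$, all such terms vanish from the Lyapunov/descent argument, so $K=1$ and unit batch size are admissible while the iteration count $T=O(\epsilon^{-4})$ is unaffected. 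Confirming this clean separation between the deterministic descent and the variance-control terms is the crux; the remainder is bookkeeping inherited directly from Theorem~\ref{thm:1pc}.
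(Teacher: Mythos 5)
Your proposal is correct and follows essentially the same route as the paper: the paper likewise observes that Smoothed-GDA with the stated parameters is the $M=m=K=1$, $\sigma=\sigma_G=0$ instance of \alg, then re-invokes the per-iteration descent inequality from the proof of Theorem~\ref{thm:1pc} (inequality~(\ref{v, 1pc, dc})), in which every term that forced $K=\Theta(m^{-1}\epsilon^{-4})$ indeed carries a factor of $\sigma^2$ or $\sigma_G^2$ and hence vanishes, leaving only the $-O(\beta D(Y)\epsilon^2)$ term and the $\Delta/(\eta_x T)$ term so that $T=O(\epsilon^{-4})$ suffices. You correctly flagged the one subtle point — that the corollary does not follow from the theorem's final complexity statement but from its proof with the variance terms zeroed out — which is precisely how the paper argues it.
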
 
Compared to the $O(\epsilon^{-4})$ sample complexity of Smoothed-GDA achieved in \cite{zhang2020single} under NC-C setting, we achieve the same sample complexity under a weaker condition (NC-1PC).

\subsubsection{Nonconvex-Concave}
Since NC-1PC is weaker than NC-C, the results in Theorem \ref{thm:1pc} also hold for NC-C. Moreover, we have improved complexity results in terms of the stationarity of $f$, as presented in this section.

\begin{assume} [Concavity in $y$]
\label{assum:concave}
For all $x \in  X$ and all $ y, y' \in Y$, we have  $f(x,y)\leq f(x,y')+\langle\nabla_y f(x,y'), y-y'\rangle.$

\end{assume}

\begin{theorem} 
\label{thm:c f}
Under Assumptions \ref{assum:smooth}, \ref{assum:bdd_var}, \ref{assum:bdd_hetero}, \ref{assum:phi}, \ref{assum:x, bounded y}, \ref{assum:bounded G_y}, \ref{assum:concave} and $\epsilon \leq 2l D(Y)$, if we apply Algorithm \ref{alg1} to optimize $\Tilde{f}(x,y)=f(x,y)-\frac{\epsilon}{4D(Y)}\|y-y_0\|^2$, $y_0\in Y$ with full client participation: $m=M$ or with homogeneous data: $\sigma_G=0$, we can find an $\epsilon$-stationary point of $f$ with a per-client sample complexity of $O(m^{-1}\epsilon^{-6})$ and a communication complexity of $O(\epsilon^{-3})$.
\end{theorem}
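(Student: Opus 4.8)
The plan is to reduce the nonconvex–concave problem to a nonconvex–strongly-concave one by regularization, and then invoke the NC-PL machinery of Theorem \ref{thm:sc} with an \emph{$\epsilon$-dependent} condition number. Concretely, since $f$ is concave in $y$, the perturbed objective $\tilde f(x,y) = f(x,y) - \frac{\epsilon}{4D(Y)}\|y-y_0\|^2$ is $\mu'$-strongly concave in $y$ with $\mu' = \frac{\epsilon}{2D(Y)}$. Writing each local surrogate as $\tilde f_i = f_i - \frac{\epsilon}{4D(Y)}\|y-y_0\|^2$, the regularizer is deterministic, $x$-independent, and identical across clients, so it leaves the variance bound $\sigma$ (Assumption \ref{assum:bdd_var}) and the heterogeneity bound $\sigma_G$ (Assumption \ref{assum:bdd_hetero}) unchanged; it increases the smoothness constant from $l$ to at most $l + \mu' \le 2l$ (using $\epsilon \le 2lD(Y)$); and since $\|y-y_0\|\le D(Y)$ on $Y$, one checks $\tilde\Phi(x) := \max_{y}\tilde f(x,y) \ge \Phi^* - \frac14\epsilon D(Y) > -\infty$, so Assumption \ref{assum:phi} also transfers. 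Thus $\tilde f$ satisfies all hypotheses needed to run \alg with strong-concavity parameter $\mu'$ and effective condition number $\tilde\kappa = \tilde l/\mu' = O(\epsilon^{-1})$.

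Next I would apply the convergence guarantee to $\tilde f$ with target accuracy $\epsilon' = \epsilon$. Under full participation or $\sigma_G=0$, this produces an $(\epsilon,\,\epsilon/\sqrt{\tilde\kappa})$-stationary point $(\hat x,\hat y)$ of $\tilde f$ using a per-client sample complexity $O(\tilde\kappa^2 m^{-1}(\epsilon')^{-4})$ and a communication complexity $O(\tilde\kappa (\epsilon')^{-2})$. Substituting $\tilde\kappa = O(\epsilon^{-1})$ and $\epsilon'=\epsilon$ yields exactly $O(m^{-1}\epsilon^{-6})$ samples and $O(\epsilon^{-3})$ rounds, as claimed.

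The remaining step is to translate stationarity of $\tilde f$ into stationarity of $f$. For the $x$-component this is immediate: the regularizer does not depend on $x$, so $\nabla_x \tilde f = \nabla_x f$ and hence $\|\nabla_x f(\hat x,\hat y)\|\le\epsilon$. For the $y$-component, the gradients differ by $\nabla_y f - \nabla_y \tilde f = \frac{\epsilon}{2D(Y)}(\hat y - y_0)$, whose norm is at most $\epsilon/2$ since $\hat y, y_0\in Y$ and $\mathrm{diam}(Y)=D(Y)$. Using nonexpansiveness of $P_Y$ together with the triangle inequality,
\begin{align*}
l\big\|P_Y\big(\hat y + \tfrac1l\nabla_y f(\hat x,\hat y)\big)-\hat y\big\|
&\le l\big\|P_Y\big(\hat y + \tfrac1l\nabla_y \tilde f(\hat x,\hat y)\big)-\hat y\big\| + \|\nabla_y f-\nabla_y\tilde f\| \\
&\le \epsilon/\sqrt{\tilde\kappa} + \epsilon/2 \le \epsilon ,
\end{align*}
where the last inequality uses $\tilde\kappa\ge 4$ (again from $\epsilon\le 2lD(Y)$, up to adjusting constants). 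Hence $(\hat x,\hat y)$ is an $\epsilon$-stationary point of $f$.

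The main obstacle is twofold. First, Theorem \ref{thm:sc} and Assumption \ref{assum:pl} are stated for unconstrained $Y=\mathbb{R}^{d_2}$, whereas here $Y$ is compact; so I must verify that the NC-PL analysis of \alg carries over to strongly concave $\tilde f$ over a compact convex $Y$. This is natural because the $y$-update of \alg already projects onto $Y$ and strong concavity supplies the projected analogue of the PL inequality, but the argument must be re-examined rather than cited verbatim. Second — and this is the delicate point — one must propagate the $O(\epsilon)$ regularization bias through the \emph{projected-gradient} stationarity measure rather than the raw gradient, which is exactly what the nonexpansiveness estimate above controls; keeping the bias at scale $\epsilon/2$ (so it is absorbed into the target accuracy) is precisely what forces the regularization coefficient $\frac{\epsilon}{4D(Y)}$ and the condition $\epsilon\le 2lD(Y)$.
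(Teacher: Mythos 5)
Your proposal is correct and follows essentially the same route as the paper: regularize to make $\tilde f$ strongly concave with $\mu'=\epsilon/(2D(Y))$, invoke the NC-SC guarantee with condition number $\tilde\kappa=O(\epsilon^{-1})$, and translate the $O(\epsilon/2)$ gradient bias back to stationarity of $f$. The compact-$Y$ issue you flag is exactly what the paper resolves via its Theorem \ref{thm:sc constrained y} (the NC-SC analysis with projection in Appendix \ref{app: nc-sc}), and your projected-gradient translation via nonexpansiveness of $P_Y$ is, if anything, slightly more careful than the paper's raw-gradient version.
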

To the best of our knowledge, this is the best-known sample and communication complexity achieved in terms of stationarity of $f$ under similar settings.

Moreover, we have the following corollary for the centralized deterministic setting.
\begin{corollary}
\label{thm:c f, dc}
Under Assumptions \ref{assum:smooth}, \ref{assum:phi}, \ref{assum:x, bounded y}, \ref{assum:bounded G_y}, \ref{assum:concave}, when $M=1$ and $\epsilon \leq 2l D(Y)$, we can apply Algorithm \ref{alg2} to optimize $\Tilde{f}(x,y)=f(x,y)-\frac{\epsilon}{4D(Y)}\|y-y_0\|^2$, $y_0\in Y$, we can find an $\epsilon$-stationary point of $f$ with a sample complexity of $O(\epsilon^{-3})$.
\end{corollary}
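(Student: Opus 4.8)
The plan is to reduce this nonconvex-concave problem to a nonconvex-strongly-concave one through the explicit quadratic regularizer, and then invoke the NC-PL analysis of Theorem~\ref{thm:sc} in its centralized deterministic specialization, in which (as noted before Corollary~\ref{thm:1pc, dc}) Algorithm~\ref{alg1} collapses to Algorithm~\ref{alg2}. First I would record the relevant constants of $\tilde f(x,y)=f(x,y)-\frac{\epsilon}{4D(Y)}\|y-y_0\|^2$. By Assumption~\ref{assum:concave} the quadratic term makes $\tilde f$ strongly concave in $y$ with modulus $\mu=\frac{\epsilon}{2D(Y)}$, while by Assumption~\ref{assum:smooth} and the hypothesis $\epsilon\le 2lD(Y)$ the added gradient term $-\frac{\epsilon}{2D(Y)}(y-y_0)$ is $\frac{\epsilon}{2D(Y)}\le l$-Lipschitz in $y$ and leaves $\nabla_x$ unchanged, so $\tilde f$ stays $\tilde l$-smooth with $\tilde l\le 2l$. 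Hence $\tilde f$ is NC-SC with condition number $\tilde\kappa=\tilde l/\mu=O\!\left(lD(Y)/\epsilon\right)=O(\epsilon^{-1})$.

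Next I would apply the NC-SC (hence NC-PL) guarantee underlying Theorem~\ref{thm:sc} to $\tilde f$ with $M=1$, $K=1$ and $\sigma=0$ (Assumptions~\ref{assum:bdd_var} and \ref{assum:bdd_hetero} then hold trivially with $\sigma=\sigma_G=0$). In this regime every communication round is a single exact Smoothed-GDA step, so the gradient-evaluation count coincides with the communication complexity $O(\tilde\kappa\,\delta^{-2})$ that Theorem~\ref{thm:sc} needs to reach a $(\delta,\delta/\sqrt{\tilde\kappa})$-stationary point of $\tilde f$. Taking $\delta=\Theta(\epsilon)$ and substituting $\tilde\kappa=O(\epsilon^{-1})$ gives the sample complexity $O(\epsilon^{-1}\cdot\epsilon^{-2})=O(\epsilon^{-3})$; this is exactly where the advertised rate originates, since the comparatively weak $f$-stationarity target keeps the accuracy dependence at $\delta^{-2}$ while the inflated condition number contributes only the single factor $\tilde\kappa=O(\epsilon^{-1})$.

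It then remains to translate the $(\epsilon,\epsilon/\sqrt{\tilde\kappa})$-stationarity of $\tilde f$ at the returned point $(\hat x,\hat y)$ into $\epsilon$-stationarity of $f$ (Definition~\ref{def:f}). The $x$-component is immediate because $\nabla_x\tilde f=\nabla_x f$, so $\|\nabla_x f(\hat x,\hat y)\|\le\epsilon$. For the $y$-component I would write $\nabla_y f=\nabla_y\tilde f+\frac{\epsilon}{2D(Y)}(\hat y-y_0)$ and use $\hat y,y_0\in Y$ together with Assumption~\ref{assum:x, bounded y} to bound the perturbation by $\frac{\epsilon}{2D(Y)}\,D(Y)=\epsilon/2$, then pass it through the nonexpansiveness of $P_Y$ to obtain $l\|P_Y(\hat y+\frac{1}{l}\nabla_y f)-\hat y\|\le l\|P_Y(\hat y+\frac{1}{l}\nabla_y\tilde f)-\hat y\|+\epsilon/2$.

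The hardest part will be this last translation, for two reasons. First, the stationarity measure delivered for $\tilde f$ is a gradient mapping taken with its own smoothness constant $\tilde l$ and step $1/\tilde l$, whereas Definition~\ref{def:f} uses $l$ and $1/l$; reconciling the two requires the standard comparison of projected gradient mappings at different step sizes, which costs only the bounded factor $\tilde l/l\le 2$ but must be tracked to keep the constants honest. Second, one must verify that the error budget closes, namely that $\delta/\sqrt{\tilde\kappa}+\epsilon/2=\Theta(\epsilon^{3/2})+\epsilon/2\le\epsilon$ for the relevant small-$\epsilon$ range, and that the step sizes together with $\beta$ and $p=2l$ prescribed in Theorem~\ref{thm:sc} remain admissible after $\kappa$ is replaced by $\tilde\kappa=O(\epsilon^{-1})$; in particular the linear (rather than quadratic) dependence on $\tilde\kappa$ in the deterministic regime is what pins the final rate at $O(\epsilon^{-3})$. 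Beyond these points the argument is a direct specialization of the proof of Theorem~\ref{thm:c f} to $M=1$, $K=1$, $\sigma=0$.
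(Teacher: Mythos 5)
Your overall route is the same as the paper's: regularize with $-\frac{\epsilon}{4D(Y)}\|y-y_0\|^2$ to obtain a $\frac{\epsilon}{2D(Y)}$-strongly-concave, $2l$-smooth surrogate $\tilde f$ with condition number $\kappa'=O(\epsilon^{-1})$, run the deterministic centralized NC-SC rate $O(\kappa'\epsilon^{-2})=O(\epsilon^{-3})$ to reach an $(\epsilon,\epsilon/\sqrt{\kappa'})$-stationary point of $\tilde f$, and then pass back to $f$ via $\sup_{x,y}\|\nabla f-\nabla\tilde f\|\le\frac{\epsilon}{2D(Y)}D(Y)=\epsilon/2$. The complexity bookkeeping and the final translation step match the paper's proof of Theorem \ref{thm:c f} and Corollary \ref{thm:c f, dc}.

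There is, however, one concrete misstep: you anchor the intermediate guarantee on Theorem \ref{thm:sc}, the NC-PL result, whose hypothesis (Assumption \ref{assum:pl}) requires $Y=\mathbb{R}^{d_2}$. In the NC-C setting $Y$ is a compact convex set (Assumption \ref{assum:x, bounded y}), so that theorem does not apply as stated; its proof (via Lemma \ref{lemma: sc Y=R}) drives $\|\nabla_y\tilde f(w_t)\|$ to zero, which for a strongly concave maximization over a compact set is the wrong target, since the inner maximizer may sit on the boundary where the gradient does not vanish. What is needed — and what the paper actually invokes — is the constrained NC-SC analysis, Theorem \ref{thm:sc constrained y} and its deterministic specialization Corollary \ref{thm:sc constrained y, centralized deterministic}, which controls the projected gradient mapping $\frac{1}{\eta_y^2K^2}\|\bar y_{t+1}-y_t\|^2$ through the error-bound machinery of Lemmas \ref{constraint eb}--\ref{lemma: sc Y<R}. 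Your closing discussion of reconciling gradient mappings and nonexpansiveness of $P_Y$ shows you sense the issue, but the fix is not a constant-factor adjustment of step sizes: it is a different potential-function argument that the paper develops separately in Appendix \ref{app: nc-sc}. With that substitution, the rest of your argument goes through.
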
 
We improve the sample complexity of Smoothed-GDA under centralized deterministic NC-C setting from $O(\epsilon^{-4})$ to $O(\epsilon^{-3})$ in terms of stationarity of $f$.

\subsubsection{Minimizing the Point-Wise Maximum of Finite Functions}
We now consider optimizing $f$ in a form of (\ref{special case}), which is widely used in practical applications. 
\cite{zhang2020single} proved that Smoothed-AGDA can achieve a sample complexity of $O(\epsilon^{-2})$ in terms of stationarity of $f$ for solving \eqref{special case} under centralized and deterministic settings, which is much better than the complexity needed for solving general nonconvex-concave problems. However, to the best of our knowledge, solving \eqref{special case} under stochastic and federated settings remains unexplored. 

For any stationary solution of \eqref{special case} denoted as $(x^*, y^*)$, the following KKT conditions hold: 
\begin{align*}
    & \nabla F(x^*)y^*=0, \\
    & \sum_{i=1}^my_i^*=1, y_i^*\ge 0,\forall i\in [n], \\
    & \lambda-\nu_i=f_i(x^*), \forall i\in[n], \nu_i\ge 0, \\
    & \nu_iy_i^*=0, \forall i\in[n],
\end{align*}
where $\nabla F(x) $ denotes the Jacobian matrix of $F$ at $x$,  $\lambda$ and $\nu$ are the multipliers for the equality constraint $\sum_{i=1}^n y_i=1$, and the inequality constraint $y_i\ge 0$ respectively. 
We denote $\mathcal{I}_+(y^*)$ as the set of indices for which $y_i^*>0$. We make following assumption on this set.

\begin{assume}[Strict complementarity]
    \label{assum: strict complementarity}
For  any stationary solution $(x^*, y^*)$ of \eqref{special case}, we have $\nu_i>0, \forall i\notin\mathcal{I}_+(y^*)$.
\end{assume}

\begin{remark}
Assumption~\ref{assum: strict complementarity} is commonly used in the optimization literature \citep{forsgren2002interior, carbonetto2009interior, liang2014local, namkoong2016stochastic, lu2019snap, zhang2020single}.
This assumption generally holds if there is a linear term in the objective function and the data is from a continuous distribution \citep{zhang2020proximal, lu2019snap, zhang2020single}.
\end{remark}

\begin{theorem} 
\label{thm:special case}
Under Assumptions \ref{assum:smooth}, \ref{assum:bdd_var}, \ref{assum:bdd_hetero}, \ref{assum:phi}, \ref{assum:bounded G_y}, \ref{assum: strict complementarity}, if we apply Algorithm \ref{alg1} with appropriately chosen parameters (see Appendix \ref{app: special case}) to solve Problem (\ref{special case}), and assume $\|x_t\|\leq D_x$ for all $t$, then with full client participation: $m=M$ or with homogeneous data: $\sigma_G=0$, we can find an $\epsilon$-stationary point of $f$ and $\Phi_{1/2l}$ with a per-client sample complexity of $O(m^{-1}\epsilon^{-4})$ and a communication complexity of $O(\epsilon^{-2})$.
\end{theorem}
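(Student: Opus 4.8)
The plan is to show that, under strict complementarity, the special problem \eqref{special case} inherits \emph{locally} around its solution set an error-bound (PL-type) behaviour on the dual side, so that we can run essentially the same Lyapunov argument as in the NC-PL analysis of Theorem~\ref{thm:sc} and obtain the fast $O(m^{-1}\epsilon^{-4})$ per-client sample and $O(\epsilon^{-2})$ communication rates, rather than the generic NC-C rates of Theorem~\ref{thm:1pc}. The whole point is that Assumption~\ref{assum: strict complementarity} upgrades the merely concave (linear) inner problem to one that contracts like a strongly-concave problem in a neighborhood of the solution.

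First I would set up the smoothed objective $\hat f(x,y,z)=F(x)^\top y+\tfrac{p}{2}\|x-z\|^2$ with $p=2l$. Since each $f_i$ is $l$-smooth (Assumption~\ref{assum:smooth}), each map $x\mapsto f_i(x)+l\|x-z\|^2$ is $l$-strongly convex, and therefore so is $\Phi(x)+l\|x-z\|^2=\max_i\{f_i(x)+l\|x-z\|^2\}$; consequently $x^*(z):=\argmin_x[\Phi(x)+l\|x-z\|^2]$ is well defined and $\Phi_{1/2l}(z)=\min_x\max_y\hat f(x,y,z)$. The inner maximizer $y^*(x)\in\argmax_y F(x)^\top y$ concentrates on the indices attaining $\Phi(x)=\max_i f_i(x)$. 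I would then invoke the KKT system with Assumption~\ref{assum: strict complementarity}: because $\nu_i>0$ for $i\notin\mathcal I_+(y^*)$, the inactive functions satisfy $f_i(x^*)<\Phi(x^*)$ with a strictly positive gap, and by smoothness this gap persists in a neighborhood of each solution, so the active index set is locally stable. The key lemma I would extract from this is a local error bound/PL inequality: the primal-dual gap $\max_{y'}\hat f(x,y',z)-\hat f(x,y,z)$ (equivalently $\Phi(x)-F(x)^\top y$) is, up to the decaying inactive mass of $y$, comparable to the squared distance of $y$ to the active face; this is the stochastic/federated analogue of the geometric dual decay used by \cite{zhang2020single}.

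With this structural lemma, I would build a Lyapunov function of the same shape as in the NC-PL proof,
\[
V_t=\Phi_{1/2l}(z_t)+c_1\|x_t-x^*(z_t)\|^2+c_2\big(\Phi(x_t)-F(x_t)^\top y_t\big),
\]
and establish a one-round descent inequality $\mathbb E[V_{t+1}]\le \mathbb E[V_t]-a\,\mathbb E\|\nabla\Phi_{1/2l}(z_t)\|^2+(\text{error terms})$. The error terms are handled exactly as in the federated NC-PL analysis: the client drift from the $K$ local steps is bounded via Assumption~\ref{assum:smooth} and the local learning rates, while the stochastic-gradient and heterogeneity contributions are controlled by Assumptions~\ref{assum:bdd_var} and \ref{assum:bdd_hetero}; restricting to $m=M$ or $\sigma_G=0$ removes the partial-participation variance that would otherwise inflate the number of communication rounds. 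Here the strict-complementarity error bound plays the role that the PL constant $\mu$ plays in Theorem~\ref{thm:sc}: it makes the dual term $c_2(\Phi(x_t)-F(x_t)^\top y_t)$ contract geometrically rather than accumulate.

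The main obstacle is making the dual error bound rigorous in the presence of stochastic noise and the simplex projection: unlike the deterministic centralized case of \cite{zhang2020single}, $y_t$ need not identify the active set in finitely many steps, so I must show the inactive mass stays $O(\epsilon)$-small on average and that misidentification events are rare enough not to spoil the rate. I would exploit that, with $m=M$ or $\sigma_G=0$, the $K$ local updates average out the stochastic noise (reducing the effective variance by roughly a factor $mK$), so each communication round behaves approximately like a deterministic Smoothed-GDA step and the persistence of the complementarity gap carries over. Choosing $K=\Theta(m^{-1}\epsilon^{-2})$ and telescoping the descent inequality over $T=O(\epsilon^{-2})$ rounds then yields the claimed per-client sample complexity $O(m^{-1}\epsilon^{-4})$ and communication complexity $O(\epsilon^{-2})$; a final step converts the resulting bounds on $\mathbb E\|\nabla\Phi_{1/2l}(z_t)\|$ and on the dual gap into $\epsilon$-stationarity of both $f$ (Definition~\ref{def:f}) and $\Phi_{1/2l}$ (Definition~\ref{def:phi 1/2l}).
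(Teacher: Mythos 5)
Your proposal correctly identifies the two essential ingredients — the smoothed potential-function machinery shared with the NC-PL/NC-1PC analyses, and the fact that strict complementarity must supply some \emph{local} dual regularity that upgrades the linear-in-$y$ inner problem beyond generic concavity — but it leaves the decisive step unresolved. You state the obstacle yourself: the claimed dual error bound holds only near the solution set, $y_t$ need not identify the active face in finitely many steps under stochastic noise, and you "would show" that misidentification events are rare and the inactive mass stays $O(\epsilon)$-small. No argument is given for this, and it is exactly where the difficulty lies: the objective $F(x)^\top y$ over the simplex does \emph{not} satisfy a global PL-type inequality in $y$, so the term $c_2\bigl(\Phi(x_t)-F(x_t)^\top y_t\bigr)$ in your Lyapunov function has no reason to contract geometrically along the whole trajectory, and your proposed key lemma (dual gap comparable to squared distance to the active face) is not established and is not what the strict-complementarity structure actually delivers.

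The paper avoids active-set identification entirely. It reuses the NC-1PC descent inequality (Lemma \ref{lemma: v y+}), whose only problematic term is $-48p\beta\,\mathbb{E}\|x^*(z_t)-x^*(y_t^+(z_t),z_t)\|^2$, and runs a per-iteration case analysis. If this error term is dominated by the progress terms (Case 2), descent is immediate. If it dominates (Case 1), then combining with the one-point-concavity bound of Lemma \ref{lemma: 1pc} forces $\|y_t^+(z_t)-y_t\|$, $\|x_t-x_t^+(y_t,z_t)\|$, and $\|x_t^+(y_t,z_t)-z_t\|$ to all be $O(\beta)$; choosing $\beta$ as a small $\epsilon$-independent constant (below the $\delta$ of Lemma \ref{dual error bound}, and using $\|x_t\|\le D_x$ to propagate $\|z_t\|\le D_z$ by induction) activates the local error bound $\|x^*(y_t^+(z_t),z_t)-x^*(z_t)\|\le\gamma_3\|y_t^+(z_t)-y_t\|$, which lets the error term be absorbed into the $\tfrac{1}{128\eta_yK}\|y_t^+(z_t)-y_t\|^2$ progress term. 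The fact that $\beta=\Theta(1)$ here (rather than $\Theta(\epsilon^2)$ as in NC-1PC) is precisely what yields $T=O(\epsilon^{-2})$ and $K=\Theta(m^{-1}\epsilon^{-2})$. To repair your proof you would need either to supply this case-splitting argument (or an equivalent mechanism) or to actually prove the rare-misidentification claim, neither of which is present.
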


To the best of our knowledge, we are the first to prove convergence results for solving Problem~\eqref{special case} under a federated setting.  Setting $M=1$, our results also indicate that we can find an $\epsilon$-stationary point of $f$ and $\Phi_{1/2l}$ of \eqref{special case} with a sample complexity of $O(\epsilon^{-4})$ under the centralized stochastic setting. {Assumptions similar to $\|x_t\|\leq D_x$ are also made in \cite{deng2021local, sharma2022federated, sharma2023federated}. }

\subsubsection{PL-PL}
The PL-PL condition is much weaker than SC-SC and contains a richer class of functions. For example, according to \cite{yang2020global}, $h(x,y)=x^2+3\sin^2x\sin^2y-4y^2-10\sin^2y$ satisfies Assumption \ref{assum: plpl}, \ref{assum: saddle point} (see Proposition 1 in Appendix of \cite{yang2020global}). However, $h(x,y)$ is nonconvex-nonconcave.

\cite{reisizadeh20robustfl_neurips} formulated robust federated learning as a special case of general federated minimax PL-PL problems and proposed FLRA. In their robust federated learning settings, each local client has its own local max variables and FLRA only communicates the min variables between the clients and the server. In this section, we consider a more general federated minimax setting (\ref{problem}) with the PL-PL condition. To the best of our knowledge, we are the first to prove convergence results for this general setting. 

\begin{assume}[Two-sided PL condition]
\label{assum: plpl}
Assume $X=\mathbb{R}^{d_1}, Y=\mathbb{R}^{d_2}$. 
For any fixed $y$, $\min_x f(x,y)$ has a nonempty solution set and a finite optimal value, and for any fixed $x$, $\max_y f(x,y)$ has a nonempty solution set and a finite optimal value.
There exist constants $\mu_1, \mu_2>0$ such that: $\forall x,y$, $\Vert \nabla_xf(x,y) \Vert^2 \geq 2\mu_1 [f(x, y)-\min_{x} f(x,y)]$ and $\Vert \nabla_yf(x,y) \Vert^2 \geq 2\mu_2 [\max_{y}f(x, y)-f(x,y)]$. 
\end{assume} 

\begin{assume}[Existence of saddle point]\quad
\label{assum: saddle point}
    $(x^*, y^*)$ is a saddle point of $f$ if for any $(x,y): f(x^*,y) \leq f(x^*, y^*) \leq  f(x,y^*)$. We assume $f$ has at least one saddle point.
\end{assume}

{Since $f$ is already $\mu_1$-PL in $x$, we set $p=0$ in this section. We further denote $\kappa' =\max\{l/\mu_1,l/\mu_2\}$, $\kappa''=\min\{l/\mu_1,l/\mu_2\}$ in this section.}

\begin{theorem} 
\label{thm: plpl}
Under Assumptions \ref{assum:smooth}, \ref{assum:bdd_var}, \ref{assum:bdd_hetero}, \ref{assum:phi}, \ref{assum: plpl}, \ref{assum: saddle point}, if we apply Algorithm \ref{alg1} with appropriately chosen parameters (see Appendix \ref{app: plpl}) for full client participation: $m=M$ or with homogeneous data: $\sigma_G=0$, we can find $(x_T,y_T)$ satisfying $\mathbb{E}\|x_T-x^*\|^2+\mathbb{E}\|y_T-y^*\|^2\leq \epsilon^2$ with a  per-client sample complexity of $O(m^{-1}\kappa'^3\kappa''^4\epsilon^{-2}\log(\epsilon^{-1}\kappa'))$ and a communication complexity of $O(\kappa'\kappa''^2\log(\epsilon^{-1}\kappa'))$.
For partial client participation: $m<M$ and heterogeneous data: $\sigma_G>0$, we can find $x_T, y_T$ satisfying $\mathbb{E}\|x_T-x^*\|^2+\mathbb{E}\|y_T-y^*\|^2\leq \epsilon^2$ with a  per-client sample complexity of $O(m^{-1}\kappa'^3\kappa''^4\epsilon^{-2}\log(\epsilon^{-1}\kappa'))$ and a communication complexity of $O(m^{-1}\kappa'^3\kappa''^4\epsilon^{-2}\log(\epsilon^{-1}\kappa'))$.
\end{theorem}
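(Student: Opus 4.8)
The plan is to analyze the simplified dynamics of Algorithm~\ref{alg1} with $p=0$, in which case the auxiliary variable $z_t$ decouples and has no effect, and to track convergence through a single Lyapunov potential combining primal and dual suboptimality. Write $f^*=f(x^*,y^*)=\min_x\max_y f(x,y)$, which by Assumption~\ref{assum: saddle point} also equals $\min_x\Phi(x)$. Introduce the primal gap $b_t:=\Phi(x_t)-f^*\ge 0$ and the dual gap $a_t:=\Phi(x_t)-f(x_t,y_t)\ge 0$, and work with
\[
V_t = b_t + \lambda\,a_t
\]
for a weight $\lambda>0$ to be tuned. Under Assumption~\ref{assum: plpl} the PL condition in $y$ makes $\Phi$ differentiable and $O(l\kappa')$-smooth (the argument of \citet{nouiehed2019solving} recalled after Definition~\ref{def:phi}), while both PL conditions yield quadratic-growth / error-bound inequalities that convert function-value gaps into squared distances; this is exactly what will let me pass from a bound on $\mathbb{E}[V_T]$ to the target bound on $\mathbb{E}\|x_T-x^*\|^2+\mathbb{E}\|y_T-y^*\|^2$ at the end.

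The heart of the argument is a one-round descent inequality for $V_t$. First I would apply the descent lemma for the smooth function $\Phi$ with the effective step $x_{t+1}-x_t\approx-\eta_{x,l}\eta_{x,g}K\,\nabla_x f(x_t,y_t)$, and use the PL condition in $x$ to extract a contraction of $b_t$ at a rate controlled by $\mu_1$, up to an error $\propto\|\nabla_x f(x_t,y_t)-\nabla\Phi(x_t)\|^2$. Since $\nabla\Phi(x_t)=\nabla_x f(x_t,y^*(x_t))$, Assumption~\ref{assum:smooth} bounds this error by $l^2\|y_t-y^*(x_t)\|^2$, which the quadratic-growth property in $y$ in turn controls by $a_t$. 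Second, I would derive a contraction for $a_t$: the ascent step in $y$ shrinks it at a rate governed by $\mu_2$, whereas the movement of $x_t$ and the attendant change in $\Phi$ inflate it by terms proportional to $\|\nabla_x f(x_t,y_t)\|^2$ and hence to $b_t$. Choosing $\lambda$ small enough that the $a_t$-inflation produced by the primal step is dominated by the $\mu_2$-contraction, and the $b_t$-inflation produced by a nonzero dual gap is dominated by the $\mu_1$-contraction, yields a clean recursion $\mathbb{E}[V_{t+1}]\le(1-\rho)\mathbb{E}[V_t]+\mathcal{E}_t$, where schematically $\rho=\Theta(\eta_{x,l}\eta_{x,g}K/(\kappa'\kappa''^2))$ and $\mathcal{E}_t$ gathers the stochastic and federated error.

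The error $\mathcal{E}_t$ has two sources to be bounded separately. The SGD variance contributes $O(\sigma^2\eta^2K/m)$ per round, and the $1/m$ factor from averaging the $m$ participating clients' gradients is precisely what produces the linear speedup in the per-client sample complexity. The client-drift and heterogeneity contribution is handled by the standard recursion that bounds $\mathbb{E}\|x^k_{t,i}-x_t\|^2$ and $\mathbb{E}\|y^k_{t,i}-y_t\|^2$ over the $K$ local steps using Assumptions~\ref{assum:smooth} and~\ref{assum:bdd_hetero}, producing terms of order $\eta^2K^2(\sigma^2+\sigma_G^2+\|\nabla f\|^2)$. When $m=M$ or $\sigma_G=0$ the heterogeneity term vanishes, the drift stays benign, and one may take $K=\tilde{\Theta}(m^{-1}\epsilon^{-2})$ large so that only $T=\tilde{O}(\kappa'\kappa''^2)$ communication rounds are required; when $m<M$ and $\sigma_G>0$ the client-sampling variance forces $K=O(1)$, so $T$ must carry the full $\tilde{O}(m^{-1}\kappa'^3\kappa''^4\epsilon^{-2})$ count. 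Unrolling gives $\mathbb{E}[V_T]\le(1-\rho)^TV_0+\mathcal{E}/\rho$; tuning the step sizes so the noise floor $\mathcal{E}/\rho$ is $O(\epsilon^2)$ and taking $T$ logarithmically large to kill the transient delivers the stated complexities, after a final application of the quadratic-growth consequence of the PL conditions (with care for possibly non-singleton solution sets, following \citet{yang2020global}) to translate the bound on $V_T$ into the distance objective.

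I expect the coupled one-round descent of the second paragraph to be the main obstacle. Because the PL-PL setting is genuinely nonconvex-nonconcave, there is no monotonicity or convexity to exploit, so the cross terms between the primal descent and the dual ascent must be balanced entirely through the smoothness and PL inequalities, and the admissible weight $\lambda$ and contraction rate $\rho$ depend delicately on both condition numbers. Making $V_t$ a genuine contraction rather than merely bounded is the crux of the proof, and it is what dictates the $\kappa'^3\kappa''^4$ dependence in the final complexity.
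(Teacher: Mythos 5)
Your proposal follows essentially the same route as the paper's proof in Appendix I: the paper's potential is exactly $W_t=(\Phi(x_t)-\Phi^*)+(\Phi(x_t)-f(x_t,y_t))$ (your $V_t$ with $\lambda=1$, the weighting being absorbed instead into the two-timescale step-size ratio $\eta_x=\eta_y\mu_2^2/(64l^2)$), and the descent on the $(l+l^2/\mu_2)$-smooth $\Phi$, the PL-based linear contraction, the drift/variance/heterogeneity error bounds, and the final quadratic-growth conversion to distances all match. The one ingredient you omit is the symmetrization step (re-running the analysis on $-f$ with $x$ and $y$ exchanged, justified via the saddle-point assumption) that the paper uses to upgrade the directly obtained $\kappa''\kappa'^2$ communication rate to the stated $\kappa'\kappa''^2$; equivalently, the orientation of the two timescales must be chosen according to which of $\mu_1,\mu_2$ is smaller, a point your sketch asserts (via the claimed $\rho=\Theta(\eta K/(\kappa'\kappa''^2))$) rather than derives.
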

When $M=1$, our results recover the convergence results in \cite{yang2020global}.
For full client participation with heterogeneous data, we achieve a better communication complexity compared to \cite{deng2021local} and \cite{reisizadeh20robustfl_neurips}. Moreover, we provide additional convergence results for partial client participation. 

\label{section: exp}
\begin{figure*}[!htp]
    \centering
    \vspace{-0.05in}
    	    \includegraphics[width=15em]{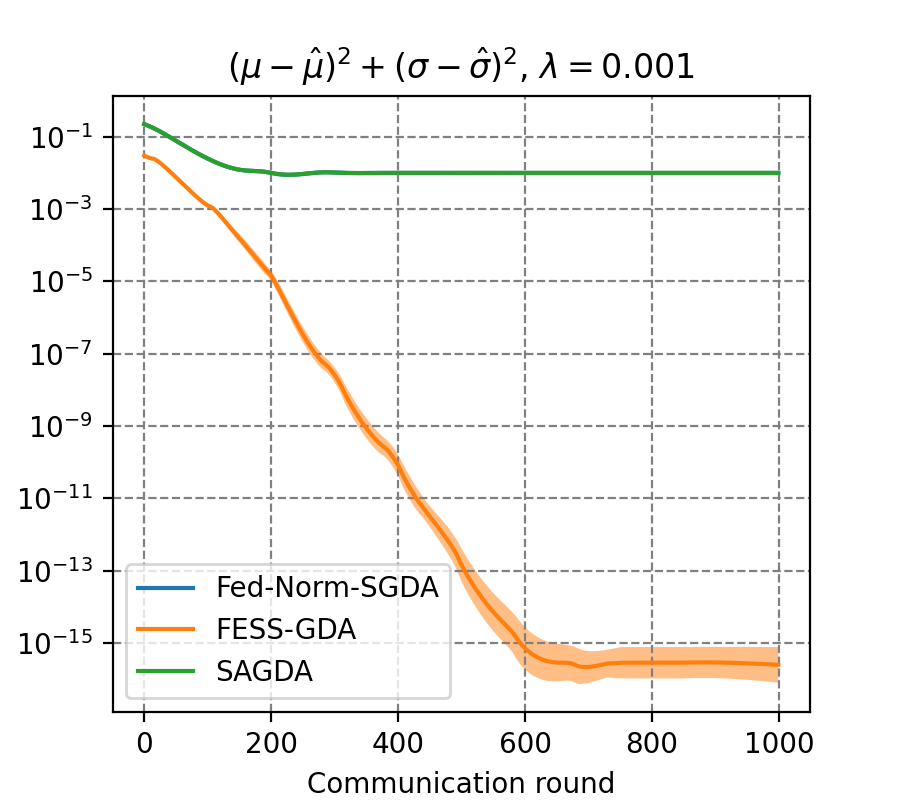}
    	    \includegraphics[width=15em]{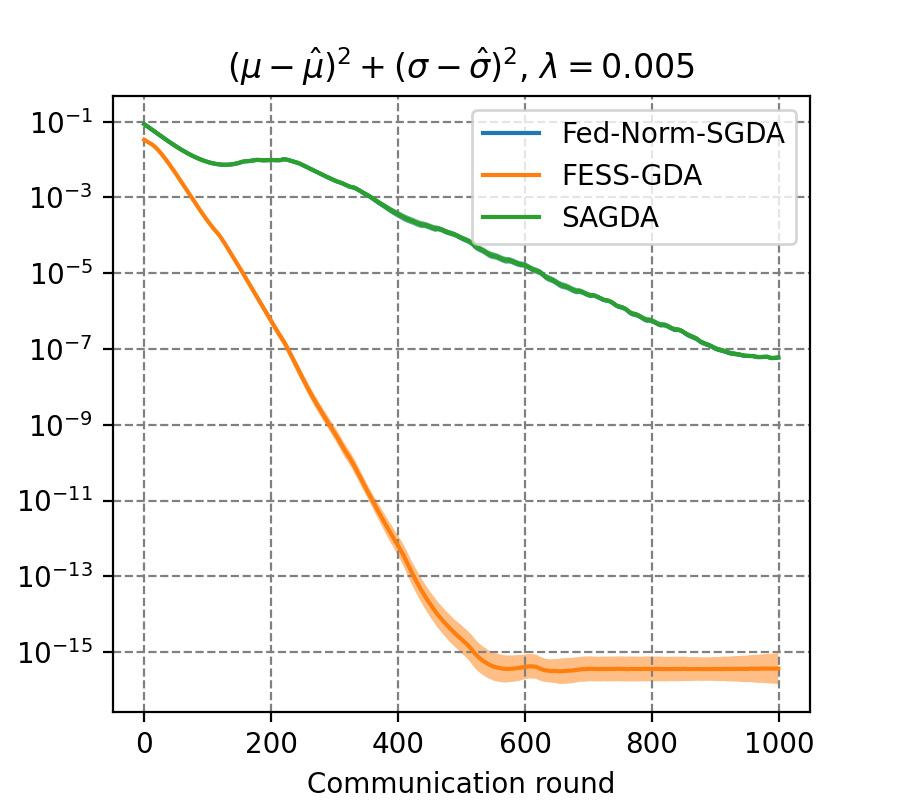}
    	    \includegraphics[width=15em]{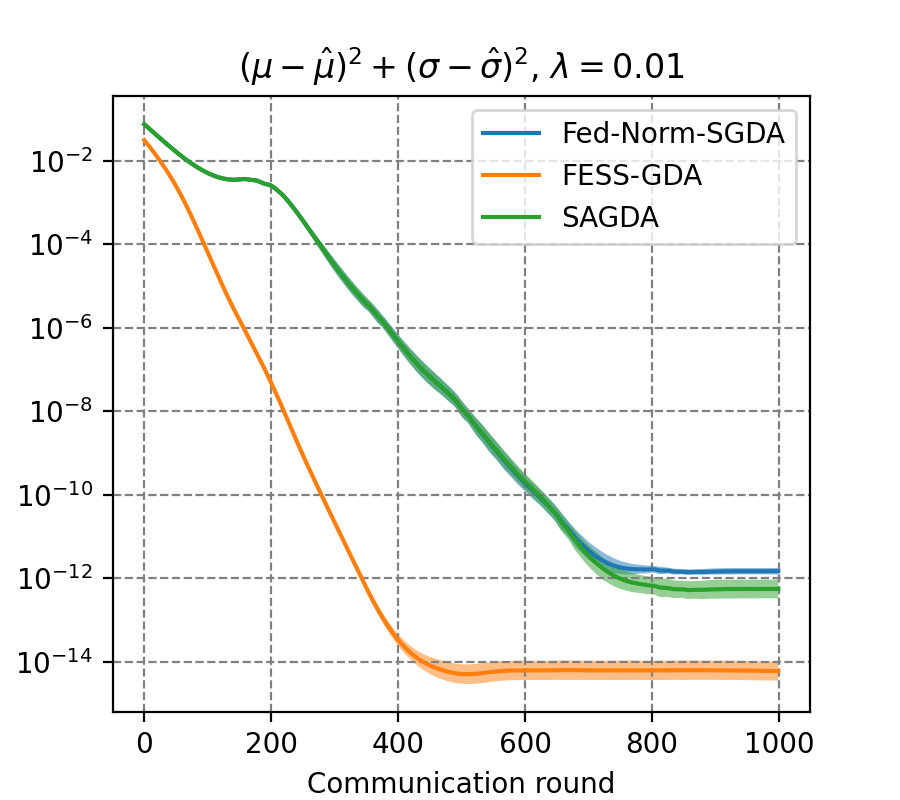}
    \vspace{-0.1in}
    \caption{\small Comparison among Fed-Norm-SGDA, SAGDA and \alg for training a regularized WGAN with different regularization coefficients $\lambda$.} 
    \vspace{-0.1in}
    \label{fig:sc}
\end{figure*}

\section{EXPERIMENTS}

We perform GAN training and fair classification tasks in the federated setting to demonstrate the practical effectiveness and efficiency of \alg and verify our theoretical claims.  We conduct our experiments on a computer with two NVIDIA RTX 3090 GPUs.

\subsection{GAN}
We consider a setting similar to \cite{yang2022faster}, \cite{loizou2020stochastic}, using a Wasserstein GAN~\citep{arjovsky2017wasserstein} to approximate a one-dimensional Gaussian distribution in the federated setting. We first randomly generate a synthetic dataset of $n=10000$ datapoints $z$ sampled from a normal distribution with zero mean and unit variance and their corresponding real data $x^{\text{real}}= \hat{\mu}+\hat\sigma z$, where $\hat{\mu}=0, \hat{\sigma}=0.1$. We then evenly divide them into 10 disjoint sets for 10 clients.  The generator is defined as
$G_{\mu,\sigma}(z) = \mu + \sigma z$ and the discriminator is defined as $D_{\phi_1, \phi_2}(x) =\phi_1 x + \phi_2 x^2$. 
The problem can be formulated as 
\begin{align*}
    \min_{\mu,\sigma}\max_{\phi_1,\phi_2} \Big \{  f(\mu, \sigma, \phi_1, \phi_2) = \frac{1}{n}\sum_{j=1}^n D_{\phi}(x^{\text{real}}_j)-\\    D_{\phi}(G_{\mu,\sigma}(z_j))-\lambda \|\phi\|^2 \Big \},
\end{align*}
where $\lambda>0$ is the regularization coefficient to make the problem strongly concave. 

We set a batch size of 100 for every update, and each client communicates with the server after every 10 local updates. We use the term $(\mu-\hat{\mu})^2+(\sigma-\hat{\sigma})^2$ to measure the algorithm performances.

With $\lambda=0.001, 0.005$ and $0.01$,  we compare the performances among Fed-Norm-SGDA, SAGDA and \alg (see Figure \ref{fig:sc}). We use $\beta=0.05, p=1$ for \alg. For each algorithm, we test their local learning rate from $\{1e-1, 1e-2, 1e-3\}$ and global learning rate from $\{1,2\}$ in order to select the best for each algorithm under different $\lambda$. Each experiment is repeated 5 times and we report the average performance. As we can see from Figure \ref{fig:sc}, \alg achieves a significant speedup over Fed-Norm-SGDA and SAGDA with carefully tuned learning rates under different $\lambda$. Especially, when $\lambda$ is relatively small, the performance gap between Fed-Norm-SGDA, SAGDA and \alg is more pronounced. Note that a smaller $\lambda$ means a larger condition number $\kappa$ (if we assume that the problem has a similar Lipschitz smooth constant $l$ for different $\lambda$). This clearly validates our theoretical results that \alg improves the dependence of $\kappa$ for nonconvex-strongly-concave problems.

\subsection{Fair Classification}
\label{exp: fc}
\begin{figure}[!htp]
    \centering
    \vspace{-0.05in}
    \includegraphics[width=20em]{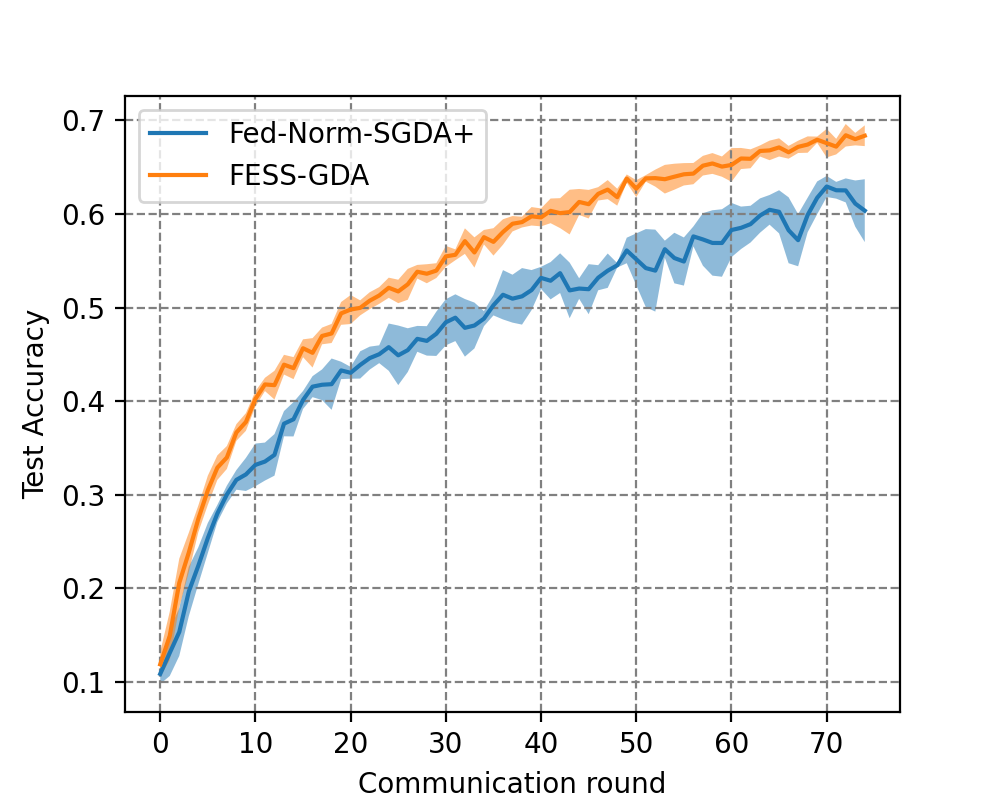}
    \vspace{-0.1in}
    \caption{\small  Comparison between Fed-Norm-SGDA+ and \alg for the fair classification task on CIFAR-10. }
    \vspace{-0.1in}
    \label{fig:c}
\end{figure}
We consider a similar setting as \cite{wu2023solving, sharma2022federated, nouiehed2019solving}. The fair classification problem can be formulated as 
\begin{align*}
    \min_{x}\max_{y\in Y}\sum_{c=1}^C F_c(x)y_c, 
\end{align*}
where $Y=\{(y_1, ..., y_C)^T | \sum_{c=1}^C y_c=1, y_c\geq 0\}$, $x$ is the parameters of the model, and $F_c$ is the loss function of class $c$. Clearly, this problem has the same form as \eqref{special case} and is nonconvex-concave. We run the experiment on the CIFAR-10 dataset~\citep{krizhevsky2009learning} with a convolutional neural network. We evenly divide the dataset into 10 disjoint sets for 10 clients. We compare the performances of Fed-Norm-SGDA+ and \alg for solving this problem and use the test accuracy as the performance measure. We set a batch size of 100 and inner loop $K=20$. For both algorithms, we adjust their local learning rate from $\{1e-1,1e-2\}$ and global learning rate from $\{1,1.5\}$. For \alg, we adjust its $\beta$ from $\{0.1, 0.5, 0.9\}$ and its $p$ from $\{0, 1e-2, 1e-1\}$. For Fed-Norm-SGDA+, we adjust its $S$ from $\{1, 5, 10, 20\}$. We tune all the parameters to achieve the best empirical performance for both algorithms. {Each experiment is repeated 5 times and we report the average performance.} As we can see from Figure \ref{fig:c}, \alg achieves a better performance than Fed-Norm-SGDA+. 

\section{CONCLUSION}
In this paper, we have proposed a new federated minimax optimization algorithm named \alg. We showed that \alg can be uniformly used for solving different classes of federated nonconvex minimax problems and theoretically established new or better convergence results for the considered settings. We further showcased the practical efficiency of \alg in practical federated learning tasks of training GANs and fair classification tasks.

\section*{Acknowledgements}
The work of WS and CS was supported in part by the US National Science Foundation (NSF) under awards ECCS-2033671, ECCS-2143559, CPS-2313110, CNS-2002902, and Virginia Commonwealth Cyber Initiative Innovation and Commercialization Award VV-1Q23-005. The work of JZ was partially supported by MIT Postdoctoral Fellowship for Engineering Excellence 2023-2025.

\bibliography{ref}
 \section*{Checklist}

 \begin{enumerate}

  \item For all models and algorithms presented, check if you include:
  \begin{enumerate}
    \item A clear description of the mathematical setting, assumptions, algorithm, and/or model. [Yes]
    \item An analysis of the properties and complexity (time, space, sample size) of any algorithm. [Yes]
    \item (Optional) Anonymized source code, with specification of all dependencies, including external libraries. [Yes]
  \end{enumerate}

  \item For any theoretical claim, check if you include:
  \begin{enumerate}
    \item Statements of the full set of assumptions of all theoretical results. [Yes] See Sections~\ref{section: pre}, \ref{section: alg}.
    \item Complete proofs of all theoretical results. [Yes] All proofs are in Appendix.
    \item Clear explanations of any assumptions. [Yes] See Sections~\ref{section: pre}, \ref{section: alg}.
  \end{enumerate}

  \item For all figures and tables that present empirical results, check if you include:
  \begin{enumerate}
    \item The code, data, and instructions needed to reproduce the main experimental results (either in the supplemental material or as a URL). [Yes]
    \item All the training details (e.g., data splits, hyperparameters, how they were chosen). [Yes] See Section~\ref{section: exp}.
     \item A clear definition of the specific measure or statistics and error bars (e.g., with respect to the random seed after running experiments multiple times). [Yes] See Section~\ref{section: exp}.
     \item A description of the computing infrastructure used. (e.g., type of GPUs, internal cluster, or cloud provider). [Yes] See Section~\ref{section: exp}.
  \end{enumerate}

  \item If you are using existing assets (e.g., code, data, models) or curating/releasing new assets, check if you include:
  \begin{enumerate}
    \item Citations of the creator If your work uses existing assets. [Yes] We used the CIFAR10 dataset and cited it.
    \item The license information of the assets, if applicable. [Not Applicable]
    \item New assets either in the supplemental material or as a URL, if applicable. [Yes]
    \item Information about consent from data providers/curators. [Not Applicable]
    \item Discussion of sensible content if applicable, e.g., personally identifiable information or offensive content. [Not Applicable]
  \end{enumerate}

  \item If you used crowdsourcing or conducted research with human subjects, check if you include:
  \begin{enumerate}
    \item The full text of instructions given to participants and screenshots. [Not Applicable]
    \item Descriptions of potential participant risks, with links to Institutional Review Board (IRB) approvals if applicable. [Not Applicable]
    \item The estimated hourly wage paid to participants and the total amount spent on participant compensation. [Not Applicable]
  \end{enumerate}

  \end{enumerate}

\clearpage
\onecolumn
\appendix
\allowdisplaybreaks

\hsize\textwidth
\linewidth\hsize \toptitlebar {\centering
{\Large\bfseries Supplementary Material: Stochastic Smoothed Gradient Descent Ascent for Federated Minimax Optimization \par}}
\bottomtitlebar

The supplementary material is organized as follows. In Section \ref{app: notations}, we introduce notations that will be used throughout the supplementary material. In Section \ref{app: lemmas}, we present some preliminary lemmas. In Section \ref{app: Potential Function}, we introduce necessary lemmas of the potential function for NC-PL and NC-1PC. In the subsequent sections, we provide the convergence results of \alg for NC-PL functions (Section \ref{app: nc-pl}), NC-SC functions (Section \ref{app: nc-sc}), NC-1PC functions (Section \ref{app: nc-1pc}), NC-C functions (Section \ref{app: nc-c}), functions having a form of \eqref{special case} (Section \ref{app: special case}), and PL-PL functions (Section \ref{app: plpl}). In Section \ref{app: Translation}, we prove Proposition \ref{Translation}. Finally, in Section \ref{app: exp}, we provide additional results and details of our experiments.

\section{Notations}
\label{app: notations}
We introduce the following notations, which will play a significant role in our proof.
\begin{align*}
    &\hat{f}(x,y,z)=f(x,y)+\frac{p}{2}\|x-z\|^2,\\
    &\Psi(y,z)=\min_{x \in X} \hat{f}(x,y,z),\\
    &\Phi(x)=\max_{y \in Y} f(x,y),\\
    &\Phi(x,z)=\max_{y \in Y} \hat{f}(x,y,z),,\\
    &P(z)=\min_{x \in X}\max_{y \in Y}\hat{f}(x,y,z),\\
    &V _t =  V (x_t, y_t, z_t) = \hat{f}(x_t, y_t,z_t) - 2\Psi(y_t, z_t) + 2P(z_t),\\
    &x^*(y,z)=\arg\min_{x \in X}\hat{f}(x,y,z),\\
    &x^*(z)=\arg\min_{x \in X}\Phi(x,z),\\
    &y^*(x)\in\argmax_{y \in Y}f(x,y),\\
    &\hat{y}^*(z) \in\argmax_{y \in Y}\Psi(y,z),\\
    &x^+(y, z)=x-\eta_xK\nabla_x\hat{f}(x, y, z ),\\
    &y^+(z)=P_Y(y+\eta_yK\nabla_y f(x^*(y,z),y)).
\end{align*}
We denote $w_t=(x_t, y_t)$, $\eta_x=\eta_{x,g}\eta_{x,l}$, $\eta_y=\eta_{y,g}\eta_{y,l}$ for simplicity.

\noindent We summarize the main updates of \alg as:
\begin{align*}
    &x_{t+1}=x_t-\eta_xK[u_{x,t}-e_{x,t}+ p(x_t-z_t)],\\
    &y_{t+1}=P_Y(y_t+\eta_yK(u_{y,t}-e_{y,t})),\\
    &z_{t+1}=z_t+\beta(x_{t+1}-z_t),\\
    &u_{x,t}=\frac{1}{m}\sum_{i \in S_t}\nabla_x f_i(w_t),\\
    &u_{y,t}=\frac{1}{m}\sum_{i \in S_t}\nabla_y f_i(w_t),\\
    &e_{x,t}=\frac{1}{mK}\sum_{i\in S_t}\sum_{j\in [K]}\left(\nabla_x f_i(w_t)-\nabla_x f_i(w^j_{t,i},\xi^j_{t,i})\right),\\
    &\Bar{e}_{x,t}=\mathbb{E}[e_{x,t}]=\frac{1}{mK}\sum_{i\in S_t}\sum_{j\in [K]}\left(\nabla_x f_i(w_t)-\nabla_x f_i(w^j_{t,i})\right),\\
    &e_{y,t}=\frac{1}{mK}\sum_{i\in S_t}\sum_{j\in [K]}\left(\nabla_y f_i(w_t)-\nabla_y f_i(w^j_{t,i},\xi^j_{t,i})\right),\\
    &\Bar{e}_{y,t}=\mathbb{E}[e_{y,t}]=\frac{1}{mK}\sum_{i\in S_t}\sum_{j\in [K]}\left(\nabla_y f_i(w_t)-\nabla_y f_i(w^j_{t,i})\right).
\end{align*}
We further define the following notations
\begin{align*}
    &d_{x,t}=\mathbb{E}\|\nabla_x \hat{f}(w_t,z_t)-u_{x,t}+e_{x,t}-p(x_t-z_t)\|^2,\\
    &d_{y,t}=\mathbb{E}\|\nabla_y f(w_t)-u_{y,t}+e_{y,t}\|^2.
\end{align*}
Define $\Bar{y}_{t+1}=P_Y(y_t+\eta_y K \nabla_y f(w_t))$, when $Y=\mathbb{R}^{d_2}$, we have $\Bar{y}_{t+1}=y_t+\eta_yK\nabla_yf(w_t)$. Define $\Phi^*=\min_{x\in X}\max_{y\in Y}f(x,y),\quad \Delta=V_0-\Phi^*$. Because $V(x,y,z)=P(z)+(f(x,y,z)-\Psi(y,z))+(P(z)-\Psi(y,z))\geq P(z)\geq \Phi^*$, we have 
\begin{align}\label{v_0-v_t}
    V_0 - V_t \leq V_0 - \min V_t \leq V_0-\Phi^*=\Delta.
\end{align}

\section{Preliminary Lemmas}
\label{app: lemmas}
\begin{lemma}[Lemma C.1 \citep{yang2022faster}]
\label{lemma: helper lemma of optimal x}
    When $p>l$, we have
	\begin{align*}
		&\|x^*(y, z)- x^*(y, z^\prime)\| \leq \gamma_1 \|z-z^\prime\|, \\
		&\|x^*(z) - x^*(z^\prime) \leq \gamma_1\|z-z^\prime\|,\\
		&\|x^*(y,z) - x^*(y^\prime, z)\| \leq \gamma_2\|y - y^\prime\|,
	\end{align*}
	where $\gamma_1 = \frac{p}{-l+p}$, $\gamma_2 = \frac{l+p}{-l+p}$.
\end{lemma}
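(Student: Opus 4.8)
The plan is to exploit the strong convexity of $\hat{f}$ in $x$. Since each $f_i$, and hence $f$, is $l$-smooth (Assumption \ref{assum:smooth}), the map $f(\cdot,y)$ is $l$-weakly convex in $x$; adding $\frac{p}{2}\|x-z\|^2$ with $p>l$ makes $\hat{f}(\cdot,y,z)$ $(p-l)$-strongly convex in $x$. Because $X=\mathbb{R}^{d_1}$, the minimizers $x^*(y,z)$ and $x^*(z)$ are unique and characterized by vanishing (sub)gradients. All three bounds then follow from comparing these first-order conditions at two different arguments and pairing with the difference of the two minimizers.

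For the first and third bounds I would write the stationarity condition $\nabla_x f(x^*(y,z),y)+p(x^*(y,z)-z)=0$ at the two points being compared, subtract the two identities, and take the inner product of the result with $\Delta x$, the difference of the two minimizers. For the first bound (varying $z$, with $x_1=x^*(y,z)$, $x_2=x^*(y,z')$), this gives $p\|\Delta x\|^2 = -\langle \nabla_x f(x_1,y)-\nabla_x f(x_2,y),\Delta x\rangle + p\langle z-z',\Delta x\rangle$; bounding the first term by $l\|\Delta x\|^2$ (Cauchy--Schwarz together with the $l$-Lipschitzness of $\nabla f$) and the second by $p\|z-z'\|\,\|\Delta x\|$ yields $(p-l)\|\Delta x\|\le p\|z-z'\|$, i.e.\ the factor $\gamma_1=p/(p-l)$. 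For the third bound (varying $y$), the subtraction produces a cross term $\nabla_x f(x_2,y)-\nabla_x f(x_2,y')$, which I would isolate by adding and subtracting $\nabla_x f(x_2,y)$; its norm is at most $l\|y-y'\|$, while the remaining monotone term contributes at most $l\|\Delta x\|^2$, so $(p-l)\|\Delta x\|\le l\|y-y'\|\le (l+p)\|y-y'\|$, which gives the claimed $\gamma_2=(l+p)/(p-l)$.

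For the second bound I would first observe that $\Phi(x,z)=\max_{y\in Y}\hat{f}(x,y,z)=\Phi(x)+\frac{p}{2}\|x-z\|^2$, where $\Phi(x)=\max_{y}f(x,y)$ is $l$-weakly convex (a pointwise maximum of the $l$-weakly-convex maps $f(\cdot,y)$). Hence $x^*(z)=\argmin_x \Phi(x,z)$ is again the minimizer of a $(p-l)$-strongly-convex function, with subgradient optimality $0\in\partial\Phi(x^*(z))+p(x^*(z)-z)$. Writing this at $z$ and $z'$, subtracting, and pairing with $\Delta x$, I would replace the gradient-monotonicity step above by the monotonicity of $\partial(\Phi+\frac{l}{2}\|\cdot\|^2)$ (valid since this function is convex), which gives exactly $-\langle g_1-g_2,\Delta x\rangle\le l\|\Delta x\|^2$ for subgradients $g_1\in\partial\Phi(x^*(z))$, $g_2\in\partial\Phi(x^*(z'))$; the same arithmetic as in the first bound then produces the factor $\gamma_1$.

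The main obstacle is precisely this second bound: unlike $x^*(y,z)$, the point $x^*(z)$ is defined through an inner maximization, so $\Phi$ need not be differentiable and a Danskin-type argument is unavailable in general. The clean way around this is to avoid differentiating through the max entirely and instead argue at the level of the weakly-convex function $\Phi$ via its subdifferential and the monotonicity of $\partial(\Phi+\frac{l}{2}\|\cdot\|^2)$, which is all the strong-convexity estimate really requires. Along the way one should also record that each argmin is nonempty and unique, which follows from $(p-l)$-strong convexity combined with $X=\mathbb{R}^{d_1}$.
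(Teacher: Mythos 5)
Your argument is correct. Note that the paper does not prove this lemma at all: it is imported verbatim as Lemma C.1 of \citet{yang2022faster}, so there is no in-paper proof to compare against. Your derivation --- writing the first-order optimality conditions for the $(p-l)$-strongly convex function $\hat f(\cdot,y,z)$ at the two arguments, subtracting, and pairing with the difference of minimizers --- is the standard route (it is the statement that the proximal map of an $l$-weakly convex function with parameter $1/p$, $p>l$, is $\tfrac{p}{p-l}$-Lipschitz), and your handling of the second bound is the right fix for the only delicate point: since $\Phi(x)=\max_y f(x,y)$ need not be differentiable, one must argue through the monotonicity of $\partial\bigl(\Phi+\tfrac{l}{2}\|\cdot\|^2\bigr)$ rather than through a Danskin-type gradient formula. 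Your third bound in fact yields the sharper constant $l/(p-l)\le\gamma_2$, which is consistent with the stated claim.
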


\begin{lemma}[\cite{karimi2016linear}]
\label{lemma: property of sc}
If function $g(x)$ is $l$-smooth and satisfies the PL condition with constant $\mu$,  then the following conditions hold
\begin{align*}
    g(x) - \min_{z} g(z) & \geq \frac{\mu}{2} \|x_p - x\|^2, \\
    \|\nabla_x g(x)\|^2 & \geq 2 \mu (g(x) - \min_z g(z) ),\\
    \|\nabla_x g(x)\|^2 &\geq  \mu\|x_p - x\|^2,
\end{align*}
where $x_p$ is the projection of $x$ onto the optimal set.
\end{lemma}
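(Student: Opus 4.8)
The plan is to recognize the three inequalities as the classical chain PL $\Rightarrow$ quadratic growth (QG) $\Rightarrow$ error bound, and to prove them in that order. I first observe that the second inequality, $\|\nabla_x g(x)\|^2 \geq 2\mu(g(x)-\min_z g(z))$, is nothing but the PL condition itself, so it holds by hypothesis and needs no argument. Thus the only substantive step is the first (quadratic growth) inequality, from which the third will follow by substitution.

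To establish quadratic growth I would use the continuous-time gradient-flow argument of \citep{karimi2016linear}. Fix $x$ and let $x(s)$ solve $\dot{x}(s) = -\nabla g(x(s))$ with $x(0)=x$, and set $D(s) = g(x(s)) - \min_z g(z) \geq 0$. Then $D'(s) = -\|\nabla g(x(s))\|^2 \leq 0$, so $D$ is nonincreasing, and combining with the PL bound $\|\nabla g\|^2 \geq 2\mu D$ gives $D(s) \leq e^{-2\mu s}D(0)$, forcing $x(s)$ into the optimal set. To bound the length of the trajectory I would use PL in the form $\sqrt{D} \leq \|\nabla g\|/\sqrt{2\mu}$ to estimate
\begin{align*}
    \frac{d}{ds}\sqrt{D(s)} = \frac{-\|\nabla g(x(s))\|^2}{2\sqrt{D(s)}} \leq -\sqrt{\tfrac{\mu}{2}}\,\|\nabla g(x(s))\|.
\end{align*}
Integrating over $[0,\infty)$ and using $\|\dot{x}(s)\| = \|\nabla g(x(s))\|$ controls the arclength:
\begin{align*}
    \|x - x_p\| \leq \int_0^\infty \|\dot{x}(s)\|\,ds \leq \sqrt{\tfrac{2}{\mu}}\bigl(\sqrt{D(0)}-\sqrt{D(\infty)}\bigr) = \sqrt{\tfrac{2}{\mu}}\sqrt{g(x)-\min_z g(z)},
\end{align*}
where $x_p$ is the limit of the flow (a minimizer). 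Squaring and rearranging yields exactly $g(x) - \min_z g(z) \geq \frac{\mu}{2}\|x_p - x\|^2$, the first inequality.

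The third inequality then follows by chaining: inserting the quadratic-growth bound into the PL condition gives $\|\nabla g(x)\|^2 \geq 2\mu(g(x)-\min_z g(z)) \geq \mu^2\|x_p-x\|^2$, i.e.\ the error-bound property $\|\nabla g(x)\| \geq \mu\|x_p-x\|$ relating the gradient norm to the distance to the optimal set.

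The main obstacle is making the gradient-flow argument rigorous rather than the algebra. One must verify that the flow exists for all $s\geq 0$, which is guaranteed by $l$-smoothness (so that $\nabla g$ is Lipschitz), and that it converges to a well-defined limit $x_p$ in $\argmin g$; convergence is ensured precisely because the arclength bound above is finite, so the trajectory is Cauchy. A subtlety is that under PL alone the optimal set need not be convex, so $x_p$ should be read as a nearest minimizer and the arclength bound controls the distance from $x$ to $\argmin g$, which upper-bounds $\|x-x_p\|$ for that point. A fully discrete alternative (telescoping a gradient-descent recursion with a suitable step size) avoids the flow entirely but requires a careful step-size analysis; the continuous argument is cleaner and is the route I would take.
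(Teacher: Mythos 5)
The paper does not prove this lemma---it imports it verbatim from \cite{karimi2016linear}---so there is no in-paper argument to compare against; your gradient-flow derivation of quadratic growth, with the PL inequality taken as the hypothesis and the error bound obtained by chaining, is exactly the argument in that reference and is sound (including your handling of the arclength bound and of the fact that the flow limit need only upper-bound the distance to the nearest minimizer). One point deserves attention: your chaining correctly yields $\|\nabla g(x)\|^2 \geq \mu^2\|x_p-x\|^2$, whereas the lemma as printed claims $\|\nabla g(x)\|^2 \geq \mu\|x_p-x\|^2$. The printed constant is a typo rather than a weaker claim you failed to reach --- the $\mu$ version is actually false in general (take $g(x)=\tfrac{\mu}{2}x^2$ with $\mu<1$, which is exactly $\mu$-PL yet satisfies $\|\nabla g(x)\|^2=\mu^2 x^2<\mu x^2$), and the paper itself invokes the $\mu^2$ form downstream, e.g.\ step $(e)$ of Lemma \ref{lemma: sc Y=R} where $\|x^*(y_t,z_t)-x_t\|^2\leq \frac{1}{(p-l)^2}\|\nabla_x\hat f(w_t,z_t)\|^2$. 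So your proof establishes the statement the paper actually uses; it would be worth saying explicitly that the third displayed inequality should read $\mu^2$.
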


\begin{lemma}
When $p=2l$, we have
\label{stationary of Phi}
    $$\|\nabla_x \Phi(x^*(x_t))\|=\|\nabla_x \Phi_{1/2l}(x_t)\|=p\|x_t-x^*(x_t)\|.$$
\end{lemma}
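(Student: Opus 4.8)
The plan is to collapse all three expressions onto the single quantity $p\|x_t-x^*(x_t)\|$ by exploiting the fact that, when $p=2l$, the inner minimization defining $x^*(\cdot)$ is \emph{exactly} the proximal map underlying the Moreau envelope $\Phi_{1/2l}$. First I would note that, since the penalty $\frac{p}{2}\|x-z\|^2$ is independent of $y$, one has $\Phi(x,z)=\max_{y\in Y}\hat{f}(x,y,z)=\Phi(x)+\frac{p}{2}\|x-z\|^2$, and hence
$$x^*(z)=\argmin_{x\in X}\Big\{\Phi(x)+\tfrac{p}{2}\|x-z\|^2\Big\}.$$
With $p=2l$ (so $\frac{p}{2}=l$), this is precisely $\argmin_{x}\{\Phi(x)+l\|x-z\|^2\}$, i.e. the unique proximal point defining $\Phi_{1/2l}(z)=\min_{x}\Phi(x)+l\|z-x\|^2$. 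I would also record well-posedness: each $f(\cdot,y)$ is $l$-smooth, hence $l$-weakly convex, so $\Phi(\cdot)$ is $l$-weakly convex, and adding $\frac{p}{2}\|\cdot-z\|^2$ with $p=2l$ makes the objective $l$-strongly convex (the very reason the paper chooses $p=2l$ to render $\hat{f}$ $l$-strongly convex in $x$). Thus $x^*(z)$ exists and is unique.

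For the first equality, since $X=\mathbb{R}^{d_1}$ the point $x^*(x_t)$ is an unconstrained minimizer, so (using that $\Phi$ is differentiable, which holds under the PL condition because $\Phi$ is $2\kappa l$-smooth) the first-order optimality condition reads $\nabla\Phi(x^*(x_t))+p\big(x^*(x_t)-x_t\big)=0$. Rearranging gives $\nabla\Phi(x^*(x_t))=p\big(x_t-x^*(x_t)\big)$, and taking norms yields $\|\nabla\Phi(x^*(x_t))\|=p\|x_t-x^*(x_t)\|$. For the second equality I would invoke the standard Moreau-envelope gradient identity: writing $h(x,z)=\Phi(x)+l\|z-x\|^2$, the inner problem is $l$-strongly convex with unique solution $x^*(z)$ satisfying $\nabla_x h(x^*(z),z)=0$, so Danskin's theorem gives
$$\nabla\Phi_{1/2l}(z)=\nabla_z h(x,z)\big|_{x=x^*(z)}=2l\big(z-x^*(z)\big)=p\big(z-x^*(z)\big).$$
Evaluating at $z=x_t$ and taking norms gives $\|\nabla\Phi_{1/2l}(x_t)\|=p\|x_t-x^*(x_t)\|$, matching the first expression, which completes the chain of equalities.

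The main obstacle is conceptual rather than computational: it is the justification of the differentiability invoked in both steps. The first equality requires $\Phi$ itself to be differentiable, which is available only because the PL condition makes $\Phi$ smooth; the second requires the envelope $\Phi_{1/2l}$ to be differentiable with the clean prox-based gradient, which in turn rests on the uniqueness of $x^*(z)$ guaranteed by the $l$-strong convexity obtained from $p=2l$. Once these two facts are secured, the three quantities collapse to $p\|x_t-x^*(x_t)\|$ by elementary rearrangement.
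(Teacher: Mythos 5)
Your proposal is correct and follows essentially the same route as the paper: the paper's proof simply identifies $x^*(x_t)=\argmin_x\{\max_y f(x,y)+\frac{p}{2}\|x-x_t\|^2\}$ and then cites Lemma A.4 of \citet{yang2022faster} for exactly the two standard facts you derive from scratch (the first-order optimality condition giving $\nabla\Phi(x^*(x_t))=p(x_t-x^*(x_t))$, and the Moreau-envelope gradient identity $\nabla\Phi_{1/2l}(x_t)=p(x_t-x^*(x_t))$, both resting on the $l$-strong convexity induced by $p=2l$). Your explicit caveat about the differentiability of $\Phi$ is well placed: in the non-smooth settings the paper only ever invokes the second equality, and the first is used only under the PL condition where $\Phi$ is indeed smooth.
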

\begin{proof}
    Note that $x^*(x_t)=\argmin_x\{\max_y f(x,y)+\frac{p}{2}\|x-x_t\|^2\}$. According to Lemma A.4 in \citet{yang2022faster}, we have $\|\nabla_x \Phi(x^*(x_t))\|=\|\nabla_x \Phi_{1/2l}(x_t)\|=p\|x_t-x^*(x_t)\|$.
\end{proof}

\begin{lemma}
\label{lemma: bound e}
When the local step sizes $\eta_{x,l}, \eta_{y,l}$ satisfy
    \begin{align*}
        \eta_{x,l}\leq& \frac{1}{2l\sqrt{2(2K-1)(K-1)}},\\
        \eta_{y,l}\leq& \frac{1}{2l\sqrt{2(2K-1)(K-1)}},
    \end{align*} 
the following inequalities hold:
    \begin{align*}
        &\mathbb{E}\|\Bar{e}_{x,t}\|^2\leq l^2[24K^2\eta_{x,l}^2\mathbb{E}\|\nabla_x f(w_{t})\|^2+24K^2\eta_{y,l}^2\mathbb{E}\|\nabla_y f(w_{t})\|^2+24K^2(\eta_{x,l}^2+\eta_{y,l}^2)\sigma_G^2+3K(\eta_{x,l}^2+2K\eta_{y,l}^2)\sigma^2],\\
        &\mathbb{E}\|\Bar{e}_{y,t}\|^2\leq l^2[24K^2\eta_{x,l}^2\mathbb{E}\|\nabla_x f(w_{t})\|^2+24K^2\eta_{y,l}^2\mathbb{E}\|\nabla_y f(w_{t})\|^2+24K^2(\eta_{x,l}^2+\eta_{y,l}^2)\sigma_G^2+3K(\eta_{x,l}^2+2K\eta_{y,l}^2)\sigma^2].
    \end{align*}
\end{lemma}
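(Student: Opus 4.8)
The plan is to reduce the bound on the aggregated gradient-drift terms $\Bar{e}_{x,t},\Bar{e}_{y,t}$ to a bound on the local client drift $\mathbb{E}\|x^j_{t,i}-x_t\|^2,\ \mathbb{E}\|y^j_{t,i}-y_t\|^2$, and then control the latter through a recursion over the $K$ local steps. First, since $\Bar{e}_{x,t}$ is the average of the $mK$ vectors $\nabla_x f_i(w_t)-\nabla_x f_i(w^j_{t,i})$, Jensen's inequality gives
\begin{align*}
\mathbb{E}\|\Bar{e}_{x,t}\|^2 \le \frac{1}{mK}\sum_{i\in S_t}\sum_{j\in[K]}\mathbb{E}\|\nabla_x f_i(w_t)-\nabla_x f_i(w^j_{t,i})\|^2.
\end{align*}
Applying Lipschitz smoothness (Assumption \ref{assum:smooth}) together with $(a+b)^2\le 2a^2+2b^2$ converts each summand into $2l^2(\mathbb{E}\|x_t-x^j_{t,i}\|^2+\mathbb{E}\|y_t-y^j_{t,i}\|^2)$. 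The same two steps applied to $\Bar{e}_{y,t}$ produce exactly the same right-hand side, since $\nabla_y f_i$ is also $l$-Lipschitz; this is precisely why the two claimed bounds coincide. It therefore suffices to bound $\frac{1}{mK}\sum_{i,j}(\mathbb{E}\|x_t-x^j_{t,i}\|^2+\mathbb{E}\|y_t-y^j_{t,i}\|^2)$.

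Second, I would fix a client $i$ and set up a recursion for $a_j:=\mathbb{E}\|x^j_{t,i}-x_t\|^2$ and $b_j:=\mathbb{E}\|y^j_{t,i}-y_t\|^2$, with $a_1=b_1=0$. For the $x$-iterate I expand $x^{j+1}_{t,i}-x_t=(x^j_{t,i}-x_t)-\eta_{x,l}\nabla_x f_i(w^j_{t,i},\xi^j_{t,i})$; conditioning on the history and using Assumption \ref{assum:bdd_var}, the zero-mean noise contributes an additive $\eta_{x,l}^2\sigma^2$ and separates from the deterministic part. For the $y$-iterate I use $y_t=P_Y(y_t)$ and non-expansiveness of $P_Y$ to drop the projection, reducing it to the same unprojected form. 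I would then decompose the local gradient as $\nabla_x f_i(w^j_{t,i})=\nabla_x f(w_t)+(\nabla_x f_i(w_t)-\nabla_x f(w_t))+(\nabla_x f_i(w^j_{t,i})-\nabla_x f_i(w_t))$, bounding the three pieces by $\mathbb{E}\|\nabla_x f(w_t)\|^2$, the heterogeneity $\sigma_G^2$ (Assumption \ref{assum:bdd_hetero}), and a Lipschitz term $l^2(a_j+b_j)$ respectively. A Young's inequality with parameter $\frac{1}{2K-1}$ then yields a coupled recursion of the shape $a_{j+1}+b_{j+1}\le(1+\rho)(a_j+b_j)+(\text{gradient, heterogeneity, and noise terms})$.

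Finally, the step-size condition $\eta_{x,l},\eta_{y,l}\le \frac{1}{2l\sqrt{2(2K-1)(K-1)}}$ is exactly what keeps the expansion factor $1+\rho$ close enough to $1$ that $(1+\rho)^{K}$ is bounded by a small constant; summing the resulting geometric series over $j\in[K]$ then produces the $24K^2$ coefficients on the gradient and heterogeneity terms and the corresponding $\sigma^2$ term. I expect the main obstacle to be the bookkeeping of the coupled $x$/$y$ recursion: because $\nabla_x f_i$ depends on both coordinates, the $y$-drift (and its injected noise) feeds into the $x$-drift and vice versa, so the accumulated constants must be tracked jointly. In particular, explaining why the noise enters asymmetrically as $3K(\eta_{x,l}^2+2K\eta_{y,l}^2)\sigma^2$ rather than symmetrically is the delicate part, and the specific step-size choice is tuned precisely to close this estimate.
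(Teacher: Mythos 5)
Your proposal follows essentially the same route as the paper: Jensen plus Lipschitz smoothness reduces $\mathbb{E}\|\Bar{e}_{x,t}\|^2$ and $\mathbb{E}\|\Bar{e}_{y,t}\|^2$ to the local drift $\mathbb{E}\|w_t-w^j_{t,i}\|^2$, which is then controlled by the same coupled recursion with Young's parameter $\tfrac{1}{2K-1}$, the same decomposition of the local gradient into global gradient, heterogeneity, and drift pieces, and the same role for the step-size condition in keeping the geometric factor summable to $3K$. The asymmetric noise term $3K(\eta_{x,l}^2+2K\eta_{y,l}^2)\sigma^2$ that you flag as delicate arises only because the paper keeps the stochastic gradient inside the projection (bounding $\|P_Y(y^j+\eta_{y,l}\nabla_y f_i(\cdot,\xi))-y^j\|$ by nonexpansiveness) and therefore pays the Young weight $2K$ on the $y$-noise, whereas your variant---dropping the projection against $y_t=P_Y(y_t)$ and then separating the conditionally zero-mean noise---yields the tighter symmetric term $\eta_{y,l}^2\sigma^2$, which implies the stated bound a fortiori.
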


\begin{proof}
According to the definition of $\|\Bar{e}_{x,t}\|$, we have
    \begin{align*}
        &\mathbb{E}\|w_t-w_{t,i}^{j+1}\|^2\\
        =&\mathbb{E}\|x^j_{t,i}-\eta_{x,l}\nabla_x f_i(w^j_{t,i},\xi^j_{t,i})-x_t\|^2+\mathbb{E}\|P_Y(y^j_{t,i}+\eta_{y,l}\nabla_y f_i(w^j_{t,i},\xi^j_{t,i}))-y_t\|^2\\
        \overset{(a)}{\leq}&\mathbb{E}\|x^j_{t,i}-x_t-\eta_{x,l}\nabla_x f_i(w^j_{t,i})\|^2+\mathbb{E}\|P_Y(y^j_{t,i}+\eta_{y,l}\nabla_y f_i(w^j_{t,i},\xi^j_{t,i}))-y_t\|^2+\eta_{x,l}^2\sigma^2\\
        \overset{(b)}{\leq}&\left(1+\frac{1}{2K-1}\right)\mathbb{E}\|x_{t,i}^j-x_t\|^2+2K\eta_{x,l}^2\|\nabla_x f_i(w_{t,i}^j)\|^2+\left(1+\frac{1}{2K-1}\right)\mathbb{E}\|y_{t,i}^j-y_t\|^2+\\
        &2K\mathbb{E}\|P_Y(y_{t,i}^j+\eta_{y,l}\nabla_y f_i(w_{t,i}^j, \xi^j_{t,i}))-y_{t,i}^j\|^2+\eta_{x,l}^2\sigma^2\\
        \overset{(c)}{\leq}&\left(1+\frac{1}{2K-1}\right)\mathbb{E}\|w_{t,i}^j-w_t\|^2+2K\eta_{x,l}^2\mathbb{E}\|\nabla_x f_i(w_{t,i}^j)\|^2+2K\eta_{y,l}^2\mathbb{E}\|\nabla_y f_i(w_{t,i}^j, \xi^j_{t,i})\|^2+\eta_{x,l}^2\sigma^2\\
        \overset{(d)}{\leq} & \left(1+\frac{1}{2K-1}\right)\mathbb{E}\|w_{t,i}^j-w_t\|^2+2K\eta_{x,l}^2\mathbb{E}\|\nabla_x f_i(w_{t,i}^j)\|^2+2K\eta_{y,l}^2\mathbb{E}\|\nabla_y f_i(w_{t,i}^j)\|^2+(\eta_{x,l}^2+2K\eta_{y,l}^2)\sigma^2\\
        \leq&\left(1+\frac{1}{2K-1}\right)\mathbb{E}\|w_{t,i}^j-w_t\|^2+4K\eta_{x,l}^2\mathbb{E}\|\nabla_x f_i(w_{t})\|^2+4K\eta_{y,l}^2\mathbb{E}\|\nabla_y f_i(w_{t})\|^2+(\eta_{x,l}^2+2K\eta_{y,l}^2)\sigma^2+\\
        &4K\eta_{x,l}^2\mathbb{E}\|\nabla_x f_i(w_t)-\nabla_x f_i(w_{t,i}^j)\|^2+4K\eta_{y,l}^2\mathbb{E}\|\nabla_y f_i(w_t)-\nabla_y f_i(w_{t,i}^j)\|^2\\
        \overset{(e)}{\leq}&\left(1+\frac{1}{2K-1}\right)\mathbb{E}\|w_{t,i}^j-w_t\|^2+4K\eta_{x,l}^2\mathbb{E}\|\nabla_x f_i(w_{t})\|^2+4K\eta_{y,l}^2\mathbb{E}\|\nabla_y f_i(w_{t})\|^2+(\eta_{x,l}^2+2K\eta_{y,l}^2)\sigma^2+\\
        &4K\eta_{x,l}^2l^2\|w_t-w_{t,i}^j\|^2+4K\eta_{y,l}^2l^2\|w_t-w_{t,i}^j\|^2\\
        =&\left(1+\frac{1}{2K-1}+4Kl^2(\eta_{x,l}^2+\eta_{y,l}^2)\right)\mathbb{E}\|w_{t,i}^j-w_t\|^2+4K\eta_{x,l}^2\mathbb{E}\|\nabla_x f_i(w_{t})\|^2+4K\eta_{y,l}^2\mathbb{E}\|\nabla_y f_i(w_{t})\|^2+\\
        &(\eta_{x,l}^2+2K\eta_{y,l}^2)\sigma^2\\
        \overset{(f)}{\leq}&\left(1+\frac{1}{K-1}\right)\mathbb{E}\|w_{t,i}^j-w_t\|^2+4K\eta_{x,l}^2\mathbb{E}\|\nabla_x f_i(w_{t})\|^2+4K\eta_{y,l}^2\mathbb{E}\|\nabla_y f_i(w_{t})\|^2+(\eta_{x,l}^2+2K\eta_{y,l}^2)\sigma^2\\
        \overset{(g)}{\leq}&\sum_{\tau=0}^{j-1}\left(1+\frac{1}{K-1}\right)^\tau[4K\eta_{x,l}^2\mathbb{E}\|\nabla_x f_i(w_{t})\|^2+4K\eta_{y,l}^2\mathbb{E}\|\nabla_y f_i(w_{t})\|^2+(\eta_{x,l}^2+2K\eta_{y,l}^2)\sigma^2]\\
        \overset{(h)}{\leq}&12K^2\eta_{x,l}^2\mathbb{E}\|\nabla_x f_i(w_{t})\|^2+12K^2\eta_{y,l}^2\mathbb{E}\|\nabla_y f_i(w_{t})\|^2+3K(\eta_{x,l}^2+2K\eta_{y,l}^2)\sigma^2\\
        \overset{(i)}{\leq}&24K^2\eta_{x,l}^2\mathbb{E}\|\nabla_x f(w_{t})\|^2+24K^2\eta_{y,l}^2\mathbb{E}\|\nabla_y f(w_{t})\|^2+24K^2(\eta_{x,l}^2+\eta_{y,l}^2)\sigma_G^2+3K(\eta_{x,l}^2+2K\eta_{y,l}^2)\sigma^2.
    \end{align*}
    $(a), (d)$ are a consequence of the bounded variance of the stochastic gradient. $(b)$ arises from the property that $\|a+b\|^2\leq (1+c)\|a\|^2+(1+1/c)\|b\|^2, c>0$. $(c)$ is a result of the nonexpansiveness of the projection operator. $(e)$ is derived from the $l$-smoothness of the function $f$, while $(f)$ is established based on the condition:
    \begin{align*}
        l^2\eta_{x,l}^2\leq& \frac{1}{8(2K-1)(K-1)},\\
        l^2\eta_{y,l}^2\leq& \frac{1}{8(2K-1)(K-1)}.
    \end{align*}
    $(h)$ is due to
    \begin{align*}
        \sum_{\tau=0}^{j-1}\left(1+\frac{1}{K-1}\right)^\tau\leq& (K-1)\left(\frac{K}{K-1}\right)^K\\
        \frac{1}{K}\sum_{\tau=0}^{j-1}\left(1+\frac{1}{K-1}\right)^\tau \leq & \left(\frac{K}{K-1}\right)^{K-1}\leq e\leq 3\\
        \sum_{\tau=0}^{j-1}\left(1+\frac{1}{K-1}\right)^\tau \leq &3K.
    \end{align*}
    $(i)$ is from Assumption \ref{assum:bdd_hetero}. We thus have
    \begin{align*}
        &\mathbb{E}\|\Bar{e}_{x,t}\|^2\leq l^2[24K^2\eta_{x,l}^2\mathbb{E}\|\nabla_x f(w_{t})\|^2+24K^2\eta_{y,l}^2\mathbb{E}\|\nabla_y f(w_{t})\|^2+24K^2(\eta_{x,l}^2+\eta_{y,l}^2)\sigma_G^2+3K(\eta_{x,l}^2+2K\eta_{y,l}^2)\sigma^2].
    \end{align*}
    The inequality for $y$ can be proven in a similar fashion. 
\end{proof}

\begin{lemma}
The following inequalities establish upper bounds for $d_{x,t}$ and $d_{y,t}$:
\label{lemma: bound d}
        \begin{align*}
        d_{x,t}=&\mathbb{E}\|\nabla_x \hat{f}(w_t,z_t)-u_{x,t}+e_{x,t}-p(x_t-z_t)\|^2
        \leq2\mathbb{E}\|\Bar{e}_{x,t}\|^2+4\left(M-m\right)\frac{\sigma_G^2}{mM}+\frac{2}{mK}\sigma^2,\\
        d_{y,t}=&\mathbb{E}\|\nabla_y f(w_t)-u_{y,t}+e_{y,t}\|^2\leq2\mathbb{E}\|\Bar{e}_{y,t}\|^2+4\left(M-m\right)\frac{\sigma_G^2}{mM}+\frac{2}{mK}\sigma^2.
    \end{align*}
\end{lemma}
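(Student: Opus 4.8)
The plan is to first collapse $d_{x,t}$ to a single gradient-mismatch term, then decompose that mismatch into three statistically distinct error sources and bound each separately. Since $\hat f(x,y,z)=f(x,y)+\frac{p}{2}\|x-z\|^2$ gives $\nabla_x\hat f(w_t,z_t)=\nabla_x f(w_t)+p(x_t-z_t)$, the smoothing term $p(x_t-z_t)$ cancels inside $d_{x,t}$; substituting the definitions of $u_{x,t}$ and $e_{x,t}$ and using $\frac{1}{mK}\sum_{i\in S_t}\sum_{j\in[K]}\nabla_x f_i(w_t)=u_{x,t}$ collapses the expression to
\begin{align*}
d_{x,t}=\mathbb{E}\Big\|\nabla_x f(w_t)-\frac{1}{mK}\sum_{i\in S_t}\sum_{j\in[K]}\nabla_x f_i(w^j_{t,i},\xi^j_{t,i})\Big\|^2,
\end{align*}
the mean-squared error between the exact global gradient and the averaged local stochastic gradients the algorithm actually uses.

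Next I would insert and subtract intermediate quantities to write this error as $T_1+T_2+T_3$, where $T_1=\nabla_x f(w_t)-\frac{1}{m}\sum_{i\in S_t}\nabla_x f_i(w_t)$ is the client-sampling error, $T_2=\bar e_{x,t}$ is the client-drift term, and $T_3=e_{x,t}-\bar e_{x,t}=\frac{1}{mK}\sum_{i\in S_t}\sum_{j}\big(\nabla_x f_i(w^j_{t,i})-\nabla_x f_i(w^j_{t,i},\xi^j_{t,i})\big)$ is the stochastic-gradient noise. Applying $\|a+b\|^2\le 2\|a\|^2+2\|b\|^2$ with $a=T_1+T_3$ and $b=T_2$ immediately peels off $2\mathbb{E}\|\bar e_{x,t}\|^2$ (whose own bound is supplied by Lemma \ref{lemma: bound e}) while sidestepping any cross term involving the drift.

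It then remains to bound $\mathbb{E}\|T_1+T_3\|^2$. The key point is that $T_1$ is measurable with respect to $(S_t,w_t)$ whereas $T_3$ is an average of conditionally mean-zero increments, so $\mathbb{E}[T_3\mid S_t,w_t]=0$ by the tower property and the $T_1$--$T_3$ cross term vanishes, giving $\mathbb{E}\|T_1+T_3\|^2=\mathbb{E}\|T_1\|^2+\mathbb{E}\|T_3\|^2$. For $T_1$ I would use the variance of sampling $m$ clients without replacement together with Assumption \ref{assum:bdd_hetero}, which bounds each $\|\nabla_x f_i(w_t)-\nabla_x f(w_t)\|^2$ by $\sigma_G^2$, to get $\mathbb{E}\|T_1\|^2\le \frac{M-m}{m(M-1)}\sigma_G^2\le 2(M-m)\frac{\sigma_G^2}{mM}$. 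For $T_3$, since the samples $\xi^j_{t,i}$ are independent across clients and local steps and each increment is conditionally unbiased with variance at most $\sigma^2$ by Assumption \ref{assum:bdd_var}, all off-diagonal terms of $\mathbb{E}\|T_3\|^2$ cancel and the diagonal leaves $\mathbb{E}\|T_3\|^2\le \frac{1}{(mK)^2}\cdot mK\,\sigma^2=\frac{\sigma^2}{mK}$. Combining yields $d_{x,t}\le 2\mathbb{E}\|\bar e_{x,t}\|^2+4(M-m)\frac{\sigma_G^2}{mM}+\frac{2}{mK}\sigma^2$, and the bound for $d_{y,t}$ is obtained identically.

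The step I expect to be the main obstacle is the careful handling of the martingale structure in $T_3$: because each local iterate $w^j_{t,i}$ depends on the earlier stochastic samples, one cannot condition on all iterates simultaneously. The correct argument conditions sequentially so that for any two distinct pairs $(i,j)\ne(i',j')$ the later increment has zero mean given the earlier one; this is precisely what makes both the $T_1$--$T_3$ cross term and the off-diagonal terms of $\|T_3\|^2$ vanish, and the deliberate grouping $a=T_1+T_3,\ b=T_2$ is what lets $\bar e_{x,t}$ pass through with the clean coefficient $2$.
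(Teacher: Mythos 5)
Your proof is correct and follows essentially the same route as the paper's: cancel the $p(x_t-z_t)$ term, split the remaining gradient mismatch into client-sampling error, client drift $\Bar{e}_{x,t}$, and stochastic-gradient noise, apply Young's inequality to peel off $2\mathbb{E}\|\Bar{e}_{x,t}\|^2$, and bound the other two pieces by $2(M-m)\sigma_G^2/(mM)$ and $\sigma^2/(mK)$ respectively, arriving at identical constants. The one substantive difference is the grouping inside Young's inequality. The paper takes $a=\nabla_x f(w_t)-u_{x,t}$ and $b=e_{x,t}$, and then needs $\mathbb{E}\|e_{x,t}\|^2\leq\mathbb{E}\|\Bar{e}_{x,t}\|^2+\sigma^2/(mK)$, i.e.\ it implicitly drops the cross term between the drift $\Bar{e}_{x,t}$ and the noise $e_{x,t}-\Bar{e}_{x,t}$; that cross term is the delicate one, since for $j>j'$ the iterate $w^j_{t,i}$ inside $\Bar{e}_{x,t}$ depends on the earlier sample $\xi^{j'}_{t,i}$, so it does not vanish by a one-shot conditioning argument. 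Your grouping $a=T_1+T_3$, $b=T_2$ sidesteps exactly this: the only orthogonality you invoke, $\mathbb{E}\langle T_1,T_3\rangle=0$, is clean because $T_1$ is $(S_t,w_t)$-measurable and $\mathbb{E}[T_3\mid S_t,w_t]=0$ by the tower property, and the diagonal bound $\mathbb{E}\|T_3\|^2\leq\sigma^2/(mK)$ follows from the sequential-conditioning argument you describe. So your version is, if anything, slightly more rigorous on this point. The only other (cosmetic) difference is that you bound the client-sampling variance via the without-replacement formula $\frac{M-m}{m(M-1)}\sigma_G^2$ (take care of the degenerate case $M=1$, where $T_1=0$ anyway), whereas the paper uses an explicit algebraic split yielding $2(M-m)\sigma_G^2/(mM)$ directly; both give the stated constant.
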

\begin{proof}
According to the definition of $d_{x,t}$, we have
    \begin{align*}
        d_{x,t}=&\mathbb{E}\|\nabla_x \hat{f}(w_t,z_t)-u_{x,t}+e_{x,t}-p(x_t-z_t)\|^2\\
        =&\mathbb{E}\|\nabla_x f(w_t)-u_{x,t}+e_{x,t}\|^2\\
        \leq&2\mathbb{E}\|\nabla_x f(w_t)-u_{x,t}\|^2+2\mathbb{E}\|e_{x,t}\|^2\\
        \leq&2\mathbb{E}\left\|\frac{1}{M}\sum_{j\in [M]}\nabla_x f_j(w_t)-\frac{1}{m}\sum_{i \in S_t}\nabla_x f_i(w_t)\right\|^2+2\mathbb{E}\|\Bar{e}_{x,t}\|^2+\frac{2}{mK}\sigma^2\\
        \leq&2\mathbb{E}\left\|\left(\frac{1}{M}-\frac{1}{m}\right)\sum_{j\in S_t}\nabla_x f_j(w_t)-m\left(\frac{1}{M}-\frac{1}{m}\right)\nabla_x f(w_t)+\frac{1}{M}\sum_{i \in [M]/S_t}\nabla_x f_i(w_t)-\frac{M-m}{M}\nabla_x f(w_t)\right\|^2+\\
        &2\mathbb{E}\|\Bar{e}_{x,t}\|^2+\frac{2}{mK}\sigma^2\\
        \leq&4\mathbb{E}\left\|\left(\frac{1}{M}-\frac{1}{m}\right)\sum_{j\in S_t}\nabla_x f_j(w_t)-m\left(\frac{1}{M}-\frac{1}{m}\right)\nabla_x f(w_t)\right\|^2+\\
        &4\mathbb{E}\left\|\frac{1}{M}\sum_{i \in [M]/S_t}\nabla_x f_i(w_t)-\frac{M-m}{M}\nabla_x f(w_t)\right\|^2+2\mathbb{E}\|\Bar{e}_{x,t}\|^2+\frac{2}{mK}\sigma^2\\
        \overset{(a)}{\leq}&4\left[m\left(\frac{1}{M}-\frac{1}{m}\right)^2+(M-m)\frac{1}{M^2}\right]\sigma_G^2+2\mathbb{E}\|\Bar{e}_{x,t}\|^2+\frac{2}{mK}\sigma^2\\
        \leq&2\mathbb{E}\|\Bar{e}_{x,t}\|^2+4\left(M-m\right)\frac{\sigma_G^2}{mM}+\frac{2}{mK}\sigma^2,
    \end{align*}where $(a)$ is due to $\|\sum_i^k x_i\|^2\leq k\sum_i^k\|x_i\|^2$ and Assumption \ref{assum:bdd_hetero}.
    Similarly, we have
    \begin{align*}
        d_{y,t}=&\mathbb{E}\|\nabla_y f(w_t)-u_{y,t}+e_{y,t}\|^2\leq2\mathbb{E}\|\Bar{e}_{y,t}\|^2+4\left(M-m\right)\frac{\sigma_G^2}{mM}+\frac{2}{mK}\sigma^2.
    \end{align*}
\end{proof}

\begin{lemma}
Under the update rule of \alg, we have
\label{lemma: bound |x_t-x_{t+1}|}
    \begin{align*}
    \mathbb{E}\|x_{t+1}-x_t\|^2\leq&2\eta_x^2K^2\mathbb{E}\|\nabla_x \hat{f}(w_t,z_t)\|^2+2\eta_x^2K^2d_{x,t},\\
    \mathbb{E}\|y_{t+1}-y_t\|^2
    \leq&2\mathbb{E}\|\Bar{y}_{t+1}-y_t\|^2+2\eta_y^2K^2d_{y,t}.
\end{align*}
When $Y=\mathbb{R}^{d_2}$, we have $\mathbb{E}\|y_{t+1}-y_t\|^2\leq2\eta_y^2K^2\mathbb{E}\|\nabla_y f(w_t)\|^2+2\eta_y^2K^2d_{y,t}$.
\end{lemma}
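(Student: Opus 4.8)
The plan is to decompose each one-step update of \alg, match the leftover residual to the definitions of $d_{x,t}$ and $d_{y,t}$, and then apply Young's inequality $\|a\pm b\|^2\le 2\|a\|^2+2\|b\|^2$ together with the nonexpansiveness of the projection $P_Y$. Throughout I would use that $\nabla_x\hat f(w_t,z_t)=\nabla_x f(w_t)+p(x_t-z_t)$ and $\nabla_y\hat f(w_t,z_t)=\nabla_y f(w_t)$.

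For the $x$-update I would start from $x_{t+1}-x_t=-\eta_xK[u_{x,t}-e_{x,t}+p(x_t-z_t)]$ and rewrite the bracket by adding and subtracting $\nabla_x\hat f(w_t,z_t)$. Because $\nabla_x\hat f(w_t,z_t)=\nabla_x f(w_t)+p(x_t-z_t)$, the $p(x_t-z_t)$ terms interact so that the leftover residual is exactly $\nabla_x\hat f(w_t,z_t)-u_{x,t}+e_{x,t}-p(x_t-z_t)$, whose squared expectation is $d_{x,t}$ by definition. Taking the squared norm, applying $\|a-b\|^2\le 2\|a\|^2+2\|b\|^2$ with $a=\nabla_x\hat f(w_t,z_t)$ and $b$ the residual, and taking expectation then yields the first claimed bound $\mathbb{E}\|x_{t+1}-x_t\|^2\le 2\eta_x^2K^2\mathbb{E}\|\nabla_x\hat f(w_t,z_t)\|^2+2\eta_x^2K^2 d_{x,t}$.

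For the $y$-update I would insert $\Bar y_{t+1}$ as an intermediate point and split $\|y_{t+1}-y_t\|^2\le 2\|y_{t+1}-\Bar y_{t+1}\|^2+2\|\Bar y_{t+1}-y_t\|^2$. The term $\|y_{t+1}-\Bar y_{t+1}\|$ is the distance between the two projections $P_Y(y_t+\eta_yK(u_{y,t}-e_{y,t}))$ and $P_Y(y_t+\eta_yK\nabla_y f(w_t))$; by the $1$-Lipschitz (nonexpansive) property of $P_Y$ it is at most $\eta_yK\|\nabla_y f(w_t)-u_{y,t}+e_{y,t}\|$, whose squared expectation is $\eta_y^2K^2 d_{y,t}$. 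Combining these gives the second bound. Finally, for $Y=\mathbb{R}^{d_2}$ the projection is the identity, so $\Bar y_{t+1}-y_t=\eta_yK\nabla_y f(w_t)$ and $\|\Bar y_{t+1}-y_t\|^2=\eta_y^2K^2\|\nabla_y f(w_t)\|^2$; substituting this into the previous inequality produces the unconstrained form.

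The computation is entirely routine and there is no genuine obstacle. The only place requiring care is the bookkeeping in the $x$-update: one must check, sign by sign, that after adding and subtracting $\nabla_x\hat f(w_t,z_t)$ the leftover term coincides exactly with the argument appearing in the definition of $d_{x,t}$, so that the bound is stated in terms of the already-defined quantity rather than an unreduced expression. For the $y$-update the only subtlety is remembering to route the bound through nonexpansiveness of $P_Y$ rather than expanding the projection directly.
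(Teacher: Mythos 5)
Your proposal is correct and follows essentially the same route as the paper's proof: for the $x$-update, add and subtract $\nabla_x\hat f(w_t,z_t)$ so the residual is exactly the argument of $d_{x,t}$ and apply $\|a+b\|^2\le 2\|a\|^2+2\|b\|^2$; for the $y$-update, insert $\Bar y_{t+1}$ and control $\|y_{t+1}-\Bar y_{t+1}\|$ by nonexpansiveness of $P_Y$, with the unconstrained case following since $P_Y$ is then the identity. No gaps.
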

\begin{proof}
According to the update rule of $x_t, y_t$, we have
    \begin{align*}
    \mathbb{E}\|x_{t+1}-x_t\|^2=&\eta_x^2K^2\mathbb{E}\|u_{x,t}-e_{x,t}-p(x_t-z_t)\|^2\\
    \leq& 2\eta_x^2K^2\mathbb{E}\|\nabla_x \hat{f}(w_t,z_t)\|^2+2\eta_x^2K^2\mathbb{E}\|\nabla_x \hat{f}(w_t,z_t)-u_{x,t}+e_{x,t}-p(x_t-z_t)\|^2\\
    =&2\eta_x^2K^2\mathbb{E}\|\nabla_x \hat{f}(w_t,z_t)\|^2+2\eta_x^2K^2d_{x,t},\\
    \mathbb{E}\|y_{t+1}-y_t\|^2=&\mathbb{E}\|P_Y(y_t+\eta_y K(u_{y,t}-e_{y,t}))-y_t\|^2\\
    \overset{(a)}{\leq}& 2\mathbb{E}\|P_Y(y_t+\eta_y K\nabla_y f(w_t))-y_t\|^2+2\eta_y^2K^2\|\nabla_y f(w_t)-u_{y,t}+e_{y,t}\|^2\\
    =&2\mathbb{E}\|\Bar{y}_{t+1}-y_t\|^2+2\eta_y^2K^2d_{y,t},
\end{align*} where $(a)$ is due to the nonexpansiveness of the projection operator.
\end{proof}

\section{Intermediate Lemmas for Potential Function}
\label{app: Potential Function}
Recall the potential function is defined as
\begin{align*}
V _t =  V (x_t, y_t, z_t) = \hat{f}(x_t, y_t,z_t) - 2\Psi(y_t, z_t) + 2P(z_t).
\end{align*} 
The outline of the convergence proof for \alg aims to demonstrate the monotonic decrease of $V_t$. In this section, we present the essential lemmas required to establish bounds on the potential function.
\begin{lemma}
\label{lemma: f(., z_t)}
When $\eta_x\leq \frac{1}{4(p+l)K}$, we have the following inequality:
    \begin{align*}
        &\mathbb{E}\hat{f}(w_t, z_t)-\mathbb{E}\hat{f}(w_{t+1},z_t)\\
        \geq&\frac{\eta_xK}{4}\|\nabla_x \hat{f}(w_t,z_t)\|^2-\frac{\eta_x K}{2}\|\Bar{e}_{x,t}\|^2-(p+l)\eta_x^2K^2d_{x,t}+\mathbb{E} \langle \nabla_y f(w_t), y_{t} - y_{t+1}\rangle -\frac{p+l}{2}\mathbb{E}\|y_{t+1}-y_t\|^2.
    \end{align*}
\end{lemma}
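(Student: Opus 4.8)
The plan is to derive the inequality from the $(p+l)$-smoothness of $\hat{f}(\cdot,\cdot,z_t)$ via the descent lemma, handling the $x$-block and the $y$-block differently. Since $f$ is $l$-smooth (Assumption \ref{assum:smooth}) and the penalty $\frac{p}{2}\|x-z_t\|^2$ adds a $p$-smooth quadratic in $x$ with no $y$-dependence, the map $(x,y)\mapsto\hat{f}(x,y,z_t)$ is $(p+l)$-smooth and satisfies $\nabla_y\hat{f}=\nabla_y f$. Applying the descent lemma with $z_t$ held fixed and rearranging gives
\begin{align*}
\hat{f}(w_t,z_t)-\hat{f}(w_{t+1},z_t)\geq&-\langle\nabla_x\hat{f}(w_t,z_t),x_{t+1}-x_t\rangle-\langle\nabla_y f(w_t),y_{t+1}-y_t\rangle\\
&-\frac{p+l}{2}\|x_{t+1}-x_t\|^2-\frac{p+l}{2}\|y_{t+1}-y_t\|^2.
\end{align*}
The two $y$-terms already coincide with the corresponding terms on the target right-hand side, so I would leave the $y$-block untouched and concentrate all the work on the $x$-block.

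For the $x$-block I would substitute the update $x_{t+1}-x_t=-\eta_x K g_t$, where $g_t:=u_{x,t}-e_{x,t}+p(x_t-z_t)$, so that $-\langle\nabla_x\hat{f},x_{t+1}-x_t\rangle=\eta_x K\langle\nabla_x\hat{f},g_t\rangle$ and $\frac{p+l}{2}\|x_{t+1}-x_t\|^2=\frac{p+l}{2}\eta_x^2K^2\|g_t\|^2$. The crucial observation is that $g_t$ is a \emph{biased} estimate of $\nabla_x\hat{f}(w_t,z_t)$ whose bias is exactly $-\bar{e}_{x,t}$: because $\mathbb{E}[u_{x,t}]=\nabla_x f(w_t)$ by unbiased client sampling and $\mathbb{E}[e_{x,t}]=\bar{e}_{x,t}$, one obtains $\mathbb{E}[g_t]=\nabla_x\hat{f}(w_t,z_t)-\bar{e}_{x,t}$. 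Taking (conditional) expectation therefore yields $\mathbb{E}\langle\nabla_x\hat{f},g_t\rangle=\|\nabla_x\hat{f}\|^2-\langle\nabla_x\hat{f},\bar{e}_{x,t}\rangle\geq\frac{1}{2}\|\nabla_x\hat{f}\|^2-\frac{1}{2}\|\bar{e}_{x,t}\|^2$, where Young's inequality is applied with the parameter tuned to produce precisely the coefficient $\frac{1}{2}$ on $\|\bar{e}_{x,t}\|^2$ demanded by the statement.

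For the quadratic term I would exploit that $d_{x,t}=\mathbb{E}\|\nabla_x\hat{f}(w_t,z_t)-g_t\|^2$ holds by the very definition of $d_{x,t}$, so $\mathbb{E}\|g_t\|^2\leq2\|\nabla_x\hat{f}\|^2+2d_{x,t}$ and hence $-\frac{p+l}{2}\eta_x^2K^2\mathbb{E}\|g_t\|^2\geq-(p+l)\eta_x^2K^2\|\nabla_x\hat{f}\|^2-(p+l)\eta_x^2K^2 d_{x,t}$. Collecting the $\|\nabla_x\hat{f}\|^2$ contributions produces the coefficient $\eta_x K\big(\frac{1}{2}-(p+l)\eta_x K\big)$, and the hypothesis $\eta_x\leq\frac{1}{4(p+l)K}$ forces $(p+l)\eta_x K\leq\frac{1}{4}$, so this coefficient is at least $\frac{\eta_x K}{4}$; the leftover error terms are exactly $-\frac{\eta_x K}{2}\|\bar{e}_{x,t}\|^2-(p+l)\eta_x^2K^2 d_{x,t}$. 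Re-attaching the untouched $y$-block and taking total expectation then gives the claimed inequality. I expect the main obstacle to be the probabilistic bookkeeping around $\bar{e}_{x,t}$: one must choose the conditioning so that $\mathbb{E}[g_t]=\nabla_x\hat{f}-\bar{e}_{x,t}$ holds and cleanly separate the client-drift bias $\bar{e}_{x,t}$ (which survives into the $-\frac{\eta_x K}{2}\|\bar{e}_{x,t}\|^2$ term, later bounded via Lemma \ref{lemma: bound e}) from the zero-mean sampling and stochastic noise (which is absorbed into $d_{x,t}$ through the quadratic term), while arranging the Young parameter and the step-size threshold to land the constants exactly on the targeted $\frac{1}{4}$ and $\frac{1}{2}$.
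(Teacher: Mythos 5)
Your proposal is correct and follows essentially the same route as the paper's proof: descent lemma for the $(p+l)$-smooth $\hat{f}(\cdot,\cdot,z_t)$, leave the $y$-block intact, take expectation of the $x$-update to replace $u_{x,t}-e_{x,t}$ by $\nabla_x f(w_t)-\bar{e}_{x,t}$, bound the cross term to get $\tfrac12\|\nabla_x\hat f\|^2-\tfrac12\|\bar e_{x,t}\|^2$, and absorb the quadratic term via $\mathbb{E}\|x_{t+1}-x_t\|^2\le 2\eta_x^2K^2\|\nabla_x\hat f\|^2+2\eta_x^2K^2 d_{x,t}$ (the paper's Lemma \ref{lemma: bound |x_t-x_{t+1}|}), with the step-size condition yielding the final $\tfrac{\eta_xK}{4}$ coefficient. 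The only cosmetic difference is that the paper obtains the cross-term bound from the exact polarization identity and then drops the nonnegative $\tfrac{\eta_xK}{2}\|\nabla_x\hat f-\bar e_{x,t}\|^2$ term, whereas you apply Young's inequality directly, which is equivalent.
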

\begin{proof}
Because of the $(p+l)$-smoothness of $\hat{f}(\cdot, z)$, we have
    \begin{align*}
    &\mathbb{E}\hat{f}(w_t, z_t)-\mathbb{E}\hat{f}(w_{t+1},z_t)\\
    \geq&\mathbb{E}\langle \nabla_x \hat{f}(w_t, z_t), x_{t} - x_{t+1}\rangle +\mathbb{E}\langle \nabla_y \hat{f}(w_t, z_t), y_{t} - y_{t+1}\rangle - \frac{p+l}{2}\mathbb{E}\|x_{t+1}-x_t\|^2-\frac{p+l}{2}\mathbb{E}\|y_{t+1}-y_t\|^2\\
    = &  \eta_x K \mathbb{E} \langle \nabla_x \hat{f}(w_t, z_t), u_{x,t}+p(x_t-z_t)-\Bar{e}_{x,t}\rangle +\mathbb{E} \langle \nabla_y f(w_t), y_{t} - y_{t+1}\rangle - \frac{p+l}{2}\mathbb{E}\|x_{t+1}-x_t\|^2-\frac{p+l}{2}\mathbb{E}\|y_{t+1}-y_t\|^2\\
    = &  \eta_x K \mathbb{E} \langle \nabla_x \hat{f}(w_t, z_t), \nabla_x \hat{f}(w_t, z_t)-\Bar{e}_{x,t}\rangle +\mathbb{E} \langle \nabla_y f(w_t), y_{t} - y_{t+1}\rangle - \frac{p+l}{2}\mathbb{E}\|x_{t+1}-x_t\|^2-\frac{p+l}{2}\mathbb{E}\|y_{t+1}-y_t\|^2\\
    =&\eta_x K \mathbb{E}\|\nabla_x \hat{f}(w_t,z_t)\|^2+\frac{\eta_x K}{2}\mathbb{E}\|\nabla_x \hat{f}(w_t,z_t)-\Bar{e}_{x,t}\|^2-\frac{\eta_x K}{2}\mathbb{E}\|\nabla_x \hat{f}(w_t,z_t)\|^2-\frac{\eta_x K}{2}\mathbb{E}\|\Bar{e}_{x,t}\|^2+\\
    &\mathbb{E} \langle \nabla_y f(w_t), y_{t} - y_{t+1}\rangle - \frac{p+l}{2}\mathbb{E}\|x_{t+1}-x_t\|^2-\frac{p+l}{2}\mathbb{E}\|y_{t+1}-y_t\|^2\\
    \geq&\frac{\eta_xK}{2}\mathbb{E}\|\nabla_x \hat{f}(w_t,z_t)\|^2-\frac{\eta_x K}{2}\mathbb{E}\|\Bar{e}_{x,t}\|^2+\mathbb{E} \langle \nabla_y f(w_t), y_{t} - y_{t+1}\rangle - \frac{p+l}{2}\mathbb{E}\|x_{t+1}-x_t\|^2-\frac{p+l}{2}\mathbb{E}\|y_{t+1}-y_t\|^2\\
    \overset{(a)}{\geq}&(\frac{\eta_xK}{2}-(p+l)\eta_x^2K^2)\mathbb{E}\|\nabla_x \hat{f}(w_t,z_t)\|^2-(p+l)\eta_x^2K^2d_{x,t}-\frac{\eta_x K}{2}\mathbb{E}\|\Bar{e}_{x,t}\|^2+\\
    &\mathbb{E} \langle \nabla_y f(w_t), y_{t} - y_{t+1}\rangle -\frac{p+l}{2}\mathbb{E}\|y_{t+1}-y_t\|^2\\
    \overset{(b)}{\geq}&\frac{\eta_xK}{4}\mathbb{E}\|\nabla_x \hat{f}(w_t,z_t)\|^2-\frac{\eta_x K}{2}\mathbb{E}\|\Bar{e}_{x,t}\|^2-(p+l)\eta_x^2K^2d_{x,t}+\mathbb{E} \langle \nabla_y f(w_t), y_{t} - y_{t+1}\rangle -\frac{p+l}{2}\mathbb{E}\|y_{t+1}-y_t\|^2
\end{align*}
where the $(a)$ is due to the Lemma \ref{lemma: bound |x_t-x_{t+1}|}, and $(b)$ is due to the condition $\eta_x\leq \frac{1}{4(p+l)K}$.
\end{proof}

\begin{lemma}
The $z$-update in \alg yields
\label{lemma: f(w_t, .)}
    \begin{align*}
        \hat{f}(w_{t+1}, z_t) - \hat{f}(w_{t+1}, z_{t+1})\geq& \frac{p}{2\beta}\|z_t-z_{t+1}\|^2.
    \end{align*}
\end{lemma}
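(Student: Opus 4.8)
The plan is to exploit the fact that, among the three arguments of $\hat{f}(x,y,z)=f(x,y)+\frac{p}{2}\|x-z\|^2$, only the quadratic penalty depends on $z$. Since the point $w_{t+1}=(x_{t+1},y_{t+1})$ is held fixed in both terms, the $f(w_{t+1})$ contributions cancel and the left-hand side collapses to a difference of two squared distances:
\[
\hat{f}(w_{t+1},z_t)-\hat{f}(w_{t+1},z_{t+1})=\frac{p}{2}\left(\|x_{t+1}-z_t\|^2-\|x_{t+1}-z_{t+1}\|^2\right).
\]
So after this first reduction, the statement is purely about the geometry of the $z$-update and involves no properties of $f$ at all.

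Next I would substitute the update rule $z_{t+1}=z_t+\beta(x_{t+1}-z_t)$ to express everything in terms of the single increment $z_{t+1}-z_t$. Rearranging gives $x_{t+1}-z_t=\frac{1}{\beta}(z_{t+1}-z_t)$, and subtracting the increment yields $x_{t+1}-z_{t+1}=(x_{t+1}-z_t)-(z_{t+1}-z_t)=\frac{1-\beta}{\beta}(z_{t+1}-z_t)$. Plugging these two identities into the difference of squared norms turns the bracket into $\big(\beta^{-2}-(1-\beta)^2\beta^{-2}\big)\|z_{t+1}-z_t\|^2$, and the scalar factor simplifies as $\frac{1-(1-\beta)^2}{\beta^2}=\frac{2\beta-\beta^2}{\beta^2}=\frac{2-\beta}{\beta}$. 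Hence I obtain the exact identity
\[
\hat{f}(w_{t+1},z_t)-\hat{f}(w_{t+1},z_{t+1})=\frac{p(2-\beta)}{2\beta}\,\|z_{t+1}-z_t\|^2.
\]

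Finally, I would invoke the standing choice $\beta\in(0,1)$, which guarantees $2-\beta>1$, so that $\frac{p(2-\beta)}{2\beta}\ge\frac{p}{2\beta}$ (using $p\ge 0$); combined with $\|z_{t+1}-z_t\|^2=\|z_t-z_{t+1}\|^2$, this gives precisely the claimed lower bound. There is essentially no hard step here: the result is an algebraic identity rather than an inequality requiring estimation, and the only places demanding care are getting the coefficient of the collinear vector $z_{t+1}-z_t$ right in the two substitutions and checking the sign of $\frac{2-\beta}{\beta}$ so that the bound is a genuine decrease. The mild slack between the exact factor $\frac{2-\beta}{\beta}$ and the stated $\frac{1}{\beta}$ is what the $\beta\in(0,1)$ assumption absorbs.
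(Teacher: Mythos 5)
Your proposal is correct and follows exactly the same route as the paper's proof: reduce the difference to $\frac{p}{2}\bigl(\|x_{t+1}-z_t\|^2-\|x_{t+1}-z_{t+1}\|^2\bigr)$, substitute the update $z_{t+1}=z_t+\beta(x_{t+1}-z_t)$ to get the factor $\frac{2-\beta}{\beta^2}\cdot\beta = \frac{2-\beta}{\beta}$, and bound it below by $\frac{1}{\beta}$ using $\beta\in(0,1)$. Nothing to add.
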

\begin{proof}
By definition of $\hat{f}$ and the update rule of $z$, as $0<\beta<1$, we have
\begin{align*} \nonumber
        \hat{f}(w_{t+1}, z_t) - \hat{f}(w_{t+1}, z_{t+1}) = & \frac{p}{2}[\|x_{t+1}-z_t\|^2 - \|x_{t+1}-z_{t+1}\|^2] \\ \nonumber
        =& \frac{p}{2}\left[\frac{1}{\beta^2}\|(z_{t+1} - z_t)\|^2 - \|(1-\beta)(x_{t+1}-z_t)\|^2 \right] \\ \nonumber
		=& \frac{p}{2}\left[\frac{1}{\beta^2}\|z_{t+1} - z_t\|^2 - \frac{(1-\beta)^2}{\beta^2}\|z_{t+1}-z_t\|^2 \right] \\
        \geq& \frac{p}{2\beta}\|z_t-z_{t+1}\|^2.
\end{align*}
\end{proof}

\begin{lemma}
\label{lemma: Psi}
With $L_{\Psi} = l+l\gamma_2$, $\gamma_2=\frac{l+p}{p-l}$, we have
	\begin{align*}
		\Psi(y_{t+1},z_t) - \Psi(y_t,z_t) \geq &\langle \nabla_y \Psi(y_t,z_t), y_{t+1}-y_t\rangle - \frac{L_{\Psi}}{2}\|y_{t+1}-y_t\|^2, \\
		\Psi(y_{t+1},z_{t+1}) - \Psi(y_{t+1},z_t) 
		\geq & \frac{p}{2}(z_{t+1}-z_t)^\top [z_{t+1}+z_t - 2x^*(y_{t+1},z_{t+1})].
	\end{align*}
\end{lemma}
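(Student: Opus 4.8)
The plan is to treat the two inequalities separately, since they exploit different structural properties of $\Psi(y,z)=\min_{x}\hat f(x,y,z)$.

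For the first inequality I would show that the map $y\mapsto\Psi(y,z_t)$ is $L_\Psi$-smooth and then invoke the standard descent bound, which holds for \emph{any} function with $L_\Psi$-Lipschitz gradient irrespective of convexity: $g(b)-g(a)\geq\langle\nabla g(a),b-a\rangle-\frac{L_\Psi}{2}\|b-a\|^2$. To obtain smoothness, note that because we take $p>l$, the function $\hat f(\cdot,y,z)=f(\cdot,y)+\frac{p}{2}\|\cdot-z\|^2$ is $(p-l)$-strongly convex in $x$, so the minimizer $x^*(y,z)$ is unique and the envelope (Danskin) theorem gives $\nabla_y\Psi(y,z)=\nabla_y f(x^*(y,z),y)$. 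Then for two points $y,y'$, $\|\nabla_y\Psi(y,z)-\nabla_y\Psi(y',z)\|=\|\nabla_y f(x^*(y,z),y)-\nabla_y f(x^*(y',z),y')\|\leq l(\|x^*(y,z)-x^*(y',z)\|+\|y-y'\|)$ by Assumption \ref{assum:smooth}; bounding the first term by $\gamma_2\|y-y'\|$ via the third inequality of Lemma \ref{lemma: helper lemma of optimal x} yields the Lipschitz constant $l(1+\gamma_2)=L_\Psi$. Applying the descent bound with $a=y_t$, $b=y_{t+1}$ finishes this part.

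For the second inequality I would proceed purely by algebra from the minimization definition of $\Psi$. Writing $x_1:=x^*(y_{t+1},z_{t+1})$, I have $\Psi(y_{t+1},z_{t+1})=f(x_1,y_{t+1})+\frac{p}{2}\|x_1-z_{t+1}\|^2$ exactly, while the suboptimality of $x_1$ as a candidate minimizer for the problem defining $\Psi(y_{t+1},z_t)$ gives $\Psi(y_{t+1},z_t)\leq f(x_1,y_{t+1})+\frac{p}{2}\|x_1-z_t\|^2$. Subtracting, the $f(x_1,y_{t+1})$ terms cancel, leaving $\Psi(y_{t+1},z_{t+1})-\Psi(y_{t+1},z_t)\geq\frac{p}{2}(\|x_1-z_{t+1}\|^2-\|x_1-z_t\|^2)$. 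Factoring the difference of squares via $\|a\|^2-\|b\|^2=(a-b)^\top(a+b)$ with $a=x_1-z_{t+1}$ and $b=x_1-z_t$ (so $a-b=z_t-z_{t+1}$ and $a+b=2x_1-z_{t+1}-z_t$) reproduces exactly the claimed right-hand side $\frac{p}{2}(z_{t+1}-z_t)^\top[z_{t+1}+z_t-2x_1]$.

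The only genuinely delicate point is in the first part: I must ensure the envelope differentiation of $\Psi$ is legitimate, which it is thanks to uniqueness of $x^*(y,z)$ under $p>l$, and I must use the form of the descent bound that does \emph{not} assume $\Psi(\cdot,z_t)$ is convex, since in general $\Psi$ is neither convex nor concave in $y$. The second part is essentially free once one observes that only the optimality of the inner minimization at $z_t$ (not at $z_{t+1}$) is used, so that the common $f$ term cancels cleanly and no smoothness of $f$ is even needed.
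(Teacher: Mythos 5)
Your proposal is correct and follows essentially the same route as the paper: the second inequality is proved by exactly the same suboptimality-plus-difference-of-squares computation, and the first by the descent bound for the $L_\Psi$-smooth function $\Psi(\cdot,z_t)$. The only difference is that the paper simply cites Lemma B.3 of \cite{zhang2020single} for the $L_\Psi$-smoothness of $\Psi(\cdot,z)$, whereas you supply the short self-contained derivation via Danskin's theorem and Lemma \ref{lemma: helper lemma of optimal x}, which is a valid (and arguably more transparent) way to obtain the same constant $L_\Psi=l+l\gamma_2$.
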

\begin{proof}
    Since the dual function $\Psi(\cdot,z)$ is $L_{\Psi}$-smooth by Lemma B.3 in \cite{zhang2020single}, we have
    \begin{align*}
		\Psi(y_{t+1},z_t) - \Psi(y_t,z_t) \geq &\langle \nabla_y \Psi(y_t,z_t), y_{t+1}-y_t\rangle - \frac{L_{\Psi}}{2}\|y_{t+1}-y_t\|^2.
    \end{align*}
    By the definition of $x^*(y_{t+1},z_{t})$, we have
    \begin{align} \nonumber
		\Psi(y_{t+1},z_{t+1}) - \Psi(y_{t+1},z_t) =& \hat{f}(x^*(y_{t+1},z_{t+1}), y_{t+1}, z_{t+1}) - \hat{f}(x^*(y_{t+1},z_{t}), y_{t+1}, z_{t})  \\ \nonumber
		\geq & \hat{f}(x^*(y_{t+1},z_{t+1}), y_{t+1}, z_{t+1}) - \hat{f}(x^*(y_{t+1},z_{t+1}), y_{t+1}, z_{t})  \\ \nonumber
		= & \frac{p}{2}\left[\|z_{t+1}-x^*(y_{t+1},z_{t+1})\|^2 - \|z_{t}-x^*(y_{t+1},z_{t+1})\|^2\right] \\ \nonumber
		=& \frac{p}{2}(z_{t+1}-z_t)^\top [z_{t+1}+z_t - 2x^*(y_{t+1},z_{t+1})].
	\end{align}
\end{proof}
\begin{lemma}
\label{lemma: P}
	\begin{align*}
		P(z_{t+1}) - P(z_t)
		\leq  &\frac{p}{2}(z_{t+1}-z_t)^\top[z_{t+1}+z_t -2x^*(\hat{y}^*(z_{t+1}),z_t)].
	\end{align*}
\end{lemma}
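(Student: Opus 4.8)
The plan is to mirror the structure of Lemma~\ref{lemma: Psi} and reduce the increment of $P$ to an increment of $\Psi$ along a single, judiciously chosen slice in the $y$-variable, namely $y=\hat{y}^*(z_{t+1})$. The two facts I would exploit are strong duality for the smoothed inner problem (so that $P(z)=\max_{y\in Y}\Psi(y,z)=\Psi(\hat{y}^*(z),z)$) and the elementary max-min inequality (weak duality), which holds unconditionally.

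First I would record that, because $\hat{y}^*(z_{t+1})\in\argmax_{y\in Y}\Psi(y,z_{t+1})$ and strong duality holds for $\min_x\max_y\hat{f}(x,y,z_{t+1})$, we have the exact identity $P(z_{t+1})=\Psi(\hat{y}^*(z_{t+1}),z_{t+1})$. For the other endpoint I would only need the weak-duality bound $P(z_t)=\min_x\max_y\hat{f}(x,y,z_t)\ge\max_y\min_x\hat{f}(x,y,z_t)=\max_y\Psi(y,z_t)\ge\Psi(\hat{y}^*(z_{t+1}),z_t)$, where the last step is trivial since it evaluates a maximum over $y$ at one particular point. Subtracting gives
\begin{align*}
P(z_{t+1})-P(z_t)\le \Psi(\hat{y}^*(z_{t+1}),z_{t+1})-\Psi(\hat{y}^*(z_{t+1}),z_t),
\end{align*}
so the whole estimate is now phrased purely in terms of $\Psi$ evaluated at the fixed dual point $y':=\hat{y}^*(z_{t+1})$.

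Next I would bound this $\Psi$-increment by a suboptimality argument in $x$, exactly as in the second part of Lemma~\ref{lemma: Psi} but with the inequality oriented the other way. Writing $\xi:=x^*(y',z_t)=\argmin_x\hat{f}(x,y',z_t)$, I would use $\Psi(y',z_{t+1})=\min_x\hat{f}(x,y',z_{t+1})\le\hat{f}(\xi,y',z_{t+1})$ together with $\Psi(y',z_t)=\hat{f}(\xi,y',z_t)$ to obtain $\Psi(y',z_{t+1})-\Psi(y',z_t)\le\hat{f}(\xi,y',z_{t+1})-\hat{f}(\xi,y',z_t)$. Since $\hat{f}$ depends on its $z$-argument only through the quadratic $\frac{p}{2}\|x-z\|^2$, this difference equals $\frac{p}{2}\bigl(\|\xi-z_{t+1}\|^2-\|\xi-z_t\|^2\bigr)$, and the algebraic identity $\|\xi-z_{t+1}\|^2-\|\xi-z_t\|^2=(z_{t+1}-z_t)^\top(z_{t+1}+z_t-2\xi)$ turns this into $\frac{p}{2}(z_{t+1}-z_t)^\top[z_{t+1}+z_t-2x^*(\hat{y}^*(z_{t+1}),z_t)]$, which is the claimed bound.

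The routine parts are the algebraic identity and the two suboptimality inequalities. The one real obstacle is the exact equality $P(z_{t+1})=\Psi(\hat{y}^*(z_{t+1}),z_{t+1})$, i.e. strong duality (minimax equality) for the smoothed saddle problem $\min_x\max_y\hat{f}(x,y,z_{t+1})$; note that the companion lower bound on $P(z_t)$ only needed the free max-min inequality. With $p>l$ the map $\hat{f}(\cdot,y,z)$ is strongly convex, so in the concave, compact-$Y$ setting strong duality is immediate from Sion's minimax theorem. In the one-point-concave and PL regimes concavity in $y$ is unavailable, so there I would instead establish the minimax equality for the smoothed problem directly, via existence of a saddle point guaranteed by the strong convexity in $x$ together with the structural assumption in $y$; this is the step that requires the most care.
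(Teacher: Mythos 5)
Your proposal is correct and follows essentially the same route as the paper: reduce $P(z_{t+1})-P(z_t)$ to the increment of $\Psi(\hat{y}^*(z_{t+1}),\cdot)$, bound that by evaluating $\hat{f}$ at the suboptimal point $x^*(\hat{y}^*(z_{t+1}),z_t)$ for the $z_{t+1}$-problem, and finish with the quadratic identity. The only difference is that you are slightly more careful than the paper: you invoke strong duality only at $z_{t+1}$ and handle the $z_t$ endpoint with the free max-min inequality, whereas the paper silently writes $P(z)=\Psi(\hat{y}^*(z),z)$ at both endpoints; your explicit flagging of where minimax equality for $\hat{f}$ is actually needed is a fair point the paper does not address.
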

\begin{proof}
By the definition of $\hat{y}^*(z_{t})$ and $ x^*(\hat{y}^*(z_{t+1}), z_{t+1})$, we have
	\begin{align*} \nonumber
		P(z_{t+1}) - P(z_t) = & \Psi(\hat{y}^*(z_{t+1}),z_{t+1}) - \Psi(\hat{y}^*(z_{t}),z_{t}) \\ \nonumber
		\leq & \Psi(\hat{y}^*(z_{t+1}),z_{t+1}) - \Psi(\hat{y}^*(z_{t+1}),z_{t}) \\ \nonumber
		= & \hat{f}(x^*(\hat{y}^*(z_{t+1}), z_{t+1}),\hat{y}^*(z_{t+1}),z_{t+1}) - \hat{f}(x^*(\hat{y}^*(z_{t+1}),z_t),\hat{y}^*(z_{t+1}),z_{t}) \\ \nonumber
		\leq & \hat{f}(x^*(\hat{y}^*(z_{t+1}), z_{t}),\hat{y}^*(z_{t+1}),z_{t+1}) - \hat{f}(x^*(\hat{y}^*(z_{t+1}),z_t),\hat{y}^*(z_{t+1}),z_{t}) \\ 
		= &\frac{p}{2}(z_{t+1}-z_t)^\top[z_{t+1}+z_t -2x^*(\hat{y}^*(z_{t+1}),z_t)].
	\end{align*}
\end{proof}
\begin{lemma}
\label{lemma: Psi P}
The following inequality holds:
 \begin{align*}
     &2\mathbb{E}(\Psi(y_{t+1},z_{t+1}) - \Psi(y_{t+1},z_t))-2\mathbb{E}(P(z_{t+1}) - P(z_t))\\
     \geq&-\left(2p\gamma_1 + \frac{p}{6\beta}-48p\beta\gamma_1^2\right)\|z_{t+1}-z_t\|^2 - 24p\beta\mathbb{E}\|x^*(z_{t})-x^*(y_{t}, z_{t})\|^2 - 48p\beta\gamma_2^2\|\Bar{y}_{t+1}-y_t\|^2-\\ \nonumber
    &48p\beta\gamma_2^2\eta_y^2K^2d_{y,t}.
 \end{align*}
\end{lemma}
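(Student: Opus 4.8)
The plan is to reduce the entire quantity to a single inner product and then control it with the Lipschitz estimates for the inner minimizer. Applying the second inequality of Lemma~\ref{lemma: Psi} to lower bound $\Psi(y_{t+1},z_{t+1})-\Psi(y_{t+1},z_t)$ and Lemma~\ref{lemma: P} to upper bound $P(z_{t+1})-P(z_t)$, and then adding, the two $\frac{p}{2}(z_{t+1}-z_t)^\top(z_{t+1}+z_t)$ contributions cancel, leaving
\[ 2\big(\Psi(y_{t+1},z_{t+1})-\Psi(y_{t+1},z_t)\big)-2\big(P(z_{t+1})-P(z_t)\big)\ge 2p\,(z_{t+1}-z_t)^\top\big[x^*(\hat y^*(z_{t+1}),z_t)-x^*(y_{t+1},z_{t+1})\big]. \]
Writing $a:=z_{t+1}-z_t$ and $b:=x^*(\hat y^*(z_{t+1}),z_t)-x^*(y_{t+1},z_{t+1})$, the task becomes a lower bound on $2p\,a^\top b$.

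Next I would use the saddle-point identity $x^*(z)=x^*(\hat y^*(z),z)$, which holds here since $\hat f(\cdot,y,z)$ is strongly convex (as $p>l$) and strong duality was already invoked in the proof of Lemma~\ref{lemma: P}; in particular $x^*(\hat y^*(z_{t+1}),z_{t+1})=x^*(z_{t+1})$. I split $b$ into a pure $z$-shift piece $x^*(\hat y^*(z_{t+1}),z_t)-x^*(\hat y^*(z_{t+1}),z_{t+1})$, whose norm is at most $\gamma_1\|a\|$ by the first inequality of Lemma~\ref{lemma: helper lemma of optimal x}, and the remainder $x^*(z_{t+1})-x^*(y_{t+1},z_{t+1})$. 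The first piece I handle by Cauchy--Schwarz, giving $-2p\gamma_1\|a\|^2$; the remainder I handle by Young's inequality $2p\,a^\top(\cdot)\ge-\frac{p}{6\beta}\|a\|^2-6p\beta\|\cdot\|^2$, where the factor $\frac{1}{6\beta}$ is chosen so that the resulting $\frac{p}{6\beta}\|a\|^2$ later couples against the $\frac{p}{2\beta}\|z_{t+1}-z_t\|^2$ gain of Lemma~\ref{lemma: f(w_t, .)}.

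It then remains to expand the remainder. Telescoping through $x^*(z_t)$, $x^*(y_t,z_t)$, $x^*(y_{t+1},z_t)$ writes $x^*(z_{t+1})-x^*(y_{t+1},z_{t+1})$ as a sum of four increments, each controlled by Lemma~\ref{lemma: helper lemma of optimal x}: two pure $z$-shifts bounded by $\gamma_1\|a\|$, one pure $y$-shift bounded by $\gamma_2\|y_{t+1}-y_t\|$, and the retained term $x^*(z_t)-x^*(y_t,z_t)$. Applying $\|\sum_{i=1}^4 u_i\|^2\le 4\sum_{i=1}^4\|u_i\|^2$ yields
\[ \|x^*(z_{t+1})-x^*(y_{t+1},z_{t+1})\|^2\le 8\gamma_1^2\|a\|^2+4\|x^*(z_t)-x^*(y_t,z_t)\|^2+4\gamma_2^2\|y_{t+1}-y_t\|^2, \]
so the $6p\beta\|\cdot\|^2$ term produces the coefficients $48p\beta\gamma_1^2$, $24p\beta$, and $24p\beta\gamma_2^2$. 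Finally I take expectations and invoke Lemma~\ref{lemma: bound |x_t-x_{t+1}|}, namely $\mathbb{E}\|y_{t+1}-y_t\|^2\le 2\mathbb{E}\|\bar y_{t+1}-y_t\|^2+2\eta_y^2K^2 d_{y,t}$, which converts the $24p\beta\gamma_2^2\|y_{t+1}-y_t\|^2$ term into the two claimed $48p\beta\gamma_2^2$ terms and gives the stated inequality.

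The main obstacle is arranging the telescoping so that it lines up with exactly the three Lipschitz estimates available: one must route the difference through the saddle point $x^*(z_{t+1})=x^*(\hat y^*(z_{t+1}),z_{t+1})$ so that every leftover increment is either a pure $z$-shift (absorbed by $\gamma_1$) or a pure $y$-shift (absorbed by $\gamma_2$), isolating the single genuinely irreducible term $x^*(z_t)-x^*(y_t,z_t)$; the remaining work is bookkeeping of constants. I note that this routing produces $-(2p\gamma_1+\frac{p}{6\beta}+48p\beta\gamma_1^2)\|z_{t+1}-z_t\|^2$, so the $-48p\beta\gamma_1^2$ displayed inside the first coefficient appears to be a sign typo for $+48p\beta\gamma_1^2$; the stronger bound my derivation gives is what is actually needed downstream.
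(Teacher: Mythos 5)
Your proof is correct and follows essentially the same route as the paper's: combine Lemmas \ref{lemma: Psi} and \ref{lemma: P} so the quadratic terms cancel, split the resulting inner product using $x^*(\hat{y}^*(z_{t+1}),z_{t+1})=x^*(z_{t+1})$, apply Cauchy--Schwarz to the $z$-shift piece and Young's inequality with weight $1/(6\beta)$ to the remainder, telescope through the same four increments controlled by Lemma \ref{lemma: helper lemma of optimal x}, and finish with Lemma \ref{lemma: bound |x_t-x_{t+1}|}. Your remark on the sign is also right: the paper's own derivation (and its subsequent use in Lemma \ref{lemma: v}) produces $+48p\beta\gamma_1^2$ inside the first parenthesis, so the $-48p\beta\gamma_1^2$ in the lemma statement is a typo.
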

\begin{proof}
Combining Lemmas \ref{lemma: Psi} and \ref{lemma: P}, we have
 \begin{align}\nonumber
     &2\mathbb{E}(\Psi(y_{t+1},z_{t+1}) - \Psi(y_{t+1},z_t))-2\mathbb{E}(P(z_{t+1}) - P(z_t))\\ \nonumber
     \geq&2p\mathbb{E}(z_{t+1}-z_t)^\top [x^*(\hat{y}^*(z_{t+1}),z_t)- x^*(y_{t+1},z_{t+1})]\\ \nonumber
     =&2p\mathbb{E}(z_{t+1}-z_t)^\top [x^*(\hat{y}^*(z_{t+1}),z_t)- x^*(\hat{y}^*(z_{t+1}),z_{t+1})] + 2p\mathbb{E}(z_{t+1}-z_t)^\top [x^*(\hat{y}^*(z_{t+1}),z_{t+1})- x^*(y_{t+1},z_{t+1})] \\ \nonumber
	\overset{(a)}{\geq} & -2p\gamma_1 \|z_{t+1}-z_t\|^2 + 2p\mathbb{E}(z_{t+1}-z_t)^\top [x^*(\hat{y}^*(z_{t+1}),z_{t+1})- x^*(y_{t+1},z_{t+1})] \\ \nonumber
	\geq & -\left(2p\gamma_1 + \frac{p}{6\beta} \right)\|z_{t+1}-z_t\|^2 - 6p\beta\mathbb{E}\|x^*(z_{t+1})-x^*(y_{t+1}, z_{t+1})\|^2\\ \nonumber
    \geq& -\left(2p\gamma_1 + \frac{p}{6\beta} \right)\|z_{t+1}-z_t\|^2 - 24p\beta\mathbb{E}\|x^*(z_{t})-x^*(y_{t}, z_{t})\|^2 - 24p\beta\mathbb{E}\|x^*(z_{t})-x^*(z_{t+1})\|^2-\\ \nonumber
    &24p\beta\mathbb{E}\|x^*(y_{t+1},z_{t+1})-x^*(y_{t+1}, z_{t})\|^2-24p\beta\mathbb{E}\|x^*(y_{t+1},z_{t})-x^*(y_{t}, z_{t})\|^2\\ \nonumber
    \overset{(b)}{\geq}& -\left(2p\gamma_1 + \frac{p}{6\beta}+48p\beta\gamma_1^2\right)\|z_{t+1}-z_t\|^2 - 24p\beta\mathbb{E}\|x^*(z_{t})-x^*(y_{t}, z_{t})\|^2 - 24p\beta\gamma_2^2\|y_{t+1}-y_t\|^2\\ \nonumber
    \overset{(c)}{\geq}& -\left(2p\gamma_1 + \frac{p}{6\beta}+48p\beta\gamma_1^2\right)\|z_{t+1}-z_t\|^2 - 24p\beta\mathbb{E}\|x^*(z_{t})-x^*(y_{t}, z_{t})\|^2 - 48p\beta\gamma_2^2\|\Bar{y}_{t+1}-y_t\|^2-\\ \nonumber
    &48p\beta\gamma_2^2\eta_y^2K^2d_{y,t},
 \end{align}
 where $(a) $ and $(b)$ are due to Lemma \ref{lemma: helper lemma of optimal x}, and $(c)$ is due to Lemma \ref{lemma: bound |x_t-x_{t+1}|}.
\end{proof}

\begin{lemma}
\label{lemma: f Psi}
Suppose we have $\eta_y\leq \frac{1}{8(2L_\Psi+l+p)K}$, $\eta_y=\eta_x/256$ and $p=2l$. In the unconstrained case when $Y=\mathbb{R}^{d_2}$, we have
 \begin{align*}\nonumber
     &\mathbb{E}\hat{f}(w_t, z_t) - \mathbb{E}\hat{f}(w_{t+1}, z_t)+2(\mathbb{E}\Psi(y_{t+1},z_t) - \mathbb{E}\Psi(y_t,z_t))\\ \nonumber
     \geq&\frac{\eta_xK}{8}\mathbb{E}\|\nabla_x \hat{f}(w_t,z_t)\|^2+\frac{\eta_yK}{8}\mathbb{E}\|\nabla_y f(w_t)\|^2-\frac{\eta_x K}{2}\mathbb{E}\|\Bar{e}_{x,t}\|^2-(p+l)\eta_x^2K^2d_{x,t}-\\
     &\frac{3\eta_yK}{4}\mathbb{E}\|\Bar{e}_{y,t}\|^2-(2L_\Psi+l+p)\eta_y^2K^2d_{y,t}.
\end{align*}
In the constrained case when $Y\subset \mathbb{R}^{d_2}$ is convex and compact, we have
 \begin{align*}
     &\mathbb{E}\hat{f}(w_t, z_t) - \mathbb{E}\hat{f}(w_{t+1}, z_t)+2(\mathbb{E}\Psi(y_{t+1},z_t) - \mathbb{E}\Psi(y_t,z_t))\\ \nonumber
     \geq&\frac{\eta_xK}{8}\mathbb{E}\|\nabla_x \hat{f}(w_t,z_t)\|^2+\frac{1}{8\eta_yK}\mathbb{E}\|\Bar{y}_{t+1}-y_t\|^2-\frac{\eta_x K}{2}\mathbb{E}\|\Bar{e}_{x,t}\|^2-(p+l)\eta_x^2K^2d_{x,t}-\\
     &4\eta_yK\mathbb{E}\|\Bar{e}_{y,t}\|^2-2\eta_yKd_{y,t},
 \end{align*}
\end{lemma}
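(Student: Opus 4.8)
The plan is to lower-bound
$A:=\mathbb{E}\hat{f}(w_t,z_t)-\mathbb{E}\hat{f}(w_{t+1},z_t)+2\bigl(\mathbb{E}\Psi(y_{t+1},z_t)-\mathbb{E}\Psi(y_t,z_t)\bigr)$
by adding the $x$-descent estimate of Lemma~\ref{lemma: f(., z_t)} to twice the first inequality of Lemma~\ref{lemma: Psi}. After this addition the two $y$-inner-products merge into the single cross term $\mathbb{E}\langle 2\nabla_y\Psi(y_t,z_t)-\nabla_y f(w_t),\,y_{t+1}-y_t\rangle$, and the $\|y_{t+1}-y_t\|^2$ coefficients combine to $-\tfrac{p+l+2L_\Psi}{2}$. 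The entire argument then reduces to (i) converting this cross term into a genuine $\|\nabla_y f(w_t)\|^2$ gain (unconstrained) or $\|\bar{y}_{t+1}-y_t\|^2$ gain (constrained), and (ii) reabsorbing the residual $\|y_{t+1}-y_t\|^2$ penalty via Lemma~\ref{lemma: bound |x_t-x_{t+1}|}.

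The structural fact that drives everything is $\nabla_y\Psi(y_t,z_t)=\nabla_y f(x^*(y_t,z_t),y_t)$ (Danskin/envelope theorem, valid because $\hat f(\cdot,y_t,z_t)$ is $l$-strongly convex when $p=2l$), whereas the algorithm uses $\nabla_y f(w_t)=\nabla_y f(x_t,y_t)$. These two differ only through the $x$-gap: $l$-strong convexity gives $\|x_t-x^*(y_t,z_t)\|\le\tfrac1l\|\nabla_x\hat f(w_t,z_t)\|$, so Lipschitz smoothness yields $\|\nabla_y\Psi(y_t,z_t)-\nabla_y f(w_t)\|\le\|\nabla_x\hat f(w_t,z_t)\|$. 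This is precisely what lets the gradient-mismatch error be charged against the $x$-gradient budget $\tfrac{\eta_xK}{4}\|\nabla_x\hat f(w_t,z_t)\|^2$ produced by Lemma~\ref{lemma: f(., z_t)}.

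For the unconstrained case I would substitute $y_{t+1}-y_t=\eta_yK(u_{y,t}-e_{y,t})$, pass to conditional expectation so that $\mathbb{E}[u_{y,t}-e_{y,t}\mid\mathcal{F}_t]=\nabla_y f(w_t)-\bar{e}_{y,t}$, and split the cross term by writing $2\nabla_y\Psi-\nabla_y f=\nabla_y f+2(\nabla_y\Psi-\nabla_y f)$. Young's inequality then produces a $+\tfrac{\eta_yK}{2}\|\nabla_y f(w_t)\|^2$ term together with controlled multiples of $\|\nabla_x\hat f(w_t,z_t)\|^2$ and $\|\bar{e}_{y,t}\|^2$; the mismatch contributions scale with $\eta_y$, so using $\eta_y=\eta_x/256$ they are dominated by the $x$-gradient term and leave its coefficient at $\ge\tfrac{\eta_xK}{8}$. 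Finally, bounding $\mathbb{E}\|y_{t+1}-y_t\|^2\le 2\eta_y^2K^2\|\nabla_y f(w_t)\|^2+2\eta_y^2K^2d_{y,t}$ via Lemma~\ref{lemma: bound |x_t-x_{t+1}|}, the $\|\nabla_y f\|^2$ piece is reabsorbed because $(p+l+2L_\Psi)\eta_y^2K^2\le\tfrac{\eta_yK}{8}$ under the hypothesis $\eta_y\le\tfrac{1}{8(2L_\Psi+l+p)K}$, which simultaneously leaves the remainder $-(2L_\Psi+l+p)\eta_y^2K^2d_{y,t}$. Choosing the Young constants so that the net $\|\nabla_y f\|^2$ coefficient lands at $\tfrac{\eta_yK}{8}$ and the error coefficient at $-\tfrac{3\eta_yK}{4}$ is the bookkeeping that closes this case.

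For the constrained case the cross term can no longer be written as a scalar multiple of a gradient, so I would instead invoke the variational inequality of the projection defining $\bar{y}_{t+1}=P_Y(y_t+\eta_yK\nabla_y f(w_t))$, namely $\eta_yK\langle\nabla_y f(w_t),\bar{y}_{t+1}-y_t\rangle\ge\|\bar{y}_{t+1}-y_t\|^2$, which after rescaling supplies the target $\tfrac{1}{8\eta_yK}\|\bar{y}_{t+1}-y_t\|^2$; the actual iterate is tied to $\bar{y}_{t+1}$ through nonexpansiveness of $P_Y$, giving $\mathbb{E}\|y_{t+1}-\bar{y}_{t+1}\|^2\le\eta_y^2K^2d_{y,t}$, while the gradient mismatch is handled exactly as above. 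The main obstacle in both cases is the disciplined control of this cross term---reconciling $\nabla_y\Psi$ (at the inner minimizer $x^*(y_t,z_t)$) with the algorithm's $\nabla_y f$ (at $x_t$), and in the constrained setting doing so through a projection rather than a plain gradient step---together with the constant-chasing that absorbs every stray $\|\nabla_x\hat f\|^2$, $\|\nabla_y f\|^2$, and $d_{y,t}$ term into the advertised coefficients.
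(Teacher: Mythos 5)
Your unconstrained case is essentially the paper's own argument: add Lemma~\ref{lemma: f(., z_t)} to twice the first inequality of Lemma~\ref{lemma: Psi}, isolate the cross term $\mathbb{E}\langle 2\nabla_y\Psi(y_t,z_t)-\nabla_y f(w_t),\,y_{t+1}-y_t\rangle$, bound the mismatch by $\|\nabla_y\Psi(y_t,z_t)-\nabla_y f(w_t)\|\leq l\|x_t-x^*(y_t,z_t)\|\leq\frac{l}{p-l}\|\nabla_x\hat f(w_t,z_t)\|=\|\nabla_x\hat f(w_t,z_t)\|$ via strong convexity, and absorb the resulting $-8\eta_yK\|\nabla_x\hat f\|^2$ into $\frac{\eta_xK}{4}\|\nabla_x\hat f\|^2$ using $\eta_y=\eta_x/256$, with Lemma~\ref{lemma: bound |x_t-x_{t+1}|} and the cap on $\eta_y$ handling the quadratic penalty. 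That half is correct and matches the paper.

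The constrained case as you describe it has a gap. You invoke the projection inequality at the \emph{noiseless} point, $\eta_yK\langle\nabla_y f(w_t),\bar y_{t+1}-y_t\rangle\geq\|\bar y_{t+1}-y_t\|^2$, and propose to tie $y_{t+1}$ to $\bar y_{t+1}$ by nonexpansiveness. But the cross term you must lower-bound involves $y_{t+1}-y_t$, so this decomposition leaves the residual $\mathbb{E}\langle\nabla_y f(w_t),\,y_{t+1}-\bar y_{t+1}\rangle$. Its magnitude is of order $\eta_yK\,\mathbb{E}[\|\nabla_y f(w_t)\|\cdot\|u_{y,t}-e_{y,t}-\nabla_y f(w_t)\|]$, and it does not vanish in expectation because the projection is nonlinear. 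Any Young split produces a $-c\,\eta_yK\|\nabla_y f(w_t)\|^2$ term, and unlike the unconstrained case the claimed constrained bound contains no positive $\|\nabla_y f\|^2$ gain to absorb it ($\frac{1}{\eta_yK}\|\bar y_{t+1}-y_t\|^2\leq\eta_yK\|\nabla_y f\|^2$ only goes the wrong way near the boundary of $Y$); controlling it instead by a gradient bound would import a $G_y$ term absent from the statement. The paper avoids this by applying the projection's characterization to the \emph{actual stochastic} update, $\mathbb{E}\langle u_{y,t}-e_{y,t},\,y_{t+1}-y_t\rangle\geq\frac{1}{\eta_yK}\mathbb{E}\|y_{t+1}-y_t\|^2$, so that every leftover inner product pairs the gradient \emph{noise} $\nabla_y f(w_t)-u_{y,t}+e_{y,t}$ (or $\bar e_{y,t}$) with a difference of projected points, each of size $O(\eta_yK)$ times a noise norm, yielding only the $-\eta_yKd_{y,t}$ and $-4\eta_yK\|\bar e_{y,t}\|^2$ errors; the switch from $\|y_{t+1}-y_t\|^2$ to $\|\bar y_{t+1}-y_t\|^2$ is done at the very end via $\|a\|^2\geq\frac12\|a-b\|^2-2\|b\|^2$ together with $\mathbb{E}\|y_{t+1}-\bar y_{t+1}\|^2\leq\eta_y^2K^2d_{y,t}$. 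You should restructure your constrained argument along these lines.
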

\begin{proof}
Combining Lemmas \ref{lemma: f(w_t, .)} and \ref{lemma: Psi}, we have
 \begin{align}\nonumber
     &\mathbb{E}\hat{f}(w_t, z_t) - \mathbb{E}\hat{f}(w_{t+1}, z_t)+2(\mathbb{E}\Psi(y_{t+1},z_t) - \mathbb{E}\Psi(y_t,z_t))\\ \nonumber
     \geq&\frac{\eta_xK}{4}\|\nabla_x \hat{f}(w_t,z_t)\|^2-\frac{\eta_x K}{2}\mathbb{E}\|\Bar{e}_{x,t}\|^2-(p+l)\eta_x^2K^2d_{x,t}+\mathbb{E} \langle \nabla_y f(w_t), y_{t} - y_{t+1}\rangle -\frac{p+l}{2}\mathbb{E}\|y_{t+1}-y_t\|^2+\\ \label{f Psi A1}
     &2\mathbb{E}\langle\nabla_y\Psi(y_t,z_t), y_{t+1}-y_t\rangle-L_\Psi\mathbb{E}\|y_{t+1}-y_t\|^2.
\end{align}
Denote $A_1=\mathbb{E} \langle \nabla_y f(w_t), y_{t} - y_{t+1}\rangle -\frac{p+l}{2}\mathbb{E}\|y_{t+1}-y_t\|^2+2\mathbb{E}\langle\nabla_y\Psi(y_t,z_t), y_{t+1}-y_t\rangle-L_\Psi\mathbb{E}\|y_{t+1}-y_t\|^2$. 

When $Y=\mathbb{R}^{d_2}$, we have
    \begin{align}\nonumber
     A_1=& \mathbb{E}\langle \nabla_y f(w_t), y_{t+1}-y_{t}\rangle +2\mathbb{E}\langle \nabla_y \Psi(y_t,z_t)-\nabla_y f(w_t), y_{t+1}-y_t\rangle - \frac{2L_\Psi+l+p}{2}\mathbb{E}\|y_t-y_{t+1}\|^2 \\ \nonumber
     \overset{(a)}{\geq}& \eta_yK\mathbb{E}\langle \nabla_y f(w_t), u_{y,t}-e_{y,t}\rangle-2\eta_yK\mathbb{E}\langle \nabla_y \Psi(y_t,z_t)-\nabla_y f(w_t), u_{y,t}-e_{y,t}\rangle-\\ \nonumber
     &\eta_y^2K^2(2L_\Psi+l+p)\|\nabla_y f(w_t)\|^2-\eta_y^2K^2(2L_\Psi+l+p)d_{y,t}
     \\\nonumber
     \geq& \eta_yK\mathbb{E}\langle \nabla_y f(w_t), \nabla_y f(w_t)-\Bar{e}_{y,t}\rangle-2\eta_yK\mathbb{E}\|\nabla_y \Psi(y_t,z_t)-\nabla_y f(w_t)\|\|\nabla_y f(w_t)-\Bar{e}_{y,t}\|-\\ \nonumber
     &\eta_y^2K^2(2L_\Psi+l+p)\mathbb{E}\|\nabla_y f(w_t)\|^2-\eta_y^2K^2(2L_\Psi+l+p)d_{y,t}
     \\\nonumber
     \geq&\eta_yK\mathbb{E}\| \nabla_y f(w_t)\|^2-\frac{\eta_yK}{2}\mathbb{E}\|\nabla_y f(w_t)\|^2-\frac{\eta_yK}{2}\mathbb{E}\|\Bar{e}_{y,t}\|^2-\frac{\eta_yK}{8}\mathbb{E}\|\nabla_y f(w_t)-\Bar{e}_{y,t}\|^2-\\ \nonumber
     &8\eta_yK\mathbb{E}\|\nabla_y \Psi(y_t,z_t)-\nabla_y f(w_t)\|^2-\eta_y^2K^2(2L_\Psi+l+p)\mathbb{E}\|\nabla_y f(w_t)\|^2-\eta_y^2K^2(2L_\Psi+l+p)d_{y,t}
     \\\nonumber
     \geq&\frac{\eta_yK}{2}\mathbb{E}\| \nabla_y f(w_t)\|^2-\frac{\eta_yK}{2}\mathbb{E}\|\Bar{e}_{y,t}\|^2-\frac{\eta_yK}{4}\mathbb{E}\|\nabla_y f(w_t)\|^2-\frac{\eta_yK}{4}\mathbb{E}\|\Bar{e}_{y,t}\|^2-\\ \nonumber
     &8\eta_yKl^2\mathbb{E}\|x^*(y_t,z_t)-x_t\|^2-\eta_y^2K^2(2L_\Psi+l+p)\mathbb{E}\|\nabla_y f(w_t)\|^2-\eta_y^2K^2(2L_\Psi+l+p)d_{y,t}
     \\\nonumber
     \overset{(b)}{\geq}&\left(\frac{\eta_yK}{4}-\eta_y^2K^2(2L_\Psi+l+p)\right)\mathbb{E}\|\nabla_y f(w_t)\|^2-\frac{3\eta_yK}{4}\mathbb{E}\|\Bar{e}_{y,t}\|^2-(2L_\Psi+l+p)\eta_y^2K^2d_{y,t}-\\ \nonumber
     &\frac{8\eta_yKl^2}{(p-l)^2}\mathbb{E}\|\nabla_x \hat{f}(w_t,z_t)\|^2\\ \label{A1 Y=R}
     \overset{(c)}{\geq}&\frac{\eta_yK}{8}\mathbb{E}\|\nabla_y f(w_t)\|^2-8\eta_yK\mathbb{E}\|\nabla_x \hat{f}(w_t,z_t)\|^2-\frac{3\eta_yK}{4}\mathbb{E}\|\Bar{e}_{y,t}\|^2-(2L_\Psi+l+p)\eta_y^2K^2d_{y,t}
 \end{align}
 where $(a)$ is due to Lemma \ref{lemma: bound |x_t-x_{t+1}|} and when $Y=\mathbb{R}^{d_2}, y_{t+1}-y_t=\eta_yK(u_{y,t}-e_{y,t})$,  $(b)$ is because of the $(p-l)$-strongly convexity of $\hat{f}(\cdot, y,z)$, and $(c)$ is due to the condition $\eta_y\leq \frac{1}{8(2L_\Psi+l+p)K}$ and $p=2l$. Combining \eqref{f Psi A1} and \eqref{A1 Y=R}, we have
 \begin{align*}
     &\mathbb{E}\hat{f}(w_t, z_t) - \mathbb{E}\hat{f}(w_{t+1}, z_t)+2(\mathbb{E}\Psi(y_{t+1},z_t) - \mathbb{E}\Psi(y_t,z_t))\\ \nonumber
     \geq&\left(\frac{\eta_xK}{4}-8\eta_yK\right)\mathbb{E}\|\nabla_x \hat{f}(w_t,z_t)\|^2-\frac{\eta_x K}{2}\mathbb{E}\|\Bar{e}_{x,t}\|^2-(p+l)\eta_x^2K^2d_{x,t}+\\
     &\frac{\eta_yK}{8}\mathbb{E}\|\nabla_y f(w_t)\|^2-\frac{3\eta_yK}{4}\mathbb{E}\|\Bar{e}_{y,t}\|^2-(2L_\Psi+l+p)\eta_y^2K^2d_{y,t}\\
     \geq&\frac{\eta_xK}{8}\mathbb{E}\|\nabla_x \hat{f}(w_t,z_t)\|^2+\frac{\eta_yK}{8}\mathbb{E}\|\nabla_y f(w_t)\|^2-\frac{\eta_x K}{2}\mathbb{E}\|\Bar{e}_{x,t}\|^2-(p+l)\eta_x^2K^2d_{x,t}-\\
     &\frac{3\eta_yK}{4}\mathbb{E}\|\Bar{e}_{y,t}\|^2-(2L_\Psi+l+p)\eta_y^2K^2d_{y,t},
 \end{align*}where the last inequality is because of the condition $\eta_y=\eta_x/256$. 
 
 When $Y\subset \mathbb{R}^{d_2}$ is convex and compact, we have
    \begin{align}\nonumber
     A_1
     =& \mathbb{E}\langle \nabla_y f(w_t), y_{t+1}-y_{t}\rangle +2\mathbb{E}\langle \nabla_y \Psi(y_t,z_t)-\nabla_y f(w_t), y_{t+1}-y_t\rangle - \frac{2L_\Psi+l+p}{2}\mathbb{E}\|y_t-y_{t+1}\|^2 \\ \nonumber
     \geq& \mathbb{E}\langle u_{y,t}-e_{y,t}, y_{t+1}-y_{t}\rangle+\mathbb{E}\langle \nabla_y f(w_t)-u_{y,t}+e_{y,t}, y_{t+1}-\Bar{y}_{t+1}\rangle+\mathbb{E}\langle\nabla_y f(w_t)-u_{y,t}+e_{y,t}, \Bar{y}_{t+1}-y_t\rangle-\\ \nonumber
     &2\mathbb{E}\| \nabla_y \Psi(y_t,z_t)-\nabla_y f(w_t)\| \|y_{t+1}-y_t\| - \frac{2L_\Psi+l+p}{2}\mathbb{E}\|y_t-y_{t+1}\|^2 \\\nonumber
     =&\mathbb{E}\langle u_{y,t}-e_{y,t}, P_Y(y_t+\eta_yK(u_{y,t}-e_{y,t}))-y_t\rangle+\\ \nonumber
     &\mathbb{E}\langle \nabla_y f(w_t)-u_{y,t}+e_{y,t}, P_Y(y_t+\eta_yK(u_{y,t}-e_{y,t}))-P_Y(y_t+\eta_yK\nabla_y f(w_t))\rangle+\\ \nonumber
     &\mathbb{E}\langle \Bar{e}_{y,t}, \Bar{y}_{t+1}-y_t\rangle-2\mathbb{E}\| \nabla_y \Psi(y_t,z_t)-\nabla_y f(w_t)\| \|y_{t+1}-y_t\| - \frac{2L_\Psi+l+p}{2}\mathbb{E}\|y_t-y_{t+1}\|^2 \\\nonumber
     \geq&\frac{1}{\eta_y K}\mathbb{E}\|y_{t+1}-y_t\|^2-\eta_yK \mathbb{E}\|\nabla_y f(w_t)-u_{y,t}+e_{y,t}\|^2-4\eta_yK\mathbb{E}\|\Bar{e}_{y,t}\|^2-\frac{1}{8\eta_yK}\mathbb{E}\|\Bar{y}_{t+1}-y_t\|^2-\\ \nonumber
     &\frac{1}{8\eta_yK}\mathbb{E}\|y_{t+1}-y_t\|^2-8\eta_yK\mathbb{E}\|\nabla_y \Psi(y_t,z_t)-\nabla_y f(w_t)\|^2- \frac{2L_\Psi+l+p}{2}\mathbb{E}\|y_t-y_{t+1}\|^2 \\\nonumber
     \geq&\left(\frac{1}{\eta_yK}-\frac{1}{8\eta_yK}- \frac{2L_\Psi+l+p}{2}\right)\mathbb{E}\|y_{t+1}-y_t\|^2-\eta_yK d_{y,t}-4\eta_yK\mathbb{E}\|\Bar{e}_{y,t}\|^2-\\ \nonumber
     &\frac{1}{8\eta_yK}\mathbb{E}\|\Bar{y}_{t+1}-y_t\|^2-8\eta_yKl^2\mathbb{E}\|x_t-x^*(y_t,z_t)\|^2\\ \nonumber
     \overset{(a)}{\geq}&\frac{1}{2\eta_yK}\mathbb{E}\|y_{t+1}-y_t\|^2-\eta_y Kd_{y,t}-4\eta_yK\mathbb{E}\|\Bar{e}_{y,t}\|^2-\frac{1}{8\eta_yK}\mathbb{E}\|\Bar{y}_{t+1}-y_t\|^2-\frac{8\eta_yKl^2}{(p-l)^2}\mathbb{E}\|\nabla_x f(w_t,z_t)\|^2\\ \nonumber
     \overset{(b)}{\geq}&\frac{1}{4\eta_yK}\mathbb{E}\|\Bar{y}_{t+1}-y_t\|^2-\frac{1}{\eta_yK}\mathbb{E}\|\Bar{y}_{t+1}-y_{t+1}\|^2 -\eta_y Kd_{y,t}-4\eta_yK\mathbb{E}\|\Bar{e}_{y,t}\|^2-8\eta_yK\mathbb{E}\|\nabla_x f(w_t,z_t)\|^2-\\ \nonumber
     &\frac{1}{8\eta_yK}\mathbb{E}\|\Bar{y}_{t+1}-y_t\|^2\\ \label{A1 Y<R}
     \geq&\frac{1}{8\eta_yK}\mathbb{E}\|\Bar{y}_{t+1}-y_t\|^2-2\eta_y Kd_{y,t}-4\eta_yK\mathbb{E}\|\Bar{e}_{y,t}\|^2-8\eta_yK\mathbb{E}\|\nabla_x f(w_t,z_t)\|^2,
 \end{align}
 where $(a)$ is due to the condition $\eta_y\leq \frac{1}{8(2L_\Psi+l+p)K}$ and the $(p-l)$-strongly convexity of $\hat{f}(\cdot, y,z)$, $(b)$ is due to the condition $p=2l$ and $\|a\|^2\geq \frac{1}{2}\|a-b\|^2-2\|b\|^2$. Combining \eqref{f Psi A1} and \eqref{A1 Y<R}, we have
 \begin{align*}
     &\mathbb{E}\hat{f}(w_t, z_t) - \mathbb{E}\hat{f}(w_{t+1}, z_t)+2(\mathbb{E}\Psi(y_{t+1},z_t) - \mathbb{E}\Psi(y_t,z_t))\\ \nonumber
     \geq&\left(\frac{\eta_xK}{4}-8\eta_yK\right)\mathbb{E}\|\nabla_x \hat{f}(w_t,z_t)\|^2-\frac{\eta_x K}{2}\mathbb{E}\|\Bar{e}_{x,t}\|^2-(p+l)\eta_x^2K^2d_{x,t}+\\
     &\frac{1}{8\eta_yK}\mathbb{E}\|\Bar{y}_{t+1}-y_t\|^2-4\eta_yK\mathbb{E}\|\Bar{e}_{y,t}\|^2-2\eta_yKd_{y,t}\\
     \geq&\frac{\eta_xK}{8}\mathbb{E}\|\nabla_x \hat{f}(w_t,z_t)\|^2+\frac{1}{8\eta_yK}\mathbb{E}\|\Bar{y}_{t+1}-y_t\|^2-\frac{\eta_x K}{2}\mathbb{E}\|\Bar{e}_{x,t}\|^2-(p+l)\eta_x^2K^2d_{x,t}-\\
     &4\eta_yK\mathbb{E}\|\Bar{e}_{y,t}\|^2-2\eta_yKd_{y,t},
 \end{align*}where the last inequality is because of the condition $\eta_y=\eta_x/256$.

\end{proof}

\begin{lemma}
\label{lemma: v}
    Define potential function $ V _t =  V (x_t, y_t, z_t) = \hat{f}(x_t, y_t,z_t) - 2\Psi(y_t, z_t) + 2P(z_t)$, with $p=2l, \eta_x\leq 1/(1000Kl),\eta_y=\eta_x/256,  \beta\leq\eta_yKl/80000$, $\eta_{x,l}\leq \min\{\frac{1}{2l\sqrt{2(2K-1)(K-1)}}, \sqrt{\frac{\beta}{6144\eta_x p l^2K^3}}\}, \eta_{y,l}\leq\min\{\frac{1}{2l\sqrt{2(2K-1)(K-1)}}, \\\sqrt{\frac{\eta_y}{3072\eta_x l^2K^2}}\}$, 
when $Y=\mathbb{R}^{d_2}$, we have
	\begin{align*} \nonumber
		&\mathbb{E} V _t -\mathbb{E} V _{t+1}\\
        \geq&\frac{\eta_xK}{32}\mathbb{E}\|\nabla_x \hat{f}(w_t,z_t)\|^2+\frac{\eta_yK}{32}\mathbb{E}\|\nabla_y f(w_t)\|^2+\frac{p\beta}{16}\mathbb{E}\|x_t-z_t\|^2-24p\beta\mathbb{E}\|x^*(z_{t})-x^*(y_{t}, z_{t})\|^2-\\
        &25l\eta_x^2K^2 (M-m)\frac{\sigma_G^2}{mM}-15l\eta_x^2K\frac{\sigma^2}{m}-4\eta_xKl^2[24K^2(\eta_{x,l}^2+\eta_{y,l}^2)\sigma_G^2+3K(\eta_{x,l}^2+2K\eta_{y,l}^2)\sigma^2]; 
	\end{align*}
when $Y\subset \mathbb{R}^{d_2}$ is convex and compact and under Assumption \ref{assum:bounded G_y}, we have
	\begin{align*} \nonumber
		&\mathbb{E} V _t -\mathbb{E} V _{t+1}\\
        \geq&\frac{\eta_xK}{32}\mathbb{E}\|\nabla_x \hat{f}(w_t,z_t)\|^2+\frac{1}{16\eta_yK}\mathbb{E}\|\Bar{y}_{t+1}-y_t\|^2+\frac{p\beta}{16}\mathbb{E}\|x_t-z_t\|^2-24p\beta\mathbb{E}\|x^*(z_{t})-x^*(y_{t}, z_{t})\|^2-\\
        &25l\eta_x^2K^2 (M-m)\frac{\sigma_G^2}{mM}-15l\eta_x^2K\frac{\sigma^2}{m}-8\eta_yK(M-m)\frac{\sigma_G^2}{mM}-\frac{4\eta_y\sigma^2}{m}-\\
        &4\eta_xKl^2[24K^2(\eta_{x,l}^2+\eta_{y,l}^2)(\sigma_G^2+G_y^2)+3K(\eta_{x,l}^2+2K\eta_{y,l}^2)\sigma^2].
	\end{align*}

\end{lemma}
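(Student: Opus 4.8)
The plan is to telescope the potential decrement into three pieces, each of which is already lower bounded by one of the intermediate lemmas, and then to absorb every error term using the prescribed step-size conditions. Taking expectations and inserting the $z$-update, I would first write
\begin{align*}
\mathbb{E} V_t - \mathbb{E} V_{t+1}
&=\big[\mathbb{E}\hat{f}(w_t,z_t)-\mathbb{E}\hat{f}(w_{t+1},z_t)+2(\mathbb{E}\Psi(y_{t+1},z_t)-\mathbb{E}\Psi(y_t,z_t))\big]\\
&\quad+\big[\mathbb{E}\hat{f}(w_{t+1},z_t)-\mathbb{E}\hat{f}(w_{t+1},z_{t+1})\big]\\
&\quad+\big[2(\mathbb{E}\Psi(y_{t+1},z_{t+1})-\mathbb{E}\Psi(y_{t+1},z_t))-2(\mathbb{E}P(z_{t+1})-\mathbb{E}P(z_t))\big].
\end{align*}
The three bracketed groups are precisely the quantities bounded below in Lemmas~\ref{lemma: f Psi}, \ref{lemma: f(w_t, .)}, and \ref{lemma: Psi P}. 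Summing the three bounds and substituting $p=2l$, $\gamma_1=2$, $\gamma_2=3$ reduces the problem to a single inequality in which the only remaining work is to show that the positive gradient and $\|x_t-z_t\|^2$ terms survive with the claimed coefficients.

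Next I would process the $z$-terms. The positive $\tfrac{p}{2\beta}\|z_{t+1}-z_t\|^2$ from Lemma~\ref{lemma: f(w_t, .)} dominates the negative $\|z_{t+1}-z_t\|^2$ contributions of Lemma~\ref{lemma: Psi P} (whose coefficient is $O(p/\beta+p+p\beta)$), so for $\beta$ small the net coefficient remains a positive multiple of $p/\beta$. Using the identity $z_{t+1}-z_t=\beta(x_{t+1}-z_t)$ together with $\|x_{t+1}-z_t\|^2\ge\tfrac12\|x_t-z_t\|^2-\|x_{t+1}-x_t\|^2$ converts this into a positive $\Theta(p\beta)\|x_t-z_t\|^2$ term minus a multiple of $\|x_{t+1}-x_t\|^2$; the latter is controlled by Lemma~\ref{lemma: bound |x_t-x_{t+1}|}, whose $\|\nabla_x\hat{f}\|^2$ part folds back into the gradient budget (this is where $\beta\le\eta_y Kl/80000$ enters) and whose $d_{x,t}$ part is carried to the noise collection.

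Then I would substitute Lemma~\ref{lemma: bound e} for $\|\bar e_{x,t}\|^2,\|\bar e_{y,t}\|^2$ and Lemma~\ref{lemma: bound d} for $d_{x,t},d_{y,t}$. The crucial reduction is $\|\nabla_x f(w_t)\|^2\le 2\|\nabla_x\hat{f}(w_t,z_t)\|^2+2p^2\|x_t-z_t\|^2$, which recasts the gradient-dependent part of the $\bar e$ bounds into exactly the two quantities for which I already hold positive budgets: the $\|\nabla_x\hat{f}\|^2$ pieces shrink $\tfrac{\eta_xK}{8}$ to $\tfrac{\eta_xK}{32}$, and the $p^2\|x_t-z_t\|^2$ pieces shrink $\tfrac{p\beta}{8}$ to $\tfrac{p\beta}{16}$, both absorptions being guaranteed by $\eta_{x,l}\le\sqrt{\beta/(6144\eta_x pl^2K^3)}$. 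For the $y$-gradient pieces, in the unconstrained case they fold into $\tfrac{\eta_yK}{8}\|\nabla_y f\|^2$ via $\eta_{y,l}\le\sqrt{\eta_y/(3072\eta_x l^2K^2)}$; in the constrained case I only have a $\|\bar y_{t+1}-y_t\|^2$ budget, so I instead bound $\|\nabla_y f(w_t)\|^2\le G_y^2$ using Assumption~\ref{assum:bounded G_y}, which is exactly why the constrained estimate carries $\sigma_G^2+G_y^2$ rather than $\sigma_G^2$. The $\|\bar y_{t+1}-y_t\|^2$ coefficient likewise drops from $\tfrac{1}{8\eta_yK}$ to $\tfrac{1}{16\eta_yK}$ after discarding the $-48p\beta\gamma_2^2\|\bar y_{t+1}-y_t\|^2$ term of Lemma~\ref{lemma: Psi P}. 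The residual pure-noise contributions of Lemmas~\ref{lemma: bound e} and \ref{lemma: bound d} then assemble into the stated $\sigma_G^2/(mM)$ and $\sigma^2/m$ terms, while the $\|x^*(z_t)-x^*(y_t,z_t)\|^2$ term is simply passed through.

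The main obstacle is that $\|x_t-z_t\|^2$ plays a double role: it is simultaneously the descent quantity I wish to retain and the byproduct of converting $\|\nabla_x f\|^2$ when absorbing $\bar e_{x,t}$. Pinning its net coefficient at exactly $\tfrac{p\beta}{16}$ forces the tight coupling among $\beta$, $\eta_{x,l}$, and $\eta_x$ encoded in the hypotheses, and verifying that this cascade of absorptions closes with slack to spare—rather than flipping a sign on either the $\|x_t-z_t\|^2$ or the $\|\nabla_x\hat{f}\|^2$ term—is where essentially all of the constant-chasing resides. The constrained case adds the extra subtlety that $\|\nabla_y f\|^2$ is no longer a free budget and must be traded for $G_y^2$.
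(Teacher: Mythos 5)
Your decomposition of $\mathbb{E}V_t-\mathbb{E}V_{t+1}$ into the three blocks handled by Lemmas \ref{lemma: f Psi}, \ref{lemma: f(w_t, .)}, and \ref{lemma: Psi P}, the conversion of $\frac{p}{\beta}\|z_{t+1}-z_t\|^2$ into a $\Theta(p\beta)\|x_t-z_t\|^2$ budget minus an $\|x_{t+1}-x_t\|^2$ correction, and the absorption of the $\bar e$ and $d$ error terms via $\|\nabla_x f(w_t)\|^2\le 2\|\nabla_x\hat f(w_t,z_t)\|^2+2p^2\|x_t-z_t\|^2$ together with the stated bounds on $\eta_{x,l},\eta_{y,l}$ (and the trade of $\|\nabla_y f\|^2$ for $G_y^2$ in the constrained case) is exactly the route the paper takes, with matching roles for every hypothesis. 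The proposal is correct and essentially identical in approach to the paper's proof.
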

\begin{proof}
    Combining Lemma \ref{lemma: f(w_t, .)}, Lemma \ref{lemma: f Psi} and Lemma \ref{lemma: Psi P}, when $Y=\mathbb{R}^{d_2}$, we have
	\begin{align*} \nonumber
		&\mathbb{E} V _t -\mathbb{E} V _{t+1}\\
        \geq &\frac{\eta_xK}{8}\mathbb{E}\|\nabla_x \hat{f}(w_t,z_t)\|^2 +\left(\frac{\eta_yK}{8}-48p\beta\gamma_2^2\eta_y^2K^2\right)\mathbb{E}\|\nabla_y f(w_t)\|^2-24p\beta\mathbb{E}\|x^*(z_{t})-x^*(y_{t}, z_{t})\|^2\\ \nonumber
        &\left(\frac{p}{2\beta}-2p\gamma_1 - \frac{p}{6\beta}-48p\beta\gamma_1^2\right)\mathbb{E}\|z_{t+1}-z_t\|^2-\frac{\eta_xK}{2}\mathbb{E}\|\Bar{e}_{x,t}\|^2-\frac{3\eta_yK}{4}\mathbb{E}\|\Bar{e}_{y,t}\|^2-\\
        &(p+l)\eta_x^2K^2d_{x,t}-\left(2L_\Psi+p+l+48p\beta\gamma_2^2\right)\eta_y^2K^2d_{y,t}\\
        \geq&\frac{\eta_xK}{8}\mathbb{E}\|\nabla_x \hat{f}(w_t,z_t)\|^2+\frac{\eta_yK}{16}\mathbb{E}\|\nabla_y f(w_t)\|^2+\frac{p}{4\beta}\mathbb{E}\|z_{t+1}-z_t\|^2-\\
        &24p\beta\mathbb{E}\|x^*(z_{t})-x^*(y_{t}, z_{t})\|^2-\frac{\eta_xK}{2}\mathbb{E}\|\Bar{e}_{x,t}\|^2-\frac{3\eta_yK}{4}\mathbb{E}\|\Bar{e}_{y,t}\|^2-\\
        &(p+l)\eta_x^2K^2d_{x,t}-\left(2L_\Psi+p+l+48p\beta\gamma_2^2\right)\eta_y^2K^2d_{y,t}\\
        \overset{(a)}{\geq}&\frac{\eta_xK}{8}\mathbb{E}\|\nabla_x \hat{f}(w_t,z_t)\|^2+\frac{\eta_yK}{16}\mathbb{E}\|\nabla_y f(w_t)\|^2+\frac{p\beta}{8}\mathbb{E}\|x_t-z_t\|^2-\frac{p\beta}{2}\mathbb{E}\|x_{t+1}-x_t\|^2-\\
        &24p\beta\mathbb{E}\|x^*(z_{t})-x^*(y_{t}, z_{t})\|^2-\frac{\eta_xK}{2}\mathbb{E}\|\Bar{e}_{x,t}\|^2-\frac{3\eta_yK}{4}\mathbb{E}\|\Bar{e}_{y,t}\|^2-\\
        &(p+l)\eta_x^2K^2d_{x,t}-\left(2L_\Psi+p+l+48p\beta\gamma_2^2\right)\eta_y^2K^2d_{y,t}\\
        \overset{(b)}{\geq}&\frac{\eta_xK}{8}\mathbb{E}\|\nabla_x \hat{f}(w_t,z_t)\|^2+\frac{\eta_yK}{16}\mathbb{E}\|\nabla_y f(w_t)\|^2+\frac{p\beta}{8}\mathbb{E}\|x_t-z_t\|^2-p\beta\eta_x^2K^2\mathbb{E}\|\nabla_x \hat{f}(w_t,z_t)\|^2-p\beta\eta_x^2K^2 d_{x,t}\\
        &24p\beta\mathbb{E}\|x^*(z_{t})-x^*(y_{t}, z_{t})\|^2-\frac{\eta_xK}{2}\mathbb{E}\|\Bar{e}_{x,t}\|^2-\frac{3\eta_yK}{4}\mathbb{E}\|\Bar{e}_{y,t}\|^2-\\
        &(p+l)\eta_x^2K^2d_{x,t}-\left(2L_\Psi+p+l+48p\beta\gamma_2^2\right)\eta_y^2K^2d_{y,t}\\
        \geq&\frac{\eta_xK}{16}\mathbb{E}\|\nabla_x \hat{f}(w_t,z_t)\|^2+\frac{\eta_yK}{16}\mathbb{E}\|\nabla_y f(w_t)\|^2+\frac{p\beta}{8}\mathbb{E}\|x_t-z_t\|^2-24p\beta\mathbb{E}\|x^*(z_{t})-x^*(y_{t}, z_{t})\|^2-\\
        &\frac{\eta_xK}{2}\mathbb{E}\|\Bar{e}_{x,t}\|^2-\frac{3\eta_yK}{4}\mathbb{E}\|\Bar{e}_{y,t}\|^2-(p+l+p\beta)\eta_x^2K^2d_{x,t}-\left(2L_\Psi+p+l+48p\beta\gamma_2^2\right)\eta_y^2K^2d_{y,t}
	\end{align*}
 where in $(a)$, we use 
 \begin{align}
     \frac{p}{\beta}\|z_{t+1}-z_t\|^2=p\beta\|x_{t+1}-z_t\|^2\geq \frac{p\beta}{2}\|x_t-z_t\|^2-2p\beta\|x_{t+1}-x_t\|^2,
 \end{align}and in $(b)$, we use Lemma \ref{lemma: bound |x_t-x_{t+1}|}.

 Denote $A_2=-\frac{\eta_xK}{2}\mathbb{E}\|\Bar{e}_{x,t}\|^2-\frac{3\eta_yK}{4}\mathbb{E}\|\Bar{e}_{y,t}\|^2-(p+l+p\beta)\eta_x^2K^2d_{x,t}-\left(2L_\Psi+p+l+48p\beta\gamma_2^2\right)\eta_y^2K^2d_{y,t}$, we have
 \begin{align*}
     A_2\overset{(a)}{\geq} & -(\eta_xK+10l\eta_x^2K^2)\mathbb{E}\|\Bar{e}_{x,t}\|^2-(\eta_yK+24l\eta_y^2K^2)\mathbb{E}\|\Bar{e}_{y,t}\|^2-\\
        &20l\eta_x^2K^2 (M-m)\frac{\sigma_G^2}{mM}-10l\eta_x^2K\frac{\sigma^2}{m}-48l\eta_y^2K^2 (M-m)\frac{\sigma_G^2}{mM}-24l\eta_y^2K\frac{\sigma^2}{m}\\
        \geq& -25l\eta_x^2K^2 (M-m)\frac{\sigma_G^2}{mM}-15l\eta_x^2K\frac{\sigma^2}{m}-\\
        &4\eta_xKl^2[24K^2\eta_{x,l}^2\mathbb{E}\|\nabla_x f(w_{t})\|^2+24K^2\eta_{y,l}^2\mathbb{E}\|\nabla_y f(w_{t})\|^2+24K^2(\eta_{x,l}^2+\eta_{y,l}^2)\sigma_G^2+3K(\eta_{x,l}^2+2K\eta_{y,l}^2)\sigma^2]\\
        \overset{(b)}{\geq}&-25l\eta_x^2K^2 (M-m)\frac{\sigma_G^2}{mM}-15l\eta_x^2K\frac{\sigma^2}{m}-4\eta_xKl^2[24K^2(\eta_{x,l}^2+\eta_{y,l}^2)\sigma_G^2+3K(\eta_{x,l}^2+2K\eta_{y,l}^2)\sigma^2]-\\
        &4\eta_xKl^2[48K^2\eta_{x,l}^2\mathbb{E}\|\nabla_x \hat{f}(w_{t},z_t)\|^2+48K^2p^2\eta_{x,l}^2\mathbb{E}\|x_t-z_t\|^2+24K^2\eta_{y,l}^2\mathbb{E}\|\nabla_y f(w_{t})\|^2]\\
        \overset{(c)}{\geq} &-25l\eta_x^2K^2 (M-m)\frac{\sigma_G^2}{mM}-15l\eta_x^2K\frac{\sigma^2}{m}-4\eta_xKl^2[24K^2(\eta_{x,l}^2+\eta_{y,l}^2)\sigma_G^2+3K(\eta_{x,l}^2+2K\eta_{y,l}^2)\sigma^2]-\\
        &\frac{\eta_xK}{32}\mathbb{E}\|\nabla_x \hat{f}(w_t,z_t)\|^2-\frac{\eta_yK}{32}\mathbb{E}\|\nabla_y f(w_t)\|^2-\frac{p\beta}{16}\mathbb{E}\|x_t-z_t\|^2,
 \end{align*}
 where $(a)$ is because $(p+l+p\beta)\leq 5l, (2L_\Psi+p+l+48p\beta\gamma_2^2)\leq 12l $ and Lemma \ref{lemma: bound d}, in $(b)$, we use Lemma \ref{lemma: bound e}, in $(c)$, we use the condition
 \begin{align*}
     &\eta_{y,l}^2\leq \frac{\eta_y }{3072\eta_x l^2K^2}\\
     &\eta_{x,l}^2\leq \frac{\beta }{6144\eta_x p l^2K^3}\leq \frac{1}{6144 l^2K^2}.
 \end{align*}
 So we have
	\begin{align} \nonumber
		&\mathbb{E} V _t -\mathbb{E} V _{t+1}\\ \nonumber
        \geq&\frac{\eta_xK}{32}\mathbb{E}\|\nabla_x \hat{f}(w_t,z_t)\|^2+\frac{\eta_yK}{32}\mathbb{E}\|\nabla_y f(w_t)\|^2+\frac{p\beta}{16}\mathbb{E}\|x_t-z_t\|^2-24p\beta\mathbb{E}\|x^*(z_{t})-x^*(y_{t}, z_{t})\|^2-\\ \label{v:Y=R}
        &25l\eta_x^2K^2 (M-m)\frac{\sigma_G^2}{mM}-15l\eta_x^2K\frac{\sigma^2}{m}-4\eta_xKl^2[24K^2(\eta_{x,l}^2+\eta_{y,l}^2)\sigma_G^2+3K(\eta_{x,l}^2+2K\eta_{y,l}^2)\sigma^2].
	\end{align}
 When $Y\subset \mathbb{R}^{d_2}$ is convex and compact and under Assumption \ref{assum:bounded G_y}, similarly, we have
	\begin{align} \nonumber
		&\mathbb{E} V _t -\mathbb{E} V _{t+1}\\ \nonumber
  \geq &\frac{\eta_xK}{8}\mathbb{E}\|\nabla_x \hat{f}(w_t,z_t)\|^2 +\left(\frac{\eta_yK}{8}-48p\beta\gamma_2^2\eta_y^2K^2\right)\frac{1}{\eta_y^2K^2}\mathbb{E}\|\Bar{y}_{t+1}-y_t\|^2-24p\beta\mathbb{E}\|x^*(z_{t})-x^*(y_{t}, z_{t})\|^2\\ \nonumber
        &\left(\frac{p}{2\beta}-2p\gamma_1 - \frac{p}{6\beta}-48p\beta\gamma_1^2\right)\mathbb{E}\|z_{t+1}-z_t\|^2-\frac{\eta_xK}{2}\mathbb{E}\|\Bar{e}_{x,t}\|^2-4\eta_yK\mathbb{E}\|\Bar{e}_{y,t}\|^2-\\ \nonumber
        &(p+l)\eta_x^2K^2d_{x,t}-2\eta_yKd_{y,t}\\ \nonumber
        \geq&\frac{\eta_xK}{32}\mathbb{E}\|\nabla_x \hat{f}(w_t,z_t)\|^2+\frac{1}{16\eta_yK}\mathbb{E}\|\Bar{y}_{t+1}-y_t\|^2+\frac{p\beta}{16}\mathbb{E}\|x_t-z_t\|^2-24p\beta\mathbb{E}\|x^*(z_{t})-x^*(y_{t}, z_{t})\|^2-\\ \nonumber
        &25l\eta_x^2K^2 (M-m)\frac{\sigma_G^2}{mM}-15l\eta_x^2K\frac{\sigma^2}{m}-8\eta_yK(M-m)\frac{\sigma_G^2}{mM}-\frac{4\eta_y\sigma^2}{m}\\ \label{v:Y<R}
        &4\eta_xKl^2[24K^2(\eta_{x,l}^2+\eta_{y,l}^2)(\sigma_G^2+G_y^2)+3K(\eta_{x,l}^2+2K\eta_{y,l}^2)\sigma^2].
	\end{align}
 The majority of the terms in \eqref{v:Y<R} closely resemble those in \eqref{v:Y=R}. There are, however, two notable distinctions. First, there is an additional error term of $-8\eta_yK(M-m)\frac{\sigma_G^2}{mM}-\frac{4\eta_y\sigma^2}{m}$ attributed to the presence of $-2\eta_yKd_{y,t}$. Second, there is an additional error of $-96\eta_xK^3l^2G_y^2$, which arises from our utilization of Assumption \ref{assum:bounded G_y}.
 \end{proof}

\section{Nonconvex-PL}
\label{app: nc-pl}

\begin{lemma}
\label{lemma: sc Y=R}
Under Assumption \ref{assum:pl} and $p=2l$, we have
    \begin{align*}
        &\|x^*(y_{t},z_{t})-x^*(z_{t})\|^2\leq\frac{2}{l\mu}\|\nabla_y f(w_t)\|^2+\frac{2}{l\mu}\|\nabla_x \hat{f}(w_t,z_t)\|^2.
    \end{align*}
\end{lemma}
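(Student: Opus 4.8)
The plan is to control $\|x^*(y_t,z_t)-x^*(z_t)\|$ through a function-value gap and then to bound that gap with the PL condition in $y$. The starting point is the strong convexity of $\hat f$ in its first argument: since $p=2l$ and each $f_i$ (hence $f$) is $l$-smooth, the map $x\mapsto\hat f(x,y,z)=f(x,y)+\tfrac{p}{2}\|x-z\|^2$ is $(p-l)=l$-strongly convex. Using the saddle structure of the smoothed subproblem (the same identity invoked in Lemma~\ref{lemma: P}, where $x^*(\hat y^*(z),z)$ is identified with $x^*(z)$), the point $x^*(z_t)$ is the unique minimizer of $\hat f(\cdot,\hat y^*(z_t),z_t)$ and attains the value $P(z_t)$. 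Evaluating the quadratic lower bound from strong convexity at $x^\star:=x^*(y_t,z_t)$ then gives
\[
\|x^*(y_t,z_t)-x^*(z_t)\|^2\le \frac{2}{l}\Big[\hat f(x^\star,\hat y^*(z_t),z_t)-P(z_t)\Big].
\]

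Next I would sandwich the bracketed gap. On one side, $\hat f(x^\star,\hat y^*(z_t),z_t)\le \Phi(x^\star,z_t)$ by definition of the inner max; on the other, $P(z_t)=\Psi(\hat y^*(z_t),z_t)\ge \Psi(y_t,z_t)=\hat f(x^\star,y_t,z_t)$, using optimality of $\hat y^*(z_t)$ and the fact that $x^\star$ minimizes $\hat f(\cdot,y_t,z_t)$. Hence the gap is at most $\Phi(x^\star,z_t)-\hat f(x^\star,y_t,z_t)=\max_{y}f(x^\star,y)-f(x^\star,y_t)$, where the smoothing terms $\tfrac{p}{2}\|x^\star-z_t\|^2$ cancel because both quantities are evaluated at the same $x^\star$. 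Applying Assumption~\ref{assum:pl} at the point $(x^\star,y_t)$ bounds this last expression by $\tfrac{1}{2\mu}\|\nabla_y f(x^\star,y_t)\|^2$, so that
\[
\|x^*(y_t,z_t)-x^*(z_t)\|^2\le \frac{1}{l\mu}\,\|\nabla_y f(x^\star,y_t)\|^2.
\]

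Finally I would transfer the gradient from $x^\star$ back to $x_t$. Splitting $\|\nabla_y f(x^\star,y_t)\|^2\le 2\|\nabla_y f(x_t,y_t)\|^2+2\|\nabla_y f(x^\star,y_t)-\nabla_y f(x_t,y_t)\|^2$, the second term is at most $2l^2\|x^\star-x_t\|^2$ by $l$-smoothness, while the $l$-strong convexity of $\hat f(\cdot,y_t,z_t)$ yields $\|x_t-x^\star\|\le \tfrac{1}{l}\|\nabla_x\hat f(w_t,z_t)\|$ (recall $\nabla_x\hat f(x_t,y_t,z_t)=\nabla_x\hat f(w_t,z_t)$ and $w_t=(x_t,y_t)$). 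Substituting gives $\|\nabla_y f(x^\star,y_t)\|^2\le 2\|\nabla_y f(w_t)\|^2+2\|\nabla_x\hat f(w_t,z_t)\|^2$, which combined with the previous display produces exactly $\tfrac{2}{l\mu}\|\nabla_y f(w_t)\|^2+\tfrac{2}{l\mu}\|\nabla_x\hat f(w_t,z_t)\|^2$.

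The main obstacle is the sandwich step: the PL inequality must be applied at the shifted minimizer $x^\star=x^*(y_t,z_t)$ rather than at the iterate $x_t$, and the value-gap must be correctly squeezed between $\Phi(x^\star,z_t)$ and $\Psi(y_t,z_t)$ using both the definition of the inner maximization and the weak-duality inequality $P(z_t)\ge \Psi(y_t,z_t)$. This is also where the identity $x^*(z_t)=x^*(\hat y^*(z_t),z_t)$ (equivalently $P(z_t)=\max_y\Psi(y,z_t)$), which rests on the minimax structure of the smoothed NC-PL problem, is essential; everything else is routine use of strong convexity, smoothness, and the choice $p=2l$.
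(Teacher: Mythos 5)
Your proposal is correct and follows essentially the same route as the paper's proof: strong convexity of $\hat f(\cdot,\hat y^*(z_t),z_t)$ to reduce to the value gap, squeezing that gap to $\max_y f(x^*(y_t,z_t),y)-f(x^*(y_t,z_t),y_t)$ via $\Psi(y_t,z_t)\le P(z_t)$, applying the PL condition at the shifted minimizer, and then transferring $\nabla_y f$ back to $w_t$ using $l$-smoothness and the strong-convexity bound $\|x_t-x^*(y_t,z_t)\|\le\frac{1}{p-l}\|\nabla_x\hat f(w_t,z_t)\|$. The constants work out identically with $p=2l$, and your reliance on the identity $x^*(z_t)=x^*(\hat y^*(z_t),z_t)$ is the same implicit use of strong duality present in the paper's step $(a)$.
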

\begin{proof}
Because $\hat{f}(\cdot, y,z)$ is $(p-l)$-strongly convex, we have
    \begin{align*}
        &\|x^*(y_{t},z_{t})-x^*(z_{t})\|^2\\
        \overset{(a)}{\leq}&\frac{2}{p-l}[\hat{f}(x^*(y_t,z_{t}), \hat{y}^*(z_t),z_{t})-\hat{f}(x^*(z_t), \hat{y}^*(z_t),z_t)]\\
        \leq&\frac{2}{p-l}[\Phi(x^*(y_t,z_{t}),z_{t})-\Phi(x^*(z_t),z_t)]\\
        =&\frac{2}{p-l}[\Phi(x^*(y_t,z_{t}),z_{t})-\hat{f}(x^*(y_t,z_{t}),y_t,z_{t})+\hat{f}(x^*(y_t,z_{t}),y_t,z_{t})-\Phi(x^*(z_t),z_t)]\\
        \overset{(b)}{\leq}&\frac{2}{p-l}[\Phi(x^*(y_t,z_{t}),z_{t})-\hat{f}(x^*(y_t,z_{t}),y_t,z_{t})]\\
        \overset{(c)}{\leq}&\frac{1}{(p-l)\mu}\|\nabla_y f(x^*(y_t,z_{t}),y_t)\|^2\\
        \overset{(d)}{\leq}&\frac{2}{(p-l)\mu}\|\nabla_y f(w_t)\|^2+\frac{2}{(p-l)\mu}\|\nabla_y f(x^*(y_t,z_{t}),y_t)-\nabla_yf(x_{t},y_t)\|^2\\
        \leq&\frac{2}{(p-l)\mu}\|\nabla_y f(w_t)\|^2+\frac{2l^2}{(p-l)\mu}\|x^*(y_t,z_t)-x_{t}\|^2\\
        \overset{(e)}{\leq}&\frac{2}{(p-l)\mu}\|\nabla_y f(w_t)\|^2+\frac{2l^2}{(p-l)^3\mu}\|\nabla_x \hat{f}(w_t,z_t)\|^2\\
        =&\frac{2}{l\mu}\|\nabla_y f(w_t)\|^2+\frac{2}{l\mu}\|\nabla_x \hat{f}(w_t,z_t)\|^2.
    \end{align*}
  $(b)$ can be attributed to the fact that $\hat{f}(x^*(y_t,z_{t}),y_t,z_{t})\leq\Phi(x^*(z_t),z_t)$. $(c)$ arises from the $\mu$-PL property of $\hat{f}(x, \cdot, z)$. In $(a), (d), (e)$, we make use of Lemma \ref{lemma: property of sc}.
\end{proof}

\subsection*{Proof of Theorem \ref{thm:sc}}

We formally state Theorem \ref{thm:sc} below.

\noindent \textbf{Theorem \ref{thm:sc}} 
    Under Assumptions \ref{assum:smooth}, \ref{assum:bdd_var}, \ref{assum:bdd_hetero}, \ref{assum:phi} and \ref{assum:pl}, 
    with $p=2l,\eta_y=\eta_x/256,  \beta=\eta_yK\mu/80000$, 
    $\eta_{x,l}\leq \min\{\frac{1}{2l\sqrt{2(2K-1)(K-1)}}, \sqrt{\frac{\beta}{6144\eta_x p l^2K^3}}, O(\epsilon\sqrt{\kappa^{-1} (\sigma_G^2+\sigma^2)}(Kl)^{-1})\}, \eta_{y,l}\leq\min\{\frac{1}{2l\sqrt{2(2K-1)(K-1)}}, \sqrt{\frac{\eta_y}{3072\eta_x l^2K^2}}, O(\epsilon\sqrt{\kappa^{-1} (\sigma_G^2+\sigma^2)}(Kl)^{-1})\}$, 
    when $m=M$ or $\sigma_G=0$, if we apply Algorithm \ref{alg1} with $K=\Theta(\kappa m^{-1}\epsilon^{-2}), \eta_x=\min\{1/(1000Kl), \frac{\sqrt{m\Delta}}{\sigma\sqrt{KTl}}\}$, we can find an $(\epsilon, \epsilon/\sqrt{\kappa})$-stationary point of $f$ with a per-client sample complexity of $O(\kappa^2m^{-1}\epsilon^{-4})$ and a communication complexity of $O(\kappa\epsilon^{-2})$;  when $m<M$ and $\sigma_G>0$, if we apply Algorithm \ref{alg1} with $\eta_x=\min\{1/(1000Kl), \frac{\sqrt{m\Delta}}{\sigma\sqrt{Tl}K}\}, K=O(1)$, we can find an $(\epsilon, \epsilon/\sqrt{\kappa})$-stationary point of $f$ with a per-client sample complexity of $O(\kappa^2m^{-1}\epsilon^{-4})$ and a communication complexity of $O(\kappa^2m^{-1}\epsilon^{-4})$. Here, $\Delta=V_0-\Phi^*$, $\kappa=l/\mu$.

\begin{proof}
    Combining Lemma \ref{lemma: sc Y=R} and Lemma \ref{lemma: v}, we have
	\begin{align*} \nonumber
		\mathbb{E} V _t -\mathbb{E} V _{t+1} \geq &\left( \frac{\eta_xK}{32}-\frac{96\beta}{\mu}\right)\mathbb{E}\|\nabla_x \hat{f}(w_t,z_t)\|^2 + \left(\frac{\eta_yK}{32}-\frac{96\beta}{\mu}\right)\mathbb{E}\|\nabla_y f(w_t)\|^2+\frac{p\beta}{16}\mathbb{E}\|x_t-z_t\|^2-\\
  &25l\eta_x^2K^2(M-m)\frac{\sigma_G^2}{mM}-15l\eta_x^2K\frac{\sigma^2}{m}-4\eta_xKl^2[24K^2(\eta_{x,l}^2+\eta_{y,l}^2)\sigma_G^2+3K(\eta_{x,l}^2+2K\eta_{y,l}^2)\sigma^2].
	\end{align*}
Setting $\beta=\eta_yK\mu/80000$ yields
	\begin{align} \nonumber
		&\mathbb{E} V _t -\mathbb{E} V _{t+1}\\ \nonumber
        \geq & \frac{\eta_xK}{64}\mathbb{E}\|\nabla_x \hat{f}(w_t)\|^2 +\frac{\eta_yK}{64}\mathbb{E}\|\nabla_y f(w_t)\|^2+\frac{p\beta}{16}\mathbb{E}\|x_t-z_t\|^2-\\ \label{v sc Y=R}
        &25l\eta_x^2K^2(M-m)\frac{\sigma_G^2}{mM}-15l\eta_x^2K\frac{\sigma^2}{m}-4\eta_xKl^2[24K^2(\eta_{x,l}^2+\eta_{y,l}^2)\sigma_G^2+3K(\eta_{x,l}^2+2K\eta_{y,l}^2)\sigma^2].
	\end{align}
Further note that
\begin{align}
\label{nabla_x f}
    \|\nabla_x f(x_{t},y_{t})\|^2\leq&2\|\nabla_x \hat{f}(w_t,z_t)\|^2+2p^2\|x_t-z_t\|^2,
\end{align}
which leads to
\begin{align}\nonumber
    &\frac{1}{T}\sum_{t=0}^{T-1} \mathbb{E}\|\nabla_x f(x_{t},y_{t})\|^2+\kappa\mathbb{E}\|\nabla_y f(x_t,y_t)\|^2\\ \nonumber
    \leq&\frac{1}{T}\sum_{t=0}^{T-1}\max\bigg\{\frac{128\kappa}{\eta_xK}, \frac{64\kappa}{\eta_yK}, \frac{32p}{\beta}\bigg\}\bigg\{\mathbb{E} V _t -\mathbb{E} V _{t+1}+\\ \nonumber
    &25l\eta_x^2K^2(M-m)\frac{\sigma_G^2}{mM}+15l\eta_x^2K\frac{\sigma^2}{m}+4\eta_xKl^2[24K^2(\eta_{x,l}^2+\eta_{y,l}^2)\sigma_G^2+3K(\eta_{x,l}^2+2K\eta_{y,l}^2)\sigma^2]\bigg\}\\ \nonumber
    \overset{(a)}{\leq}&\frac{O(1)\kappa}{\eta_xKT}[V_0-\min_t V_t]+O(1)\kappa \eta_xlK(M-m)\frac{\sigma_G^2}{mM}+O(1)\kappa\eta_x l\frac{\sigma^2}{m}+\\ \nonumber
    &O(1)\kappa l^2[K^2(\eta_{x,l}^2+\eta_{y,l}^2)\sigma_G^2+K(\eta_{x,l}^2+2K\eta_{y,l}^2)\sigma^2]\\ \nonumber
    \leq&O(1)\frac{\kappa\Delta}{\eta_xKT}+O(1)\kappa \eta_xlK(M-m)\frac{\sigma_G^2}{mM}+O(1)\kappa\eta_x l\frac{\sigma^2}{m}+\\
    &O(1)\kappa l^2[K^2(\eta_{x,l}^2+\eta_{y,l}^2)\sigma_G^2+K(\eta_{x,l}^2+2K\eta_{y,l}^2)\sigma^2],
\end{align}
where (a) is because $p/\beta=O(1)\kappa/(\eta_xK)$ and \eqref{v_0-v_t}.

When $m=M$ or $\sigma_G=0$, with $\eta_x=\min\{1/(1000Kl), \frac{\sqrt{m\Delta}}{\sigma\sqrt{KTl}}\}$, $\eta_{x,l}\leq  O(\epsilon\sqrt{\kappa^{-1} (\sigma_G^2+\sigma^2)}(Kl)^{-1}), \eta_{y,l}\leq O(\epsilon\sqrt{\kappa^{-1} (\sigma_G^2+\sigma^2)}(Kl)^{-1})$, we have
\begin{align*}
    &\frac{1}{T}\sum_{t=0}^{T-1} \mathbb{E}\|\nabla_x f(x_{t},y_{t})\|^2+\kappa\mathbb{E}\|\nabla_y f(x_t,y_t)\|^2\leq O(1)\frac{\kappa}{\sqrt{mKT}}+O(1)\frac{\kappa}{T}+O(1)\epsilon^2,
\end{align*}
which implies that we can find an $(\epsilon, \epsilon/\sqrt{\kappa})$-stationary point of $f$ with a per-client sample complexity of $KT=O(\kappa^2m^{-1}\epsilon^{-4})$ and a communication complexity of $T=O(\kappa\epsilon^{-2})$.

When $m<M$ and $\sigma_G>0$, with $\eta_x=\min\{1/(1000Kl), \frac{\sqrt{m\Delta}}{\sigma\sqrt{Tl}K}\}$, $K=O(1)$, $\eta_{x,l}\leq  O(\epsilon\sqrt{\kappa^{-1} (\sigma_G^2+\sigma^2)}(Kl)^{-1}), \eta_{y,l}\leq O(\epsilon\sqrt{\kappa^{-1} (\sigma_G^2+\sigma^2)}(Kl)^{-1})$, we have
\begin{align*}
    &\frac{1}{T}\sum_{t=0}^{T-1} \mathbb{E}\|\nabla_x f(x_{t},y_{t})\|^2+\kappa\mathbb{E}\|\nabla_y f(x_t,y_t)\|^2\leq O(1)\frac{\kappa}{\sqrt{mT}}+O(1)\epsilon^2,
\end{align*}
which implies that we can find an $(\epsilon, \epsilon/\sqrt{\kappa})$-stationary point of $f$ with a per-client sample complexity of $KT=O(\kappa^2m^{-1}\epsilon^{-4})$ and a communication complexity of $T=O(\kappa^2 m^{-1} \epsilon^{-4})$.

\end{proof}

\section{Nonconvex-Strongly-Concave}
\label{app: nc-sc}
Since Nonconvex-PL is weaker than Nonconvex-Strongly-Concave (NC-SC), Theorem \ref{thm:sc} also holds for NC-SC. However, for NC-SC, Theorem \ref{thm:sc constrained y} proves that \alg can achieve similar convergence results when $Y$ is a convex, compact set of $\mathbb{R}^{d_2}$.
\begin{assume} [Strongly Concave in $y$]
\label{assum:sc}
$f$ is $\mu$-strongly concave ($\mu>0$) in $y$, if for any fixed $x$, $\max_y f(x,y)$, $\forall y, y'\in Y$, we have
\begin{align*}
    f(x,y)\leq f(x,y')+\langle\nabla_y f(x,y'), y-y'\rangle-\frac{\mu}{2}\|y-y'\|^2.
\end{align*}
\end{assume}

\begin{lemma}
\label{constraint eb}
Define $y_{+}(x)=P_Y(y+\eta_yK\nabla_y f(x,y))$. Under Assumptions \ref{assum:pl}, \ref{assum:x, bounded y}, \ref{assum:smooth}, and with $\eta_yK\leq 1/1000l$, we have
\begin{align*}
    \|y-y^*(x)\|\leq& \frac{2}{\mu\eta_yK}\|y-y_{+}(x)\|,\\
    \|y_{+}(x)-y^*(x)\|\leq& \frac{2}{\mu\eta_yK}\|y-y_{+}(x)\|.
\end{align*}
\end{lemma}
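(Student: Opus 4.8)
The plan is to recognize $y_{+}(x)=P_Y(y+\eta_y K\nabla_y f(x,y))$ as a single projected gradient \emph{ascent} step on the map $f(x,\cdot)$ and to run the classical contraction argument for projected gradient methods, using the $\mu$-strong concavity of $f(x,\cdot)$ on $Y$ (Assumption~\ref{assum:sc}, the operative hypothesis in this NC-SC section) together with the $l$-smoothness from Assumption~\ref{assum:smooth}. Write $\alpha:=\eta_y K$ and assume, as always holds along the algorithm's trajectory, that $y\in Y$. Strong concavity makes $y^*(x)=\argmax_{y\in Y}f(x,y)$ unique, and its first-order optimality condition $\langle\nabla_y f(x,y^*(x)),\,w-y^*(x)\rangle\le 0$ for all $w\in Y$ is exactly the statement that $y^*(x)$ is a fixed point of the ascent map, i.e. $y^*(x)=P_Y(y^*(x)+\alpha\nabla_y f(x,y^*(x)))$. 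I would record this identity first.

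The crux is a one-step contraction toward $y^*(x)$. Abbreviating $y^*=y^*(x)$, $d=y-y^*$ and $u=\nabla_y f(x,y)-\nabla_y f(x,y^*)$, the nonexpansiveness of $P_Y$ combined with the fixed-point identity gives $\|y_{+}(x)-y^*\|^2\le\|d+\alpha u\|^2=\|d\|^2+2\alpha\langle u,d\rangle+\alpha^2\|u\|^2$. Strong concavity yields $\langle u,d\rangle\le-\mu\|d\|^2$, and—crucially—co-coercivity of the gradient (Baillon--Haddad, valid for the convex $l$-smooth function $-f(x,\cdot)$ on the convex set $Y$) yields $\|u\|^2\le-l\langle u,d\rangle$. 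Since $\langle u,d\rangle\le 0$ and $\alpha l\le 1/1000\le 1$, substituting gives $\|y_{+}(x)-y^*\|^2\le\|d\|^2+\alpha(2-\alpha l)\langle u,d\rangle\le\|d\|^2+\alpha\langle u,d\rangle\le(1-\alpha\mu)\|d\|^2$. Using $\sqrt{1-\alpha\mu}\le 1-\frac{\alpha\mu}{2}$, I then obtain the contraction $\|y_{+}(x)-y^*\|\le\big(1-\frac{\alpha\mu}{2}\big)\|y-y^*\|$.

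Both claimed bounds follow immediately. The triangle inequality gives $\|y-y_{+}(x)\|\ge\|y-y^*\|-\|y_{+}(x)-y^*\|\ge\frac{\alpha\mu}{2}\|y-y^*\|$, which rearranges to the first inequality $\|y-y^*(x)\|\le\frac{2}{\mu\eta_y K}\|y-y_{+}(x)\|$; and the second follows from $\|y_{+}(x)-y^*\|\le\|y-y^*\|\le\frac{2}{\mu\eta_y K}\|y-y_{+}(x)\|$.

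The main obstacle is the contraction estimate in the second step. The naive Lipschitz bound $\|u\|^2\le l^2\|d\|^2$ would only yield $\|y_{+}(x)-y^*\|^2\le(1-2\alpha\mu+\alpha^2 l^2)\|d\|^2$ and would force $\alpha\lesssim\mu/l^2=1/(l\kappa)$, which is incompatible with a hypothesis that only asks for $\eta_y K\le 1/(1000 l)$ \emph{independent of} $\kappa$. Replacing $l^2\|d\|^2$ by the co-coercivity bound $-l\langle u,d\rangle$ is precisely what lets $\alpha l\le 1$ suffice; getting this step right—and verifying that co-coercivity and strong concavity are applied only to points of the convex domain $Y$—is the one delicate point, the rest being routine.
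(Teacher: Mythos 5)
Your proposal is correct and follows essentially the same route as the paper: both establish the one-step contraction $\|y_{+}(x)-y^*(x)\|\le(1-\eta_yK\mu/2)\|y-y^*(x)\|$ by combining nonexpansiveness of the projected-ascent map with the co-coercivity bound $\|u\|^2\le -l\langle u,d\rangle$ and strong concavity, and then conclude by the same triangle-inequality rearrangement. The only cosmetic difference is that the paper derives the nonexpansiveness from scratch via the strong convexity of the prox objective $\hat g(\cdot;v)$ rather than quoting nonexpansiveness of $P_Y$ together with the fixed-point identity $y^*(x)=P_Y(y^*(x)+\eta_yK\nabla_y f(x,y^*(x)))$, and you are right that the operative hypothesis is strong concavity (Assumption \ref{assum:sc}) rather than the PL condition cited in the lemma statement.
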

\begin{proof}
    We define $\hat{g}(y;v)=\|y\|^2-2\langle y,v \rangle+\textbf{1}_Y(y), v_1=y+\eta_yK\nabla_y f(x,y), v_2=y^*(x)+\eta_yK\nabla_y f(x,y^*(x))$. According to the definition of $ y_{+}(x), y^*(x)$, we have
    \begin{align*}
        y_{+}(x)=&\arg\min_y \hat{g}(y;v_1)\\
        y^*(x)=&\arg\min_y \hat{g}(y;v_2).
    \end{align*}
    Note that $\hat{g}(\cdot;v)$ is 2-strongly-convex, according to Lemma \ref{lemma: property of sc}, we have
    \begin{align}
    \label{g1}
        \hat{g}(y_{+}(x);v_2)-\hat{g}(y^*(x);v_2)\geq& \|y_{+}(x)-y^*(x)\|^2\\ \label{g2}
        \hat{g}(y^*(x);v_1)-\hat{g}(y_{+}(x);v_1)\geq& \|y_{+}(x)-y^*(x)\|^2.
    \end{align}
    By the definition of $\hat{g}$:
    \begin{align}
    \label{g3}
        \hat{g}(y_{+}(x);v_1)-\hat{g}(y_{+}(x);v_2)=&-2\langle y_{+}(x), v_1-v_2\rangle,\\ \label{g4}
        \hat{g}(y^*(x);v_1)-\hat{g}(y^*(x);v_2)=&-2\langle y^*(x), v_1-v_2\rangle.
    \end{align}
    Combining \eqref{g1},\eqref{g2},\eqref{g3},\eqref{g4}, we have
    \begin{align}
    \label{y dot}
        \|y_{+}(x)-y^*(x)\|^2\leq \langle y_{+}(x)-y^*(x), v_1-v_2\rangle.
    \end{align}
    Therefore,
    \begin{align}
    \label{y1}
        \|y_{+}(x)-y^*(x)\|\leq \|v_1-v_2\|.
    \end{align}
    By the definition of $v_1, v_2$, we have
    \begin{align}\nonumber
        \|v_1-v_2\|^2=&\|y-y^*(x)\|^2+2\eta_yK\langle y-y^*(x), \nabla_y f(x,y)-\nabla_y f(x,y^*(x))\rangle+\eta_y^2K^2\|\nabla_y f(x,y)-\nabla_y f(x,y^*(x))\|^2\\ \nonumber
        \overset{(a)}{\leq}&\|y-y^*(x)\|^2+(2\eta_yK-\eta_y^2K^2l)\langle y-y^*(x), \nabla_y f(x,y)-\nabla_y f(x,y^*(x))\rangle\\ \nonumber
        \overset{(b)}{\leq}&\|y-y^*(x)\|^2+\eta_yK\langle y-y^*(x), \nabla_y f(x,y)-\nabla_y f(x,y^*(x))\rangle\\ \nonumber
        \overset{(c)}{\leq}&(1-\eta_yK\mu)\|y-y^*(x)\|^2\\ \label{y2}
        \leq&\left(1-\frac{\eta_yK\mu}{2}\right)^2\|y-y^*(x)\|^2
    \end{align}
where $(a)$ is a consequence of several factors. Firstly, due to the concavity of $f$ in $y$, we have $\langle y-y^*(x), \nabla_y f(x,y)-\nabla_y f(x,y^*(x))\rangle\leq 0$. Additionally, Assumption \ref{assum:smooth} ensures that $\|\nabla_y f(x,y)-\nabla_y f(x,y^*(x))\|\leq l \|y-y^*(x)\|$. $(b)$ follows from the condition $\eta_yK\leq 1/l$, and $(c)$ stems from the $\mu$-strong concavity of $f(x, \cdot)$.

    Combining \eqref{y1}, \eqref{y2}, we have
    \begin{align*}
        \|y_{+}(x)-y^*(x)\|\leq \|v_1-v_2\| \leq (1-\eta_yK\mu/2)\|y-y^*(x)\|.
    \end{align*}
    So
    \begin{align*}
        \|y-y_{+}(x)\|\geq \|y-y^*(x)\|-\|y_{+}(x)-y^*(x)\|\geq \frac{\eta_yK\mu}{2}\|y-y^*(x)\|,
    \end{align*}
    \begin{align*}
        \|y-y^*(x)\|\leq \frac{2}{\eta_yK\mu}\|y-y_{+}(x)\|,
    \end{align*}
    which yields
    \begin{align*}
        \|y_{+}(x)-y^*(x)\|\leq  (1-\eta_yK\mu/2)\|y-y^*(x)\|\leq \frac{2}{\eta_yK\mu}\|y-y_{+}(x)\|.
    \end{align*}
    
\end{proof}

\begin{lemma}
\label{constraint pl}
Under Assumption \ref{assum:pl}, \ref{assum:x, bounded y}, \ref{assum:smooth}, and with $\eta_yK\leq 1/1000l$, the following inequelity holds
    \begin{align*}
        \hat{f}(x^*(y,z), y^*(x^*(y,z)),z)-\hat{f}(x^*(y,z), y^+(z),z)\leq\frac{2(1+\eta_yKl)}{\mu\eta_y^2K^2}\|y-y^+(z)\|^2.
    \end{align*}
\end{lemma}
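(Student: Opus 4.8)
The plan is to first cancel the smoothing penalty. Since $\hat{f}(x,y,z) = f(x,y) + \frac{p}{2}\|x-z\|^2$ and both terms on the left-hand side share the same first coordinate $x^*(y,z)$ and the same $z$, the quadratic $\frac{p}{2}\|x^*(y,z)-z\|^2$ is identical in both and cancels, so the left-hand side reduces to $f(x^*(y,z), y^*(x^*(y,z))) - f(x^*(y,z), y^+(z))$. Writing $x^*:=x^*(y,z)$ and $y^+:=y^+(z)$, and recalling $y^*(x^*)\in\argmax_{y'\in Y} f(x^*, y')$, this is precisely the inner-maximization optimality gap of the single projected-ascent iterate $y^+=P_Y(y+\eta_yK\nabla_y f(x^*,y))$ for the concave-in-$y$ function $f(x^*,\cdot)$.

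Next I would invoke concavity of $f(x^*,\cdot)$ (available in this section because $f$ is $\mu$-strongly concave in $y$, which is the property that also underlies Lemma \ref{constraint eb}) at the point $y^+$:
\begin{align*}
f(x^*, y^*(x^*)) - f(x^*, y^+) \leq \langle \nabla_y f(x^*, y^+),\, y^*(x^*) - y^+\rangle.
\end{align*}
The key manipulation is to split the gradient as $\nabla_y f(x^*, y^+) = \nabla_y f(x^*, y) + \big(\nabla_y f(x^*, y^+) - \nabla_y f(x^*, y)\big)$, producing a ``base-point'' term $\langle \nabla_y f(x^*, y),\, y^*(x^*)-y^+\rangle$ and a ``smoothness'' term $\langle \nabla_y f(x^*, y^+)-\nabla_y f(x^*, y),\, y^*(x^*)-y^+\rangle$.

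For the base-point term I would use the variational inequality characterizing the projection, namely $\langle y+\eta_yK\nabla_y f(x^*,y)-y^+,\, y^*(x^*)-y^+\rangle\leq 0$, which gives $\langle \nabla_y f(x^*,y), y^*(x^*)-y^+\rangle \leq \frac{1}{\eta_yK}\langle y^+-y, y^*(x^*)-y^+\rangle \leq \frac{1}{\eta_yK}\|y-y^+\|\,\|y^*(x^*)-y^+\|$; bounding $\|y^*(x^*)-y^+\|\leq \frac{2}{\mu\eta_yK}\|y-y^+\|$ via Lemma \ref{constraint eb} yields $\frac{2}{\mu\eta_y^2K^2}\|y-y^+\|^2$. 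For the smoothness term I would apply Cauchy--Schwarz, the $l$-Lipschitzness of $\nabla_y f$ (Assumption \ref{assum:smooth}), and again Lemma \ref{constraint eb}, obtaining $l\|y^+-y\|\cdot\frac{2}{\mu\eta_yK}\|y-y^+\| = \frac{2l}{\mu\eta_yK}\|y-y^+\|^2$. Adding the two pieces gives $\big(\frac{2}{\mu\eta_y^2K^2}+\frac{2l}{\mu\eta_yK}\big)\|y-y^+\|^2 = \frac{2(1+\eta_yKl)}{\mu\eta_y^2K^2}\|y-y^+\|^2$, which is exactly the claimed bound.

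The only genuinely delicate point is that the argument rests on concavity of $f(x^*,\cdot)$ together with the error-bound inequality $\|y^*(x^*)-y^+\|\leq\frac{2}{\mu\eta_yK}\|y-y^+\|$ from Lemma \ref{constraint eb}; both hold in this strongly-concave setting despite the lemma's nominal citation of the PL assumption, and the step-size condition $\eta_yK\leq 1/1000l$ is used precisely to license the invocation of Lemma \ref{constraint eb}. I expect no additional estimates are required, and the clean decomposition into the base-point and smoothness terms is exactly what reproduces the factor $(1+\eta_yKl)$ in the stated constant.
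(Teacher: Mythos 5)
Your proof is correct and follows essentially the same route as the paper's: concavity of $f(x^*(y,z),\cdot)$ evaluated at $y^+(z)$, the split of $\nabla_y f(x^*,y^+)$ into the gradient at $y$ plus a smoothness correction, the projection variational inequality for the base term, and Lemma \ref{constraint eb} to convert $\|y^*(x^*)-y^+\|$ into $\frac{2}{\mu\eta_yK}\|y-y^+\|$. Your side remark that the bound really rests on (strong) concavity rather than the nominally cited PL assumption matches what the paper actually uses as well.
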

\begin{proof}
Noting that $\hat{f}$ is $\mu$-strongly concave in $y$, we have
    \begin{align}\nonumber
        &\hat{f}(x^*(y,z), y^*(x^*(y,z)),z)-\hat{f}(x^*(y,z), y^+(z),z)\\ \nonumber
        \leq& \langle\nabla_y \hat{f}(x^*(y,z), y^+(z),z), y^*(x^*(y,z))-y^+(z)\rangle-\frac{\mu}{2}\|y^*(x^*(y,z))-y^+(z)\|\\ \nonumber
        \leq& \langle\nabla_y \hat{f}(x^*(y,z), y^+(z),z), y^*(x^*(y,z))-y^+(z)\rangle\\ \nonumber
        =& \langle\nabla_y \hat{f}(x^*(y,z), y,z), y^*(x^*(y,z))-y^+(z)\rangle+\\ \nonumber
        &\langle\nabla_y \hat{f}(x^*(y,z), y^+(z),z)-\nabla_y \hat{f}(x^*(y,z), y,z), y^*(x^*(y,z))-y^+(z)\rangle\\ \nonumber
        \overset{(a)}{\leq}& \frac{1}{\eta_yK}\langle y^+(z)-y, y^*(x^*(y,z))-y^+(z)\rangle+\\ \nonumber
        &\frac{1}{\eta_yK}\langle y+\eta_yK\nabla_y \hat{f}(x^*(y,z), y,z)-y^+(z), y^*(x^*(y,z))-y^+(z)\rangle+\\ \nonumber
        &l\|y-y^+(z)\|\|y^*(x^*(y,z))-y^+(z)\|\\ \nonumber
        \overset{(b)}{\leq}& \frac{1+\eta_yKl}{\eta_yK}\|y-y^+(z)\|\|y^*(x^*(y,z))-y^+(z)\|\\ \nonumber
        \overset{(c)}{\leq}&\frac{2(1+\eta_yKl)}{\mu\eta_y^2K^2}\|y-y^+(z)\|^2,
    \end{align}
    where in $(a)$, we use the $l$-smoothness of $f$ and $\nabla_y \hat{f}=\nabla_y f$,  in $(b)$, we use the fact that when $Y$ is a closed convex set, we have
    \begin{align}
        \langle a-P_Y(a), b-P_Y(a)\rangle\leq 0 ,\quad\forall b\in Y,
    \end{align}
    and in $(c)$, we use Lemma \ref{constraint eb}.
\end{proof}

\begin{lemma}
\label{lemma: sc Y<R}
Under Assumption \ref{assum:pl}, \ref{assum:x, bounded y}, \ref{assum:smooth}, and with $\eta_yK\leq 1/1000l$, we have
\begin{align*}
        &\|x^*(y_t,z_t)-x^*(z_t)\|^2\\
        \leq&\frac{10}{\mu\eta_y^2K^2l}\|y_t-\Bar{y}_{t+1}\|^2+\frac{10}{\mu l}\|\nabla_x \hat{f}(x_t, y_t,z_t)\|^2+\frac{40}{l^2}\|\nabla_x \hat{f}(x_t, y_t,z_t)\|^2.
\end{align*}
\end{lemma}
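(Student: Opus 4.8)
The plan is to mirror the unconstrained result Lemma \ref{lemma: sc Y=R}, but to replace the PL inequality (which is no longer directly usable once a projection is present) by the constrained gradient-mapping machinery of Lemmas \ref{constraint eb} and \ref{constraint pl}. The starting point is identical: since $\hat f(\cdot,\hat y^*(z_t),z_t)$ is $(p-l)=l$-strongly convex in $x$ and $x^*(z_t)=x^*(\hat y^*(z_t),z_t)$ is its minimizer (so its $x$-gradient vanishes there), Lemma \ref{lemma: property of sc} yields
$$\|x^*(y_t,z_t)-x^*(z_t)\|^2 \le \frac{2}{l}\big[\hat f(x^*(y_t,z_t),\hat y^*(z_t),z_t)-\hat f(x^*(z_t),\hat y^*(z_t),z_t)\big] \le \frac{2}{l}\big[\Phi(x^*(y_t,z_t),z_t)-P(z_t)\big],$$
using $\hat f(x^*(y_t,z_t),\hat y^*(z_t),z_t)\le\Phi(x^*(y_t,z_t),z_t)$ and $\hat f(x^*(z_t),\hat y^*(z_t),z_t)=P(z_t)$ at the saddle. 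So the whole task is to bound the dual suboptimality $\Phi(x^*(y_t,z_t),z_t)-P(z_t)$ \emph{quadratically} in the gradient mapping.

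The delicate point, and the main obstacle, is that in the constrained setting the naive gap $\Phi(x^*(y_t,z_t),z_t)-\hat f(x^*(y_t,z_t),y_t,z_t)$ evaluated at the \emph{current} $y_t$ is in general only \emph{linear} in $\|y_t-y^+(z_t)\|$ near an active boundary (unlike the unconstrained PL bound $\tfrac{1}{2\mu}\|\nabla_y f\|^2$), so the crude $\tfrac{2}{l}\cdot(\text{gap at }y_t)$ estimate would lose a factor of $\kappa$ and spoil the conditioning. I would sidestep this by lower bounding $P(z_t)=\max_y\Psi(y,z_t)\ge\Psi(y^+(z_t),z_t)$ and then splitting
$$\Phi(x^*(y_t,z_t),z_t)-P(z_t) \le \underbrace{\big[\Phi(x^*(y_t,z_t),z_t)-\hat f(x^*(y_t,z_t),y^+(z_t),z_t)\big]}_{(\mathrm{I})}+\underbrace{\big[\hat f(x^*(y_t,z_t),y^+(z_t),z_t)-\Psi(y^+(z_t),z_t)\big]}_{(\mathrm{II})}.$$
Term $(\mathrm{I})$ is exactly the left-hand side of Lemma \ref{constraint pl}, i.e.\ the gap evaluated at the \emph{post-step} point $y^+(z_t)$, which \emph{is} quadratic, so $(\mathrm{I})\le\tfrac{2(1+\eta_yKl)}{\mu\eta_y^2K^2}\|y_t-y^+(z_t)\|^2$. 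Term $(\mathrm{II})$ is the optimization error of $\hat f(\cdot,y^+(z_t),z_t)$ evaluated at $x^*(y_t,z_t)$ rather than at its minimizer $x^*(y^+(z_t),z_t)$; by $(p+l)$-smoothness of $\hat f(\cdot,y^+(z_t),z_t)$ together with the $y$-Lipschitzness $\|x^*(y_t,z_t)-x^*(y^+(z_t),z_t)\|\le\gamma_2\|y_t-y^+(z_t)\|$ from Lemma \ref{lemma: helper lemma of optimal x}, one gets $(\mathrm{II})\le\tfrac{(p+l)\gamma_2^2}{2}\|y_t-y^+(z_t)\|^2$. Because the prescribed step sizes force $\eta_yK\le 1/(1000l)$, the $\tfrac{1}{\mu\eta_y^2K^2}$ term dominates, giving $\Phi(x^*(y_t,z_t),z_t)-P(z_t)\le\tfrac{C}{\mu\eta_y^2K^2}\|y_t-y^+(z_t)\|^2$ for an absolute constant $C$, and hence $\|x^*(y_t,z_t)-x^*(z_t)\|^2\le\tfrac{2C}{l\mu\eta_y^2K^2}\|y_t-y^+(z_t)\|^2$, which has the desired $O(1/(l\mu))$ conditioning rather than the $O(1/\mu^2)$ one would obtain from Lemma \ref{lemma: helper lemma of optimal x} plus Lemma \ref{constraint eb} alone.

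Finally I would pass from the gradient mapping at $x^*(y_t,z_t)$ to the one at the actual iterate $x_t$. Since $\bar y_{t+1}=P_Y(y_t+\eta_yK\nabla_y f(x_t,y_t))$ and $y^+(z_t)=P_Y(y_t+\eta_yK\nabla_y f(x^*(y_t,z_t),y_t))$, nonexpansiveness of $P_Y$ and $l$-smoothness give $\|\bar y_{t+1}-y^+(z_t)\|\le\eta_yKl\,\|x_t-x^*(y_t,z_t)\|$, while $l$-strong convexity (Lemma \ref{lemma: property of sc}) gives $\|x_t-x^*(y_t,z_t)\|^2\le\tfrac{1}{l^2}\|\nabla_x\hat f(x_t,y_t,z_t)\|^2$. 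Combining with $\|y_t-y^+(z_t)\|^2\le 2\|y_t-\bar y_{t+1}\|^2+2\|\bar y_{t+1}-y^+(z_t)\|^2$ turns the bound into the advertised $\|y_t-\bar y_{t+1}\|^2$ term plus $\|\nabla_x\hat f(x_t,y_t,z_t)\|^2$ correction terms, matching the stated coefficients after routine Young/Cauchy--Schwarz bookkeeping. The only genuinely new ingredient relative to Lemma \ref{lemma: sc Y=R} is the $P(z_t)\ge\Psi(y^+(z_t),z_t)$ trick that reduces the current-point gap to the post-step gap of Lemma \ref{constraint pl}; everything else reuses the smoothness and strong-convexity estimates already assembled in Lemmas \ref{lemma: helper lemma of optimal x}--\ref{constraint pl}.
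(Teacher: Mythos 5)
Your proposal is correct and follows essentially the same route as the paper's proof: the same strong-convexity reduction to the dual gap $\Phi(x^*(y_t,z_t),z_t)-P(z_t)$, the same lower bound $P(z_t)\geq\Psi(y_t^+(z_t),z_t)$, the same split into the post-step gap handled by Lemma \ref{constraint pl} plus the $x$-optimization error at $y_t^+(z_t)$ handled via Lemma \ref{lemma: helper lemma of optimal x}, and the same final passage from $y_t^+(z_t)$ to $\Bar{y}_{t+1}$ by nonexpansiveness and strong convexity. The only (harmless) deviation is that you bound term $(\mathrm{II})$ purely by smoothness using $\nabla_x\hat{f}(x^*(y_t^+(z_t),z_t),y_t^+(z_t),z_t)=0$, which is slightly tighter than the paper's inequality \eqref{constraint sc} and simply drops the extra $\frac{40}{l^2}\|\nabla_x\hat{f}\|^2$ slack term from the stated bound.
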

\begin{proof}
Noting that since $\hat{f}(\cdot,  y, z)$ is $(l+p)$-smooth, we have
\begin{align}\nonumber
    &\hat{f}(x^*(y,z), y^+(z),z)-\hat{f}(x^*(y^+(z),z), y^+(z),z)\\ \nonumber
    \leq& \langle\nabla_x \hat{f}(x^*(y^+(z),z), y^+(z),z), x^*(y,z)-x^*(y^+(z),z) \rangle + \frac{l+p}{2}\|x^*(y,z)-x^*(y^+(z),z)\|^2\\ \nonumber
    \leq& \frac{1}{2l}\|\nabla_x \hat{f}(x^*(y^+(z),z), y^+(z),z)\|^2+\frac{2l+p}{2}\|x^*(y,z)-x^*(y^+(z),z)\|^2\\ \nonumber
    \leq& \frac{2}{l}\|\nabla_x \hat{f}(x, y,z)\|^2+\frac{2}{l}\|\nabla_x \hat{f}(x, y,z)-\nabla_x \hat{f}(x^*(y,z), y,z)\|^2+\frac{2}{l}\|\nabla_x \hat{f}(x^*(y,z), y,z)-\nabla_x \hat{f}(x^*(y^+(z),z), y,z)\|^2+\\ \nonumber
    &\frac{2}{l}\|\nabla_x \hat{f}(x^*(y^+(z),z), y,z)-\nabla_x \hat{f}(x^*(y^+(z),z), y^+(z),z)\|^2+ \frac{2l+p}{2}\|x^*(y,z)-x^*(y^+(z),z)\|^2\\ \nonumber
    \leq& \frac{2}{l}\|\nabla_x \hat{f}(x, y,z)\|^2+\frac{2(p+l)^2}{l}\|x-x^*(y,z)\|^2+\left(\frac{2(p+l)^2}{l}+\frac{2l+p}{2}\right)\|x^*(y,z)-x^*(y^+(z),z)\|^2+2l\|y-y^+(z)\|^2\\ \label{constraint sc}
    \overset{(a)}{\leq}&\frac{20}{l}\|\nabla_x \hat{f}(x, y,z)\|^2+(20\gamma_2+2)l\|y-y^+(z)\|^2,
\end{align}
where we use strong convexity of $\hat{f}(\cdot,  y, z)$ and Lemma \ref{lemma: helper lemma of optimal x} to establish $(a)$. By the strong convexity of $\hat{f}(\cdot,  y, z)$, we have
\begin{align*}
        &\|x^*(y,z)-x^*(z)\|^2\\
        \leq&\frac{2}{p-l}[\hat{f}(x^*(y,z), \hat{y}^*(z),z)-\hat{f}(x^*(z),\hat{y}^*(z), z)]\\
        \overset{(a)}{\leq}&\frac{2}{p-l}[\Phi(x^*(y,z),z)-\hat{f}(x^*(y^+(z),z), y^+(z),z)+\hat{f}(x^*(y^+(z),z), y^+(z),z)-\Phi(x^*(z),z)]\\
        \overset{(b)}{\leq}&\frac{2}{p-l}[\hat{f}(x^*(y,z), y^*(x^*(y,z)),z)-\hat{f}(x^*(y^+(z),z), y^+(z),z)]\\
        \overset{(c)}{\leq}&\frac{2}{l}[\hat{f}(x^*(y,z), y^*(x^*(y,z)),z)-\hat{f}(x^*(y,z), y^+(z),z)]+\frac{40}{l^2}\|\nabla_x \hat{f}(x, y,z)\|^2+(40\gamma_2+4)\|y-y^+(z)\|^2\\
        \overset{(d)}{\leq}&\frac{4(1+\eta_yKl)+(40\gamma_2+4)\mu l \eta_y^2K^2}{\mu\eta_y^2K^2l}\|y-y^+(z)\|^2+\frac{40}{l^2}\|\nabla_x \hat{f}(x, y,z)\|^2\\
        \leq& \frac{5}{\mu\eta_y^2K^2l}\|y-y^+(z)\|^2+\frac{40}{l^2}\|\nabla_x \hat{f}(x, y,z)\|^2,
\end{align*}
where $(a)$ is because that $\hat{f}(x^*(y,z), \hat{y}^*(z),z)\leq \Phi(x^*(y,z),z)$, $\Phi(x^*(z),z)=\hat{f}(x^*(z),\hat{y}^*(z), z)$, $(b)$ is because  $\hat{f}(x^*(y^+(z),z),y^+(z),z)\leq\Phi(x^*(z),z)$, $(c)$ is due to \eqref{constraint sc}, and $(d)$ is due to Lemma \ref{constraint pl}. Then, we have
\begin{align*}
        &\|x^*(y_t,z_t)-x^*(z_t)\|^2\\
        \leq& \frac{5}{\mu\eta_y^2K^2l}\|y_t-y_t^+(z_t)\|^2+\frac{40}{l^2}\|\nabla_x \hat{f}(x_t, y_t,z_t)\|^2\\
        \overset{(a)}{\leq}& \frac{10}{\mu\eta_y^2K^2l}\|y_t-\Bar{y}_{t+1}\|^2+\frac{10l}{\mu}\|x_t-x^*(y_t,z_t)\|^2+\frac{40}{l^2}\|\nabla_x \hat{f}(x_t, y_t,z_t)\|^2\\
        \overset{(b)}{\leq}&\frac{10}{\mu\eta_y^2K^2l}\|y_t-\Bar{y}_{t+1}\|^2+\frac{10}{\mu l}\|\nabla_x \hat{f}(x_t, y_t,z_t)\|^2+\frac{40}{l^2}\|\nabla_x \hat{f}(x_t, y_t,z_t)\|^2,
\end{align*}
where in $(a)$, we use $l$-smoothness of $f$ and in $(b)$, we use strong convexity of $\hat{f}(\cdot,  y, z)$.
\end{proof}

\begin{theorem} 
\label{thm:sc constrained y}
Under the Assumptions \ref{assum:smooth}, \ref{assum:bdd_var}, \ref{assum:bdd_hetero}, \ref{assum:phi}, \ref{assum:x, bounded y}, \ref{assum:bounded G_y}, \ref{assum:sc}, if we apply Algorithm \ref{alg1} with $p=2l,\quad \eta_x=\min\{1/(1000Kl), \quad \frac{\sqrt{m\Delta}}{\sigma\sqrt{KTl}}\},\quad \eta_y=\eta_x/256, \quad \beta=\eta_yK\mu/80000$, 
    $\eta_{x,l}\leq \min\{\frac{1}{2l\sqrt{2(2K-1)(K-1)}}, \quad \sqrt{\frac{\beta}{6144\eta_x p l^2K^3}}, \quad O(\epsilon\sqrt{\kappa^{-1} (\sigma_G^2+\sigma^2)}(Kl)^{-1})\}, \quad \eta_{y,l}\leq\min\{\frac{1}{2l\sqrt{2(2K-1)(K-1)}}, \quad \sqrt{\frac{\eta_y}{3072\eta_x l^2K^2}}, \quad O(\epsilon\sqrt{\kappa^{-1} (\sigma_G^2+\sigma^2)}(Kl)^{-1})\}$, 
    when $m=M$ or $\sigma_G=0$, with $T=\Theta(\kappa\epsilon^{-2}), K=\Theta(\kappa m^{-1}\epsilon^{-2})$, we can find an $(\epsilon, \epsilon/\sqrt{\kappa})$-stationary point of $f$ with a per-client sample complexity of $O(\kappa^2m^{-1}\epsilon^{-4})$ and a communication complexity of $O(\kappa\epsilon^{-2})$. Here, $\Delta=V_0-\Phi^*$, $\kappa=l/\mu$.

\end{theorem}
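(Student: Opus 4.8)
The plan is to follow the same skeleton as the proof of Theorem~\ref{thm:sc}, replacing each unconstrained ingredient by its projected/strongly-concave counterpart; note that only the $m=M$ or $\sigma_G=0$ case is claimed here, so the heterogeneity terms will drop out. I would start from the constrained branch of the potential-descent estimate in Lemma~\ref{lemma: v}, which already isolates the positive terms $\frac{\eta_x K}{32}\|\nabla_x\hat{f}(w_t,z_t)\|^2$, $\frac{1}{16\eta_y K}\|\bar{y}_{t+1}-y_t\|^2$, $\frac{p\beta}{16}\|x_t-z_t\|^2$, together with the troublesome coupling term $-24p\beta\,\mathbb{E}\|x^*(z_t)-x^*(y_t,z_t)\|^2$ and the stochastic/local-update errors. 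The first key step is to control this coupling term: since strong concavity (Assumption~\ref{assum:sc}) implies the PL-type inequality, I invoke the constrained error bound Lemma~\ref{lemma: sc Y<R}, which bounds $\|x^*(y_t,z_t)-x^*(z_t)\|^2$ by a combination of $\frac{1}{\mu\eta_y^2K^2 l}\|y_t-\bar{y}_{t+1}\|^2$ and $\frac{1}{\mu l}\|\nabla_x\hat{f}(w_t,z_t)\|^2$. Substituting $p=2l$ and choosing $\beta=\eta_yK\mu/80000$, these two contributions are each dominated by at most half of the positive terms $\frac{1}{16\eta_yK}\|\bar{y}_{t+1}-y_t\|^2$ and $\frac{\eta_xK}{32}\|\nabla_x\hat{f}\|^2$ (this bookkeeping is exactly what produces the constant $1/80000$, mirroring the NC-PL computation). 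After absorption the descent reads, up to error terms, $\mathbb{E}V_t-\mathbb{E}V_{t+1}\ge \frac{\eta_xK}{64}\|\nabla_x\hat{f}\|^2+\frac{1}{32\eta_yK}\|\bar{y}_{t+1}-y_t\|^2+\frac{p\beta}{16}\|x_t-z_t\|^2$.

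Next I would translate these quantities into the stationarity measures of Definition~\ref{def:f}. For the $x$-component I use the elementary bound \eqref{nabla_x f}, $\|\nabla_x f(w_t)\|^2\le 2\|\nabla_x\hat{f}(w_t,z_t)\|^2+2p^2\|x_t-z_t\|^2$. For the $y$-component the new difficulty is that the iterate uses the ascent step $\bar{y}_{t+1}=P_Y(y_t+\eta_yK\nabla_y f(w_t))$ with stepsize $\eta_yK$, whereas the measure $l\|P_Y(\hat{y}+\tfrac1l\nabla_y f)-\hat{y}\|$ uses stepsize $1/l$. Because the stepsize caps force $\eta_yK\le 1/l$, the standard monotonicity of the projected-gradient (gradient-mapping) operator gives $l\,\|P_Y(y_t+\tfrac1l\nabla_y f(w_t))-y_t\|\le \frac{1}{\eta_yK}\|\bar{y}_{t+1}-y_t\|$, so the squared $y$-stationarity is bounded by $\frac{1}{\eta_y^2K^2}\|\bar{y}_{t+1}-y_t\|^2$. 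I then form the single weighted quantity $\|\nabla_x f(w_t)\|^2+\kappa\,(\text{$y$-stationarity})^2$ and bound it by $\max\{\tfrac{O(1)}{\eta_xK},\tfrac{O(1)\kappa}{\eta_yK},\tfrac{O(1)p}{\beta}\}$ times the bracketed descent; with $\eta_y=\eta_x/256$, $p=2l$, $\beta=\eta_yK\mu/80000$ this maximum collapses to $\tfrac{O(1)\kappa}{\eta_xK}$.

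Finally I telescope over $t=0,\dots,T-1$, use $V_0-\min_t V_t\le\Delta$ from \eqref{v_0-v_t}, and divide by $T$, yielding $\frac1T\sum_t(\|\nabla_x f(w_t)\|^2+\kappa(\text{$y$-stat})^2)\le \frac{O(1)\kappa\Delta}{\eta_xKT}+O(1)\kappa\eta_x l\tfrac{\sigma^2}{m}+(\text{local-step errors})$. With $m=M$ or $\sigma_G=0$ the heterogeneity terms vanish, the local-step-size caps $\eta_{x,l},\eta_{y,l}=O(\epsilon\sqrt{\kappa^{-1}(\sigma_G^2+\sigma^2)}(Kl)^{-1})$ force the residual local-update errors (including the extra $\frac{4\eta_y\sigma^2}{m}$ and $G_y^2$ terms specific to the constrained case) to be $O(\epsilon^2)$, and substituting $\eta_x=\min\{1/(1000Kl),\sqrt{m\Delta}/(\sigma\sqrt{KTl})\}$ makes the first two terms $O(\kappa/\sqrt{mKT})$. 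Plugging $K=\Theta(\kappa m^{-1}\epsilon^{-2})$ and $T=\Theta(\kappa\epsilon^{-2})$ gives $\sqrt{mKT}=\Theta(\kappa\epsilon^{-2})$, so the whole average is $O(\epsilon^2)$; choosing the best (or a uniformly random) iterate yields an $(\epsilon,\epsilon/\sqrt{\kappa})$-stationary point of $f$, at cost $KT=O(\kappa^2m^{-1}\epsilon^{-4})$ per-client samples and $T=O(\kappa\epsilon^{-2})$ communication rounds. I expect the main obstacle to be the constrained part concentrated in Lemma~\ref{lemma: sc Y<R}: establishing the quadratic-growth bound on $\|x^*(y_t,z_t)-x^*(z_t)\|^2$ when the projection blocks the direct PL argument available in the $Y=\mathbb{R}^{d_2}$ case (this is where the auxiliary error bounds relating $\|y-y^*(x)\|$ and $\|y^+(z)-y^*(x)\|$ to $\|y-\bar{y}_{t+1}\|$ enter), together with the gradient-mapping monotonicity that converts the projected ascent displacement into the stationarity measure.
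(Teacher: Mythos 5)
Your proposal follows essentially the same route as the paper's proof: it starts from the constrained branch of Lemma \ref{lemma: v}, absorbs the coupling term $24p\beta\,\mathbb{E}\|x^*(z_t)-x^*(y_t,z_t)\|^2$ via the error bound of Lemma \ref{lemma: sc Y<R} with $\beta=\eta_yK\mu/80000$, converts $\|\bar{y}_{t+1}-y_t\|$ to the gradient-mapping stationarity measure using $\eta_yK\le 1/l$, bounds $\|\nabla_x f\|^2$ via \eqref{nabla_x f}, and telescopes with the stated parameter choices. The bookkeeping and the resulting complexities match the paper's argument, so the proposal is correct and not materially different.
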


\begin{proof}
    Combining Lemma \ref{lemma: v}, Lemma \ref{lemma: sc Y<R}, with $\beta=\eta_yK\mu/80000$, we have

	\begin{align} \nonumber
	\mathbb{E} V _t -\mathbb{E} V _{t+1} \geq &\left(\frac{\eta_xK}{32}-\frac{480\beta}{\mu}-\frac{1920\beta}{l}\right)\mathbb{E}\|\nabla_x \hat{f}(w_t,z_t)\|^2 +\left(\frac{\eta_yK}{16}-\frac{480\beta }{\mu}\right)\frac{1}{\eta_y^2K^2}\mathbb{E}\|\Bar{y}_{t+1}-y_t\|^2+\frac{p\beta}{16}\mathbb{E}\|x_t-z_t\|^2-\\ \nonumber
    &25l\eta_x^2K^2(M-m)\frac{\sigma_G^2}{mM}-15l\eta_x^2K\frac{\sigma^2}{m}-8\eta_yK(M-m)\frac{\sigma_G^2}{mM}-\frac{4\eta_y\sigma^2}{m}-\\ \nonumber
    &4\eta_xKl^2[24K^2(\eta_{x,l}^2+\eta_{y,l}^2)(\sigma_G^2+G_y^2)+3K(\eta_{x,l}^2+2K\eta_{y,l}^2)\sigma^2]\\ \nonumber
    \geq&\frac{\eta_xK}{64}\mathbb{E}\|\nabla_x \hat{f}(w_t,z_t)\|^2+\frac{1}{32\eta_yK}\mathbb{E}\|\Bar{y}_{t+1}-y_t\|^2+\frac{p\beta}{16}\mathbb{E}\|x_t-z_t\|^2-\\ \nonumber
    &25l\eta_x^2K^2(M-m)\frac{\sigma_G^2}{mM}-15l\eta_x^2K\frac{\sigma^2}{m}-8\eta_yK(M-m)\frac{\sigma_G^2}{mM}-\frac{4\eta_y\sigma^2}{m}-\\ \label{v, sc constrained}
    &4\eta_xKl^2[24K^2(\eta_{x,l}^2+\eta_{y,l}^2)(\sigma_G^2+G_y^2)+3K(\eta_{x,l}^2+2K\eta_{y,l}^2)\sigma^2].
	\end{align}
With $T=mK=\Theta(\kappa\epsilon^{-2}), K=\Theta(\kappa m^{-1}\epsilon^{-2})$, $\eta_x=\min\{1/(1000Kl), \frac{\sqrt{m\Delta}}{\sigma\sqrt{KTl}}\}=\Theta(\kappa^{-1}m\epsilon^2)$, $\beta=\eta_yK\mu/80000=\Theta(\kappa^{-2}m\epsilon^2)$, when $M=m$ or $\sigma_G=0$, and $\eta_{x,l}^2\leq O(\kappa^{-1}\epsilon^2) K^{-2}, \eta_{y,l}^2\leq O(\kappa^{-1}\epsilon^2) K^{-2}$, we have
 
	\begin{align}
 \label{sc nabla_x hat f}
		\frac{1}{T}\sum_{t=0}^{T-1}\mathbb{E}\|\nabla_x \hat{f}(w_t,z_t)\|^2 \leq &\frac{O(1)}{\eta_x KT}\Delta + O(1)\frac{\eta_xl\sigma^2}{m}+O(1)\frac{\sigma^2}{mK}+O(1)\kappa^{-1}\epsilon^2\leq O(1)\kappa^{-1}\epsilon^2
	\end{align}
	\begin{align}
 \label{sc y_t}
		\frac{1}{T}\sum_{t=0}^{T-1}\frac{1}{\eta_y^2K^2}\mathbb{E}\|\Bar{y}_{t+1}-y_t\|^2 \leq &\frac{O(1)}{\eta_x KT}\Delta + O(1)\frac{\eta_xl\sigma^2}{m}+O(1)\frac{\sigma^2}{mK}+O(1)\kappa^{-1}\epsilon^2\leq O(1)\kappa^{-1}\epsilon^2
	\end{align}
	\begin{align}
 \label{sc z_t}
		\frac{1}{T}\sum_{t=0}^{T-1}p^2\mathbb{E}\|x_t-z_t\|^2 \leq &\frac{O(1)\kappa}{\eta_x KT}\Delta + O(1)\frac{\kappa\eta_xl\sigma^2}{m}+O(1)\frac{\kappa\sigma^2}{mK}+O(1)\epsilon^2\leq O(1)\epsilon^2
	\end{align}
 
Because $\eta_yK\leq 1/l$, we have
\begin{align}\nonumber
        &l^2\|P_Y(y_t+1/l\nabla_y f(x_t,y_t))-y_t\|^2\\ \nonumber
        \leq& \frac{1}{\eta_y^2K^2}\|P_Y(y_t+\eta_yK\nabla_y f(x_t,y_t))-y_t\|^2\\ \nonumber
        =&\frac{1}{\eta_y^2K^2}\|\Bar{y}_{t+1}-y_t\|^2.
\end{align}
So, we have 
\begin{align}
    \frac{1}{T}\sum_{t=0}^{T-1}l^2\mathbb{E}\|P_Y(y_t+1/l\nabla_y f(x_t,y_t))-y_t\|^2\leq O(1)\kappa^{-1}\epsilon^2.
\end{align}
According to \eqref{nabla_x f}, we have
\begin{align}
    &\frac{1}{T}\sum_{t=0}^{T-1} \mathbb{E}\|\nabla_x f(x_{t},y_{t})\|^2\leq \frac{1}{T}\sum_{t=0}^{T-1}  2\mathbb{E}\|\nabla_x \hat{f}(w_t,z_t)\|^2+2p^2\mathbb{E}\|x_t-z_t\|^2\leq O(1)\epsilon^2.
\end{align}
Thus, we can find an $(\epsilon,\epsilon/\sqrt{\kappa})$-stationary point of $f$, with $K=O(\kappa m^{-1}\epsilon^{-2}), T=O(\kappa\epsilon^{-2})$, which means a per-client sample complexity of $KT=O(\kappa^2m^{-1}\epsilon^{-4})$ and a communication complexity of $T=O(\kappa\epsilon^{-2})$.
\end{proof}

\begin{corollary}

 \label{thm:sc constrained y, centralized deterministic}
Under the Assumptions \ref{assum:smooth}, \ref{assum:phi}, \ref{assum:x, bounded y}, \ref{assum:bounded G_y}, \ref{assum:sc}, when $M=1$, if we apply 
Algorithm \ref{alg2} with $p=2l,\quad \eta_x=1/(1000Kl),\quad \eta_y=\eta_x/256, \quad \beta=\eta_yK\mu/80000$, we could have
\begin{align*}\nonumber
    \frac{1}{T}\sum_{t=0}^{T-1} \|\nabla_x f(x_{t},y_{t})\|^2+\kappa l^2\|P_Y(y_t+1/l\nabla_y f(x_t,y_t))-y_t\|^2
    \leq& \frac{cl\Delta\kappa}{T},
\end{align*}
where $\Delta=V_0-\Phi^*$, $\kappa=l/\mu$, $c$ is an $O(1)$ constant.
This implies an sample of $O(\kappa\epsilon^{-2})$ to find an $(\epsilon, \epsilon/\sqrt{\kappa})$-stationary point of $f$.
    
\end{corollary}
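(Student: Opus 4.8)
The plan is to obtain Corollary~\ref{thm:sc constrained y, centralized deterministic} as a direct specialization of the federated NC-SC analysis (Theorem~\ref{thm:sc constrained y}) to the centralized deterministic regime $M=m=1$, $\sigma=\sigma_G=0$, $K=1$, in which \alg coincides with Smoothed-GDA (Algorithm~\ref{alg2}). The first step is to observe that in this regime all local-drift and sampling-error terms vanish identically: with $K=1$ the only local iterate equals the global one, $w^1_{t,i}=w_t$, so $\Bar{e}_{x,t}=\Bar{e}_{y,t}=0$; with a single client and no stochastic noise, $u_{x,t}=\nabla_x f(w_t)$ and $e_{x,t}=0$, whence $d_{x,t}=d_{y,t}=0$. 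Feeding these into the per-step descent estimate \eqref{v, sc constrained}, which already combines Lemma~\ref{lemma: v} with the error-bound Lemma~\ref{lemma: sc Y<R} under the choice $\beta=\eta_yK\mu/80000$, collapses the bound to the clean inequality $V_t-V_{t+1}\geq \frac{\eta_xK}{64}\|\nabla_x \hat{f}(w_t,z_t)\|^2+\frac{1}{32\eta_yK}\|\Bar{y}_{t+1}-y_t\|^2+\frac{p\beta}{16}\|x_t-z_t\|^2$ with no residual noise.

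Next I would telescope this inequality over $t=0,\dots,T-1$ and use $V_0-V_T\le \Delta$ from \eqref{v_0-v_t} to get $\frac1T\sum_{t}\big[\tfrac{\eta_xK}{64}\|\nabla_x\hat f(w_t,z_t)\|^2+\tfrac{1}{32\eta_yK}\|\Bar y_{t+1}-y_t\|^2+\tfrac{p\beta}{16}\|x_t-z_t\|^2\big]\le \Delta/T$. The remaining work is to convert the three averaged potential-gap quantities into the two stationarity measures of $f$. For the $x$-part I would use \eqref{nabla_x f}, $\|\nabla_x f(x_t,y_t)\|^2\le 2\|\nabla_x\hat f(w_t,z_t)\|^2+2p^2\|x_t-z_t\|^2$, so the $\|\nabla_x\hat f\|^2$ and $\|x_t-z_t\|^2$ terms jointly dominate $\|\nabla_x f\|^2$; for the $y$-part, since $\eta_yK\le 1/l$ and the projection is nonexpansive, the same step as in the proof of Theorem~\ref{thm:sc constrained y} gives $l^2\|P_Y(y_t+\tfrac1l\nabla_y f(x_t,y_t))-y_t\|^2\le \tfrac{1}{\eta_y^2K^2}\|\Bar y_{t+1}-y_t\|^2$.

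Finally I would substitute the stated parameters $p=2l$, $\eta_x=1/(1000Kl)$, $\eta_y=\eta_x/256$, $\beta=\eta_yK\mu/80000$ (with $K=1$) and track constants. Each RHS term is matched to the corresponding piece of the target $\|\nabla_x f\|^2+\kappa l^2\|P_Y(\cdot)-y_t\|^2$ with a uniform factor of order $l\kappa$: the $x$-gradient term needs $1/\eta_x=O(l)$, the drift term needs $l/\beta=O(l\kappa)$, and the $y$-term needs $\kappa/\eta_y=O(l\kappa)$, all bounded by $cl\kappa$ for an $O(1)$ constant $c$. This yields $\frac1T\sum_t\big(\|\nabla_x f(x_t,y_t)\|^2+\kappa l^2\|P_Y(y_t+\tfrac1l\nabla_y f(x_t,y_t))-y_t\|^2\big)\le \tfrac{cl\Delta\kappa}{T}$. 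Choosing $T=O(\kappa\epsilon^{-2})$ makes this average at most $\epsilon^2$, so the best (or a randomly drawn) iterate is $(\epsilon,\epsilon/\sqrt{\kappa})$-stationary; since each deterministic iteration costs exactly one gradient evaluation, the sample complexity is $T=O(\kappa\epsilon^{-2})$.

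The genuinely delicate content is not this telescoping but the coefficient balancing already carried out inside Lemma~\ref{lemma: sc Y<R} and the proof of Theorem~\ref{thm:sc constrained y}: after absorbing the cross term $-24p\beta\|x^*(z_t)-x^*(y_t,z_t)\|^2$ of Lemma~\ref{lemma: v}, one must verify that the choice $\beta=\Theta(\eta_y\mu)$ leaves strictly positive coefficients on $\|\nabla_x\hat f\|^2$ and $\|\Bar y_{t+1}-y_t\|^2$. This is exactly where the $\mu$-strong concavity and the error-bound Lemmas~\ref{constraint eb}--\ref{constraint pl} enter, and where the entire $\kappa$-dependence of the final rate originates; for the corollary itself the only new obstacle is confirming the vanishing of every noise term and shepherding the $O(1)$ constants into the single factor $cl\kappa$.
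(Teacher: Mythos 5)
Your proposal is correct and follows essentially the same route as the paper: specialize the per-step potential descent \eqref{v, sc constrained} from Theorem \ref{thm:sc constrained y} to $M=m=1$, $K=1$, $\sigma=\sigma_G=0$ (so all noise and drift terms vanish), telescope against $\Delta$, and convert via \eqref{nabla_x f} and the nonexpansiveness of $P_Y$ with $\eta_y K\le 1/l$, tracking the $O(l\kappa)$ constants exactly as the paper does.
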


\begin{proof}
Applying Algorithm \ref{alg2} with $p=2l,\quad \eta_x=1/(1000l),\quad \eta_y=\eta_x/256, \quad \beta=\eta_yK\mu/80000$ is equivalent to applying
Algorithm \ref{alg1} with $m=M=1$, $K=1$, $p=2l,\quad \eta_x=\min\{1/(1000Kl), \quad \frac{\sqrt{m\Delta}}{\sigma\sqrt{KTl}}\},\quad \eta_y=\eta_x/256, \quad \beta=\eta_yK\mu/80000$ and any appropriate $\eta_{x,l}, \eta_{y,l}$. Thus, according to Theorem \ref{thm:sc constrained y} and (\ref{v, sc constrained}), we have
\begin{align} 
    \mathbb{E} V _t -\mathbb{E} V _{t+1}
    \geq&\frac{\eta_x}{64}\mathbb{E}\|\nabla_x \hat{f}(w_t,z_t)\|^2+\frac{1}{32\eta_y}\mathbb{E}\|\Bar{y}_{t+1}-y_t\|^2+\frac{p\beta}{16}\mathbb{E}\|x_t-z_t\|^2
\end{align}
Telescoping and rearranging, we have
\begin{align}
 \label{sc nabla_x hat f, centralized deterministic}
		\frac{1}{T}\sum_{t=0}^{T-1}\|\nabla_x \hat{f}(w_t,z_t)\|^2 \leq &\frac{64}{\eta_x T}\Delta 
\end{align}
\begin{align}
 \label{sc y_t, centralized deterministic}
		\frac{1}{T}\sum_{t=0}^{T-1}\frac{1}{\eta_y^2}\|\Bar{y}_{t+1}-y_t\|^2 \leq &\frac{32}{\eta_y T}\Delta
\end{align}
\begin{align}
 \label{sc z_t, centralized deterministic}
		\frac{1}{T}\sum_{t=0}^{T-1}p^2\|x_t-z_t\|^2 \leq &\frac{16}{Tp\beta}\Delta=\frac{O(1)\kappa}{\eta_x T} \Delta
\end{align}

Because $\eta_y\leq 1/l$, we have
\begin{align}\nonumber
        &l^2\|P_Y(y_t+1/l\nabla_y f(x_t,y_t))-y_t\|^2\\ \nonumber
        \leq& \frac{1}{\eta_y^2}\|P_Y(y_t+\eta_y\nabla_y f(x_t,y_t))-y_t\|^2\\ \nonumber
        =&\frac{1}{\eta_y^2}\|\Bar{y}_{t+1}-y_t\|^2.
\end{align}

Thus, we have
\begin{align*}\nonumber
    &\frac{1}{T}\sum_{t=0}^{T-1} \|\nabla_x f(x_{t},y_{t})\|^2+\kappa l^2\|P_Y(y_t+1/l\nabla_y f(x_t,y_t))-y_t\|^2\\
    \leq& \frac{1}{T}\sum_{t=0}^{T-1}  2\|\nabla_x \hat{f}(w_t,z_t)\|^2+2p^2\|x_t-z_t\|^2+\frac{\kappa}{\eta_y^2}\|\Bar{y}_{t+1}-y_t\|^2\\
    \leq& \frac{O(1)l\Delta\kappa}{T}.
\end{align*}

\end{proof}
\section{Nonconvex-One-Point-Concave}
\label{app: nc-1pc}
\begin{lemma}
\label{lemma: 1pc}
    Under the Assumptions \ref{assum:smooth},  \ref{assum:x, bounded y}, \ref{assum:1pc_y}, we have
\begin{align*}
    \|x^*(z)-x^*(y^+(z),z)\|^2\leq \frac{2(1+\eta_yKl+\eta_yKl\gamma_2)}{\eta_yK(p-l)}\|y-y^+(z)\|D(Y),
\end{align*}
where $D(Y)$ is the diameter of $Y$.
\end{lemma}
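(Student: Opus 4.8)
The plan is to combine the $(p-l)$-strong convexity of $\hat f(\cdot,y,z)$ in $x$ with the one-point-concavity of $f$ in $y$, while crucially exploiting that $y^+(z)$ is a \emph{projected gradient ascent step} on the dual so as to keep a factor of $\|y-y^+(z)\|$ rather than collapsing to an absolute constant. Since $p>l$, the function $\hat f(\cdot,y^+(z),z)$ is $(p-l)$-strongly convex with minimizer $x^*(y^+(z),z)$, which gives the starting point
\begin{align*}
\frac{p-l}{2}\|x^*(z)-x^*(y^+(z),z)\|^2\le \hat f(x^*(z),y^+(z),z)-\Psi(y^+(z),z),
\end{align*}
so it remains to bound the right-hand side by $\tfrac{1+\eta_yKl+\eta_yKl\gamma_2}{\eta_yK}\|y-y^+(z)\|D(Y)$.

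The next step reduces this mixed gap to a pure $y$-gap evaluated at a single $x$. By definition $\hat f(x^*(z),y^+(z),z)\le\Phi(x^*(z),z)=P(z)=\hat f(x^*(z),y_a,z)$, where $y_a\in\argmax_y f(x^*(z),y)$. First-order optimality of $x^*(z)$ for the strongly convex $\Phi(\cdot,z)$ together with Danskin's theorem yields $\nabla_x\hat f(x^*(z),y_a,z)=0$, i.e.\ $x^*(z)=x^*(y_a,z)$ minimizes $\hat f(\cdot,y_a,z)$. Writing $b:=x^*(y^+(z),z)$ and using $\Psi(y^+(z),z)=\hat f(b,y^+(z),z)$, this gives
\begin{align*}
\hat f(x^*(z),y^+(z),z)-\Psi(y^+(z),z)\le \hat f(x^*(z),y_a,z)-\hat f(b,y^+(z),z)\le f(b,y_a)-f(b,y^+(z)),
\end{align*}
where the last inequality uses $\hat f(x^*(z),y_a,z)\le\hat f(b,y_a,z)$ (as $x^*(z)$ minimizes $\hat f(\cdot,y_a,z)$) and the cancellation of the quadratic term.

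Finally I would bound $f(b,y_a)-f(b,y^+(z))$ using one-point-concavity and the projection inequality. Setting $\hat x:=x^*(y,z)$ so that $y^+(z)=P_Y(y+\eta_yK\nabla_y f(\hat x,y))$, bounding $f(b,y_a)\le f(b,y^*(b))$ and applying Assumption \ref{assum:1pc_y} at $x=b$ gives $f(b,y_a)-f(b,y^+(z))\le \langle\nabla_y f(b,y^+(z)),\,y^*(b)-y^+(z)\rangle$. I then split this gradient as $\nabla_y f(\hat x,y)$ plus a smoothness remainder. The three-point optimality of the projection, applied with $w=y^*(b)\in Y$, yields
\begin{align*}
\langle\nabla_y f(\hat x,y),\,y^*(b)-y^+(z)\rangle\le\frac{1}{\eta_yK}\langle y^+(z)-y,\,y^*(b)-y^+(z)\rangle\le\frac{1}{\eta_yK}\|y-y^+(z)\|D(Y),
\end{align*}
while $l$-smoothness together with Lemma \ref{lemma: helper lemma of optimal x} (so that $\|b-\hat x\|\le\gamma_2\|y^+(z)-y\|$) bounds the remainder by $l(1+\gamma_2)\|y-y^+(z)\|D(Y)$. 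Summing the two contributions produces the numerator $1+\eta_yKl+\eta_yKl\gamma_2$, and substituting back into the strong-convexity inequality gives exactly the claimed bound.

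The main obstacle is twofold. First is the identity $x^*(z)=x^*(y_a,z)$: it rests on Danskin's theorem, and when $\argmax_y f(x^*(z),\cdot)$ is not a singleton one must replace $y_a$ by the convex combination of maximizers supplied by the subdifferential of $\Phi(\cdot,z)$; the Step-2 argument then survives verbatim, since every maximizer attains the same value and convexity is applied termwise. Second, and more important, is retaining the factor $\|y-y^+(z)\|$: a naive bound $\|\nabla_y f\|\le G_y$ would only give an $O(D(Y))$ constant, so it is essential to route the estimate through the projection three-point inequality, which converts the projected-ascent displacement $\|y^+(z)-y\|$ into the required linear factor.
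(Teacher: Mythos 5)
Your proof is correct, and its second half --- bounding $f(b,y^*(b))-f(b,y^+(z))$ by applying Assumption \ref{assum:1pc_y} at $x=b=x^*(y^+(z),z)$, splitting $\nabla_y f(b,y^+(z))$ into $\nabla_y f(x^*(y,z),y)$ plus a smoothness remainder controlled via Lemma \ref{lemma: helper lemma of optimal x}, and using the projection inequality $\langle a-P_Y(a),w-P_Y(a)\rangle\le 0$ to convert the ascent displacement into the factor $\|y-y^+(z)\|$ --- is essentially identical to the paper's key display and produces the same constant $\frac{1+\eta_yKl+\eta_yKl\gamma_2}{\eta_yK}$. Where you differ is the reduction from $\|x^*(z)-x^*(y^+(z),z)\|^2$ to that gap. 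The paper anchors strong convexity at the dual-optimal $\hat{y}^*(z)$, writing $\|x^*(y^+(z),z)-x^*(z)\|^2\le\frac{2}{p-l}[\hat f(x^*(y^+(z),z),\hat y^*(z),z)-\hat f(x^*(z),\hat y^*(z),z)]$ and then sandwiching via $\Phi(x^*(y^+(z),z),z)$ and $\Phi(x^*(z),z)$; this implicitly uses that $x^*(z)$ minimizes $\hat f(\cdot,\hat y^*(z),z)$ and that $\hat f(x^*(z),\hat y^*(z),z)=\Phi(x^*(z),z)$, i.e.\ a saddle-point/minimax-equality relation for $\hat f(\cdot,\cdot,z)$. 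You instead anchor strong convexity at $y^+(z)$ itself and obtain $P(z)\le\hat f(b,y_a,z)$ from first-order optimality of $x^*(z)$ for the strongly convex $\Phi(\cdot,z)$ via Danskin's theorem, with the convex-combination patch correctly handling a non-singleton $\argmax$. Your route buys rigor: it avoids the unverified saddle identity under mere one-point concavity, at the mild cost of invoking the subdifferential formula for a max over the compact set $Y$. Both reductions land on the same quantity, so the final bound is exactly the one claimed.
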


\begin{proof}
    Note that Under the Assumption \ref{assum:1pc_y}, we have
    \begin{align}\nonumber
        &\hat{f}(x^*(y^+(z),z), y^*(x^*(y^+(z),z)),z)-\hat{f}(x^*(y^+(z),z), y^+(z),z)\\ \nonumber
        \leq& \langle\nabla_y \hat{f}(x^*(y^+(z),z), y^+(z),z), y^*(x^*(y^+(z),z))-y^+(z)\rangle\\ \nonumber
        =& \langle\nabla_y \hat{f}(x^*(y,z), y,z), y^*(x^*(y^+(z),z))-y^+(z)\rangle+\\ \nonumber
        &\langle\nabla_y \hat{f}(x^*(y^+(z),z), y^+(z),z)-\nabla_y \hat{f}(x^*(y,z), y,z), y^*(x^*(y^+(z),z))-y^+(z)\rangle\\ \nonumber
        \overset{(a)}{\leq}& \frac{1}{\eta_yK}\langle y^+(z)-y, y^*(x^*(y^+(z),z))-y^+(z)\rangle+\\ \nonumber
        &\frac{1}{\eta_yK}\langle y+\eta_yK\nabla_y \hat{f}(x^*(y,z), y,z)-y^+(z), y^*(x^*(y^+(z),z))-y^+(z)\rangle+\\ \nonumber
        &(l+l\gamma_2)\|y-y^+(z)\|\|y^*(x^*(y^+(z),z))-y^+(z)\|\\ \nonumber
        \overset{(b)}{\leq}& \frac{1+\eta_yKl+\eta_yKl\gamma_2}{\eta_yK}\|y-y^+(z)\|\|y^*(x^*(y^+(z),z))-y^+(z)\|\\ \label{1pc 1}
        \leq&\frac{1+\eta_yKl+\eta_yKl\gamma_2}{\eta_yK}\|y-y^+(z)\|D(Y),
    \end{align}
    where in $(a)$, we use the $l$-smoothness of $f$ , $\nabla_y \hat{f}=\nabla_y f$ and Lemma \ref{lemma: helper lemma of optimal x}, in $(b)$, we use the fact that when $Y$ is a closed, convex set, we have
    \begin{align}
        \langle a-P_Y(a), b-P_Y(a)\rangle\leq 0 ,\quad\forall b\in Y. 
    \end{align}
    
Then by the strong convexity of $\hat{f}(\cdot,  y, z)$, we have
\begin{align*}
        &\|x^*(y^+(z),z)-x^*(z)\|^2\\
        \leq&\frac{2}{p-l}[\hat{f}(x^*(y^+(z),z), \hat{y}^*(z),z)-\hat{f}(x^*(z),\hat{y}^*(z), z)]\\
        \overset{(a)}{\leq}&\frac{2}{p-l}[\Phi(x^*(y^+(z),z),z)-\hat{f}(x^*(y^+(z),z), y^+(z),z)+\hat{f}(x^*(y^+(z),z), y^+(z),z)-\Phi(x^*(z),z)]\\
        \overset{(b)}{\leq}&\frac{2}{p-l}[\hat{f}(x^*(y^+(z),z), y^*(x^*(y^+(z),z)),z)-\hat{f}(x^*(y^+(z),z), y^+(z),z)]\\
        \overset{(c)}{\leq}&\frac{2(1+\eta_yKl+\eta_yKl\gamma_2)}{\eta_yK(p-l)}\|y-y^+(z)\|D(Y).
\end{align*}
$(a)$ can be explained by the fact that $\hat{f}(x^*(y^+(z),z), \hat{y}^*(z),z)\leq \Phi(x^*(y^+(z),z),z)$. $(b)$ arises from the relationship $\hat{f}(x^*(y^+(z),z),y^+(z),z)\leq\Phi(x^*(z),z)$. $(c)$ can be attributed to \eqref{1pc 1}.
\end{proof}

\begin{lemma}
\label{lemma: c, epsilon}
Under the Assumptions \ref{assum:smooth},  \ref{assum:x, bounded y}, \ref{assum:1pc_y},  and when $\eta_y\leq 1/(1000Kl), p=2l$, we have
\begin{align*}
    \|x^*(z)-x^*(y^+(z),z)\|^2\leq \frac{6D(Y)}{l\eta_y^2K^2\epsilon^2}\|y^+(z)-y\|^2+\frac{2D(Y)}{l}\epsilon^2.
\end{align*}
\end{lemma}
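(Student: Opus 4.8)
The plan is to take the linear-in-$\|y-y^+(z)\|$ estimate already established in Lemma \ref{lemma: 1pc} and convert it into the stated quadratic-plus-$\epsilon^2$ form by a single application of Young's inequality with an $\epsilon$-dependent scaling. No new structural idea is needed beyond this conversion; Lemma \ref{lemma: 1pc} does all the geometric work.

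First I would substitute the parameter choice $p=2l$ into the conclusion of Lemma \ref{lemma: 1pc}. With $p=2l$ we have $p-l=l$ and, by Lemma \ref{lemma: helper lemma of optimal x}, $\gamma_2=\tfrac{l+p}{-l+p}=3$, so the prefactor $1+\eta_yKl+\eta_yKl\gamma_2$ collapses to $1+4\eta_yKl$. The hypothesis $\eta_y\le 1/(1000Kl)$ gives $\eta_yKl\le 1/1000$, hence $1+4\eta_yKl\le 2$. Feeding these into Lemma \ref{lemma: 1pc} yields
\begin{align*}
\|x^*(z)-x^*(y^+(z),z)\|^2 \le \frac{4D(Y)}{\eta_yKl}\,\|y-y^+(z)\|.
\end{align*}

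Next I would rewrite the right-hand side as $\tfrac{4D(Y)}{l}\cdot\tfrac{\|y-y^+(z)\|}{\eta_yK}$ and apply the elementary inequality $ab\le \tfrac12 a^2+\tfrac12 b^2$ with $a=\tfrac{\|y-y^+(z)\|}{\eta_yK\epsilon}$ and $b=\epsilon$, that is,
\begin{align*}
\frac{\|y-y^+(z)\|}{\eta_yK}=\frac{\|y-y^+(z)\|}{\eta_yK\epsilon}\cdot\epsilon \le \frac12\left(\frac{\|y-y^+(z)\|^2}{\eta_y^2K^2\epsilon^2}+\epsilon^2\right).
\end{align*}
Multiplying through by $\tfrac{4D(Y)}{l}$ produces $\tfrac{2D(Y)}{l\eta_y^2K^2\epsilon^2}\|y-y^+(z)\|^2+\tfrac{2D(Y)}{l}\epsilon^2$, which is dominated by the claimed bound since $2\le 6$ in the first coefficient and the second coefficient already matches exactly. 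This completes the argument.

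There is no genuine analytical obstacle here; the only delicate point is the constant bookkeeping—specifically, choosing the Young's-inequality split (the factor $\epsilon$ versus $1/\epsilon$) so that the $\epsilon^2$-term lands with coefficient precisely $\tfrac{2D(Y)}{l}$ while the quadratic term stays within the $\tfrac{6D(Y)}{l\eta_y^2K^2\epsilon^2}$ budget. The slack between the coefficient $2$ we actually obtain and the stated $6$ is exactly what absorbs the crude estimate $1+4\eta_yKl\le 2$ applied to the prefactor, so one need not track the constant in $\eta_yKl$ sharply.
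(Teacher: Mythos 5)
Your proof is correct and follows essentially the same route as the paper's: invoke Lemma \ref{lemma: 1pc}, simplify the prefactor using $p=2l$ (so $p-l=l$, $\gamma_2=3$) and $\eta_yKl\le 1/1000$ to get the bound $\tfrac{4D(Y)}{\eta_yKl}\|y-y^+(z)\|$, then apply Young's inequality with the $\epsilon$ versus $1/\epsilon$ split. The paper's own proof in fact also lands on the coefficient $\tfrac{2D(Y)}{l\eta_y^2K^2\epsilon^2}$ for the quadratic term, so the stated $6$ is just slack, exactly as you observe.
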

\begin{proof}
When $\eta_y\leq 1/(1000Kl), p=2l$, we have
\begin{align*}
    &\|x^*(z)-x^*(y^+(z),z)\|^2\\
    \leq& \frac{2(1+\eta_yKl+\eta_yKl\gamma_2)}{\eta_yK(p-l)}\|y-y^+(z)\|D(Y)\\
    \leq& \frac{4}{\eta_yKl}\|y-y^+(z)\|D(Y)\\
    \leq&\frac{2D(Y)}{l\eta_y^2K^2\epsilon^2}\|y^+(z)-y\|^2+\frac{2D(Y)}{l}\epsilon^2,
\end{align*}
where in the second inequality, we use Lemma \ref{lemma: 1pc}.    
\end{proof}
\begin{lemma}
\label{lemma: v y+}
    Under the Assumptions \ref{assum:x, bounded y}, \ref{assum:bounded G_y}, with $p=2l, \eta_x\leq 1/(1000Kl),\eta_y=\eta_x/256,  \beta\leq\eta_yKl/80000$, $\eta_{x,l}\leq \min\{\frac{1}{2l\sqrt{2(2K-1)(K-1)}}, \sqrt{\frac{\beta}{6144\eta_x p l^2K^3}}\}, \eta_{y,l}\leq\min\{\frac{1}{2l\sqrt{2(2K-1)(K-1)}}, \sqrt{\frac{\eta_y}{3072\eta_x l^2K^2}}\}$, we have
    \begin{align*}
    &\mathbb{E} V _t -\mathbb{E} V _{t+1}\\
    \geq& \frac{\eta_xK}{64}\mathbb{E}\|\nabla_x \hat{f}(w_t,z_t)\|^2+\frac{1}{64\eta_yK}\mathbb{E}\|y^+_{t}(z_t)-y_t\|^2+\frac{p\beta}{16}\mathbb{E}\|x_t-z_t\|^2-48p\beta\mathbb{E}\|x^*(z_{t})-x^*(y_{t}^+(z_t), z_{t})\|^2-\\
    &25l\eta_x^2K^2(M-m)\frac{\sigma_G^2}{mM}-15l\eta_x^2K\frac{\sigma^2}{m}-8\eta_yK(M-m)\frac{\sigma_G^2}{mM}-\frac{4\eta_y\sigma^2}{m}-\\
    &4\eta_xKl^2[24K^2(\eta_{x,l}^2+\eta_{y,l}^2)(\sigma_G^2+G_y^2)+3K(\eta_{x,l}^2+2K\eta_{y,l}^2)\sigma^2].
    \end{align*}
\end{lemma}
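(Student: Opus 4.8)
The plan is to obtain Lemma~\ref{lemma: v y+} directly from the constrained case of Lemma~\ref{lemma: v}, which already supplies
\[
\mathbb{E} V_t - \mathbb{E} V_{t+1} \geq \frac{\eta_xK}{32}\mathbb{E}\|\nabla_x \hat{f}(w_t,z_t)\|^2 + \frac{1}{16\eta_yK}\mathbb{E}\|\Bar{y}_{t+1}-y_t\|^2 + \frac{p\beta}{16}\mathbb{E}\|x_t-z_t\|^2 - 24p\beta\,\mathbb{E}\|x^*(z_t)-x^*(y_t, z_t)\|^2 - \mathcal{E}_t,
\]
where $\mathcal{E}_t$ collects the noise terms, which are already identical to those in the target statement. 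Only two substitutions remain: replacing $\Bar{y}_{t+1}$ by $y^+_t(z_t)$ in the descent term, and replacing $x^*(y_t,z_t)$ by $x^*(y^+_t(z_t),z_t)$ in the negative term.

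First I would compare $\Bar{y}_{t+1}=P_Y(y_t+\eta_yK\nabla_y f(x_t,y_t))$ with $y^+_t(z_t)=P_Y(y_t+\eta_yK\nabla_y f(x^*(y_t,z_t),y_t))$. By nonexpansiveness of $P_Y$ and $l$-smoothness, $\|\Bar{y}_{t+1}-y^+_t(z_t)\|\leq \eta_yKl\,\|x_t-x^*(y_t,z_t)\|$; since $\hat{f}(\cdot,y_t,z_t)$ is $(p-l)$-strongly convex with minimizer $x^*(y_t,z_t)$, we have $\|x_t-x^*(y_t,z_t)\|\leq \frac{1}{p-l}\|\nabla_x\hat{f}(w_t,z_t)\|$, so with $p=2l$ this yields $\|\Bar{y}_{t+1}-y^+_t(z_t)\|\leq \eta_yK\|\nabla_x\hat{f}(w_t,z_t)\|$. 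Applying $\|a\|^2\geq \frac{1}{2}\|b\|^2-\|a-b\|^2$ then gives
\[
\frac{1}{16\eta_yK}\mathbb{E}\|\Bar{y}_{t+1}-y_t\|^2 \geq \frac{1}{32\eta_yK}\mathbb{E}\|y^+_t(z_t)-y_t\|^2 - \frac{\eta_yK}{16}\mathbb{E}\|\nabla_x\hat{f}(w_t,z_t)\|^2,
\]
and the subtracted gradient term is harmless since $\eta_y=\eta_x/256$ makes $\frac{\eta_yK}{16}=\frac{\eta_xK}{4096}\ll\frac{\eta_xK}{32}$.

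Next I would split the negative term via the triangle inequality and the third bound in Lemma~\ref{lemma: helper lemma of optimal x}, namely $\|x^*(y,z)-x^*(y',z)\|\leq\gamma_2\|y-y'\|$:
\[
\|x^*(z_t)-x^*(y_t,z_t)\|^2 \leq 2\|x^*(z_t)-x^*(y^+_t(z_t),z_t)\|^2 + 2\gamma_2^2\|y^+_t(z_t)-y_t\|^2,
\]
so that $-24p\beta\|x^*(z_t)-x^*(y_t,z_t)\|^2 \geq -48p\beta\|x^*(z_t)-x^*(y^+_t(z_t),z_t)\|^2 - 48p\beta\gamma_2^2\|y^+_t(z_t)-y_t\|^2$. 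The residual $y$-term is absorbed into the $\frac{1}{32\eta_yK}\|y^+_t(z_t)-y_t\|^2$ coefficient, which holds as soon as $48p\beta\gamma_2^2\leq\frac{1}{64\eta_yK}$; with $p=2l$, $\gamma_2=3$, and $\beta\leq\eta_yKl/80000$ this reduces to $(\eta_yKl)^2\lesssim 1$, amply satisfied since $\eta_yKl\leq 1/256000$. Collecting the surviving coefficients, $\frac{\eta_xK}{32}-\frac{\eta_yK}{16}\geq\frac{\eta_xK}{64}$ for the gradient term and $\frac{1}{32\eta_yK}-48p\beta\gamma_2^2\geq\frac{1}{64\eta_yK}$ for the $y$-term, reproduces exactly the claimed inequality.

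The only place genuine work is required is the first substitution: $\Bar{y}_{t+1}$ is built from the gradient at the current iterate $x_t$, whereas the one-point-concavity machinery downstream (Lemma~\ref{lemma: 1pc}, which bounds $\|x^*(z)-x^*(y^+(z),z)\|$ through $\|y-y^+(z)\|$) is phrased via $x^*(y,z)$ and therefore needs $y^+(z)$. Controlling this gap costs one factor of $\|\nabla_x\hat{f}(w_t,z_t)\|$, which is precisely why the gradient coefficient degrades from $\frac{\eta_xK}{32}$ to $\frac{\eta_xK}{64}$; everything else is triangle inequalities and step-size bookkeeping.
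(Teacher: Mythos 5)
Your proposal is correct and follows essentially the same route as the paper: both start from the constrained case of Lemma \ref{lemma: v}, swap $\Bar{y}_{t+1}$ for $y^+_t(z_t)$ via nonexpansiveness of $P_Y$, $l$-smoothness, and the $(p-l)$-strong convexity of $\hat{f}(\cdot,y,z)$, then swap $x^*(y_t,z_t)$ for $x^*(y^+_t(z_t),z_t)$ via Lemma \ref{lemma: helper lemma of optimal x} and absorb the residuals using the step-size conditions. Your bookkeeping is in fact slightly cleaner than the paper's intermediate display, which miswrites the cost of the $\Bar{y}_{t+1}$-to-$y^+_t(z_t)$ substitution as $\frac{l^2}{8\eta_yK}\|x^*(y_t,z_t)-x_t\|^2$ where the correct factor $\frac{\eta_yKl^2}{8}$ (which you carry) is what makes the final coefficient $\frac{\eta_xK}{64}$ work out.
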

\begin{proof}
    According to Lemma \ref{lemma: v}, we have
\begin{align*}
    &\mathbb{E} V _t -\mathbb{E} V _{t+1}\\
    \geq&\frac{\eta_xK}{32}\mathbb{E}\|\nabla_x \hat{f}(w_t,z_t)\|^2+\frac{1}{16\eta_yK}\mathbb{E}\|\Bar{y}_{t+1}-y_t\|^2+\frac{p\beta}{16}\mathbb{E}\|x_t-z_t\|^2-24p\beta\mathbb{E}\|x^*(z_{t})-x^*(y_{t}, z_{t})\|^2-\\
    &25l\eta_x^2K^2(M-m)\frac{\sigma_G^2}{mM}-15l\eta_x^2K\frac{\sigma^2}{m}-8\eta_yK(M-m)\frac{\sigma_G^2}{mM}-\frac{4\eta_y\sigma^2}{m}-\\
    &4\eta_xKl^2[24K^2(\eta_{x,l}^2+\eta_{y,l}^2)(\sigma_G^2+G_y^2)+3K(\eta_{x,l}^2+2K\eta_{y,l}^2)\sigma^2]\\
    \overset{(a)}{\geq}&\frac{\eta_xK}{32}\mathbb{E}\|\nabla_x \hat{f}(w_t,z_t)\|^2+\frac{1}{32\eta_yK}\mathbb{E}\|y^+_{t}(z_t)-y_t\|^2-\frac{1}{8\eta_yK}\mathbb{E}\|\Bar{y}_{t+1}-y_t^+(z_t)\|^2+\frac{p\beta}{16}\mathbb{E}\|x_t-z_t\|^2-\\
    &48p\beta\mathbb{E}\|x^*(z_{t})-x^*(y_{t}^+(z_t), z_{t})\|^2-48p\beta\mathbb{E}\|x^*(y_{t}^+(z_t), z_{t})-x^*(y_{t}, z_{t})\|^2-\\
    &25l\eta_x^2K^2(M-m)\frac{\sigma_G^2}{mM}-15l\eta_x^2K\frac{\sigma^2}{m}-8\eta_yK(M-m)\frac{\sigma_G^2}{mM}-\frac{4\eta_y\sigma^2}{m}-\\
    &4\eta_xKl^2[24K^2(\eta_{x,l}^2+\eta_{y,l}^2)(\sigma_G^2+G_y^2)+3K(\eta_{x,l}^2+2K\eta_{y,l}^2)\sigma^2]\\
    \overset{(b)}{\geq}&\frac{\eta_xK}{32}\mathbb{E}\|\nabla_x \hat{f}(w_t,z_t)\|^2+\frac{1}{32\eta_yK}\mathbb{E}\|y^+_{t}(z_t)-y_t\|^2-\frac{l^2}{8\eta_yK}\mathbb{E}\|x^*(y_t,z_t)-x_t\|^2+\frac{p\beta}{16}\mathbb{E}\|x_t-z_t\|^2-\\
    &48p\beta\mathbb{E}\|x^*(z_{t})-x^*(y_{t}^+(z_t), z_{t})\|^2-48p\beta\gamma_2^2\mathbb{E}\|y_{t}^+(z_t)-y_{t}\|^2-\\
    &25l\eta_x^2K^2(M-m)\frac{\sigma_G^2}{mM}-15l\eta_x^2K\frac{\sigma^2}{m}-8\eta_yK(M-m)\frac{\sigma_G^2}{mM}-\frac{4\eta_y\sigma^2}{m}-\\
    &4\eta_xKl^2[24K^2(\eta_{x,l}^2+\eta_{y,l}^2)(\sigma_G^2+G_y^2)+3K(\eta_{x,l}^2+2K\eta_{y,l}^2)\sigma^2]\\
    \overset{(c)}{\geq}& \left(\frac{\eta_xK}{32}-\frac{l^2}{8\eta_yK (p-l)^2}\right)\mathbb{E}\|\nabla_x \hat{f}(w_t,z_t)\|^2+\left(\frac{1}{32\eta_yK}-48p\beta\gamma_2^2\right)\mathbb{E}\|y^+_{t}(z_t)-y_t\|^2+\frac{p\beta}{16}\mathbb{E}\|x_t-z_t\|^2-\\
    &48p\beta\mathbb{E}\|x^*(z_{t})-x^*(y_{t}^+(z_t), z_{t})\|^2-25l\eta_x^2K^2(M-m)\frac{\sigma_G^2}{mM}-15l\eta_x^2K\frac{\sigma^2}{m}-8\eta_yK(M-m)\frac{\sigma_G^2}{mM}-\frac{4\eta_y\sigma^2}{m}-\\
    &4\eta_xKl^2[24K^2(\eta_{x,l}^2+\eta_{y,l}^2)(\sigma_G^2+G_y^2)+3K(\eta_{x,l}^2+2K\eta_{y,l}^2)\sigma^2]\\
    \geq& \frac{\eta_xK}{64}\mathbb{E}\|\nabla_x \hat{f}(w_t,z_t)\|^2+\frac{1}{64\eta_yK}\mathbb{E}\|y^+_{t}(z_t)-y_t\|^2+\frac{p\beta}{16}\mathbb{E}\|x_t-z_t\|^2-48p\beta\mathbb{E}\|x^*(z_{t})-x^*(y_{t}^+(z_t), z_{t})\|^2-\\
    &25l\eta_x^2K^2(M-m)\frac{\sigma_G^2}{mM}-15l\eta_x^2K\frac{\sigma^2}{m}-8\eta_yK(M-m)\frac{\sigma_G^2}{mM}-\frac{4\eta_y\sigma^2}{m}-\\
    &4\eta_xKl^2[24K^2(\eta_{x,l}^2+\eta_{y,l}^2)(\sigma_G^2+G_y^2)+3K(\eta_{x,l}^2+2K\eta_{y,l}^2)\sigma^2],
\end{align*}
where in $(a)$, we use $\|a\|^2\geq \frac{1}{2}\|a-b\|^2-2\|b\|^2$, in $(b)$, we use Lemma \ref{lemma: helper lemma of optimal x}, in $(c)$, we use the $(p-l)$-strongly convexity of $\hat{f}(\cdot, y, z)$.
\end{proof}

\subsection*{Proof of Theorem \ref{thm:1pc}}

\textbf{Theorem \ref{thm:1pc}} 
    Under Assumptions \ref{assum:smooth}, \ref{assum:bdd_var}, \ref{assum:bdd_hetero}, \ref{assum:phi}, \ref{assum:x, bounded y}, \ref{assum:bounded G_y}, \ref{assum:1pc_y} and $\epsilon^2 \leq l D(Y)$, if we apply Algorithm \ref{alg1} with $p=2l,\quad \eta_x=\min\{1/(1000Kl), \quad \frac{\sqrt{m\Delta}}{\sigma\sqrt{KTl}}\},\quad \eta_y=\eta_x/256, \quad \beta=\eta_yK\epsilon^2/(80000D(Y))$, 
    $\eta_{x,l}\leq \min\{\frac{1}{2l\sqrt{2(2K-1)(K-1)}}, \quad \sqrt{\frac{\beta}{6144\eta_x p l^2K^3}}, \quad O(\epsilon^2\sqrt{(\sigma_G^2+\sigma^2)}(Kl)^{-1})\}, \quad \eta_{y,l}\leq\min\{\frac{1}{2l\sqrt{2(2K-1)(K-1)}}, \quad \sqrt{\frac{\eta_y}{3072\eta_x l^2K^2}},  O(\epsilon^2\sqrt{ (\sigma_G^2+\sigma^2)}(Kl)^{-1})\}$, 
    when $m=M$ or $\sigma_G=0$, with $T=\Theta(\epsilon^{-4}), K=\Theta(m^{-1}\epsilon^{-4})$, we can find an $(\epsilon, \epsilon^2)$-stationary point of $f$ and an $\epsilon$-stationary point of $\Phi_{1/2l}$ with a per-client sample complexity of $O(m^{-1}\epsilon^{-8})$ and a communication complexity of $O(\epsilon^{-4})$. Here, $\Delta=V_0-\Phi^*$.

\begin{proof}
    Combining Lemma \ref{lemma: c, epsilon} and \ref{lemma: v y+}, with $\epsilon^2/D(Y)\leq l$, and $\beta=\eta_yK\epsilon^2/(80000D(Y))\leq\eta_yKl/80000$, we have
	\begin{align} \nonumber
	\mathbb{E} V _t -\mathbb{E} V _{t+1} \geq &\frac{\eta_xK}{64}\mathbb{E}\|\nabla_x \hat{f}(w_t,z_t)\|^2 +\left(\frac{\eta_yK}{64}-\frac{192\beta D(Y)}{\epsilon^2}\right)\frac{1}{\eta_y^2K^2}\mathbb{E}\|y_{t}^+(z_t)-y_t\|^2+\\ \nonumber
    &\frac{p\beta}{16}\mathbb{E}\|x_t-z_t\|^2-192\beta D(Y)\epsilon^2-\\ \nonumber
    &25l\eta_x^2K^2(M-m)\frac{\sigma_G^2}{mM}-15l\eta_x^2K\frac{\sigma^2}{m}-8\eta_yK(M-m)\frac{\sigma_G^2}{mM}-\frac{4\eta_y\sigma^2}{m}-\\ \nonumber
    &4\eta_xKl^2[24K^2(\eta_{x,l}^2+\eta_{y,l}^2)(\sigma_G^2+G_y^2)+3K(\eta_{x,l}^2+2K\eta_{y,l}^2)\sigma^2]\\ \nonumber
    \geq&\frac{\eta_xK}{64}\mathbb{E}\|\nabla_x \hat{f}(w_t,z_t)\|^2+\frac{1}{128\eta_yK}\mathbb{E}\|y_{t}^+(z_t)-y_t\|^2+\frac{p\beta}{16}\mathbb{E}\|x_t-z_t\|^2-192\beta D(Y)\epsilon^2-\\ \nonumber
    &25l\eta_x^2K^2(M-m)\frac{\sigma_G^2}{mM}-15l\eta_x^2K\frac{\sigma^2}{m}-8\eta_yK(M-m)\frac{\sigma_G^2}{mM}-\frac{4\eta_y\sigma^2}{m}-\\ \label{v, 1pc, dc}
    &4\eta_xKl^2[24K^2(\eta_{x,l}^2+\eta_{y,l}^2)(\sigma_G^2+G_y^2)+3K(\eta_{x,l}^2+2K\eta_{y,l}^2)\sigma^2].
	\end{align}
Choosing $T=mK=\Theta(\epsilon^{-4}), K=\Theta(m^{-1}\epsilon^{-4})$, $\eta_x=\min\{1/(1000Kl), \frac{\sqrt{m\Delta}}{\sigma\sqrt{KTl}}\}=\Theta(m\epsilon^4)$, when $M=m$ or $\sigma_G=0$, and $\eta_{x,l}^2\leq O(\epsilon^4) K^{-2}, \eta_{y,l}^2\leq O(\epsilon^4) K^{-2}$, we have
 
	\begin{align}
 \label{c nabla_x hat f}
		\frac{1}{T}\sum_{t=0}^{T-1}\mathbb{E}\|\nabla_x \hat{f}(w_t,z_t)\|^2 \leq &\frac{O(1)}{\eta_x KT}\Delta + O(1)\frac{\eta_xl\sigma^2}{m}+O(1)\frac{\sigma^2}{mK}+O(1)\epsilon^4\leq O(1)\epsilon^4,
	\end{align}
	\begin{align}
 \label{c y+-y}
		\frac{1}{T}\sum_{t=0}^{T-1}\frac{1}{\eta_y^2K^2}\mathbb{E}\|y_{t}^+(z_t)-y_t\|^2 \leq &\frac{O(1)}{\eta_x KT}\Delta + O(1)\frac{\eta_xl\sigma^2}{m}+O(1)\frac{\sigma^2}{mK}+O(1)\epsilon^4\leq O(1)\epsilon^4,
	\end{align}
	\begin{align}
 \label{c z_t}
		\frac{1}{T}\sum_{t=0}^{T-1}p^2\mathbb{E}\|x_t-z_t\|^2 \leq &\frac{O(1)}{\epsilon^2 \eta_x KT}\Delta + O(1)\frac{\eta_xl\sigma^2}{m\epsilon^2}+O(1)\frac{\sigma^2}{mK\epsilon^2}+O(1)\epsilon^2\leq O(1)\epsilon^2.
	\end{align}
Combining \eqref{nabla_x f}, \eqref{c nabla_x hat f}, \eqref{c z_t}, we have
\begin{align}
    &\frac{1}{T}\sum_{t=0}^{T-1} \mathbb{E}\|\nabla_x f(x_{t},y_{t})\|^2\leq \frac{1}{T}\sum_{t=0}^{T-1} 2\mathbb{E}\|\nabla_x \hat{f}(w_t,z_t)\|^2+2p^2\mathbb{E}\|x_t-z_{t}\|^2\leq O(\epsilon^2).
\end{align}
Since $\eta_yK\leq 1/l$, we have
\begin{align}\nonumber
        &l^2\|P_Y(y_t+1/l\nabla_y f(x_t,y_t))-y_t\|^2\\ \nonumber
        \leq& \frac{1}{\eta_y^2K^2}\|P_Y(y_t+\eta_yK\nabla_y f(x_t,y_t))-y_t\|^2\\ \nonumber
        \leq&\frac{2}{\eta_y^2K^2}\|P_Y(y_t+\eta_yK\nabla_y \hat{f}(x^*(y_t, z_t),y_t))-y_t\|^2+\\ \nonumber
        &\frac{2}{\eta_y^2K^2}\|P_Y(y_t+\eta_yK\nabla_y \hat{f}(x^*(y_t, z_t),y_t))-P_Y(y_t+\eta_yK\nabla_y f(x_t,y_t))\|^2\\ \nonumber
        \leq&\frac{2}{\eta_y^2K^2}\|y^+_t(z_t)-y_t\|^2+2l^2\|x^*(y_t, z_t)-x_t\|^2\\ \nonumber
        \leq&\frac{2}{\eta_y^2K^2}\|y^+_t(z_t)-y_t\|^2+2l^2/(p-l)^2\|\nabla_x \hat{f}(w_t,z_t)\|^2\\ \label{relationship between y and y^+}
        \leq&\frac{2}{\eta_y^2K^2}\|y^+_t(z_t)-y_t\|^2+2\|\nabla_x \hat{f}(w_t,z_t)\|^2.
\end{align}

Combining \eqref{relationship between y and y^+},\eqref{c nabla_x hat f}, \eqref{c y+-y}, we have
\begin{align}
    \frac{1}{T}\sum_{t=0}^{T-1} \mathbb{E}l^2\|P_Y(y_t+1/l\nabla_y f(x_t,y_t))-y_t\|^2\leq \frac{1}{T}\sum_{t=0}^{T-1} 2\mathbb{E}\|\nabla_x \hat{f}(w_t,z_t)\|^2+\frac{2}{\eta_y^2K^2}\mathbb{E}\|y^+_t(z_t)-y_t\|^2\leq O(\epsilon^4).
\end{align}
According to Lemma \ref{stationary of Phi}, we have 
\begin{align}\nonumber
    &\|\nabla_x \Phi_{1/2l}(x_t)\|^2=p^2\|x_t-x^*(x_t)\|^2\\ \nonumber
    \leq&  4p^2\|x_t-x^*(y_t,z_t)\|^2+4p^2\|x^*(y_t,z_t)-x^*(y_t^+(z_t),z_t)\|^2+\\ \nonumber
    &4p^2\|x^*(y_t^+(z_t),z_t)-x^*(z_t)\|^2+4p^2\|x^*(z_t)-x^*(x_t)\|^2\\ \nonumber
    \leq&\frac{4p^2}{(p-l)^2}\|\nabla_x \hat{f}(w_t,z_t)\|^2+4p^2\gamma_2^2\|y_t-y_t^+(z_t)\|^2+\\ \label{envelope y+}
    &4p^2\bigg\{\frac{4D(Y)}{l\eta_y^2K^2\epsilon^2}\|y_t^+(z_t)-y_t\|^2+\frac{2D(Y)}{l}\epsilon^2\bigg\}+4p^2\gamma_1^2\|z_t-x_t\|^2,
\end{align}
where in the second inequality, we use the $(p-l)$-strongly convexity of $\hat{f}(\cdot, y,z)$, Lemma \ref{lemma: helper lemma of optimal x} and Lemma \ref{lemma: c, epsilon}. Combining \eqref{envelope y+}, \eqref{c nabla_x hat f}, \eqref{c y+-y}, \eqref{c z_t}, we further have
\begin{align}
    \frac{1}{T}\sum_{t=0}^{T-1} \mathbb{E}\|\nabla_x \Phi_{1/2l}(x_t)\|^2\leq O(\epsilon^2).
\end{align}
Hence, we can identify an $(\epsilon,\epsilon^2)$-stationary point for $f$ and an $\epsilon$-stationary point for $\Phi_{1/2l}$, with respective values of $K=\Theta(m^{-1}\epsilon^{-4})$ and $T=\Theta(\epsilon^{-4})$. This results in a per-client sample complexity of $KT=O(m^{-1}\epsilon^{-8})$ and a communication complexity of $T=O(\epsilon^{-4})$.
\end{proof}

\subsection*{Proof of Corollary \ref{thm:1pc, dc}}

\textbf{Corollary \ref{thm:1pc, dc}} 
Under Assumptions \ref{assum:smooth}, \ref{assum:phi}, \ref{assum:x, bounded y}, \ref{assum:bounded G_y}, \ref{assum:1pc_y}, and when $M=1$, $\epsilon^2 \leq l D(Y)$, if we apply Algorithm \ref{alg2} with $p=2l,\quad \eta_x=1/(1000l),\quad \eta_y=\eta_x/256, \quad \beta=\eta_y\epsilon^2/(80000D(Y))$, $T=\Theta(\epsilon^{-4})$, we can find an $(\epsilon, \epsilon^2)$-stationary point of $f$ and an $\epsilon$-stationary point of $\Phi_{1/2l}$ with a sample complexity of $O(\epsilon^{-4})$.

\begin{proof}
Applying Algorithm \ref{alg2} with $p=2l,\quad \eta_x=1/(1000l),\quad \eta_y=\eta_x/256, \quad \beta=\eta_y\epsilon^2/(80000D(Y))$ is equivalent to applying
Algorithm \ref{alg1} with $p=2l,\quad \eta_x=\min\{1/(1000Kl), \quad \frac{\sqrt{m\Delta}}{\sigma\sqrt{KTl}}\},\quad \eta_y=\eta_x/256, \quad \beta=\eta_yK\epsilon^2/(80000D(Y))$ and any appropriate $\eta_{x,l}, \eta_{y,l}$. Thus, according to Theorem \ref{thm:1pc} and (\ref{v, 1pc, dc}), we have
\begin{align} 
    \mathbb{E} V _t -\mathbb{E} V _{t+1}
    \geq&\frac{\eta_x}{64}\mathbb{E}\|\nabla_x \hat{f}(w_t,z_t)\|^2+\frac{1}{128\eta_y}\mathbb{E}\|y_t^+(z_t)-y_t\|^2+\frac{p\beta}{16}\mathbb{E}\|x_t-z_t\|^2-192\beta D(Y)\epsilon^2
\end{align}
Telescoping and rearranging, we have
\begin{align}
		\frac{1}{T}\sum_{t=0}^{T-1}\|\nabla_x \hat{f}(w_t,z_t)\|^2 \leq &\frac{64}{\eta_x T}\Delta +O(1)\epsilon^4\leq O(\epsilon^4)
\end{align}
\begin{align}
		\frac{1}{T}\sum_{t=0}^{T-1}\frac{1}{\eta_y^2}\|y_t^+(z_t)-y_t\|^2 \leq &\frac{128}{\eta_y T}\Delta+O(1)\epsilon^4\leq O(\epsilon^4)
\end{align}
\begin{align}
		\frac{1}{T}\sum_{t=0}^{T-1}p^2\|x_t-z_t\|^2 \leq &\frac{16}{Tp\beta}\Delta+O(1)\epsilon^2\leq O(\epsilon^2)
\end{align}
Thus, we have
\begin{align}
    &\frac{1}{T}\sum_{t=0}^{T-1} \|\nabla_x f(x_{t},y_{t})\|^2\leq \frac{1}{T}\sum_{t=0}^{T-1} 2\|\nabla_x \hat{f}(w_t,z_t)\|^2+2p^2\|x_t-z_{t}\|^2\leq O(\epsilon^2).
\end{align}
According to (\ref{relationship between y and y^+}), we have
\begin{align}
    \frac{1}{T}\sum_{t=0}^{T-1} l^2\|P_Y(y_t+1/l\nabla_y f(x_t,y_t))-y_t\|^2\leq \frac{1}{T}\sum_{t=0}^{T-1} 2\|\nabla_x \hat{f}(w_t,z_t)\|^2+\frac{2}{\eta_y^2}\|y^+_t(z_t)-y_t\|^2\leq O(\epsilon^4).
\end{align}
With (\ref{envelope y+}), we have
\begin{align}\nonumber
    &\frac{1}{T}\sum_{t=0}^{T-1}\|\nabla_x \Phi_{1/2l}(x_t)\|^2\\ \nonumber
    \leq&\frac{1}{T}\sum_{t=0}^{T-1}\Bigg[\frac{4p^2}{(p-l)^2}\|\nabla_x \hat{f}(w_t,z_t)\|^2+4p^2\gamma_2^2\|y_t-y_t^+(z_t)\|^2+\\ 
    &4p^2\bigg\{\frac{4D(Y)}{l\eta_y^2K^2\epsilon^2}\|y_t^+(z_t)-y_t\|^2+\frac{2D(Y)}{l}\epsilon^2\bigg\}+4p^2\gamma_1^2\|z_t-x_t\|^2\Bigg]\leq O(\epsilon^2)
\end{align}
Thus, we can identify an $(\epsilon,\epsilon^2)$-stationary point for $f$ and an $\epsilon$-stationary point for $\Phi_{1/2l}$ with a sample complexity of $T=O(\epsilon^{-4})$.
\end{proof}

\section{Nonconvex-Concave}
\label{app: nc-c}
\subsection*{Proof of Theorem \ref{thm:c f} and Corollary \ref{thm:c f, dc}}
\begin{proof}
We define $\Tilde{f}(x,y)=f(x,y)-\frac{\epsilon}{4D(Y)}\|y-y_0\|^2$. Then $\Tilde{f}$ is $(l+\epsilon/2D_Y)$-smooth, and $\epsilon/2D(Y)$-strongly-concave. When $\epsilon\leq 2l D(Y)$, $\Tilde{f}$ is $2l$-smooth, we have 
$$\kappa'=\frac{2l D(Y)}{\epsilon}=O(\epsilon^{-1}).$$
Proof of Theorem \ref{thm:c f}: According to Theorem \ref{thm:sc constrained y}, applying Algorithm \ref{alg1} to optimize $\Tilde{f}$, with  $M=m$ or $\sigma_G=0$, we can find $(\hat{x}, \hat{y})$, an $(\epsilon,\epsilon/\sqrt{\kappa'})$-stationary point of $\Tilde{f}$, with a sample complexity of $O(\kappa'^2n^{-1}\epsilon^{-4})=O(n^{-1}\epsilon^{-6})$ and a communication complexity of $O(\kappa'^{1}\epsilon^{-2})=O(\epsilon^{-3})$.

$(\hat{x},\hat{y})$ is an $(\epsilon,\epsilon/\sqrt{\kappa'})$-stationary point of $\Tilde{f}$ means
\begin{align*}
    &\|\nabla_x\Tilde{f}(\hat{x},\hat{y})\|\leq \epsilon\\
    &\|\nabla_y\Tilde{f}(\hat{x},\hat{y})\|\leq \epsilon/\sqrt{\kappa'}\leq \epsilon.
\end{align*}
By the inequality $\max_{x\in X, y \in Y}\|\nabla f(x,y)-\nabla\Tilde{f}(x,y)\|\leq \epsilon/2$, we have
\begin{align*}
    \|\nabla f(\hat{x},\hat{y})\|\leq& \|\nabla\Tilde{f}(\hat{x},\hat{y})\|+\|\nabla f(\hat{x},\hat{y})-\nabla\Tilde{f}(\hat{x},\hat{y})\|
    \leq 2\epsilon.
\end{align*}
Therefore, $(\hat{x},\hat{y})$ is a $O(\epsilon)$-stationary point of $f$. We can find an $\epsilon$-stationary point of $f$ with a per-client sample complexity of $O(n^{-1}\epsilon^{-6})$ and a communication complexity of $O(\epsilon^{-3})$.

Proof of Corollary \ref{thm:c f, dc}: Similarly, according to Corollary \ref{thm:sc constrained y, centralized deterministic}, applying Algorithm \ref{alg2} to optimize $\Tilde{f}$, with  $M=1$, we can find $(\hat{x}, \hat{y})$, an $(\epsilon,\epsilon/\sqrt{\kappa'})$-stationary point of $\Tilde{f}$, with a sample complexity of $O(\kappa'\epsilon^{-2})=O(\epsilon^{-3})$.

$(\hat{x},\hat{y})$ is an $(\epsilon,\epsilon/\sqrt{\kappa'})$-stationary point of $\Tilde{f}$ means
\begin{align*}
    &\|\nabla_x\Tilde{f}(\hat{x},\hat{y})\|\leq \epsilon\\
    &\|\nabla_y\Tilde{f}(\hat{x},\hat{y})\|\leq \epsilon/\sqrt{\kappa'}\leq \epsilon.
\end{align*}
By the inequality $\max_{x\in X, y \in Y}\|\nabla f(x,y)-\nabla\Tilde{f}(x,y)\|\leq \epsilon/2$, we have
\begin{align*}
    \|\nabla f(\hat{x},\hat{y})\|\leq& \|\nabla\Tilde{f}(\hat{x},\hat{y})\|+\|\nabla f(\hat{x},\hat{y})-\nabla\Tilde{f}(\hat{x},\hat{y})\|
    \leq 2\epsilon.
\end{align*}
Therefore, $(\hat{x},\hat{y})$ is a $O(\epsilon)$-stationary point of $f$. We can find an $\epsilon$-stationary point of $f$ with a sample complexity of $O(\epsilon^{-3})$.
\end{proof}

\section{Minimizing the Point-Wise Maximum of Finite Functions}
\label{app: special case}
\begin{lemma}[Lemma B13\citep{zhang2020single}]\label{dual error bound}
Let $x^+(y, z)=x-\eta_xK\nabla_x\hat{f}(x, y, z ).$
If Assumption \ref{assum: strict complementarity} holds for problem~\eqref{special case}, then there exists $\delta>0$, such that if $\|z\|$ is bounded by a constant $D_z$ as
$$\|z\|\le D_z,$$
and
$$\max\{\|x-x^+(y, z)\|, \|y-y^+(z)\|, \|x^+(y, z)-z\|\}<\delta,$$
we have
$$\|x(y^+(z), z)-x^*(z)\|< \gamma_3\|y-y^+(z)\|$$
for  some constant $\gamma_3>0$.

\end{lemma}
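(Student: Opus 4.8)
The plan is to convert the desired primal bound into a \emph{dual error bound} and then to establish the latter from the strict-complementarity structure of Problem~\eqref{special case}. The starting point is that, for the $z$-subproblem $\min_x\max_{y\in Y}\hat f(x,y,z)$, the inner function $\hat f(\cdot,y,z)$ is $(p-l)$-strongly convex, so $x^*(y,z)$ is well defined and, by Lemma~\ref{lemma: helper lemma of optimal x}, $\gamma_2$-Lipschitz in $y$. Moreover, since $\hat f$ is linear in $y$, the saddle-point property forces $x^*(y^*,z)=x^*(z)$ for \emph{every} dual-optimal $y^*$ in the dual solution set $Y^*(z):=\argmax_{y\in Y}\Psi(y,z)$; hence $x^*(\cdot,z)$ is constant on $Y^*(z)$. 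Letting $y^*$ be the projection of $y^+(z)$ onto $Y^*(z)$, this yields
\[
\|x^*(y^+(z),z)-x^*(z)\|=\|x^*(y^+(z),z)-x^*(y^*,z)\|\le \gamma_2\,\mathrm{dist}(y^+(z),Y^*(z)),
\]
so it suffices to prove the dual error bound $\mathrm{dist}(y^+(z),Y^*(z))\le C\|y-y^+(z)\|$ and set $\gamma_3=\gamma_2 C$.

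First I would use strict complementarity (Assumption~\ref{assum: strict complementarity}) to perform \emph{active-set identification}. At a stationary solution the KKT system gives $f_i(x^*)=\lambda$ for $i\in\mathcal I_+(y^*)$ and, by strict complementarity, $f_i(x^*)=\lambda-\nu_i<\lambda$ for $i\notin\mathcal I_+(y^*)$, so $\argmax_i f_i(x^*)$ equals the support $\mathcal I_+(y^*)$ exactly, with a strictly positive gap. Because $\nabla_y\Psi(y,z)=F(x^*(y,z))$ is continuous in $(y,z)$ and $x^*(\cdot,z)$ is Lipschitz, this gap remains bounded away from $0$ on a neighborhood; the three smallness conditions with radius $\delta$ (together with $\|z\|\le D_z$ to make the constants uniform) place the triple $(x,y,z)$ in this neighborhood, so that the projected ascent step $y^+(z)=P_Y(y+\eta_yK\nabla_y\Psi(y,z))$ zeros out exactly the components $j\notin\mathcal I_+(y^*)$. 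This localizes the analysis to the face $\{y\in Y:\mathrm{supp}(y)\subseteq\mathcal I_+(y^*)\}$.

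On that face the only active constraint is $\sum_{i\in\mathcal I_+(y^*)}y_i=1$, so the remaining task is a smooth, equality-constrained concave problem. Here I would run a perturbation/implicit-function argument on the reduced KKT system: strict complementarity removes the boundary degeneracy, and the $(p-l)$-strong convexity in $x$ makes the primal block of the KKT Jacobian nonsingular, producing a local Lipschitz (hence linear) bound on $\mathrm{dist}(y^+(z),Y^*(z))$ in terms of the dual stationarity residual. That residual is in turn controlled by $\|y-y^+(z)\|$: by nonexpansiveness of $P_Y$ and $l$-smoothness, the gradient mapping evaluated at $y^+(z)$ differs from $y-y^+(z)$ only by $O(\|y-y^+(z)\|)$ terms, and the triangle inequality $\mathrm{dist}(y^+(z),Y^*(z))\le \mathrm{dist}(y,Y^*(z))+\|y-y^+(z)\|$ closes the loop.

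The main obstacle is exactly this dual error bound on the active face. Unlike the strongly-concave setting, $\Psi(\cdot,z)$ need not be strongly concave along the face and $Y^*(z)$ may be a positive-dimensional polytope, so a naive level-set argument would give only a sublinear rate. The crux is that strict complementarity simultaneously pins down the active set (enabling identification) and forces the reduced system to be nondegenerate, which upgrades the bound to a \emph{linear} one; ensuring that the identification radius $\delta$ and the constant $\gamma_3$ can be chosen uniformly over the region where $\|z\|\le D_z$ is the delicate part, and is precisely what Assumption~\ref{assum: strict complementarity} is introduced to guarantee.
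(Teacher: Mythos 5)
First, a point of comparison: the paper does not prove this lemma at all --- it is imported verbatim as Lemma~B.13 of \citet{zhang2020single}, so there is no in-paper argument to measure your proposal against; it can only be judged on its own merits and against the cited source.

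Your opening reduction is sound: for Problem~\eqref{special case} the function $\hat f(\cdot,\cdot,z)$ is strongly-convex--linear, the saddle set factors as $\{x^*(z)\}\times Y^*(z)$, so $x^*(y^*,z)=x^*(z)$ for every dual-optimal $y^*$, and Lemma~\ref{lemma: helper lemma of optimal x} converts the target into a dual error bound $\mathrm{dist}(y^+(z),Y^*(z))\le C\,\|y-y^+(z)\|$. The gaps are in how you establish that bound. (i) The one-step active-set identification claim is not justified: $y^+(z)$ is a single projected step of length $O(\eta_yK)$ from $y$, and if $y$ carries positive mass on coordinates outside $\mathcal I_+(y^*)$, a short step followed by projection onto the simplex will in general \emph{not} zero those coordinates, no matter how small $\|y-y^+(z)\|$ is; finite identification results require either a step size bounded below or an asymptotic argument, and you supply neither. (ii) The crux --- the linear error bound on the active face --- is asserted rather than proved. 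You correctly note that $\Psi(\cdot,z)$ need not be strongly concave and that $Y^*(z)$ may be a positive-dimensional polytope, but then a perturbation/implicit-function argument on the reduced KKT system cannot close the gap as stated: the KKT Jacobian is singular exactly in the directions spanning $Y^*(z)$, and nonsingularity of the primal block alone does not restore invertibility. Strict complementarity by itself does not upgrade concave maximization over a polytope to a linear error bound; what is actually exploited in \citet{zhang2020single} is the specific structure $\Psi(y,z)=\min_x\{F(x)^{T}y+\tfrac p2\|x-z\|^2\}$ together with a compactness/contradiction (Luo--Tseng-style) argument, which is also what delivers a $\delta$ and $\gamma_3$ uniform over $\|z\|\le D_z$ --- a uniformity your sketch acknowledges as ``the delicate part'' but never secures. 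As written, the proposal defers precisely the step that the lemma exists to certify.
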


\subsection*{Proof of Theorem \ref{thm:special case}}

\textbf{Theorem \ref{thm:1pc}} 
    Under Assumptions \ref{assum:smooth}, \ref{assum:bdd_var}, \ref{assum:bdd_hetero}, \ref{assum:phi}, \ref{assum:bounded G_y}, \ref{assum: strict complementarity}, if we apply Algorithm \ref{alg1} with $p=2l,\quad \eta_x=\min\{1/(1000Kl), \quad \frac{\sqrt{m\Delta}}{\sigma\sqrt{KTl}}\},\quad \eta_y=\eta_x/256, \quad \beta=\min\{\eta_yKl/80000, \delta/2\lambda_1, \delta/4\lambda_2, \eta_yK\delta/4\lambda_3, \frac{1}{6144p\eta_yK\gamma_3^2}\}$, 
    $\eta_{x,l}\leq \min\{\frac{1}{2l\sqrt{2(2K-1)(K-1)}}, \sqrt{\frac{\beta}{6144\eta_x p l^2K^3}},O(\epsilon \sqrt{ (\sigma_G^2+\sigma^2)}(Kl)^{-1})\}$, $ \eta_{y,l}\leq\min\{\frac{1}{2l\sqrt{2(2K-1)(K-1)}}, \sqrt{\frac{\eta_y}{3072\eta_x l^2K^2}},O(\epsilon \sqrt{ (\sigma_G^2+\sigma^2)}(Kl)^{-1})\}$, 
    when $m=M$ or $\sigma_G=0$, with $T=\Theta(\epsilon^{-2}), K=\Theta(m^{-1}\epsilon^{-2})$, we can find an $\epsilon$-stationary point of $f$ and an $\epsilon$-stationary point of $\Phi_{1/2l}$ with a per-client sample complexity of $O(m^{-1}\epsilon^{-4})$ and a communication complexity of $O(\epsilon^{-2})$. Here, $\Delta=V_0-\Phi^*$, $\delta, \lambda_1,\lambda_2,\lambda_3$ are $O(1)$ constants defined in following proof.

\begin{proof}
According to Lemma \ref{lemma: v y+}, we have
    \begin{align*}
    &\mathbb{E} V _t -\mathbb{E} V _{t+1}\\
    \geq& \frac{\eta_xK}{64}\mathbb{E}\|\nabla_x \hat{f}(w_t,z_t)\|^2+\frac{1}{64\eta_yK}\mathbb{E}\|y^+_{t}(z_t)-y_t\|^2+\frac{p\beta}{16}\mathbb{E}\|x_t-z_t\|^2-48p\beta\mathbb{E}\|x^*(z_{t})-x^*(y_{t}^+(z_t), z_{t})\|^2-\\
    &25l\eta_x^2K^2(M-m)\frac{\sigma_G^2}{mM}-15l\eta_x^2K\frac{\sigma^2}{m}-8\eta_yK(M-m)\frac{\sigma_G^2}{mM}-\frac{4\eta_y\sigma^2}{m}-\\
    &4\eta_xKl^2[24K^2(\eta_{x,l}^2+\eta_{y,l}^2)(\sigma_G^2+G_y^2)+3K(\eta_{x,l}^2+2K\eta_{y,l}^2)\sigma^2].
    \end{align*}
With $T=mK=\Theta(\epsilon^{-2}), K=\Theta(m^{-1}\epsilon^{-2})$, $\eta_x=\min\{1/(1000Kl), \frac{\sqrt{m\Delta}}{\sigma\sqrt{KTl}}\}=\Theta(m\epsilon^2), \beta\leq\eta_yKl/80000$, when $M=m$ or $\sigma_G=0$, and $\eta_{x,l}^2\leq O(\epsilon^2) K^{-2}, \eta_{y,l}^2\leq O(\epsilon^2) K^{-2}$, we have
    \begin{align}\nonumber
    &\mathbb{E} V _t -\mathbb{E} V _{t+1}\\ \nonumber
    \geq& \frac{\eta_xK}{64}\mathbb{E}\|\nabla_x \hat{f}(w_t,z_t)\|^2+\frac{1}{64\eta_yK}\mathbb{E}\|y^+_{t}(z_t)-y_t\|^2+\frac{p\beta}{16}\mathbb{E}\|x_t-z_t\|^2-48p\beta\mathbb{E}\|x^*(z_{t})-x^*(y_{t}^+(z_t), z_{t})\|^2-\\ \label{v with error}
    &O(1)\epsilon^2.
    \end{align}
Note that we assume $\|x_t\|\leq D_x$ for all $t$, we define $D_z=\max\{D_x, \|z_0\|\}$, then we can prove that, for all $t$, $\|z_t\|\leq D_z$, we prove it by induction. First for $t=0$, $\|z_0\|\leq D_z$, we assume when $t=i$, we have $\|z_i\|\leq D_z$, then for $t=i+1$, we have $\|z_{i+1}\|\leq \beta\|x_i\|+(1-\beta)\|z_i\|\leq \beta D_x+(1-\beta)D_z\leq D_z$. So, for all $t$, we have $\|z_t\|\leq D_z$.

Next, we will prove that for all $t$, we have
\begin{align}
\label{v decent}
    \mathbb{E} V _t -\mathbb{E} V _{t+1}\geq& \frac{\eta_xK}{128}\mathbb{E}\|\nabla_x \hat{f}(w_t,z_t)\|^2+\frac{1}{128\eta_yK}\mathbb{E}\|y^+_{t}(z_t)-y_t\|^2+\frac{p\beta}{32}\mathbb{E}\|x_t-z_t\|^2-O(1)\epsilon^2.
\end{align}
For any $t$, there are two cases.
\begin{itemize}
    \item \textbf{Case 1:}
    \begin{align}
    \label{first situation}
        \frac{1}{2}\max\{\frac{\eta_xK}{128}\|\nabla_x \hat{f}(w_t,z_t)\|^2, \frac{1}{128\eta_yK}\|y_t^+(z_t)-y_t\|^2, \frac{p\beta}{32}\|x_t-z_t\|^2\}\leq 48p\beta\|x^*(z_{t})-x^*(y_{t}^+(z_t), z_{t})\|^2.
    \end{align}
    \item \textbf{Case 2:}
    \begin{align}
    \label{second situation}
        \frac{1}{2}\max\{\frac{\eta_xK}{128}\|\nabla_x \hat{f}(w_t,z_t)\|^2, \frac{1}{128\eta_yK}\|y_t^+(z_t)-y_t\|^2, \frac{p\beta}{32}\|x_t-z_t\|^2\}\geq 48p\beta\|x^*(z_{t})-x^*(y_{t}^+(z_t), z_{t})\|^2.
    \end{align}
    
\end{itemize}
For \textbf{Case 1}, combining \eqref{first situation} and Lemma \ref{lemma: 1pc}, we have
\begin{align*}
    &\|x^*(z_{t})-x^*(y_{t}^+(z_t), z_{t})\|\leq \frac{2(1+\eta_yKl+\eta_yKl\gamma_2)}{\eta_yK(p-l)}\|y_t-y_t^+(z_t)\|D(Y)\leq \frac{4D(Y)}{\eta_yKl}\|y_t^+(z_t)-y_t\|\\
    &\frac{1}{256\eta_yK}\|y_t^+(z_t)-y_t\|^2\leq 48p\beta\|x^*(z_{t})-x^*(y_{t}^+(z_t), z_{t})\|^2\leq \frac{192p\beta D(Y)}{\eta_yK l}\|y_t^+(z_t)-y_t\|\\
    &\|y_t^+(z_t)-y_t\|\leq O(1)\beta=\lambda_1 \beta\\
    &\|x^*(z_{t})-x^*(y_{t}^+(z_t), z_{t})\|\leq\frac{4D(Y)}{\eta_yKl}\|y_t^+(z_t)-y_t\|\leq O(1)\frac{\beta}{\eta_yK}.
\end{align*}
This leads to the following results:
\begin{align*}
    \|x_t-x_t^+(y_t, z_t)\|=&\eta_xK\|\nabla_x \hat{f}(w_t,z_t)\|\\
    \leq& O(1)\sqrt{p\beta\eta_xK}\|x^*(z_{t})-x^*(y_{t}^+(z_t), z_{t})\|\leq O(1)\frac{\sqrt{\beta \eta_x K}\beta}{\eta_yK}=O(1)\beta=\lambda_2 \beta,\\
    \|x_t-z_t\|\leq&O(1)\|x^*(z_{t})-x^*(y_{t}^+(z_t), z_{t})\|\leq O(1)\frac{\beta}{\eta_yK}=\lambda_3\frac{\beta}{\eta_yK},\\
    \|x_t^+(y_t, z_t)-z_t\|\leq& \eta_xK\|\nabla_x \hat{f}(w_t,z_t)\|+\|x_t-z_t\|\leq \lambda_2\beta+\lambda_3\frac{\beta}{\eta_yK}.
\end{align*}
Therefore, if we choose $\beta=\min\{\eta_yKl/80000, \delta/2\lambda_1, \delta/4\lambda_2, \eta_yK\delta/4\lambda_3\}$, we will have
\begin{align*}
    \max\{\|x-x^+(y, z)\|, \|y-y^+(z)\|, \|x^+(y, z)-z\|\}<\delta.
\end{align*}
According to Lemma \ref{dual error bound}, with $\|z_t\|\leq D_z$ and  $\beta=\min\{\eta_yKl/80000, \delta/2\lambda_1, \delta/4\lambda_2, \eta_yK\delta/4\lambda_3, \frac{1}{6144p\eta_yK\gamma_3^2}\}$, where $\delta, \lambda_1,\lambda_2,\lambda_3$ are all $O(1)$ constants and are independent of $\epsilon$, we have
\begin{align}
\label{error solved}
    48p\beta\mathbb{E}\|x^*(z_{t})-x^*(y_{t}^+(z_t), z_{t})\|^2\leq& 48p\beta\gamma_3^2 \mathbb{E}\|y_t^+(z_t)-y_t\|^2\leq \frac{1}{128\eta_yK}\mathbb{E}\|y_t^+(z_t)-y_t\|^2.
\end{align}
Combining \eqref{v with error} and \eqref{error solved}, we get the \eqref{v decent}.  
In \textbf{Case 2}, we can easily get the \eqref{v decent}. Combining these together, for all $t$, we have
\begin{align}
    \mathbb{E} V _t -\mathbb{E} V _{t+1}\geq& \frac{\eta_xK}{128}\mathbb{E}\|\nabla_x \hat{f}(w_t,z_t)\|^2+\frac{1}{128\eta_yK}\mathbb{E}\|y^+_{t}(z_t)-y_t\|^2+\frac{p\beta}{32}\mathbb{E}\|x_t-z_t\|^2-O(1)\epsilon^2.
\end{align}
Note that $\beta=\min\{\eta_yKl/80000, \delta/2\lambda_1, \delta/4\lambda_2, \eta_yK\delta/4\lambda_3, \frac{1}{6144p\eta_yK\gamma_3^2}\}$, where $\delta, \lambda_1,\lambda_2,\lambda_3$ are all $O(1)$ constants and are independent of $\epsilon$, so $\beta$ is also an $O(1)$ constant and is independent of $\epsilon$, we have
\begin{align}
 \label{specical nabla_x hat f}
		\frac{1}{T}\sum_{t=0}^{T-1}\mathbb{E}\|\nabla_x \hat{f}(w_t,z_t)\|^2 \leq &\frac{O(1)}{\eta_x KT}\Delta + O(1)\epsilon^2\leq O(1)\epsilon^2,
	\end{align}
	\begin{align}
 \label{specical y+-y}
		\frac{1}{T}\sum_{t=0}^{T-1}\frac{1}{\eta_y^2K^2}\mathbb{E}\|y_{t}^+(z_t)-y_t\|^2 \leq &\frac{O(1)}{\eta_x KT}\Delta + O(1)\epsilon^2\leq O(1)\epsilon^2,
	\end{align}
	\begin{align}
 \label{specical z_t}
		\frac{1}{T}\sum_{t=0}^{T-1}p^2\mathbb{E}\|x_t-z_t\|^2 \leq &\frac{O(1)}{T}\Delta + O(1)\epsilon^2\leq O(1)\epsilon^2.
	\end{align}
Combining \eqref{nabla_x f}, \eqref{c nabla_x hat f}, \eqref{c z_t}, we have
\begin{align}
    &\frac{1}{T}\sum_{t=0}^{T-1} \mathbb{E}\|\nabla_x f(x_{t},y_{t})\|^2\leq \frac{1}{T}\sum_{t=0}^{T-1} 2\mathbb{E}\|\nabla_x \hat{f}(w_t,z_t)\|^2+2p^2\mathbb{E}\|x_t-z_{t}\|^2\leq O(\epsilon^2).
\end{align}
Combining \eqref{relationship between y and y^+},\eqref{specical nabla_x hat f}, \eqref{specical y+-y} yields
\begin{align}
    \frac{1}{T}\sum_{t=0}^{T-1} \mathbb{E}l^2\|P_Y(y_t+1/l\nabla_y f(x_t,y_t))-y_t\|^2\leq \frac{1}{T}\sum_{t=0}^{T-1} 2\mathbb{E}\|\nabla_x \hat{f}(w_t,z_t)\|^2+\frac{2}{\eta_y^2K^2}\mathbb{E}\|y^+_t(z_t)-y_t\|^2\leq O(\epsilon^2).
\end{align}
Note that from previous proof, for any $0\leq t<T$, we have
\begin{align}
\label{error for envelope}
    48p\beta \mathbb{E}\|x^*(z_{t})-x^*(y_{t}^+(z_t), z_{t})\|^2\leq \frac{1}{256\eta_yK}\mathbb{E}\|y_t^+(z_t)-y_t\|^2.
\end{align}
According to Lemma \ref{stationary of Phi}, we have
\begin{align}\nonumber
    &\|\nabla_x \Phi_{1/2l}(x_t)\|^2=p^2\|x_t-x^*(x_t)\|^2\\ \nonumber
    \leq&  4p^2\|x_t-x^*(y_t,z_t)\|^2+4p^2\|x^*(y_t,z_t)-x^*(y_t^+(z_t),z_t)\|^2+\\ \nonumber
    &4p^2\|x^*(y_t^+(z_t),z_t)-x^*(z_t)\|^2+4p^2\|x^*(z_t)-x^*(x_t)\|^2\\ \nonumber
    \leq&\frac{4p^2}{(p-l)^2}\|\nabla_x \hat{f}(w_t,z_t)\|^2+4p^2\gamma_2^2\|y_t-y_t^+(z_t)\|^2+\\ \label{envelope special case}
    &\frac{p}{\eta_yK\beta}\|y_t-y_t^+(z_t)\|^2+4p^2\gamma_1^2\|z_t-x_t\|^2,
\end{align}
where in the second inequality, we use the $(p-l)$-strongly convexity of $\hat{f}(\cdot, y,z)$, Lemma \ref{lemma: helper lemma of optimal x} and \eqref{error for envelope}. Combining \eqref{envelope special case}, \eqref{specical nabla_x hat f}, \eqref{specical y+-y}, \eqref{specical z_t}, we have
\begin{align}
    \frac{1}{T}\sum_{t=0}^{T-1} \mathbb{E}\|\nabla_x \Phi_{1/2l}(x_t)\|^2\leq O(\epsilon^2).
\end{align}
Therefore, we can find an $(\epsilon,\epsilon^2)$-stationary point of $f$ and an $\epsilon$-stationary point of $\Phi$, with $K=\Theta(m^{-1}\epsilon^{-2}), T=\Theta(\epsilon^{-2})$, which means a per-client sample complexity of $KT=O(m^{-1}\epsilon^{-4})$ and a communication complexity of $T=O(\epsilon^{-2})$.
\end{proof}

\section{PL-PL}
\label{app: plpl}

Since $p=0$ and $Y=\mathbb{R}^{d_2}$ in this section, the updates of \alg are:
\begin{align*}
    &x_{t+1}=x_t-\eta_xK(u_{x,t}-e_{x,t}),\\
    &y_{t+1}=y_t+\eta_yK(u_{y,t}-e_{y,t}).
\end{align*}

We cite the following known results for ease of exposition.

\begin{lemma} [\citet{nouiehed2019solving}]  \label{lemma: Phi}
In the minimax problem, when $-f(x,\cdot)$ satisfies PL condition with constant $\mu_2$ for any $x$ and $f$ satisfies Assumption \ref{assum:smooth}, then the function $\Phi(x) := \max_y f(x,y)$ is $L$-smooth with $L:=l+l^2/\mu_2$ and $\nabla \Phi(x) = \nabla_x f(x, y^*(x))$ for any $y^*(x)\in \Argmax_y f(x,y)$. 
\end{lemma}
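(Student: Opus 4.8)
The plan is to prove this as a Danskin-type theorem adapted to the PL setting, in three stages: first control how the maximizer set $Y^*(x):=\Argmax_{y}f(x,y)$ reacts to perturbations of $x$, then identify $\nabla\Phi$ via a sandwich on the increment $\Phi(x+\delta)-\Phi(x)$, and finally read off the smoothness constant. Throughout I would work with the fixed-$x$ function $g_x(\cdot):=-f(x,\cdot)$, which is $l$-smooth by Assumption~\ref{assum:smooth} and $\mu_2$-PL by hypothesis, so that $\min_y g_x(y)=-\Phi(x)$ and Lemma~\ref{lemma: property of sc} applies verbatim. I set $\kappa_2:=l/\mu_2$, so the target constant is $L=l+l\kappa_2=l+l^2/\mu_2$.

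First I would establish a one-sided (projection) Lipschitz bound on the maximizer map: for any $y_1\in Y^*(x_1)$, its projection $\tilde y$ onto $Y^*(x_2)$ satisfies $\|y_1-\tilde y\|\le\kappa_2\|x_1-x_2\|$. The key observations are that $\nabla_y f(x_1,y_1)=0$ (since $Y=\mathbb{R}^{d_2}$ and $y_1$ is a global maximizer of $f(x_1,\cdot)$), so Assumption~\ref{assum:smooth} gives $\|\nabla_y f(x_2,y_1)\|=\|\nabla_y f(x_2,y_1)-\nabla_y f(x_1,y_1)\|\le l\|x_1-x_2\|$. Feeding this into the PL gradient bound of Lemma~\ref{lemma: property of sc} for $g_{x_2}$ yields $\Phi(x_2)-f(x_2,y_1)\le\tfrac{1}{2\mu_2}\|\nabla_y f(x_2,y_1)\|^2\le\tfrac{l^2}{2\mu_2}\|x_1-x_2\|^2$, while the quadratic-growth bound of the same lemma converts this value gap into $\tfrac{\mu_2}{2}\|y_1-\tilde y\|^2\le\Phi(x_2)-f(x_2,y_1)$. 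Combining the two gives exactly $\|y_1-\tilde y\|\le\kappa_2\|x_1-x_2\|$.

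Next I would sandwich the increment. For any $\bar y\in Y^*(x)$, feasibility of $\bar y$ at $x+\delta$ plus $l$-smoothness gives the lower bound $\Phi(x+\delta)-\Phi(x)\ge\langle\nabla_x f(x,\bar y),\delta\rangle-\tfrac l2\|\delta\|^2$. For the upper bound I would take a maximizer $y^*_\delta\in Y^*(x+\delta)$, use $f(x,y^*_\delta)\le\Phi(x)$ together with smoothness to get $\Phi(x+\delta)-\Phi(x)\le\langle\nabla_x f(x,y^*_\delta),\delta\rangle+\tfrac l2\|\delta\|^2$, and then invoke the Stage-1 projection bound to replace $\nabla_x f(x,y^*_\delta)$ by $\nabla_x f(x,\tilde y_\delta)$ with $\tilde y_\delta\in Y^*(x)$, at the cost of an extra $\tfrac{l^2}{\mu_2}\|\delta\|^2$. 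Together these show $\big|\Phi(x+\delta)-\Phi(x)-\max_{y\in Y^*(x)}\langle\nabla_x f(x,y),\delta\rangle\big|\le(\tfrac l2+\tfrac{l^2}{\mu_2})\|\delta\|^2$, so the increment of $\Phi$ agrees up to $O(\|\delta\|^2)$ with the support function of the set $G:=\{\nabla_x f(x,y):y\in Y^*(x)\}$.

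I expect the main obstacle to be the \emph{non-uniqueness} of the maximizer and the passage from the sandwich above to genuine differentiability. Unlike the strongly-concave case, $Y^*(x)$ need not be a singleton, so $G$ could a priori be a nondegenerate set whose support function is a corner rather than a linear map; differentiability of $\Phi$ is in fact equivalent to $G$ collapsing to a single point, i.e.\ to $\nabla_x f(x,\cdot)$ being constant on $Y^*(x)$. This is precisely where the PL hypothesis must do more than control distances: because PL forbids spurious stationary points of $f(x,\cdot)$, any two global maximizers $y_a,y_b$ give smooth minorants $x'\mapsto f(x',y_a),\,f(x',y_b)$ of $\Phi$ touching it at $x$, and the compatibility that lets maximizers move consistently in every direction forces $\nabla_x f(x,y_a)=\nabla_x f(x,y_b)$, so that $\max_{y\in Y^*(x)}\langle\nabla_x f(x,y),\delta\rangle=\langle\nabla_x f(x,y^*(x)),\delta\rangle$ independently of the choice of $y^*(x)$; this is the content imported from \citet{nouiehed2019solving}. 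Once the formula $\nabla\Phi(x)=\nabla_x f(x,y^*(x))$ is in hand, $L$-smoothness is immediate: choosing $y^*(x_2)$ to be the projection of $y^*(x_1)$ onto $Y^*(x_2)$ and applying Assumption~\ref{assum:smooth} with the Stage-1 bound gives $\|\nabla\Phi(x_1)-\nabla\Phi(x_2)\|\le l(\|x_1-x_2\|+\|y^*(x_1)-y^*(x_2)\|)\le l(1+\kappa_2)\|x_1-x_2\|=L\|x_1-x_2\|$.
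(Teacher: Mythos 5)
The paper itself offers no proof of this lemma; it is imported verbatim from \citet{nouiehed2019solving}, so the only question is whether your argument is self-contained. Stages 1 and 2 are correct and are the standard ingredients: the projection bound $\|y_1-\tilde y\|\le\kappa_2\|x_1-x_2\|$ follows exactly as you say from $\nabla_y f(x_1,y_1)=0$, Assumption~\ref{assum:smooth}, and the PL/quadratic-growth inequalities of Lemma~\ref{lemma: property of sc}, and the two-sided bound on $\Phi(x+\delta)-\Phi(x)$ is right. The gap is Stage 3. You correctly identify that everything hinges on $\nabla_x f(x,\cdot)$ being constant on $Y^*(x)$, but you do not prove it: the sentence about ``the compatibility that lets maximizers move consistently in every direction'' is not an argument, and you then explicitly defer the step to the cited reference. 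Since that is precisely the hardest (and only non-mechanical) part of the lemma, the proposal as written does not constitute a proof. Note also that your backward projection cannot close this gap even in principle: bounding $\Phi(x+\delta)-\Phi(x)$ above by $\langle\nabla_x f(x,\tilde y_\delta),\delta\rangle+O(\|\delta\|^2)$ with $\tilde y_\delta\in Y^*(x)$ depending on $\delta$ only shows that the directional derivative equals the support function of $G=\{\nabla_x f(x,y):y\in Y^*(x)\}$; it says nothing about $G$ being a singleton.

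The fix uses the machinery you already built, with the projection run in the other direction. Fix $y_a\in Y^*(x)$ and project it \emph{forward} onto $Y^*(x+\delta)$ to get $y_a^\delta$ with $\|y_a-y_a^\delta\|\le\kappa_2\|\delta\|$ (Stage 1 with $x_1=x$, $x_2=x+\delta$). Then $\Phi(x+\delta)=f(x+\delta,y_a^\delta)\le f(x,y_a^\delta)+\langle\nabla_x f(x,y_a^\delta),\delta\rangle+\tfrac l2\|\delta\|^2\le\Phi(x)+\langle\nabla_x f(x,y_a),\delta\rangle+(\tfrac l2+l\kappa_2)\|\delta\|^2$, i.e.\ an upper bound featuring the \emph{same, fixed} $y_a$. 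Pairing this with your lower bound for any other $y_b\in Y^*(x)$ gives $\langle\nabla_x f(x,y_b)-\nabla_x f(x,y_a),\delta\rangle\le(l+l\kappa_2)\|\delta\|^2$ for all $\delta$, and letting $\|\delta\|\to0$ along every direction (and swapping $a$ and $b$) forces $\nabla_x f(x,y_a)=\nabla_x f(x,y_b)$. With that, the two-sided bound collapses to $|\Phi(x+\delta)-\Phi(x)-\langle\nabla_x f(x,y^*(x)),\delta\rangle|\le(\tfrac l2+l\kappa_2)\|\delta\|^2$, which yields both differentiability with the claimed gradient formula and, via your final Lipschitz computation, $L=l+l^2/\mu_2$. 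With this one substitution your outline becomes a complete and correct proof.
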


\begin{lemma}[\citet{yang2020global}]  \label{lemma: Phi pl}
In the minimax problem, when the objective function $f$ satisfies Assumption \ref{assum:smooth} (Lipschitz gradient) and the two-sided PL condition with constant $\mu_1$ and $\mu_2$, then function $\Phi( x ) := \max_{ y }f( x ,  y )$ satisfies the PL condition with $\mu_1$. 
\end{lemma}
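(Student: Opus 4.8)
The plan is to reduce the PL condition for $\Phi$ to the $x$-side PL condition for $f$ evaluated along the maximizing slice $y = y^*(x)$. First I would invoke Lemma \ref{lemma: Phi} (a Danskin-type result valid under the $\mu_2$-PL condition in $y$), which guarantees that $\Phi$ is differentiable with $\nabla \Phi(x) = \nabla_x f(x, y^*(x))$ for any $y^*(x) \in \Argmax_y f(x,y)$. Fixing such a maximizer $y^*(x)$, I would then apply the first half of the two-sided PL condition (Assumption \ref{assum: plpl}), namely $\|\nabla_x f(x,y)\|^2 \geq 2\mu_1[f(x,y) - \min_{x'} f(x',y)]$, at the point $(x, y^*(x))$. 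Combined with the gradient identity, this yields
$$\|\nabla \Phi(x)\|^2 = \|\nabla_x f(x, y^*(x))\|^2 \geq 2\mu_1\Big[f(x, y^*(x)) - \min_{x'} f(x', y^*(x))\Big].$$

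The key remaining step is to replace the inner minimum over the fixed slice by the minimum of $\Phi$ itself. Since $y^*(x)$ maximizes $f(x, \cdot)$, we have $f(x, y^*(x)) = \Phi(x)$. Moreover, for every $x'$ the pointwise bound $f(x', y^*(x)) \leq \max_{y'} f(x', y') = \Phi(x')$ holds, so taking the infimum over $x'$ gives $\min_{x'} f(x', y^*(x)) \leq \min_{x'} \Phi(x')$ (both minima being attained and finite by the $x$-side clauses of Assumption \ref{assum: plpl}). Consequently $\Phi(x) - \min_{x'} f(x', y^*(x)) \geq \Phi(x) - \min_{x'} \Phi(x')$, and multiplying by $2\mu_1 > 0$ preserves this inequality. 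Chaining the two displays I would obtain $\|\nabla \Phi(x)\|^2 \geq 2\mu_1[\Phi(x) - \min_{x'} \Phi(x')]$, which is exactly the PL condition for $\Phi$ with constant $\mu_1$.

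I expect no serious technical obstacle; the argument is essentially a two-line comparison once the gradient formula from Lemma \ref{lemma: Phi} is in hand. The only points requiring care are (i) confirming that the maximizer $y^*(x)$ and the inner minimizers exist with finite values, which is precisely what the nonempty-solution-set and finite-optimal-value clauses of Assumption \ref{assum: plpl} provide, and (ii) ensuring the direction of the inequality is preserved when passing from the slice minimum to $\min \Phi$, which relies on the elementary fact that $g \leq h$ pointwise implies $\min g \leq \min h$, applied to $g(x') = f(x', y^*(x))$ and $h(x') = \Phi(x')$. The constant $\mu_2$ enters only implicitly, through the differentiability of $\Phi$ furnished by Lemma \ref{lemma: Phi}, and does not degrade the PL constant, which remains $\mu_1$.
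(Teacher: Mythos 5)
Your proof is correct, and there is nothing in the paper to diff it against: the paper states this lemma without proof, importing it verbatim from \citet{yang2020global} in its list of known results. Your argument is in fact the standard one from that reference: combine the gradient identity $\nabla\Phi(x)=\nabla_x f(x,y^*(x))$ from Lemma \ref{lemma: Phi} with the $x$-side PL inequality at the point $(x,y^*(x))$, then pass from the slice minimum $\min_{x'} f(x',y^*(x))$ to $\min_{x'}\Phi(x')$ via the pointwise bound $f(x',y^*(x))\leq\Phi(x')$. One cosmetic caveat: Assumption \ref{assum: plpl} guarantees attainment only of $\min_{x'}f(x',y)$ for each fixed $y$, not of $\min_{x'}\Phi(x')$, so your parenthetical claim that both minima are attained overreaches slightly; the chain of inequalities is unaffected if you read $\min_{x'}\Phi(x')$ as an infimum, which is finite since $\Phi(x')\geq f(x',y^*(x))\geq\min_{x'}f(x',y^*(x))>-\infty$ (consistent with how the paper uses the lower bound $\Phi^*$ in Assumption \ref{assum:phi}).
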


\subsection*{Proof of Theorem \ref{thm: plpl}}
\begin{proof}
We denote  $\kappa_1=l/\mu_1, \kappa_2=l/\mu_2$, $\kappa'=\max\{\kappa_1,\kappa_2\}$, $\kappa''=\min\{\kappa_1, \kappa_2\}$ in this section.

Parameters setting: $p=0$. 
When $\mu_1\geq \mu_2$, we choose  $\eta_x=\frac{\eta_y\mu_2^2}{64l^2}$, $\eta_{x,l}\leq \min\{\frac{1}{2l\sqrt{2(2K-1)(K-1)}}, \sqrt{\frac{\eta_x}{1536\eta_y l^2K^2}}, \\O(\epsilon\kappa'^{-2} \sqrt{ (\sigma_G^2+\sigma^2)}(Kl)^{-1})\}$, $ \eta_{y,l}\leq\min\{\frac{1}{2l\sqrt{2(2K-1)(K-1)}}, \sqrt{\frac{1}{1536 l^2K^2}},O(\epsilon\kappa'^{-2} \sqrt{ (\sigma_G^2+\sigma^2)}(Kl)^{-1})\}$, when $m=M$ or $\sigma_G=0$, we choose $K=O(1)m^{-1}\kappa'\kappa_1\kappa_2^2\epsilon^{-2}$, $T=O(1)\kappa_1\kappa_2^2\log (\epsilon^{-1}\kappa')$, $\eta_y= \frac{1}{4lK}$, when $m<M$ and $\sigma_G>0$, we choose $\eta_y=\min\{\frac{1}{4lK}, O(1)m\kappa'^{-1}\kappa_1^{-1}\kappa_2^{-2}\epsilon^2\}$, $K=O(1)$, $T=O(1)m^{-1}\kappa'\kappa_1^2\kappa_2^4\epsilon^{-2}\log(\epsilon^{-1}\kappa')$.

Conversely, when $\mu_1\leq \mu_2$, we choose  $\eta_y=\frac{\eta_x\mu_1^2}{64l^2}$, $\eta_{x,l}\leq \min\{\frac{1}{2l\sqrt{2(2K-1)(K-1)}}, \, \sqrt{\frac{1}{1536 l^2K^2}},\\O(\epsilon\kappa'^{-2} \sqrt{ (\sigma_G^2+\sigma^2)}(Kl)^{-1})\}$, $ \eta_{y,l}\leq\min\{\frac{1}{2l\sqrt{2(2K-1)(K-1)}}, \sqrt{\frac{\eta_y}{1536\eta_x l^2K^2}},O(\epsilon\kappa'^{-2} \sqrt{ (\sigma_G^2+\sigma^2)}(Kl)^{-1})\}$, when $m=M$ or $\sigma_G=0$, we choose $K=O(1)m^{-1}\kappa'\kappa_2\kappa_1^2\epsilon^{-2}$, $T=O(1)\kappa_2\kappa_1^2\log (\epsilon^{-1}\kappa')$, $\eta_x= \frac{1}{4lK}$, when $m<M$ and $\sigma_G>0$, we choose $\eta_x=\min\{\frac{1}{4lK}, O(1)m\kappa'^{-1}\kappa_1^{-2}\kappa_2^{-1}\epsilon^2\}$, $K=O(1)$, $T=O(1)m^{-1}\kappa'\kappa_2^2\kappa_1^4\epsilon^{-2}\log(\epsilon^{-1}\kappa')$.

We first consider the proof when $\mu_1\geq\mu_2$.

Since $\Phi(\cdot)$ is $L$-smooth, $L=l+\frac{l^2}{\mu_2}=(1+\kappa_2)l$ by Lemma \ref{lemma: Phi}, we have
    \begin{align*}
        &\mathbb{E}\Phi(x_t)-\mathbb{E}\Phi(x_{t+1})
        \geq \mathbb{E} \langle \nabla_x \Phi(x_t), x_t-x_{t+1}\rangle-\frac{L}{2}\mathbb{E}\|x_t-x_{t+1}\|^2.
    \end{align*}
Because of the $l$-smoothness of $f(\cdot)$, we have
    \begin{align*}
    &\mathbb{E}f(w_{t+1})-\mathbb{E}f(w_{t})\\
    \geq&\mathbb{E}\langle \nabla_x f(w_t), x_{t+1} - x_{t}\rangle +\mathbb{E}\langle \nabla_y f(w_t), y_{t+1} - y_{t}\rangle - \frac{l}{2}\mathbb{E}\|x_{t+1}-x_t\|^2-\frac{l}{2}\mathbb{E}\|y_{t+1}-y_t\|^2\\
    = &  \mathbb{E}\langle \nabla_x f(w_t), x_{t+1} - x_{t}\rangle +\eta_yK\mathbb{E}\langle \nabla_y f(w_t), \nabla_y f(w_t)-\Bar{e}_{y,t}\rangle - \frac{l}{2}\mathbb{E}\|x_{t+1}-x_t\|^2-\frac{l}{2}\mathbb{E}\|y_{t+1}-y_t\|^2\\
    = &  \mathbb{E}\langle \nabla_x f(w_t), x_{t+1} - x_{t}\rangle +\eta_yK\mathbb{E}\|\nabla_y f(w_t)\|^2 +\frac{\eta_yK}{2}\mathbb{E}\| \nabla_y f(w_t)-\Bar{e}_{y,t}\|^2-\frac{\eta_yK}{2}\mathbb{E}\| \nabla_y f(w_t)\|^2-\frac{\eta_yK}{2}\mathbb{E}\| \Bar{e}_{y,t}\|^2 -\\
    &\frac{l}{2}\mathbb{E}\|x_{t+1}-x_t\|^2-\frac{l}{2}\mathbb{E}\|y_{t+1}-y_t\|^2\\
    \geq& \mathbb{E}\langle \nabla_x f(w_t), x_{t+1} - x_{t}\rangle +\frac{\eta_yK}{2}\mathbb{E}\|\nabla_y f(w_t)\|^2- \frac{\eta_yK}{2}\mathbb{E}\| \Bar{e}_{y,t}\|^2 -\frac{l}{2}\mathbb{E}\|x_{t+1}-x_t\|^2-\frac{l}{2}\mathbb{E}\|y_{t+1}-y_t\|^2\\
    \overset{(a)}{\geq}&(\frac{\eta_yK}{2}-l\eta_y^2K^2)\mathbb{E}\|\nabla_y f(w_t)\|^2-l\eta_y^2K^2d_{y,t}-\frac{\eta_y K}{2}\mathbb{E}\|\Bar{e}_{y,t}\|^2+\mathbb{E} \langle \nabla_x f(w_t), x_{t+1} - x_{t}\rangle -\frac{l}{2}\mathbb{E}\|x_{t+1}-x_t\|^2\\
    \overset{(b)}{\geq}&\frac{\eta_yK}{4}\mathbb{E}\|\nabla_y f(w_t)\|^2-\frac{\eta_y K}{2}\mathbb{E}\|\Bar{e}_{y,t}\|^2-l\eta_y^2K^2d_{y,t}+\mathbb{E} \langle \nabla_x f(w_t), x_{t+1} - x_{t}\rangle -\frac{l}{2}\mathbb{E}\|x_{t+1}-x_t\|^2,
\end{align*}
where $(a)$ is due to the Lemma \ref{lemma: bound |x_t-x_{t+1}|}, and $(b)$ is due to the condition $\eta_y\leq \frac{1}{4lK}$.

Define $W_t=\Phi(x_t)-\Phi^*+\Phi(x_t)-f(x_t,y_t)=2\Phi(x_t)-f(x_t,y_t)-\Phi^*$, we have
\begin{align*}
    &\mathbb{E}W_t-\mathbb{E}W_{t+1}\\
    \geq& 2\mathbb{E} \langle \nabla_x \Phi(x_t), x_t-x_{t+1}\rangle-L\mathbb{E}\|x_t-x_{t+1}\|^2+\frac{\eta_yK}{4}\mathbb{E}\|\nabla_y f(w_t)\|^2-\frac{\eta_y K}{2}\mathbb{E}\|\Bar{e}_{y,t}\|^2-l\eta_y^2K^2d_{y,t}+\\
    &\mathbb{E} \langle \nabla_x f(w_t), x_{t+1} - x_{t}\rangle -\frac{l}{2}\mathbb{E}\|x_{t+1}-x_t\|^2.
\end{align*}
Denote $A_3=2\mathbb{E} \langle \nabla_x \Phi(x_t), x_t-x_{t+1}\rangle-L\mathbb{E}\|x_t-x_{t+1}\|^2+\mathbb{E} \langle \nabla_x f(w_t), x_{t+1} - x_{t}\rangle -\frac{l}{2}\mathbb{E}\|x_{t+1}-x_t\|^2$, we have
\begin{align*}
    A_3=& 2\mathbb{E} \langle \nabla_x \Phi(x_t)-\nabla_x f(w_t), x_t-x_{t+1}\rangle-\frac{2L+l}{2}\mathbb{E}\|x_t-x_{t+1}\|^2+\mathbb{E} \langle \nabla_x f(w_t), x_{t+1} - x_{t}\rangle \\
    \geq&\eta_yK\mathbb{E} \langle \nabla_x f(w_t), \nabla_x f(w_t)-\Bar{e}_{x,t}\rangle-2\eta_xK\mathbb{E} \|\nabla_x \Phi(x_t)-\nabla_x f(w_t)\|\|\nabla_x f(w_t)-\Bar{e}_{x,t}\|-\frac{2L+l}{2}\mathbb{E}\|x_t-x_{t+1}\|^2\\
     \overset{(a)}{\geq}& \frac{\eta_xK}{2}\mathbb{E} \|\nabla_x f(w_t)\|^2 -\frac{\eta_xK}{2}\mathbb{E}\|\Bar{e}_{x,t}\|^2-8\eta_xK\mathbb{E} \|\nabla_x \Phi(x_t)-\nabla_x f(w_t)\|^2-\frac{\eta_xK}{8}\mathbb{E}\|\nabla_x f(w_t)-\Bar{e}_{x,t}\|^2-\\
     &(2L+l)\eta_x^2K^2\mathbb{E}\|\nabla_x f(w_t)\|^2-(2L+l)\eta_x^2K^2 d_{x,t}\\
    \overset{(b)}{\geq}& \frac{\eta_xK}{2}\mathbb{E} \|\nabla_x f(w_t)\|^2 -\frac{\eta_xK}{2}\mathbb{E}\|\Bar{e}_{x,t}\|^2-8\eta_xKl^2\mathbb{E} \|y_t-y^*(x_t)\|^2-\frac{\eta_xK}{4}\mathbb{E}\|\nabla_x f(w_t)\|^2-\frac{\eta_xK}{4}\mathbb{E}\|\Bar{e}_{x,t}\|^2-\\
     &(2L+l)\eta_x^2K^2\mathbb{E}\|\nabla_x f(w_t)\|^2-(2L+l)\eta_x^2K^2 d_{x,t} \\
     \overset{(c)}{\geq}&\left(\frac{\eta_xK}{4}-\eta_x^2K^2(2L+l)\right)\mathbb{E}\|\nabla_x f(w_t)\|^2-\frac{3\eta_xK}{4}\mathbb{E}\|\Bar{e}_{x,t}\|^2-(2L+l)\eta_x^2K^2d_{x,t}-\frac{8\eta_xKl^2}{\mu_2^2}\mathbb{E}\|\nabla_y f(w_t)\|^2\\ 
     \overset{(d)}{\geq}&\frac{\eta_xK}{8}\mathbb{E}\|\nabla_x f(w_t)\|^2-\frac{8\eta_xKl^2}{\mu_2^2}\mathbb{E}\|\nabla_y f(w_t)\|^2-\frac{3\eta_xK}{4}\mathbb{E}\|\Bar{e}_{x,t}\|^2-(2L+l)\eta_x^2K^2d_{x,t},
\end{align*}
where $(a)$ is due to Lemma \ref{lemma: bound |x_t-x_{t+1}|}, $(b)$ is due to $l$-smoothness of $f$, $(c)$ is due to $\mu_2$-PL condition of $f(x,\cdot)$, $(d)$ is due to the condition $\eta_x=\frac{\eta_y}{64\kappa_2^2}\leq \frac{1}{8(2L+l)K}$.

Then, we have
\begin{align*}
    &\mathbb{E}W_t-\mathbb{E}W_{t+1}\\
    \geq& \frac{\eta_xK}{8}\mathbb{E}\|\nabla_x f(w_t)\|^2+\left(\frac{\eta_yK}{4}-\frac{8\eta_xKl^2}{\mu_2^2}\right)\mathbb{E}\|\nabla_y f(w_t)\|^2-\frac{\eta_y K}{2}\mathbb{E}\|\Bar{e}_{y,t}\|^2-l\eta_y^2K^2d_{y,t}-\\
    &\frac{3\eta_xK}{4}\mathbb{E}\|\Bar{e}_{x,t}\|^2-(2L+l)\eta_x^2K^2d_{x,t}\\
    \overset{(a)}{\geq}&\frac{\eta_xK}{8}\mathbb{E}\|\nabla_x f(w_t)\|^2+\frac{\eta_yK}{8}\mathbb{E}\|\nabla_y f(w_t)\|^2-\frac{\eta_y K}{2}\mathbb{E}\|\Bar{e}_{y,t}\|^2-l\eta_y^2K^2d_{y,t}-\\
    &\frac{3\eta_xK}{4}\mathbb{E}\|\Bar{e}_{x,t}\|^2-(2L+l)\eta_x^2K^2d_{x,t}\\
    \overset{(b)}{\geq}&\frac{\eta_xK}{8}\mathbb{E}\|\nabla_x f(w_t)\|^2+\frac{\eta_yK}{8}\mathbb{E}\|\nabla_y f(w_t)\|^2-(\eta_yK+2l\eta_y^2K^2)\mathbb{E}\|\Bar{e}_{y,t}\|^2-2l\eta_y^2K\frac{\sigma^2}{m}-4l\eta_y^2K^2(M-m)\frac{\sigma_G^2}{mM}-\\
    &(\eta_xK+2(2L+l)\eta_x^2K^2)\mathbb{E}\|\Bar{e}_{x,t}\|^2-2(2L+l)\eta_x^2K\frac{\sigma^2}{m}-4(2L+l)\eta_x^2K^2(M-m)\frac{\sigma_G^2}{mM}\\
    \overset{(c)}{\geq}&\frac{\eta_xK}{8}\mathbb{E}\|\nabla_x f(w_t)\|^2+\frac{\eta_yK}{8}\mathbb{E}\|\nabla_y f(w_t)\|^2-4l\eta_y^2K\frac{\sigma^2}{m}-8l\eta_y^2K^2(M-m)\frac{\sigma_G^2}{mM}-\\
    &4\eta_yKl^2[24K^2\eta_{x,l}^2\mathbb{E}\|\nabla_x f(w_{t})\|^2+24K^2\eta_{y,l}^2\mathbb{E}\|\nabla_y f(w_{t})\|^2+24K^2(\eta_{x,l}^2+\eta_{y,l}^2)\sigma_G^2+3K(\eta_{x,l}^2+2K\eta_{y,l}^2)\sigma^2]\\
    \overset{(d)}{\geq}&\frac{\eta_xK}{16}\mathbb{E}\|\nabla_x f(w_t)\|^2+\frac{\eta_yK}{16}\mathbb{E}\|\nabla_y f(w_t)\|^2-4l\eta_y^2K\frac{\sigma^2}{m}-8l\eta_y^2K^2(M-m)\frac{\sigma_G^2}{mM}-\\
    &4\eta_yKl^2[24K^2(\eta_{x,l}^2+\eta_{y,l}^2)\sigma_G^2+3K(\eta_{x,l}^2+2K\eta_{y,l}^2)\sigma^2],
\end{align*}
where $(a)$ is due to the condition $\eta_x=\frac{\eta_y\mu_2^2}{64l^2}=\frac{\eta_y}{64\kappa_2^2}$, $(b)$ is due to Lemma \ref{lemma: bound d}, $(c)$ is due to Lemma \ref{lemma: bound e}, $(d)$ is due to the condition $\eta_{x,l}^2\leq \frac{\eta_x}{1536K^2\eta_yl^2}, \eta_{y,l}\leq \frac{1}{1536K^2l^2}$.

Note that 
\begin{align*}
    &\frac{\eta_xK}{16}\mathbb{E}\|\nabla_x f(w_t)\|^2+\frac{\eta_yK}{16}\mathbb{E}\|\nabla_y f(w_t)\|^2\\
    \geq&\frac{\eta_xK}{32}\mathbb{E}\|\nabla_x \Phi(x_t)\|^2-\frac{\eta_xK}{8}\mathbb{E}\|\nabla_x \Phi(x_t)-\nabla_x f(w_t)\|^2+\frac{\eta_yK}{16}\mathbb{E}\|\nabla_y f(w_t)\|^2\\
    \geq&\frac{\eta_xK}{32}\mathbb{E}\|\nabla_x \Phi(x_t)\|^2-\frac{\eta_xKl^2}{8}\mathbb{E}\|y_t-y^*(x_t)\|^2+\frac{\eta_yK}{16}\mathbb{E}\|\nabla_y f(w_t)\|^2\\
    \overset{(a)}{\geq}&\frac{\eta_xK}{32}\mathbb{E}\|\nabla_x \Phi(x_t)\|^2-\frac{\eta_xKl^2}{8\mu_2^2}\mathbb{E}\|\nabla_y f(w_t\|^2+\frac{\eta_yK}{16}\mathbb{E}\|\nabla_y f(w_t)\|^2\\
    \overset{(b)}{\geq}&\frac{\eta_xK}{32}\mathbb{E}\|\nabla_x \Phi(x_t)\|^2+\frac{\eta_yK}{32}\mathbb{E}\|\nabla_y f(w_t)\|^2\\
    \overset{(c)}{\geq}&\frac{\eta_xK\mu_1}{16}(\Phi(x_t)-\Phi^*)+\frac{\eta_yK\mu_2}{16}\mathbb{E}(\Phi(x_t)-f(x_t,y_t))\\
    \geq&\frac{\eta_xK\mu_1}{16}\mathbb{E}(\Phi(x_t)-\Phi^*+\Phi(x_t)-f(x_t,y_t))\\
    \geq&\frac{\eta_xK\mu_1}{16}\mathbb{E}W_t,
\end{align*}
where $(a)$ is due to $\mu_2$-PL condition of $f(x,\cdot)$, $(b)$ is due to the condition $\eta_x=\frac{\eta_y\mu_2^2}{64l^2}=\frac{\eta_y}{64\kappa_2^2}$, and $(c)$ is due to the two-side PL condition of $f$ and Lemma \ref{lemma: Phi pl}.

Thus, we have
\begin{align*}
    \mathbb{E}W_t-\mathbb{E}W_{t+1}\geq& \frac{\eta_xK\mu_1}{16}\mathbb{E}W_t-4l\eta_y^2K\frac{\sigma^2}{m}-8l\eta_y^2K^2(M-m)\frac{\sigma_G^2}{mM}-\\
    &4\eta_yKl^2[24K^2(\eta_{x,l}^2+\eta_{y,l}^2)\sigma_G^2+3K(\eta_{x,l}^2+2K\eta_{y,l}^2)\sigma^2].
\end{align*}
By telescoping and rearranging, we have
\begin{align*}
    \mathbb{E}W_{t+1}\leq&\left(1-\frac{\eta_xK\mu_1}{16}\right) \mathbb{E}W_t-4l\eta_y^2K\frac{\sigma^2}{m}-8l\eta_y^2K^2(M-m)\frac{\sigma_G^2}{mM}-\\
    &4\eta_yKl^2[24K^2(\eta_{x,l}^2+\eta_{y,l}^2)\sigma_G^2+3K(\eta_{x,l}^2+2K\eta_{y,l}^2)\sigma^2],
\end{align*}
\begin{align*}
    \mathbb{E}W_{t}\leq&\left(1-\frac{\eta_xK\mu_1}{16}\right)^t \mathbb{E}W_0+\frac{64l\eta_y^2\sigma^2}{\eta_x\mu_1 m}+\frac{126l\eta_y^2K}{\eta_x\mu_1}(M-m)\frac{\sigma_G^2}{mM}+\\
    &\frac{64\eta_yl^2}{\eta_x\mu_1}[24K^2(\eta_{x,l}^2+\eta_{y,l}^2)\sigma_G^2+3K(\eta_{x,l}^2+2K\eta_{y,l}^2)\sigma^2]\\
    =&\left(1-\frac{\eta_yKl}{256\kappa_1\kappa_2^2}\right)^t \mathbb{E}W_0+O(1)\frac{\kappa_1\kappa_2^2\eta_y\sigma^2}{m}+O(1)\kappa_1\kappa_2^2\eta_yK(M-m)\frac{\sigma_G^2}{mM}+\\
    &O(1){\kappa_1\kappa_2^2l}[24K^2(\eta_{x,l}^2+\eta_{y,l}^2)\sigma_G^2+3K(\eta_{x,l}^2+2K\eta_{y,l}^2)\sigma^2].
\end{align*}
Note that
\begin{align*}
    \mathbb{E}\|x_t-x^*\|^2\leq\frac{2}{\mu_1}\mathbb{E}(\Phi(x_t)-\Phi^*),
\end{align*}
\begin{align*}
    \mathbb{E}\|y_t-y^*\|^2\leq2\mathbb{E}\|y_t-y^*(x_t)\|^2+2\mathbb{E}\|y^*(x_t)-y^*\|^2\leq \frac{2}{\mu_2}(\Phi(x_t)-f(x_t,y_t))+\frac{2}{\mu_1}\mathbb{E}(\Phi(x_t)-\Phi^*).
\end{align*}
\begin{align}\nonumber
    \mathbb{E}\|x_t-x^*\|^2+\mathbb{E}\|y_t-y^*\|^2\leq& O(1)\kappa'\left(1-\frac{\eta_yKl}{256\kappa_1\kappa_2^2}\right)^t \mathbb{E}W_0+O(1)\frac{\kappa'\kappa_1\kappa_2^2\eta_y\sigma^2}{m}+O(1)\kappa'\kappa_1\kappa_2^2\eta_yK(M-m)\frac{\sigma_G^2}{mM}+\\ \label{pl pl eq}
    &O(1){\kappa'\kappa_1\kappa_2^2l}[24K^2(\eta_{x,l}^2+\eta_{y,l}^2)\sigma_G^2+3K(\eta_{x,l}^2+2K\eta_{y,l}^2)\sigma^2].
\end{align}
When $M=m$ or $\sigma_G=0$, with $K=O(1)m^{-1}\kappa'\kappa_1\kappa_2^2\epsilon^{-2}$, $\eta_y=1/(4lK)=O(1)m\kappa'^{-1}\kappa_1^{-1}\kappa_2^{-2}\epsilon^2$, $T=O(1)\kappa_1\kappa_2^2\log (\epsilon^{-1}\kappa')$, $\eta_{x,l}^2\leq O(1)\kappa'^{-1}\kappa_1^{-1}\kappa_2^{-2}K^{-2}\epsilon^2$, $\eta_{x,l}^2\leq O(1)\kappa'^{-1}\kappa_1^{-1}\kappa_2^{-2}K^{-2}\epsilon^2$,  we have
\begin{align*}
    \mathbb{E}\|x_T-x^*\|^2+\mathbb{E}\|y_T-y^*\|^2\leq O(1)\epsilon^2,
\end{align*}
which means a per-client sample complexity of $O(m^{-1}\kappa'\kappa_1^2\kappa_2^4\epsilon^{-2}\log(\epsilon^{-1}\kappa'))$, a communication complexity of $O(\kappa_1\kappa_2^2\log(\epsilon^{-1}\kappa'))$.

When $m<M$ and $\sigma_G>0$, with $\eta_y=O(1)m\kappa'^{-1}\kappa_1^{-1}\kappa_2^{-2}\epsilon^2$, $K=O(1)$, $T=O(1)m^{-1}\kappa'\kappa_1^2\kappa_2^4\epsilon^{-2}\log(\epsilon^{-1}\kappa')$, $\eta_{x,l}^2\leq O(1)\kappa'^{-1}\kappa_1^{-1}\kappa_2^{-2}K^{-2}\epsilon^2$, $\eta_{x,l}^2\leq O(1)\kappa'^{-1}\kappa_1^{-1}\kappa_2^{-2}K^{-2}\epsilon^2$,  we have
\begin{align*}
    \mathbb{E}\|x_T-x^*\|^2+\mathbb{E}\|y_T-y^*\|^2\leq O(1)\epsilon^2,
\end{align*}
which means both per-client sample complexity and communication complexity  are $O(m^{-1}\kappa'\kappa_1^2\kappa_2^4\epsilon^{-2}\log(\epsilon^{-1}\kappa'))$.
Using Kakutoni’s Theorem, we have
\begin{align*}
    \min_x\max_y f(x,y)=\max_y\min_x f(x,y)=\min_y\max_x (-f(x,y))=\min_y\max_x g(y,x),
\end{align*}
where we denote $g(y,x)=-f(x,y)$. 

Thus, the minimax problem of a function with $\mu_1$-PL-$\mu_2$-PL is equivalent to minimax problem of a function with $\mu_2$-PL-$\mu_1$-PL. When $M=m$ or $\sigma_G=0$, it is guaranteed to find $x_T, y_T$ satisfying $\mathbb{E}\|x_T-x^*\|^2+\mathbb{E}\|y_T-y^*\|^2\leq O(1)\epsilon^2$ with a  per-client sample complexity of $O(m^{-1}\kappa'\kappa_1^4\kappa_2^2\epsilon^{-2}\log(\epsilon^{-1}\kappa'))$ and a communication complexity of $O(\kappa_1^2\kappa_2\log(\epsilon^{-1}\kappa'))$.

Overall, when $M=m$ or $\sigma_G=0$, we can find $x_T, y_T$ satisfying $\mathbb{E}\|x_T-x^*\|^2+\mathbb{E}\|y_T-y^*\|^2\leq O(1)\epsilon^2$ with a  per-client sample complexity of $O(m^{-1}\kappa'^3\kappa''^4\epsilon^{-2}\log(\epsilon^{-1}\kappa'))$ and a communication complexity of $O(\kappa'\kappa''^2\log(\epsilon^{-1}\kappa'))$, where $\kappa'=\max\{\kappa_1,\kappa_2\},\kappa''=\min\{\kappa_1, \kappa_2\}$.

Similarly, when $m<M$ and $\sigma_G>0$, we we can find $x_T, y_T$ satisfying $\mathbb{E}\|x_T-x^*\|^2+\mathbb{E}\|y_T-y^*\|^2\leq O(1)\epsilon^2$ with a  per-client sample complexity of $O(m^{-1}\kappa'^3\kappa''^4\epsilon^{-2}\log(\epsilon^{-1}\kappa'))$ and a communication complexity of $O(m^{-1}\kappa'^3\kappa''^4\epsilon^{-2}\log(\epsilon^{-1}\kappa'))$, where $\kappa'=\max\{\kappa_1,\kappa_2\},\kappa''=\min\{\kappa_1, \kappa_2\}$.
\end{proof}
\section{Proof of Proposition \ref{Translation}}
\label{app: Translation}
\begin{proof}
According to Proposition 2.1 and (7) in \cite{yang2022faster}, if $(\Tilde{x},\Tilde{y})$ is an $(\epsilon, \epsilon/\sqrt{\kappa})$-stationary point of $f$, then $\|\nabla_x \Phi_{1/2l}(\Tilde{x})\|\leq 2\sqrt{2}\epsilon$. If we could find $\hat{x}$ such that $\mathbb{E}\|\hat{x}-x^*(\Tilde{x})\|\leq\frac{\epsilon}{\kappa l}$, then
\begin{align*}
    \mathbb{E}\|\nabla_x \Phi(\hat{x})\|&\leq \mathbb{E}\|\nabla_x \Phi(x^*(\Tilde{x}))\|+\mathbb{E}\|\nabla_x \Phi(\hat{x})-\nabla_x \Phi(x^*(\Tilde{x}))\|\\
    &\leq \mathbb{E}\|\nabla_x \Phi_{1/2l}(\Tilde{x})\|+ 2\kappa l\mathbb{E}\|\hat{x}-x^*(\Tilde{x})\|\\
    &\leq (2\sqrt{2}+2)\epsilon,
\end{align*}
where the second inequality is because of Lemma \ref{stationary of Phi} and Lemma \ref{lemma: Phi}. Note that $x^*(\Tilde{x})$ is the solution to $\min_x\max_y \Tilde{f}(x,y)=\min_x\max_y f(x,y)+l\|x-\Tilde{x}\|^2$. 

Note that $\Tilde{f}(x,y)=f(x,y)+l\|x-\Tilde{x}\|^2$ is $3l$-smooth, $l$-strongly convex in $x$, $\mu$-PL in $y$. According to Theorem \ref{thm: plpl}, we can use \alg to optimize $\Tilde{f}(x,y)$ from initial point $(\Tilde{x}, \Tilde{y})$. Furthermore, according to \eqref{pl pl eq}, with $\eta_x\leq 1/(4lK), \eta_y=\frac{\eta_x\mu_1^2}{64l^2}$, $\eta_{x,l}\leq \min\{\frac{1}{2l\sqrt{2(2K-1)(K-1)}}, \sqrt{\frac{1}{1536 l^2K^2}},O(\epsilon\kappa^{-3} \sqrt{ (\sigma_G^2+\sigma^2)}(Kl)^{-1})\}$, $ \eta_{y,l}\leq\min\{\frac{1}{2l\sqrt{2(2K-1)(K-1)}}, \sqrt{\frac{\eta_y}{1536\eta_x l^2K^2}},O(\epsilon\kappa^{-3} \sqrt{ (\sigma_G^2+\sigma^2)}(Kl)^{-1})\}$, we have 
\begin{align*}
    \mathbb{E}\|x_t-x^*\|^2&\leq O(1)\kappa\left(1-\frac{\eta_xKl}{256\times9\kappa}\right)^t \mathbb{E}W_0+O(1)\frac{\kappa^2\eta_x\sigma^2}{m}+O(1)\kappa^2\eta_xK(M-m)\frac{\sigma_G^2}{mM}+O(1)\kappa^{-2}\epsilon^2,
\end{align*}
where since $\min_x\max_y\Tilde{f}(x,y)= \min_y\max_x (-\Tilde{f}(x,y))$, we redefine $W=\Tilde{\Psi}^*-\Tilde{\Psi}(y)+\Tilde{f}(x, y)-\Tilde{\Psi}(y)$, $\Tilde{\Psi}(y)=\min_x \Tilde{f}(x , y)=\min_x (-\Tilde{f}(x,y))$. We then have
\begin{align*}
    W_0=&\Tilde{\Psi}^*-\Tilde{\Psi}(\Tilde{y})+\Tilde{f}(\Tilde{x}, \Tilde{y})-\Tilde{\Psi}(\Tilde{y})\\
    \overset{(a)}{\leq}&\Tilde{\Psi}^*-\Tilde{\Psi}(\Tilde{y})+\frac{1}{2l}\|\nabla_x \Tilde{f}(\Tilde{x}, \Tilde{y})\|^2 \\
    =&\max_y\min_x \Tilde{f}(x , y)- \max_y\Tilde{f}(\Tilde{x} , y)+\max_y\Tilde{f}(\Tilde{x} , y)-\Tilde{f}(\Tilde{x}, \Tilde{y})+\Tilde{f}(\Tilde{x}, \Tilde{y})-\min_x \Tilde{f}(x , \Tilde{y}) +\frac{1}{2l}\|\nabla_x \Tilde{f}(\Tilde{x}, \Tilde{y})\|^2 \\
    \overset{(b)}{\leq}& \frac{1}{\mu}\|\nabla_y \Tilde{f}( \Tilde{x}, \Tilde{y})\|^2+\frac{1}{l}\|\nabla_x \Tilde{f}( \Tilde{x}, \Tilde{y})\|^2+\frac{1}{2l}\|\nabla_x \Tilde{f}(\Tilde{x}, \Tilde{y})\|^2\\
    \overset{(c)}{\leq}& \frac{2\epsilon^2}{l},
\end{align*}
where $(a)$ is due to $l$-strongly convexness of $x$, $(b)$ is due to $l$-strongly convexness of $x$ and $\mu$-PL of $y$, $(c)$ is because that $(\Tilde{x},\Tilde{y})$ is an $(\epsilon, \epsilon/\sqrt{\kappa})$-stationary point of $f$. Thus, we have
\begin{align*}
    \mathbb{E}\|x_t-x^*\|^2&\leq O(1)\kappa\epsilon^2\left(1-\frac{\eta_xKl}{256\times9\kappa}\right)^t +O(1)\frac{\kappa^2\eta_x\sigma^2}{m}+O(1)\kappa^2\eta_xK(M-m)\frac{\sigma_G^2}{mM}+O(1)\kappa^{-2}\epsilon^2,
\end{align*}

Therefore, when $m=M$ or $\sigma_G=0$, with $\eta_x=1/(4lK)=O(1)m\epsilon^2\kappa^{-4}$, $K=O(1)m^{-1}\epsilon^{-2}\kappa^{4}$, $T=O(1)\kappa\log(\kappa)$ we can find $\hat{x}$ such that $\mathbb{E}\|\hat{x}-x^*(\Tilde{x})\|\leq O(\frac{\epsilon}{\kappa})$ and $\mathbb{E}\|\nabla_x \Phi(\hat{x})\|\leq O(\epsilon)$ with $KT=O(m^{-1}\kappa^5\epsilon^{-2}\log(\kappa))$ per-client sample complexity and $T=O(\kappa\log(\kappa))$ communication complexity. When $m<M$ and $\sigma_G>0$, with  $K=O(1)$, $\eta_x=\min\{1/(4lK), O(1)m\epsilon^2\kappa^{-4}\}=O(1)m\epsilon^2\kappa^{-4}$, $T=O(1)m^{-1}\kappa^5\epsilon^{-2}\log(\kappa)$, we can find $\hat{x}$ such that $\mathbb{E}\|\hat{x}-x^*(\Tilde{x})\|\leq O(\frac{\epsilon}{\kappa})$ and $\mathbb{E}\|\nabla_x \Phi(\hat{x})\|\leq O(\epsilon)$ with $KT=O(m^{-1}\kappa^5\epsilon^{-2}\log(\kappa))$ per-client sample complexity and $T=O(m^{-1}\kappa^5\epsilon^{-2}\log(\kappa))$ communication complexity.
\end{proof}

\section{Additional Experiments}
\label{app: exp}

\subsection*{Fair Classification}
\begin{figure}[!ht]
    \centering
    \includegraphics[width=30em]{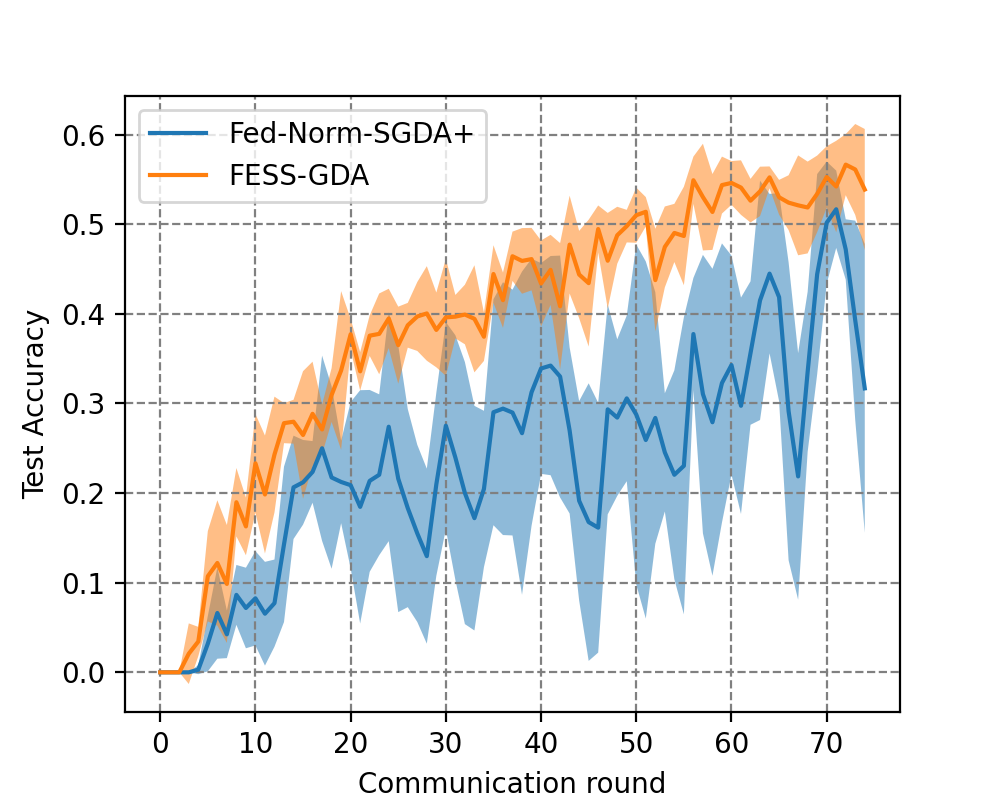}
    \caption{Comparison between Fed-Norm-SGDA+ and \alg for the worst test accuracy over 10 categories of CIFAR-10.}
    \label{fig: fc_w}
\end{figure}
For the fair classification task, we have presented the average test accuracy results in Section \ref{exp: fc}. To compare the fairness of models trained with \alg and Fed-Norm-SGDA+, following the same setting in Section \ref{exp: fc}, we now present the worst-case test accuracy of models over 10 categories in Figure \ref{fig: fc_w}. Figure \ref{fig:c} and Figure \ref{fig: fc_w} show that models trained with \alg not only have better average test accuracy over all categories, but also have better worst-case test accuracy over all categories, which demonstrates that models trained with \alg have better overall performance as well as fairness compared to models trained with Fed-Norm-SGDA+.

\subsection*{Communication Savings from Multiple Local Updates}
\begin{figure}[!ht]
    \centering
    \includegraphics[width=30em]{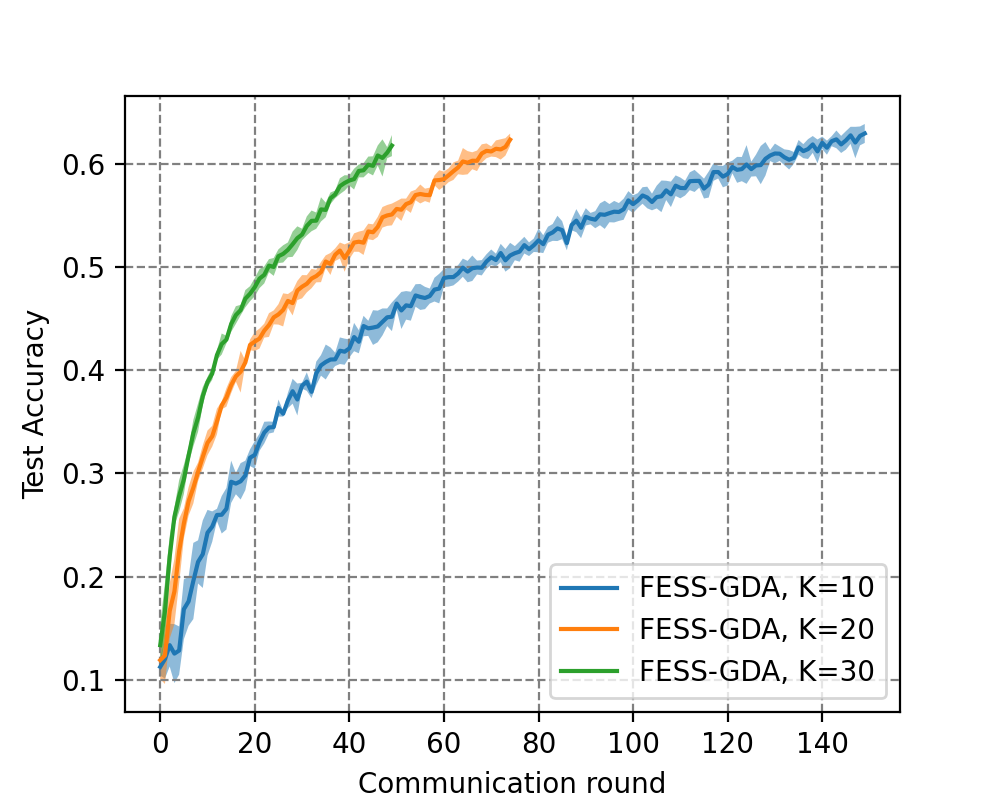}
    \caption{\alg for the fair classification task on CIFAR-10 with different number of local updates.}
    \label{fig:communication}
\end{figure}
We test \alg for the fair classification task on the CIFAR10 dataset using the same setting as in Section \ref{exp: fc} with $\eta_{x,l}=\eta_{y,l}=0.1, \eta_{x,g}=\eta_{y,g}=1, p=0.1, \beta=0.9$ and number of local updates $K$ from $\{10,20,30\}$. Each experiment is repeated 5 times and we report the average performance. As we can see from Figure \ref{fig:communication}, \alg has significant communication savings from multiple local updates.

\subsection*{Model Architecture for Fair Classification}
Table \ref{tab:net} shows the architecture of the convolutional neural network we used for the fair classification task.
\begin{table}[!ht]
\centering
\caption{Model Architecture for CIFAR10 dataset}
\vskip 0.1in
\begin{tabular}{@{}lll@{}}
\toprule
Layer Type           & Shape               &padding\\ \midrule
Convolution $+$ ReLU & $3 \times 3 \times 16$  &1   \\
Max Pooling          & $2 \times 2$                \\
Convolution $+$ ReLU & $3 \times 3 \times 32$ &1\\
Max Pooling          & $2 \times 2$             \\
Convolution $+$ ReLU & $3 \times 3 \times 64$ &1\\
Max Pooling          & $2 \times 2$             \\
Fully Connected $+$ ReLU & $512$ \\
Fully Connected $+$ ReLU & $64$ \\
Fully Connected & $10$ \\ \bottomrule
\end{tabular}
\label{tab:net}
\end{table}

\end{document}